
\documentclass[10pt]{article}

\usepackage[utf8]{inputenc} 
\usepackage[T1]{fontenc}    

\usepackage{url}            
\usepackage{booktabs}       
\usepackage{amsfonts}       
\usepackage{nicefrac}       
\usepackage{microtype}      
\usepackage{fullpage}
\usepackage{mathtools}
\usepackage{xcolor}

\usepackage{microtype}
\usepackage{graphicx}
\usepackage{subfigure}
\usepackage{booktabs} 
\usepackage[paper=a4paper]{geometry}
\geometry{margin=1in}
\usepackage{hyperref}




\usepackage{amsmath}
\usepackage{amssymb}
\usepackage{mathtools}
\usepackage{amsthm}
\usepackage{enumitem}
 
\usepackage{natbib}
\usepackage{subfigure}
\usepackage{adjustbox}

\usepackage{commath}
\usepackage{wrapfig}

\usepackage{algorithm}
\usepackage{algorithmic}

\usepackage{amsfonts,bm,epsfig,graphicx,epstopdf,bbm}
\usepackage{setspace,latexsym,epsf,dsfont,caption,mathtools,color}
\usepackage{tabularx}
\usepackage{multirow}

\usepackage[capitalize,noabbrev]{cleveref}

\usepackage{tikz}

\theoremstyle{plain}
\newtheorem{theorem}{Theorem}[section]
\newtheorem{proposition}[theorem]{Proposition}
\newtheorem{lemma}[theorem]{Lemma}
\newtheorem{corollary}[theorem]{Corollary}
\theoremstyle{definition}
\newtheorem{definition}[theorem]{Definition}
\newtheorem{assumption}[theorem]{Assumption}
\theoremstyle{remark}
\newtheorem{remark}[theorem]{Remark}

\theoremstyle{claim}
\newtheorem{claim}[theorem]{Claim}

\crefname{claim}{claim}{claims}
\crefname{assumption}{assumption}{assumptions}

\usepackage[textsize=tiny]{todonotes}

\usepackage{multirow}
\usepackage{pifont}

\newcommand{\Rb}{\mathbb{R}}
\newcommand{\Eb}{\mathbb{E}}
\newcommand{\Pb}{\mathbb{P}}
\newcommand{\Qb}{\mathbb{Q}}

\newcommand{\myparagraph}[1]{{\noindent \bf #1}\hspace{0.0in}}

\newcommand{\Dc}{\mathcal{D}}
\newcommand{\Ac}{\mathcal{A}}
\newcommand{\Tc}{\mathcal{T}}

\newcommand{\Qc}{\mathcal{Q}}

\newcommand{\Ec}{\mathcal{E}}
\newcommand{\Hc}{\mathcal{H}}

\newcommand{\TT}{\intercal}

\newcommand{\KL}{\texttt{KL}}

\newcommand{\Rt}{\mathtt{R}}
\newcommand{\Alg}{\mathtt{Alg}}

\newcommand{\robin}{\texttt{ROBIN\hspace{0.03in}}}
\newcommand{\WMHD}{\texttt{WMHD\hspace{0.03in}}}

\title{Federated Linear Contextual Bandits with User-level Differential Privacy}


\author{
 Ruiquan Huang\thanks{School of EECS, The Pennsylvania State University, University Park, PA, USA. Correspondence to: Jing Yang <yangjing@psu.edu>.},~~~
 Huanyu Zhang\thanks{Meta Platforms, Inc., New York, NY 10003, USA.},~~~
 Luca Melis\footnotemark[2],~~~
 Milan Shen\footnotemark[2],\\
  Meisam Hajzinia\thanks{Google, Mountain View, CA 94043, USA. The collaboration was during the time  when the author worked at Meta.},~~~
 Jing Yang\footnotemark[1] 
}
\date{}

\begin{document}

\maketitle




\begin{abstract}
This paper studies federated linear contextual bandits under the notion of user-level differential privacy (DP). We first introduce a unified federated bandits framework that can accommodate various definitions of DP in the sequential decision-making setting. We then formally introduce user-level central DP (CDP) and local DP (LDP) in the federated bandits framework, and investigate the fundamental trade-offs between the learning regrets and the corresponding DP guarantees in a federated linear contextual bandits model. For CDP, we propose a federated algorithm termed as \robin and show that it is near-optimal in terms of the number of clients $M$ and the privacy budget $\varepsilon$ by deriving nearly-matching upper and lower regret bounds when user-level DP is satisfied. For LDP, we obtain several lower bounds, indicating that learning under user-level $(\varepsilon,\delta)$-LDP must suffer a regret blow-up factor at least  {$\min\{1/\varepsilon,M\}$ or $\min\{1/\sqrt{\varepsilon},\sqrt{M}\}$} under different conditions.

 
\end{abstract}

\section{Introduction}
Federated learning (FL)~\citep{mcmahan2017communicationefficient} has become a trending distributed machine learning paradigm where numerous clients collaboratively train a prediction model under the coordination of a central server while keeping the local training data at each client. FL is motivated by various applications where real-world data are exogenously generated at edge devices, and it is desirable to protect the privacy of local data by only sharing model updates instead of the raw data. 

While the majority of FL studies focus on the supervised learning setting, recently, a few researchers begin to extend FL to the multi-armed bandits (MAB) framework~\citep{lai1985asymptotically,auer2002finite,bubeck2012regret,agrawal2012analysis,agrawal2013further}, and have proposed several federated bandits models and learning algorithms~\citep{shi2021federated,shi2021aistats,dubey2020differentially,huang2021federated,he2022simple}. Under the classical MAB model, a player chooses to play one out of a set of arms at each time slot. An arm, if played, will generate a reward that is randomly drawn from a fixed but unknown distribution. With all previous observations, the player needs to decide which arm to pull in each time in order to maximize the cumulative expected reward. MAB thus represents an online learning model that naturally captures the intrinsic exploration-exploitation tradeoff in many sequential decision-making problems. Under the new realm of federated bandits, each client is facing a local bandits model with shared parameters. While classical MAB allows immediate access to the sequentially generated data for decision-making, in federated bandits, local data streams are generated and analyzed at the clients, with periodic information exchange among clients.
Such a model is naturally motivated by a corpus of applications, such as recommender systems, clinical trials, and cognitive radio, where the sequential decision-making involves multiple clients and is distributed by nature. 

Meanwhile, {\it user-level} differential privacy (DP) has emerged as a stronger but more realistic notion of privacy, which guarantees the privacy of a user's entire contribution of data instead of individual samples. Roughly speaking, user-level DP requires that the output of an algorithm does not significantly
change when a user's entire contribution has been changed. While user-level DP has been studied in applications involving {\it static} datasets, such as discrete distribution estimation~\citep{acharya2022discrete,cummings2021mean}, learning~\citep{levy2021learning,ghazi2021user}, and federated learning~\citep{girgis2022distributed}, to the best of our knowledge, it has not been studied in the {\it online} setting, where data is generated sequentially as decisions are made and outcomes are observed.


In this work, we take an initial effort to incorporate user-level DP into the framework of federated bandits, where $M$ clients work collaboratively to learn a shared bandits model through a central server. We note that ensuring user-level DP in federated bandits setting is extremely challenging, mainly due to the following reasons. First, due to the interactive sequential decision-making nature of bandits, we do not have static datasets at clients. Rather, each local sample is generated online accordingly to the federated learning mechanism adopted by the system. As a result, its distribution depends on not only the historical data at the client, but also the entire history across all clients and the server. Such intricate dependency makes the definition of user-level DP elusive in the federated bandits setting. 

Second, due to the bandit feedback, only the reward of the pulled arm is observed, where the arm-pulling decision depends on history and the DP mechanism. Thus, the local samples are non-independent and identically distributed ({non-}IID), which is in stark contrast to the IID assumption usually adopted in the literature of user-level DP.

Third, in the federated bandits setting, multiple clients jointly learn the shared bandits parameter collaboratively. In general, the number of data samples contributed by a single client grows linearly in time horizon $T$, which is  unbounded. Thus, to achieve user-level DP, vanilla noise-adding DP mechanisms may require the corresponding noise variance scale in the same order. On the other hand, in order to obtain sublinear learning regret, it requires that the estimation error in the estimated model parameters decays fast in time. Thus, it is unclear whether it is possible to still achieve sublinear learning regret while guaranteeing user-level DP.





\begin{table*}[t]
\caption{Regret Bounds of Linear Contextual Bandits under Different $(\varepsilon,\delta)$-DP Constraints.}
	\vspace{-0.2in}
\label{table:regret}
\vskip 0.15in
\begin{center}
\begin{tiny}
\begin{sc}
\begin{tabular}{lcccr}
\toprule
    Algorithm & Asm.\ref{asm:margin} &Model & Constraint & Regret\\
    \midrule
    lower bound {\tiny \citep{he2022reduction}}& \ding{55} & Sing & Item-Level JDP &$\Omega\left(\sqrt{dT} + d/\varepsilon\right)$ \\
    \midrule 
   FedUCB {\tiny\citep{dubey2020differentially}} & \ding{55} &Fed & Item-level JDP & $\tilde{O}\left(d^{3/4}M^{3/4}\sqrt{T/\varepsilon}\right)^{\dagger}$ \\
    \midrule
    P-FedLinUCB {\tiny\citep{zhou2023differentially}} & \ding{55} & Fed & Item-level CDP & $\tilde{O}\left(d^{3/4} \sqrt{MT/\varepsilon} + {d\sqrt{MT}}\right)$ \\
    \midrule
    \robin (\Cref{thm:upper bound}) & \checkmark & Fed & User-Level CDP &$\tilde{O}\left(\min\left\{M, \max\left\{ 1, \frac{d\log T}{M\varepsilon^2}\right\} \right\} C_0d\log T\right)$ \\
    \midrule
    \multirow{4}{12em}{Lower Bound (\Cref{thm:main-CDP-minimax,thm:main-LDP})}
      & \checkmark & Fed & User-level CDP & $\Omega(\min \left\{M, \max\left\{1, \frac{1}{M\varepsilon^2} \right\}\right\}C_0d\log T)$  \\
     &\ding{55} & Fed & User-level CDP & $\Omega\left(\min\left\{M, \max\left\{\sqrt{M}, \frac{1}{\varepsilon} \right\} \right\} \sqrt{dT}\right)$ \\
     &\checkmark&Fed & User-level LDP$^*$ & $\Omega\left(\min\left\{M, 1/\varepsilon\right\}C_0d\log T  \right)$ \\
     &\ding{55} &Fed & User-level LDP$^*$ & $\Omega\left(\min\left\{M, \sqrt{M/\varepsilon}\right\}\sqrt{dT}  \right) $ \\
\bottomrule
\end{tabular}
\end{sc}
\end{tiny}
\end{center}

    \begin{center}
        \scriptsize
	$C_0$: parameter of Asm~\ref{asm:margin}, $d$: dimension of model parameter, $M$: number of clients, $T$: time horizon. \texttt{SING} and \texttt{Fed} stand for single-client and multi-client settings, respectively. \texttt{JDP} stands for jointly differential privacy. The standard non-private lower bounds are $\Omega(\sqrt{dMT})$  and $\Omega(C_0d\log T)$, without or with Asm~\ref{asm:margin}, respectively.\\
 $*$: The result for user-level $(\varepsilon,0)$-LDP is presented in \Cref{coro:pure LDP lower bound}.
 $\dagger$: We adopt the result in \citet{zhou2023differentially}.
	\end{center}
	\vspace{-0.1in}
\vskip -0.1in
\end{table*}


We tackle those aforementioned challenges explicitly in this work. Specifically, our main contributions can be summarized as follows. 

First, we formally introduce a DP oriented federated bandits framework, which provides a principled viewpoint to capture and characterize potential privacy leakage in online decision-making problems. Our general framework can accommodate all previously introduced differential privacy notions in the bandits settings, such as joint DP~\citep{shariff2018differentially,dubey2020differentially}. We then specialize the framework to capture user-level DP, and introduce both central and local user-level DP.

Second, we investigate user-level central differential privacy (CDP), and study the fundamental trade-off between learning regret and DP guarantee. Under standard margin condition and diversity condition studied in conventional linear contextual bandits \citep{hao2020adaptive,papini2021leveraging}, we propose a near-optimal algorithm termed as \robin with user-level $(\varepsilon,\delta)$-DP guarantee. \robin enjoys a regret of $\tilde{O}\left(\max\left\{1,\frac{d\log T}{M\varepsilon^2}\right\}C_0d\log T\right)$, where $C_0$ is a margin parameter, $d$ is the dimension of the features, $M$ is the number of clients, and $T$ is the time horizon. The near optimality is established by obtaining a minimax lower bound $\Omega\left(\max\left\{1,\frac{1}{M\varepsilon^2}\right\}C_0d\log T\right)$ under the same CDP constraint and the diversity and margin conditions. Furthermore, we also investigate the lower bound without the margin condition. Compared to the non-private counterpart, our results indicate that when $\varepsilon=O(1/\sqrt{M})$, the regret suffers a blow-up factor of at least {$\min\{M,\frac{1}{\varepsilon^2M}\}$ or $\min\{{\sqrt{M}},\frac{1}{\varepsilon\sqrt{M}}\}$}, depending on whether the margin condition is imposed or not. When $\varepsilon=\Omega(1/\sqrt{M})$,  imposing the user-level CDP constraint does not affect the hardness of the federated linear contextual bandits problem, regardless of whether the margin condition is satisfied.

Third, we study user-level local different privacy (LDP) under several settings and conditions. When $\varepsilon = O(1/M)$, the minimax regret lower bound is either $\Omega(M\log T)$ under the margin condition or $\Omega(M\sqrt{T})$ without the margin condition, suggesting that the best policy is to have the clients independently make arm-pulling decisions based on their own local datasets without information sharing. When $\varepsilon = \Omega(1/M)$, we obtain a minimax lower bound in the order of $\Omega(C_0d\log T/\varepsilon)$ under the margin condition and $\Omega(\sqrt{dMT/\varepsilon})$ without the margin condition, indicating that any federated linear contextual bandits algorithm satisfying user-level LDP suffers a regret blow-up factor of at least $1/\sqrt{\varepsilon}$ without the margin condition, or $1/\varepsilon$ with the margin condition. Thus, the user-level LDP constraint makes the learning problem strictly harder than the non-private case.  Moreover, we also develop a tighter lower bound for pure LDP, i.e. $\delta=0$, {which can be obtained by replacing $\varepsilon$ in the CDP lower bound by $\varepsilon/\sqrt{M}$.} 
A summary of our main results is shown in Table~\ref{table:regret}.

\section{The Private Federated Bandits Framework}
\subsection{Notations}
For any $M\in\mathbb{N}$, we denote $[M]:=\{1,\dots,M\}$. For any vector $x$ and symmetric matrix $V$, we denote $\|x\|$ as the $\ell_2$ norm of $x$, $\|x\|_V : = \sqrt{x^\top Vx}$, and $\lambda_{\min}(V)$, $\lambda_{\max}(V)$ refer to the minimum and maximum eigenvalues of $V$, respectively.  For any matrix $X$, we use $\|X\|$ to denote its spectral norm, and use $X^{\dagger}$ to denote its pseudo-inverse. For any set $\Ac$, we denote $\Ac^n$ as its $n$-fold Cartesian product. For two probability measures $\Pb, \Qb$, we use $d_{TV}(\Pb,\Qb)$ and $\KL(\Pb,\Qb)$ to denote their total variation distance and Kullback–Leibler distance, respectively. We denote $q_{\leq t}:=\{q_1,\ldots,q_t\}$. We use $\mathcal{F}(S_1,S_2)$ to denote the set of all possible measurable functions from set $S_1$ to another set $S_2$. {$\tilde{O}(f)$ hides the logarithm term of $f$, i.e. $\tilde{O}(f)=O(f|\log f|)$.} 


\subsection{The Federated Bandits Framework}


We consider a federated linear contextual bandits setting captured by tuple $([M], \mathcal{A}, \mathcal{C}, \phi, \mathcal{P}, d)$, where $[M]$ is the set of clients, $\mathcal{A}$ is the set of arms, $\mathcal{C}$ is the set of contexts, $\mathcal{P}:=\{\rho_i\}_{i\in[M]}$ is a set of distributions over context set $\mathcal{C}$, and $\phi:\mathcal{C}\times\mathcal{A}\rightarrow \mathbb{R}^d$ is the feature mapping. 

At each time slot $t$, each client $i$ observes a context $c_{i,t}\in\mathcal{C}$ drawn according to distribution $\rho_i\in\mathcal{P}$. Then, client $i$ pulls arm $a_{i,t}\in\mathcal{A}$ and receives a reward $r_{i,t} = \phi(c_{i,t},a_{i,t})^{\TT}\theta^* + \eta_{i,t}$, where $\theta^*\in\Rb^d$ is an unknown parameter vector shared among clients, $\eta_{i,t}$ is a random noise, and $\phi(c_{i,t},a_{i,t})\in \Rb^d$ is the feature vector associated with arm $a_{i,t}$ and context $c_{i,t}$. For simplicity, we denote $\phi(c_{i,t},a)$ as $x_{i,t,a}$. 
We assume $\|\phi(c,a)\|_2\leq 1$, $\|\theta^*\|_2\leq 1$, and $\eta_{i,t}$ is an IID standard Gaussian random variable, i.e., $\eta_{i,t}\sim N(0,1)$.
Such assumptions are standard in the linear contextual bandits literature \citep{Abbasi:2011:IAL,chu2011contextual}.


We assume there exists a central server in the system, and similar to FL, the clients can communicate with the server periodically with zero latency. Specifically, the clients can send ``local model updates'' to the central server, which then aggregates and broadcasts the updated ``global model'' to the clients. (We will specify these components later.) 
We also assume that clients and server are fully synchronized \citep{mcmahan2017communicationefficient}.

For federated linear contextual bandits, the utility of primary interest is the expected cumulative regret among all clients, defined as:
\begin{align*}\textstyle
\text{Regret}(M,T) = \mathbb{E}\left[\sum_{i=1}^M\sum_{t=1}^T \left(x_{i,t,a_{i,t}^*}^\TT\theta{^*} - x_{i,t,a_{i,t}}^\TT\theta^*\right)\right],
\end{align*}
where $a_{i,t}^*\in\Ac$ is an optimal arm for client $i$ given context $c_{i,t}$: $\forall b \neq a_{i,t}^*$, $x_{i,t,a_{i,t}^*}^\TT\theta{^*}  -x_{i,t,b}^\TT\theta^*  \geq 0$.

\subsection{User-level Differential Privacy} \label{sec: DP framework}
In order to formally introduce user-level DP into the federated bandits framework, we consider a federated algorithm that consists of $2M+1$ sub-algorithms denoted as $\mathtt{Alg} := (\mathtt{R}_0, \mathtt{Alg}_1,\mathtt{R}_1,\ldots,\mathtt{Alg}_M,\mathtt{R}_M)$, where $\mathtt{Alg}_i$ is an online decision-making algorithm adopted by client $i$, $\mathtt{R}_i$ is a channel that shares the information from client $i$ to the central server, and $\mathtt{R}_0$ is a channel that broadcasts the aggregated information from the server to all clients. 

Mathematically, let $H_{i,t}$ be the local historical  observations and actions of client $i$ before it makes decision at time $t$, i.e. $H_{i,t}=\{c_{i,\tau}, a_{i,\tau},r_{i,\tau}\}_{\tau=1}^{t-1}$, and $q_{i,t}$ and $q_t$ be the outputs of $\mathtt{R}_i$ and $\mathtt{R}_0$ at time $t$, respectively. 

 With a specified $\mathtt{Alg}$, at each time step $t$, the learning procedure proceeds as follows. First, the server aggregates up-to-date local updates from the clients and broadcasts the aggregated information to all clients through channel  $\mathtt{R}_0$, i.e., $\mathtt{R}_0: \{q_{i,\leq t}\}_{i} \mapsto q_t$. Upon receiving the broadcast information, each client performs local online decision-making by executing $\mathtt{Alg}_i: \{c_{i,t}\}\cup{H}_{i,t}\cup\{q_{\leq t}\} \mapsto a_{i,t} $, and obtains $r_{i,t}$. Finally, each client generates a local update and uploads it to the server through channel $\mathtt{R}_i: ({H}_{i,t+1},q_{\leq t}) \mapsto q_{i,t+1}$.

We note that the general federated bandits learning framework can accommodate various bandits and communication models. For example, 
depending on whether communication happens in a step $t$ or not, we can let $q_{i,t} = 0$ to indicate that client $i$ does not upload any information at time $t$, and $q_{t}=0$ if the server does not broadcast at time $t$. 

Based on the specified federated bandits learning framework, we then introduce the user-level DP notions as follows. Let $H_{t}=\{H_{i,t}\}_{i\in[M]}$ be the entire history across all clients. Note that $H_{t}$ is a streaming dataset, i.e. $H_{t'}\subseteq H_{t}$ for any $t'\leq t$. Due to the online setting, we follow the definition of differential privacy {\it under continual observation \citep{dwork2010differential}.} {For simplicity, we denote $\Rt_i(\{H_{i,\tau},q_{\tau-1}\}_{\tau\leq t}) := (\Rt_i(H_{i,1},q_0),\ldots, \Rt_i(H_{i,t},q_{\leq t-1})), \forall i\in[M]$. With a little abuse of notation, we denote $\Rt_0(\{H_{\tau}\}_{\tau\leq t}) := (\Rt_0(\{q_{i,1}\}_{i\in[M]}),\ldots, \Rt_0(\{q_{i,\leq t}\}_{i\in[M]}))$ to indicate the end-to-end relationship between the entire history $\{H_{\tau}\}_{\tau\leq t}$ and the global information $\{q_1,\ldots,q_t\}$ produced by  $\Rt_0$.  Without loss of generality, we use $\mathcal{Q}$ to denote the range of channel $\Rt_i$ for any $i\in[M]\cup\{0\}$.}

\begin{definition}[$i$-neighboring datasets]\label{def:neighbor dataset}
    We say ${H}_{t}=\{H_{j,t}\}_{j\in[M]}$ is $i$-neighboring to $H_{t}' = \{H_{j,t}'\}_{j\in[M]}$ if $H_{j,t}=H_{j,t}'$ for all $j\neq i$.
\end{definition}

\begin{definition}[User-level central DP]\label{def:cdp}
Consider a time horizon $T$. A federated algorithm $\mathtt{Alg} = (\mathtt{R}_0, \mathtt{Alg}_1,\mathtt{R}_1,\ldots,\mathtt{Alg}_M,\mathtt{R}_M)$ is $(\varepsilon,\delta)$-central user-level differentially private if for any $i$-neighboring streaming datasets $\{H_t\}_{t\leq T}$ and $\{H'_t\}_{t\leq T}$, and any subset $Q_{\leq T} := (Q_1,\ldots,Q_T)\subset \Qc^T$, we have
\[
   \Pb[\mathtt{R}_0(\{H_{t}\}_{t\leq T})\in Q_{\leq T}]\leq e^{\varepsilon}\Pb[\mathtt{R}_0(\{H_{t}'\}_{t\leq T}) \in Q_{\leq T}] + \delta.
\]
\end{definition}

Besides the user-level central DP and local DP (to be introduced in \cref{sec:ldp}), we note that the proposed federated bandits framework can accommodate various DP notions, as elaborated in \cref{appx:framework}.


\section{Algorithm Design and Analysis for CDP}
In this section, we aim to design a collaborative learning algorithm for the federated linear contextual bandits that achieves sublinear learning regret under user-level CDP. 

\subsection{Challenges of Adopting Gram Matrix in Algorithms}

To gain a better understanding of our algorithm design, we first elaborate the difficulties encountered by prevalent Gram-Matrix (GM)-based approaches in federated linear contextual bandits when user-level CDP is taken into account. It is worth noting that all current privacy-preserving algorithms utilized in federated linear contextual bandits rely on GM-based approaches. Specifically, we focus on the prominent challenges associated with the widely adopted upper confidence bound (UCB) methods.

The fundamental task in the design of bandits algorithms is to  balance the exploration-exploitation trade-off. UCB-type algorithms achieve this through constructing an upper confidence bound of the estimated reward of each arm, and then picking the arm with the highest UCB in each step $t$. In the linear contextual bandits setting, such UCB is usually in the form of $x^{\top}\hat{\theta} + \alpha\|x\|_{\bar{V}_t^{-1}}$, where $x$ is the feature vector associated with individual arms and the incoming context, $\hat{\theta}$ is the estimated model parameter, $\alpha$ is a constant, and $\bar{V}_t:=I_d +\sum_{\tau<t}x_{\tau}x_{\tau}^{\TT}$ is the matrix defined by the encountered feature vectors that are used to estimate $\hat{\theta}$. Roughly speaking, $\bar{V}_t$ captures the uncertainty in the estimate $\hat{\theta}$, i.e., $\|\hat{\theta} - \theta^*\|_{\bar{V}_t} = \tilde{O}(\sqrt{d})$. By selecting the arm associated with the highest UCB, it ensures that the directions along which $\bar{V}_t$ has small eigenvalues can be sufficiently explored, thus reducing the uncertainty in $\hat{\theta}$ efficiently.


\if{0}
UCB algorithms choose $x_t = \arg\max_{x\in D_t}x^{\top}\hat{\theta} + \alpha\|x\|_{\bar{V}_t^{-1}}$, where $D$ is the decision set at time $t$, and $\alpha=\tilde{O}(\lambda + \sqrt{d})\geq \|\hat{\theta} - \theta^*\|_{\bar{V}_t}$. If $x_*\in D_t$ is the optimal arm at time $t$, then, we can bound the regret at time $t$ as follows.
\begin{align*}
    \text{Regret}_t &= (x_*^{\TT} - x_t^{\TT})\theta^*\\
    & = x_*^{\TT}(\theta^* - \hat{\theta}) - x_t^{\TT}(\theta^*-\hat{\theta}) + (x_*^{\TT} - x_t^{\TT})\hat{\theta}\\
    &\overset{(a)}\leq \alpha\|x_*\|_{V_t^{-1}} + x_*^{\TT}\hat{\theta} + \alpha\|x_t\|_{V_t^{-1}} - x_t^{\TT}\hat{\theta}\\
    &\overset{(b)}\leq \alpha\|x_t\|_{V_t^{-1}} + x_t^{\TT}\hat{\theta} + \alpha\|x_t\|_{V_t^{-1}} - x_t^{\TT}\hat{\theta}\\
    & = 2\alpha\|x_t\|_{V_t^{-1}},
\end{align*}
where $(a)$ is due to Cauchy's inequality and $(b)$ follows from the optimality of $x_t$. Taking summation over $t$, we can show that the total regret is sublinear through the elliptical potential lemma (\Cref{lemma:elliptical} in Appendix).
\begin{align*}
    \text{Regret}(T)&\leq \sum_{t=1}^T 2\alpha\|x_t\|_{V_t^{-1}}\leq \tilde{O}\left(d\sqrt{T}\right)
\end{align*}
\fi

In the federated setting, each client $i$ collects $\{(x_{i,\tau,a_\tau},r_{i,\tau})\}_\tau$ locally and exchanges information with the server for expedited estimation of $\theta^*$. Under the UCB-based framework, in order to characterize the estimation uncertainty and facilitate efficient exploration in subsequent time steps, it in general requires the server to collect and aggregate the Gram matrices $V_{i,t}:=\sum_{\tau\in \Tc(t)}x_{i,t,a_{i,t}}x_{i,t,a_{i,t}}^{\TT}$, where $\Tc(t)$ is a subset of $[t]$ determined by the specific algorithm design, and then broadcast the privatized aggregated Gram matrix $\tilde{V}_t^{-1}$ 
to all clients, along with the privatized estimate $\tilde{\theta}$. 

According to \Cref{def:cdp}, in order to achieve {\it user-level} CDP, it is necessary to ensure that if $V_{i,t}$ is replaced by another Gram matrix $V'_{i,t}$, the broadcast information would remain similar. Standard additive noise mechanisms require that the variance of the additive noise in each dimension scales in $\sup\|V_{i,t}-V'_{i,t}\|$, which, without any additional assumption, scales in $\Omega(t)$ in general. If a noise with $\Omega(t)$ variance is added, the corresponding UCB, denoted as $x^{\top}\tilde{\theta} + \alpha\|x\|_{\tilde{V}_t^{-1}}$, would be largely different from its non-privatized counterpart, resulting in wrong arm-pulling at clients and linear regret.


\subsection{Additional Assumptions}
The aforementioned challenges in balancing user-level CDP and regret faced by UCB-type algorithms motivate us to strive for a possible approach where local Gram matrices are not required for collaborative parameter estimation in the federated linear contextual bandits setting. Towards that, we introduce the following two standard assumptions in the literature of linear contextual bandits.


\begin{assumption}[Context diversity{~\citep{hao2020adaptive}}]\label{asm:diversity}
For any client $i$, let $a_{c_{i}}^* = \arg\max_{a} \phi(c_i,a)^{\TT}\theta^*$ be the optimal arm under context $c_i$. Then, 
\[\lambda_{\min}\left(\Eb_{c_i\sim\rho_i} \left[\phi(c_i,a_{c_i}^*)\phi(c_i,a_{c_i}^*)^{\TT}\right]\right) \geq \lambda_0>0.\]
\end{assumption}
\Cref{asm:diversity} essentially indicates that the minimum eigenvalue of the expected Gram matrix under the {\it optimal} policy is bounded away from zero. 
Thus, it ensures that with high probability, every possible direction in the parameter space can be adequately explored under the optimal policy for each client $i$. Intuitively, if the optimal policy were known beforehand, each client could collect sufficient information in each dimension of the parameter space and obtain an estimate of $\theta$ with favorable accuracy guarantee in each dimension. Therefore, such estimates can be directly aggregated at the server with certain accuracy guarantee even without knowing the corresponding Gram matrices. This could potentially 
avoid adding strong noises to the Gram matrices and degrading the regret performance. Besides, as we discuss in \Cref{sec:diversity is necessary}, \Cref{asm:diversity} is actually necessary in order to achieve sublinear regret for a broad class of algorithms in the federated linear contextual bandits setting.

We remark that \Cref{asm:diversity} does not guarantee that the distributions of local datasets collected by the client are identical or even close in terms of total variation distance, since the covariance matrices still depend on how the clients are making decisions. Even if they focus on the optimal arms, the matrix $\Eb[\phi(c_{i,t},a_{c_{i,t}}^*)\phi(c_{i,t},a_{c_{i,t}}^*)^{\TT}]$ can still vary drastically across clients, as {different clients can potentially have very different context distributions}. This fact is in stark contrast to previous user-level DP works, e.g.,~\citet{levy2021learning}, and makes our problem more challenging.

\begin{assumption}[Margin condition~{\citep{rigollet2010nonparametric,reeve2018k}}]\label{asm:margin}
Let $a_{c_i}^* = \arg\max_{a} \phi(c_i,a)^{\TT}\theta^*$ be the optimal arm under context $c_i$. Then, there exists a constant $C_0$ such that, for any $\epsilon>0$ and any $i\in[M]$,
\[\mathbb{P}_{c_{i}\sim\rho_i} \left[ \forall a\neq a_{c_i}^*,\,[\phi(c_i,a_{c_i}^*) - \phi(c_i,a)]^{\TT}\theta^* \leq \epsilon \right] \leq C_0\epsilon.\]
\end{assumption}
Roughly speaking, \Cref{asm:margin} ensures that  {for a randomly generated context,} the expected reward under the corresponding optimal arm and that under any sub-optimal arm are statistically well separated. This is a standard assumption for {\it instance-dependent} analysis of linear contextual bandits. Similar to the minimum reward gap between the optimal arm and any sub-optimal arm in the stochastic MAB setting, $C_0$ controls the hardness of the problem: the larger $C_0$ is, the more challenging to distinguish the optimal arm and the second sub-optimal arm under a given context. In the following, we investigate upper bound {\it with} \Cref{asm:margin} and lower bounds {\it with} and {\it without} \Cref{asm:margin}.





\subsection{The \robin Algorithm}
In this section, we propose a gReedy explOitation Based prIvatized averagiNg (\texttt{ROBIN}) Algorithm under the federated linear contextual bandits setting. Our objective is to leverage \Cref{asm:diversity} and \Cref{asm:margin} to achieve sub-linear learning regret and guarantee user-level CDP at the same time.

\robin works in phases. In total, it has $P$ phases, and each phse $p\in[P]$ contains $2^p$ time indices. Denote $\Tc_p$ as the set of time indices in phase $p$. Then, it proceeds as follows.

\myparagraph{LinUCB-based Initialization:} In the first $U$ phases, each client performs the classical LinUCB algorithm~\citep{Abbasi:2011:IAL} locally. Specifically, at the beginning, client $i$ initializes an all-zero matrix $V_{i,1}$ and an all-zero vector $Y_{i,1}$. Then, at each time $t$ in phase $p\in U$, upon observing a context $c_{i,t}$ and the corresponding decision set $\mathcal{D}_{i,t} = \{\phi(c_{i,t},a): a\in\mathcal{A}\}$, the client $i$ chooses action $x_{i,t,a_{i,t}}$ according to 
\begin{align}
    x_{i,t,a_{i,t}} = \arg\max_{x\in\Dc_{i,t}}x^{\TT}\hat{\theta}_{i,t} + \alpha\|x\|_{(I_d + V_{i,t})^{-1}},\label{eqn:UCB selection}
\end{align}
where $\hat{\theta}_{i,t} = (I_d + V_{i,t})^{-1}Y_{i,t}$, and $\alpha$ is a parameter to be specified later. After receiving a reward $r_{i,t}$, each client updates matrix $V_{i,t}$ and vector $Y_{i,t}$ according to
\begin{align}
    \left\{
    \begin{aligned}\textstyle
    &V_{i,t+1} = V_{i,t} + x_{i,t,a_{i,t}}x_{i,t,a_{i,t}}^{\TT},\\
    &Y_{i,t+1} = Y_{i,t} + x_{i,t,a_{i,t}}r_{i,t}.
    \end{aligned}
    \right.\label{eqn:update data}
\end{align}

The benefit of such initialization is that the minimum eigenvalue of $V_{i,t}$ can be guaranteed to grow linearly in $t$. 

\myparagraph{Local Greedy Exploitation:} For any phase $p>U$ after the initialization phase, each client receives a
private global estimate $\hat{\theta}^{p}$ from the central server at the beginning of phase $p$. We will specify how to construct such a global estimate in the next several paragraphs. Meanwhile, each client resets matrix $V_{i,t}$ and vector $Y_{i,t}$ to be zero. Then, for all $t\in \Tc_{p}$, each client greedily takes actions with respect the global estimate, i.e. $a_{i,t} = \arg\max_a \phi(c_{i,t},a)^{\TT}\hat{\theta}^{p}$, and collects a reward $r_{i,t}$. Again, $V_{i,t}$ and $Y_{i,t}$ are updated according to \Cref{eqn:update data}. 

\myparagraph{Upload Channel $\Rt_i$: } At the end of each phase $p$ such that $p\geq U$, each client constructs a local estimator $\tilde{\theta}_{i,p} = \tilde{V}_{i,p}^{\dagger}\tilde{Y}_{i,p}$ based on the ordinary least squares method, where $\tilde{V}_{i,p}$ and $\tilde{Y}_{i,p}$ are copies of statistics $V_{i,t}$ and $Y_{i,t}$ at the end of phase $p$. These local estimators are then sent to the central server for privatized aggregation.

\myparagraph{Privatized Aggregation $\Rt_0$ at the Server:}
Once the local estimates $\{\tilde{\theta}_{i,p}\}_{i\in[M]}$ are received at the end of phase $p$, the server needs to aggregate those local estimates and obtain a global estimate $\hat{\theta}^{p+1}$, which will be broadcast to the clients to facilitate their decision-making in phase $p+1$. There are two objectives for this aggregation step: One one hand, due to the {\it user-level CDP} constraint, the corresponding aggregation needs to ensure that $\hat{\theta}^{p+1}$ does not change significantly if $\tilde{\theta}_{i,p}$ is replaced by another possible local estimate $\tilde{\theta}'_{i,p}$ for any $i\in[M]$. On the other hand, in order to achieve low learning regret, it requires that $\hat{\theta}^{p+1}$ is sufficiently close to the ground truth parameter $\theta^*$. In particular, it is desirable to have $\|\hat{\theta}^{p+1}-\theta^*\|$ scales in $\tilde{O}\left(1/\sqrt{M|\Tc_p|}\right)$ with high probability as $p$ increase.

However, in general, the sensitivity of $\frac{1}{M}\sum_i\tilde{\theta}_{i,p}$ scales in $O(1/M)$. 
This is because $\tilde{\theta}_{i,p}$ is a random variable, whose distribution expands the entire parameter space $\{\theta:\|\theta\|\leq 1\}$. Thus, if a vanilla additive noise mechanism is adopted, the variance of the noise should scale in $\Omega(1/M)$. As a result, $\|\hat{\theta}^{p+1}-\theta^*\|$ scales in $\Omega(1/\sqrt{M})$, which cannot provide the desired estimation accuracy in order to achieve sublinear regret. 

To overcome this challenge, we aim to leverage the concentration property of $\tilde{\theta}_{i,p}$ to have a more delicate analysis of the sensitivity of $\frac{1}{M}\sum_i\tilde{\theta}_{i,p}$. 
The intuition is, {under \cref{asm:diversity}}, $\tilde{\theta}_{i,p+1}$ will be concentrated in a ball centered at $\theta^*$ with radius {$\tilde{O}(1/\sqrt{|\Tc_p|})$} with high probability. Therefore, if we are able to leverage such concentration property and reduce the sensitivity of $\frac{1}{M}\sum_i\tilde{\theta}_{i,p}$, we will be able to adaptively reduce the variance of the additive noise in the DP mechanism and achieve sublinear regret.

Motivated by this intuition, we adopt the WinsorizedMeanHighD (\texttt{WMHD}) Algorithm from \citet{levy2021learning} for the private aggregation at the server. Roughly speaking, \WMHD projects local estimates $\{\tilde{\theta}_{i,p} \}_{i\in[M]}$ into a privatized range in the parameter space and then adds noise accordingly. By properly choosing the size of the range and the noise level, it outputs a privatized average of $\{\tilde{\theta}_{i,p} \}_{i\in[M]}$
with sufficient estimation accuracy and CDP guarantee. 
The details of \WMHD can be found in \Cref{alg:winsorizedHigh} in \Cref{appx:alg}.



The \robin algorithm is summarized in Algorithm~\ref{alg:greedy}.

\begin{algorithm}[t]
\small
\caption{The \robin  Algorithm}\label{alg:greedy}
\begin{algorithmic}[1]
\STATE {\bfseries Input:} $P$, $\beta$, $c_1$, $\delta_0 = \delta/(2P)$, $\varepsilon_0 = \varepsilon/\sqrt{6P\log(2/\delta)}$.
\WHILE{not reaching the time horizon $T$} 
\IF{$p\leq U$}
\STATE $\triangleright$ \texttt{Initialization phases}
\STATE $D_{i,p} = \emptyset$
\FOR{each client $i$ and $t$ in Phase $p$}
\STATE Observe $c_{i,t}$ 
\STATE Choose $x_{i,t,a_{i,t}}$ according to Eqn. (\ref{eqn:UCB selection}), and receive $r_{i,t}$
\STATE Update $V_{i,t}$ and $Y_{i,t}$ according to Eqn. (\ref{eqn:update data})

\ENDFOR

\ELSE
\STATE $\triangleright$ {\texttt{Greedy exploitation}}
\FOR{each client $i$ }
\STATE Receive $\hat{\theta}^{(p)}$ from the server
\STATE Set $V_{i,t}=0, Y_{i,t}=0$
\FOR{$t\in$ Phase $p$}
\STATE Receive decision set $\mathcal{D}_{i,t} = \{x_{i,t,a}\}_{a\in\mathcal{A}}$
\STATE Pull arm $x_{i,t,a_t}= \arg\max_{x\in \Dc_{i,t}} x^{\TT}\hat{\theta}^{(p)}$
\STATE Receive reward $r_{i,t}$
\STATE Update $V_{i,t}$ and $Y_{i,t}$ according to Eqn.~(\ref{eqn:update data})
\ENDFOR
\ENDFOR

\ENDIF

\IF{$p\geq U$} 

\FOR{each client $i$}
\STATE $\triangleright$ \texttt{Upload channel:} $\Rt_i$
\STATE $\tilde{\theta}_{i,p}\leftarrow \tilde{V}_{i,p}^{\dagger}\tilde{Y}_{i,p}$
\STATE Send $\tilde{\theta}_{i,p}$ to the server
\ENDFOR

\STATE $\triangleright$ \texttt{Privatized aggregation:} $\Rt_0$
\STATE $\hat{\theta}^{p+1} = $  \WMHD$\left(\{\tilde{\theta}_{i,p}\}_{i\in[M]},\frac{c_1}{\sqrt{|\Tc_p|}}, \frac{\beta}{16P},\varepsilon_0, \delta_0  \right)$
\ENDIF
\STATE $p\leftarrow p +1$
\ENDWHILE
\end{algorithmic}
\end{algorithm}

\subsection{Regret Analysis}

\begin{theorem}\label{thm:upper bound}
Fix $\beta\in(0,1)$ and let $c_1 = \frac{4\sqrt{2d\log(16d(M+1)P/\beta)}}{\lambda_0}$. Then, under \Cref{asm:diversity,asm:margin}, when $ \varepsilon \geq \Omega(\frac{\sqrt{d}\log^{1.5} T}{M})$,
\Cref{alg:greedy} (i) satisfies user-level $(\varepsilon,\delta)$-CDP, and (ii) with probability at least $1-\beta$, achieves a regret upper bounded by
    \begin{align*}
       \tilde{O}\left(\max\left(1, \frac{d\log T\log(\frac{1}{\delta})\log^3(\frac{1}{\beta})}{M\varepsilon^2 }\right)\frac{C_0d\log(\frac{1}{\beta})\log T}{\lambda_0^2}\right).
    \end{align*}
\end{theorem}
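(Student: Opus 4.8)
\emph{Privacy (part (i)).} The only quantity that channel $\Rt_0$ ever emits is the sequence of broadcast estimates $\hat\theta^{p+1} = \WMHD(\{\tilde\theta_{i,p}\}_{i\in[M]}, c_1/\sqrt{|\Tc_p|}, \beta/(16P),\varepsilon_0,\delta_0)$ produced at the end of each phase $p\ge U$ (nothing is broadcast during the \texttt{LinUCB} warm-up). Fix an $i$-neighboring pair of streaming datasets; in each such phase only the single input $\tilde\theta_{i,p}$ can change, so by the privacy guarantee of \WMHD \citep{levy2021learning} each broadcast is $(\varepsilon_0,\delta_0)$-DP with respect to that user's contribution. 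Since the fresh randomness used by \WMHD in distinct phases is independent and at most $P$ phases emit output, the advanced composition theorem for differential privacy upgrades this to $(\varepsilon,\delta)$-DP for the entire transcript, and the settings $\varepsilon_0 = \varepsilon/\sqrt{6P\log(2/\delta)}$, $\delta_0=\delta/(2P)$ are precisely what makes the composed parameters equal $(\varepsilon,\delta)$.

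\emph{Regret: decomposition and base case.} Let $\gamma_p := \|\hat\theta^{p}-\theta^*\|$ for $p>U$, and split the regret into the contribution of the $U$ warm-up phases and of the greedy phases $p=U+1,\dots,P$. Each client runs \texttt{LinUCB} in the warm-up, whose per-client regret over the $\sum_{p\le U}2^p\le 2^{U+1}$ steps is, under \Cref{asm:diversity,asm:margin}, $\tilde O(\mathrm{poly}(d,1/\lambda_0,C_0))$, i.e.\ independent of $T$ and of lower order. Moreover, \texttt{LinUCB} makes $\lambda_{\min}(V_{i,t})$ grow linearly in $t$ under \Cref{asm:diversity}, so choosing $U=\Theta(\log(\mathrm{poly}(d,C_0,1/\lambda_0)))$ guarantees (via a matrix-Chernoff bound) that $\tilde V_{i,U}$ is well-conditioned, hence that $\gamma_{U+1}\le \lambda_0/(8C_0)$ with high probability --- this is the base case of the induction below.

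\emph{Regret: inductive step for a greedy phase $p\ge U+1$.} Suppose $\gamma_p\le \lambda_0/(8C_0)$. (a) In phase $p$ every client plays greedily w.r.t.\ $\hat\theta^{p}$; by \Cref{asm:margin} this policy agrees with the optimal policy outside a context set of mass $\le 2C_0\gamma_p$, so the covariance of the played features has minimum eigenvalue $\ge \lambda_0-4C_0\gamma_p\ge\lambda_0/2$ by \Cref{asm:diversity}, and a matrix-Chernoff bound gives $\lambda_{\min}(\tilde V_{i,p})\ge \tfrac{\lambda_0}{4}|\Tc_p|$ for all $i$ with probability $\ge 1-\tfrac{\beta}{16P}$. (b) On that event the least-squares estimate equals $\tilde\theta_{i,p}=\theta^*+\tilde V_{i,p}^{-1}\sum_{t\in\Tc_p}x_{i,t,a_{i,t}}\eta_{i,t}$, which conditionally is $N(\theta^*,\tilde V_{i,p}^{-1})$; a Gaussian tail bound together with $\lambda_0\le 1$ and the choice of $c_1$ yields $\|\tilde\theta_{i,p}-\theta^*\|\le c_1/\sqrt{|\Tc_p|}$ for every $i$, so all inputs of \WMHD lie in the ball of the exact radius passed to it. (c) The accuracy guarantee of \WMHD with this (valid) clipping radius and parameters $(\varepsilon_0,\delta_0)$, combined with the $\tilde O\big((c_1/\sqrt{|\Tc_p|})/\sqrt M\big)$ concentration of $\tfrac1M\sum_i\tilde\theta_{i,p}$ around $\theta^*$, gives with probability $\ge 1-\tfrac{\beta}{8P}$
\[
\gamma_{p+1}\;\le\;\frac{c_1}{\sqrt{|\Tc_p|}}\,\Lambda,\qquad \Lambda:=\tilde O\!\left(\frac{1}{\sqrt M}+\frac{\sqrt{d\log(1/\delta_0)}}{M\varepsilon_0}\right),
\]
and in the stated regime $\varepsilon\ge\Omega(\sqrt d\log^{1.5}T/M)$ one has $\Lambda\le 1$, so $\gamma_{p+1}\le c_1/\sqrt{|\Tc_p|}\le \lambda_0/(8C_0)$, closing the induction. (d) Finally, by \Cref{asm:margin} a greedy step has positive instantaneous regret only when the margin at the drawn context is $\le 2\gamma_p$, in which case that regret is itself $\le 2\gamma_p$; hence the expected regret of phase $p$ is at most $M|\Tc_p|\cdot(2\gamma_p)(2C_0\gamma_p)\le 4C_0M|\Tc_p|\,\gamma_p^2\le 4C_0M|\Tc_p|\tfrac{c_1^2}{|\Tc_{p-1}|}\Lambda^2=8C_0Mc_1^2\Lambda^2$, the cancellation using the geometric phase lengths $|\Tc_p|=2|\Tc_{p-1}|$. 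Summing over the $\le P$ greedy phases, substituting $P=\Theta(\log T)$, $c_1^2=\tilde O(d/\lambda_0^2)$, and $\varepsilon_0,\delta_0$ as in \Cref{alg:greedy} (so that $M\Lambda^2=\tilde O(\max\{1,d\log T\log(1/\delta)/(M\varepsilon^2)\})$), and taking a union bound over the $O(P)$ high-probability events (total failure $\le\beta$), yields part (ii).

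\emph{Main obstacle.} The crux is the coupled induction in (a)--(c): bounding $\gamma_{p+1}$ needs the phase-$p$ local estimates to concentrate, which needs the phase-$p$ played-feature covariances to be well-conditioned, which needs $\gamma_p$ already small; and since the concentration event and the \WMHD noise both depend on $\hat\theta^{p}$ (which shapes the phase-$p$ policy), the chain must be conditioned on phase by phase rather than argued in one shot. A related subtlety is justifying the \WMHD guarantee of \citet{levy2021learning} here even though the per-client laws of $\tilde\theta_{i,p}$ are only co-centered at $\theta^*$ but not identical (their covariances $\tilde V_{i,p}^{-1}$ differ across clients), and checking that $c_1/\sqrt{|\Tc_p|}$ is simultaneously an admissible high-probability bound on $\|\tilde\theta_{i,p}-\theta^*\|$ and small enough that the privacy noise does not dominate --- both hinging on $\lambda_0$ entering $c_1$ quadratically.
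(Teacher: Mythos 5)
Your proposal is correct and follows essentially the same route as the paper: the privacy part via per-phase $(\varepsilon_0,\delta_0)$-DP of \WMHD plus advanced composition, and the regret part via the same coupled induction between (Claim 1) $\lambda_{\min}(\tilde V_{i,p})=\Omega(|\Tc_p|)$ and (Claim 2) $\|\hat\theta^{p+1}-\theta^*\|=\tilde O\bigl(1/\sqrt{M|\Tc_p|}+1/(\varepsilon M\sqrt{|\Tc_p|})\bigr)$, seeded by the LinUCB warm-up and closed with the margin condition used twice (once for the per-step regret magnitude and once for the probability of a suboptimal pull), with the geometric phase lengths producing the $O(\log T)$ phase count. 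The subtleties you flag (phase-by-phase conditioning and the non-identical covariances of the $\tilde\theta_{i,p}$, handled via $(r,\beta)$-concentration rather than identical distributions) are exactly the ones the paper's full proof addresses.
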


{\it Proof of the user-level CDP guarantee:} Since $\hat{\theta}^p$ is a  $(\varepsilon_0,\delta_0)$-differentially private estimation (Theorem 2 in \citet{levy2021learning}), and there are total $P$ phases, by the advanced composition rule (\cref{lemma:advanced composition}), we conclude that the entire algorithm achieves user-level $(\varepsilon_0\sqrt{6P\log(1/\delta')}, \delta' + \delta_0 P)$-CDP for any $\delta'>0$. The proof is finished by noting that $\delta_0 = \delta/(2P)$, $\varepsilon_0 = \varepsilon/\sqrt{6P\log(2/\delta)}$ and choosing $\delta' = \delta/2$.

\begin{proof}[Proof sketch of the regret upper bound] To upper bound the regret, we need to characterize the estimation error of the global estimator $\hat{\theta}^{p}$ for $p> U$. We inductively show that the following two claims hold with high probability. 
\begin{itemize}[leftmargin=*]\itemsep=0pt
    \item Claim 1: $\lambda_{\min} (\tilde{V}_{i,p}) \geq \Omega\left(|\Tc_p|\right)$.
    \item Claim 2: $\|\hat{\theta}^{p+1} - \theta^*\| \leq \tilde{O}\left(\frac{1}{\varepsilon M\sqrt{|\Tc_p|}} + \frac{1}{\sqrt{M|\Tc_p|}}\right).$
\end{itemize}

If Claim 1 holds, it is straightforward to conclude that each local estimator is sufficiently accurate, i.e. $\|\tilde{\theta}_{i,p} - \theta^*\|\leq \tilde{O}\left(1/\sqrt{|\Tc_p|}\right).$ Then, due to Theorem 2 in \citet{levy2021learning}, \WMHD guarantees that $\hat{\theta}^{p+1}$ is ``close'' to the average of local estimators $\{\tilde{\theta}_{i,p}\}_{i\in[M]}$. Specifically, we can show that with high probability, 
\begin{equation*}\textstyle
\left\|\hat{\theta}^{p+1} - \frac{1}{M}\sum_{i\in[M]}\tilde{\theta}_{i,p} \right\|\leq \tilde{O}\left(\frac{1}{\varepsilon M\sqrt{|\Tc_{p}|}}\right).
\end{equation*}

Thus, Claim 2 follows by applying Claim 1 on the average estimator, i.e. $\|\frac{1}{M}\sum_i\tilde{\theta}_{i,p} - \theta^*\|\leq \tilde{O}\left(1/\sqrt{M|\Tc_p|}\right).$ 

If Claim 2 holds, combining with \Cref{asm:margin}, with high probability, the covariance matrix $\sum_{t\in\mathcal{T}_p}x_{i,t,a_{i,t}}x_{i,t,a_{i,t}}^{\TT}$ is close to $|\mathcal{T}_p|\Eb_{c_i\sim\rho_i}\left[\phi(c_i,a_{c_{i}}^*)\phi(c_i,a_{c_i}^*)^{\TT}\right]$. Hence, Claim 1 follows directly from \Cref{asm:diversity}.

Finally, the LinUCB algorithm in the first $U$ phases guarantees that Claim 1 holds with high probability, and by induction, both claims hold for all phases $p\geq U$. Therefore, the regret can be upper bounded by the product of estimation error and the probability of playing sub-optimal arms. Due to \Cref{asm:margin}, the probability is again controlled by the estimation error, and thus the regret upper bound is $\tilde{O}\left((1+1/(M\varepsilon^2))\log T\right).$
\end{proof}
The full proof can be found in \Cref{sec:greedy}.

\begin{remark}
In general, the difficulty of deriving regret upper bounds without \Cref{asm:margin} is due to the weak diversity Assumption~\ref{asm:diversity}, where we only require the covariance matrix associated with {\it the optimal arm } to be sufficiently diverse. Therefore, without \Cref{asm:margin}, if the sub-optimal gap is too small, it is highly likely that the feature vector associated with the decision of the agent is very different from the optimal feature vector, although the corresponding reward (inner product of the feature vector and $\theta$) is close to the optimal reward. If this is the case, then the agent cannot leverage \Cref{asm:diversity} to construct an accurate estimation of $\theta$, as \Cref{asm:diversity} only applies to the feature vectors associated with the optimal arm.
\end{remark}

\section{Lower Bounds under CDP Constraint}

In this section, we present regret lower bounds for any {\it user-level central differentially private} federated algorithms. All proofs in this section can be found in \Cref{sec:proof CDP lower bound}..

To be more precise, we separate the federated algorithms into two categories: almost-memoryless algorithms and with-memory algorithms. Without additional assumptions, the general framework defined in \Cref{sec: DP framework} is a with-memory algorithm, where the decision-making algorithm $\mathtt{Alg}_i$ depends on both the local history and the global information $q_{\leq t}$. If the available information is restricted, we define almost-memoryless algorithm as follows. 
\begin{definition}[Almost-memoryless algorithm]\label{def:memoryless}
  A federated algorithm $(\Rt_0,\Alg_1,\Rt_1,\ldots,\Rt_M)$ is almost {\bf memoryless} if there exists a constant $u=o(T)$ such that for any time step $t\geq u$, the decision-making does not depend on local history data, i.e. $\Alg_i(H_{i,t},q_{\leq t}) = \Alg_i(q_{\leq t})$, for any $ i\in[M]$ and $ H_{i,t}$. 
  \end{definition}

We note that a typical memoryless algorithm is phased elimination type of algorithms \citep{shi2021federated,shi2021aistats,huang2021federated}, where the policy in a single phase does not change and only depends on the information broadcast from the last communication round. Moreover, \Cref{alg:greedy} proposed in this work is also almost-memoryless. 


\begin{theorem}[]\label{thm:main-CDP-lower-margin}
    If $\varepsilon<\log 2$, $\delta=\tilde{O}(\frac{1}{M\sqrt{T}})$, then, {there exists a federated linear contextual bandits instance satisfying  \Cref{asm:diversity,asm:margin}, such that} any {\bf almost-memoryless} federated algorithm satisfying user-level $(\varepsilon,\delta)$-CDP must incur a regret lower bounded by
    \begin{equation*}\textstyle
        \Omega\left(\max\left\{1, \frac{1}{M\varepsilon^2}\right\}C_0d\log T  + e^{-M\varepsilon}C_0MT\right).
    \end{equation*}
\end{theorem}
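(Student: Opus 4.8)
The plan is to construct a hard family of bandit instances and combine a standard information-theoretic bandit lower bound with the privacy constraint, mirroring the classical reductions for DP bandits (e.g., the approach behind the $\Omega(d/\varepsilon)$ term in \citet{he2022reduction}, lifted to the user-level federated setting). The regret bound has two additive terms, and I would handle them separately. The first term, $\Omega(\max\{1,\frac{1}{M\varepsilon^2}\}C_0 d\log T)$, splits further: the $C_0 d\log T$ piece is just the non-private instance-dependent floor (recovered by taking $\varepsilon\to\infty$), so the real content is the $\frac{1}{M\varepsilon^2}C_0 d\log T$ piece and the exponential term $e^{-M\varepsilon}C_0 M T$.

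First I would fix a single coordinate and reduce to a two-arm (or $d$ copies of two-arm) problem: contexts are chosen so that the gap between the best and second-best arm is some $\Delta$ to be tuned, and Assumption~\ref{asm:margin} is met by design for this gap with the stated $C_0$, while Assumption~\ref{asm:diversity} is satisfied by an appropriate context distribution putting mass on a spanning set of optimal-arm features. For the \emph{almost-memoryless} restriction, the key structural fact is that for $t\ge u$ every client's action depends only on $q_{\le t}$, which is a function of $\mathtt{R}_0$ applied to the streaming history; so the per-client dependence of the decisions on its own past data is entirely routed through the CDP channel $\mathtt{R}_0$. This is exactly what lets me use the group-privacy / ``blow-up'' argument: changing one client's entire data stream changes $q_{\le T}$ only in a $(\varepsilon,\delta)$-controlled way, hence changes that client's expected number of sub-optimal pulls by a bounded factor. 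Concretely, I would set up two instances $\theta^*$ and $\theta^{*\prime}$ differing in one coordinate by $2\Delta$ (swapping which arm is optimal), couple their histories client-by-client, and apply the standard ``$(\varepsilon,\delta)$-DP forces indistinguishability up to $e^{\varepsilon}$ and $\delta$'' inequality together with a union/telescoping over the $T$ rounds (this is where the continual-observation definition in \Cref{def:cdp} is used, since the adversary sees $q_{\le T}$). Summing the resulting lower bound on mistakes across the $M$ clients and optimizing over $\Delta$ gives the $\frac{1}{M\varepsilon^2}$-type term (the $1/M$ because pooling $M$ clients' data is what a private algorithm can leverage, and the $1/\varepsilon^2$ is the usual privacy cost when $\delta$ is negligible), while the regime $\varepsilon<\log 2$ and $\delta=\tilde O(1/(M\sqrt T))$ ensures the $\delta$-slack is dominated.

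For the exponential term $e^{-M\varepsilon}C_0 M T$, I would use a packing/hypothesis-testing argument at the ``group'' level: consider the event that the algorithm correctly identifies the optimal arm for \emph{all} $M$ clients. By group privacy applied to all $M$ clients simultaneously, the output distribution under $\theta^*$ and under the alternative are within a factor $e^{M\varepsilon}$ (plus $e^{M\varepsilon}M\delta$ slack, which is $o(1)$ under the stated $\delta$); hence if $M\varepsilon$ is small, the algorithm cannot distinguish the two instances better than chance, so with constant probability it plays the wrong arm at \emph{every} client for \emph{every} round, incurring $\Omega(C_0 M T)$ regret — the $e^{-M\varepsilon}$ factor being the precise discount from the $e^{M\varepsilon}$ group-privacy blow-up (formally $\Pb_{\theta^{*\prime}}[\text{wrong}] \ge e^{-M\varepsilon}\Pb_{\theta^*}[\text{wrong}] - \delta'$). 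Taking a max (or sum, up to constants) of the two bounds over the worst instance gives the claimed lower bound. I would present this by first proving a clean one-client-change lemma (the $\frac{1}{M\varepsilon^2}$ part) and then the all-clients-change lemma (the $e^{-M\varepsilon}$ part), and finally combining.

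\textbf{Main obstacle.} The delicate point is handling the \emph{adaptivity and non-IID streaming structure} correctly when invoking privacy: the data stream $H_{i,\le T}$ at client $i$ is itself generated by the algorithm's own decisions (which depend on $q_{\le t}$, which depends on all clients), so ``changing client $i$'s data'' is not a clean exogenous perturbation — I must be careful that the neighboring-dataset notion of \Cref{def:neighbor dataset} together with the almost-memoryless property genuinely decouples client $i$'s decisions (for $t\ge u$) from its own realized rewards except through $\mathtt{R}_0$. Making the coupling argument rigorous — i.e., exhibiting, for the \emph{fixed} algorithm, two genuinely $i$-neighboring streaming histories that arise with comparable probability under the two bandit instances and on which the regret contributions differ — is the technical crux, and it is exactly why the theorem is stated for almost-memoryless algorithms rather than general with-memory ones. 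The tuning of $\Delta$ and the bookkeeping of the $\log T$ factor (from summing $1/t$-type mistake probabilities across the instance-dependent regime) are routine by comparison.
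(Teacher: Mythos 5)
Your treatment of the second term $e^{-M\varepsilon}C_0MT$ is essentially the paper's: it too applies group privacy across all $M$ clients at once, comparing the output distribution against the one produced by a fixed (all-zero) dataset that is independent of $\theta^*$, which yields $\Eb[\|\hat\theta-\theta^*\|^2]\ge e^{-M\varepsilon}r^2-O(r^2M\delta)$ and hence the linear-in-$T$ term. You also correctly isolate the role of almost-memorylessness, and the paper uses it exactly as you describe: for $t\ge u$ the decision at client $i$ is a post-processing of $q_{\le t}$ alone, so the induced estimator is $(\varepsilon,\delta)$-private with respect to \emph{every} client's data, including client $i$'s own.

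For the main term $\Omega\bigl(\tfrac{1}{M\varepsilon^2}C_0d\log T\bigr)$, however, the paper does not run a two-point test with coupling. It first reduces regret to a Bayes estimation risk via a federated version of Proposition 3.5 of \citet{he2022reduction}, with $\theta^*$ drawn from a \emph{continuous} prior (uniform on a sphere of radius $r=O(1/C_0)$, in $d/2$ independent two-dimensional blocks), and then lower-bounds the per-round risk by a \emph{fingerprinting} argument: it writes $\Eb[\|\hat\theta-\theta_1^*\|^2]=2r^2-2r^2\sum_i\Eb[Z_i]$ for correlation statistics $Z_i$ between the estimator and client $i$'s score, and shows $\Eb[Z_i]\le(e^{\varepsilon}-1)\sqrt{\tfrac{2(t-1)}{d}\Eb[\|\hat\theta-\theta^*\|^2]}+O(r\delta\sqrt{t}\log(1/\delta))$ under CDP, giving $\Eb[\|\hat\theta-\theta_1^*\|^2]\ge\frac{r^2}{8r^2\varepsilon^2M^2(t-1)/d+4}$; the $\log T$ then appears as the harmonic sum over $t$. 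Your plan has two concrete problems here. First, a single gap $\Delta$ only certifies indistinguishability up to a horizon $t^*\asymp d/(\varepsilon^2M^2\Delta^2)$ and yields regret of order $Mt^*\Delta$, which contains no $\log T$; recovering the harmonic sum requires either the continuous-prior reduction or a genuinely multi-scale testing argument, and this is precisely the step you defer as routine bookkeeping. Second, the paper explicitly notes (footnote to its fingerprinting lemma) that Assouad/two-point-style DP lower bounds lose a factor of $\sqrt{d}$ relative to fingerprinting in general, which is why that route was avoided; you would need to verify that the disjoint block structure of the hard instance rescues the full $d$ dependence. So the outline is sound for the exponential term but leaves a real gap in the derivation of the $\tfrac{d\log T}{M\varepsilon^2}$ term.
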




\begin{remark}
First, we note that in the regime $\varepsilon = \Omega\left(\frac{\sqrt{d}\log^{1.5} T}{M}\right)$ considered in \Cref{thm:upper bound}, the first term of the lower bound dominates the other. Thus, the regret upper bound under \robin nearly matches with the lower bound, indicating that \robin is near-optimal in terms of $M$ and $\varepsilon$ in this regime. On the other hand, when $\varepsilon = \tilde{O}\left( \frac{\log (MT)}{M} \right)$, the second term  $e^{-M\varepsilon}C_0MT$ dominates the first term and grows linearly in $T$. Thus, it is impossible to achieve sublinear regret for any almost-memoryless algorithm in this regime.  Since \robin is an almost-memoryless algorithm, it arguably achieves the best we could hope for among all almost-memoryless algorithms. 

The lower bound also indicates the number of samples contributed by each client (i.e., $T$) cannot be arbitrarily large (i.e., cannot scale faster than $\tilde{O}(e^{M\varepsilon}/M)$) in order to achieve sublinear regret for any almost-memoryless algorithm. 
A similar phenomenon in offline supervised learning is also observed in \citet{levy2021learning}, which states that learning with user-level DP cannot reach zero error when the number of clients is fixed. 
\end{remark}

\if{0}
\begin{corollary}
    If $M\varepsilon\leq 0.5\log T$, and $\delta\leq 1/(48M)$, then any federated algorithm satisfying $(\varepsilon,\delta)$-CDP must incur linear regret $\Omega(M\sqrt{T})$.
\end{corollary}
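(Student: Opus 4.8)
The plan is to obtain the bound as a direct specialization of the second, ``unlearnable'' term $e^{-M\varepsilon}C_0 MT$ in the lower bound of \Cref{thm:main-CDP-lower-margin}. First I would fix the hard federated instance underlying that theorem and choose its parameters so that the margin constant is $C_0=\Theta(1)$; this instance satisfies \Cref{asm:diversity} and \Cref{asm:margin}, and since it satisfies \Cref{asm:margin} it is \emph{a fortiori} a valid instance for the class without the margin assumption, so the resulting lower bound also holds in the no-margin setting. On this instance \Cref{thm:main-CDP-lower-margin} gives regret $\Omega\!\left(e^{-M\varepsilon}C_0 MT\right)$. Substituting the corollary's hypothesis $M\varepsilon\le\tfrac12\log T$ yields $e^{-M\varepsilon}\ge e^{-\frac12\log T}=T^{-1/2}$, whence the regret is $\Omega\!\left(C_0\,MT\cdot T^{-1/2}\right)=\Omega(M\sqrt{T})$, the claimed bound. (Here $M\sqrt T$ is the ``no-collaboration'' regret: it is what $M$ clients would incur learning in isolation, so the corollary states that strong privacy neutralizes any collaboration benefit.)

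The engine producing the $e^{-M\varepsilon}$ factor is a two-instance hybrid (group-privacy) argument, and the point to re-examine is whether it survives the corollary's weaker tail condition $\delta\le 1/(48M)$. I would take two instances $\theta_+,\theta_-$ and switch the $M$ clients one at a time, applying the $i$-neighboring guarantee of \Cref{def:cdp} at each switch; composing over the $M$ switches multiplies the distinguishing probability of the server channel $\Rt_0$ by $e^{M\varepsilon}$ and adds an aggregate additive slack $\delta\cdot\frac{e^{M\varepsilon}-1}{e^{\varepsilon}-1}$. The role of $\delta\le 1/(48M)$ is to keep this slack a small constant so that, after the $e^{-M\varepsilon}$ discount, a constant fraction of the ``wrong-arm'' probability mass survives on the worse of the two instances, giving $\Omega(M\sqrt T)$. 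I flag, however, that this slack must be checked carefully in this regime: since $e^{M\varepsilon}\le\sqrt T$, the slack is of order $\delta\sqrt T/\varepsilon$, which is only automatically a constant under the stronger requirement $\delta=\tilde O(1/(M\sqrt T))$ used in the parent theorem; verifying that $\delta\le 1/(48M)$ alone suffices (or tightening the hybrid accounting so that the tail contributes only $O(M\delta)=O(1)$) is the concrete technical step that must be completed.

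The main obstacle is the gap between ``any federated algorithm,'' as the corollary is worded, and the \emph{almost-memoryless} class (\Cref{def:memoryless}) to which \Cref{thm:main-CDP-lower-margin} applies. The $e^{-M\varepsilon}MT$ term crucially exploits the fact that, for $t\ge u$, an almost-memoryless client decides using only the DP-protected broadcast $q_{\le t}$ and \emph{not} its raw local history $H_{i,t}$; the group-privacy coupling then controls the entire action distribution. For a general with-memory algorithm, each client may learn from its own $T$ samples, over which $\Rt_0$ imposes no privacy, so the coupling bounds only $q_{\le t}$ and not the actions—and indeed per-client learning would achieve regret far below $M\sqrt T$, so the bound cannot hold literally for the fully general class. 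To close the gap honestly I would either restrict the statement to almost-memoryless algorithms (matching \Cref{thm:main-CDP-lower-margin}), or attempt the stronger decomposition ``local data is information-poor $+$ the only collaboration channel is privacy-limited,'' bounding the total distinguishing information as the sum of each client's bounded local information and the privacy-limited information carried by $q$ via a strong-data-processing/KL inequality. Making this decomposition rigorous is the crux, and is the likely reason the statement is presented tentatively.
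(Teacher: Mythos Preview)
The paper does not actually prove this corollary: the statement sits inside an \verb|\if{0} ... \fi| block and was excised from the final version, so there is no paper proof to compare against. Your diagnosis of \emph{why} it was cut is essentially correct.

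Your reduction to the $e^{-M\varepsilon}C_0MT$ term of \Cref{thm:main-CDP-lower-margin} is the right idea, and with $C_0=\Theta(1)$ and $M\varepsilon\le\tfrac12\log T$ it does yield $\Omega(M\sqrt{T})$ for \emph{almost-memoryless} algorithms under the theorem's own $\delta$ assumption. You have also correctly flagged the two mismatches with the corollary as written.

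On the $\delta$ condition: your slack formula $\delta\,(e^{M\varepsilon}-1)/(e^\varepsilon-1)$ is the group-privacy slack in the \emph{upper-bound} direction, whereas the paper's derivation of the $e^{-M\varepsilon}$ term (in the proof of \Cref{thm:main-CDP-lower-margin}) uses the lower-bound direction, where the additive slack is simply $M\delta$ per unit of the tail integral, giving $\Eb[\|\hat\theta-\theta_1^*\|^2]\ge e^{-M\varepsilon}r^2-4r^2M\delta$. The conclusion, however, is the same as yours and sharper: one needs $M\delta\ll e^{-M\varepsilon}$, and since $e^{-M\varepsilon}$ can be as small as $T^{-1/2}$ here, this forces $\delta=O(1/(M\sqrt{T}))$. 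Under the corollary's hypothesis $\delta\le 1/(48M)$ the slack is the constant $r^2/12$, which swamps $e^{-M\varepsilon}r^2\le r^2/\sqrt{T}$ for large $T$, and the bound is vacuous.

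On ``any federated algorithm'': you are right that the $e^{-M\varepsilon}MT$ term is proved only for almost-memoryless algorithms (it disappears in the with-memory \Cref{thm:main-CDP-minimax}), precisely because the group-privacy coupling controls only $q_{\le t}$ and not the client's private use of $H_{i,t}$. So the corollary as stated does not follow from the paper's machinery. Your proposed ``local information is bounded $+$ collaboration channel is privacy-limited'' decomposition is the natural route to try, but the paper does not carry it out, and indeed the absence of such an argument is presumably why the corollary was dropped.
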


The corollary suggests that under the user-level $(\log T/(2M),O(1/M))$-CDP constraint, the best federated algorithm is simply that each client makes decisions with its own data, without sharing any information. 
\fi

 The proof of \Cref{thm: Regret to estimation error} is built upon a generic lower bound developed by \citet{he2022reduction}, as informally stated in \Cref{thm: Regret to estimation error}, and the fingerprinting lemma~\cite{kamath2019privately, bun2017make}. 

\begin{theorem}[Informal]\label{thm: Regret to estimation error}
    Let $\theta_1^*$ be uniformly sampled from a two-dimensional sphere $\Theta = \{x\in\Rb^2:\|x\| = r\}$. Then, there exists a federated linear contextual bandits model such that the total regret is lower bounded by 
    \begin{align*}\textstyle
    \Omega \left(\sum_{i\in[M],t\in[T]}\inf_{\theta_{i,t}\in\mathcal{F}(\mathcal{I}_i,\Theta)}\frac{1}{r}\Eb_v\left[ \left\|\theta^*_1 - \theta_{i,t}\right\|^2 \right] \right),
    \end{align*}
    where $\mathcal{I}_i$ is a set of available information provided for client $i$ (e.g., $(H_{i,t},q_{\leq t})$). 
\end{theorem}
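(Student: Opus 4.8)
The plan is to reduce the bandit regret to a sequence of estimation problems by constructing a hard instance in which the optimal arm at each context encodes one coordinate of the unknown parameter $\theta_1^*$, so that choosing a near-optimal arm is essentially equivalent to estimating $\theta_1^*$ from the information available to the client. Concretely, I would take $d=2$, place $\theta_1^*$ uniformly on the sphere $\Theta=\{x\in\Rb^2:\|x\|=r\}$ for a radius $r$ to be chosen, and design the context distribution $\rho_i$ and feature map $\phi$ so that each context $c_{i,t}$ presents a two-arm (or small-arm) decision where the gap between the best and second-best arm is controlled by the projection of $\theta_1^*$ onto a context-dependent direction. Then the per-step regret $x_{i,t,a_{i,t}^*}^\TT\theta^* - x_{i,t,a_{i,t}}^\TT\theta^*$ is, up to constants, lower bounded by (the magnitude of that projection) times the indicator that client $i$ picks the wrong arm at time $t$.

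The key steps, in order, are: (1) fix the instance so that, after averaging over the context, the expected per-step regret of \emph{any} decision rule is at least $\Omega\big(\tfrac{1}{r}\,\Eb\|\theta_1^* - \hat\theta_{i,t}\|^2\big)$, where $\hat\theta_{i,t}\in\mathcal{F}(\mathcal{I}_i,\Theta)$ is the ``implied estimator'' obtained by reading off, from the action $a_{i,t}=\Alg_i(\mathcal{I}_i)$, a point of $\Theta$ consistent with that action — this is the step where the $1/r$ normalization and the squared-error appear, and it uses a geometric/anti-concentration argument on the sphere (a point on the circle that is $\epsilon$-close to optimal for a random $\theta_1^*$ is within $O(\sqrt{\epsilon/r})$ in angle, hence the quadratic relationship); (2) observe that for a \emph{fixed} decision rule $\Alg_i$, the map $\mathcal{I}_i\mapsto a_{i,t}$ composed with the ``read-off'' map is a particular element of $\mathcal{F}(\mathcal{I}_i,\Theta)$, so its expected squared error is at least the infimum over all such measurable functions; (3) sum over $i\in[M]$ and $t\in[T]$ to get the stated bound. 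Steps (2)–(3) are bookkeeping; the content is in step (1) and in exhibiting an explicit instance that simultaneously satisfies whatever structural conditions the downstream argument needs (and, importantly, makes the reduction tight in both the $\sqrt{dT}$ and the $d\log T$ regimes when later combined with, respectively, a general estimation lower bound or the fingerprinting lemma).

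The main obstacle I anticipate is engineering the hard instance so that the reduction is simultaneously (a) valid as a \emph{lower bound} — i.e. the per-step regret genuinely dominates $\tfrac1r\|\theta_1^*-\hat\theta_{i,t}\|^2$ for the particular $\hat\theta_{i,t}$ induced by the action, not merely for the Bayes-optimal estimator — and (b) compatible with the two different follow-up arguments used to prove \Cref{thm:main-CDP-minimax} and related bounds: without the margin condition one wants a genuinely two-dimensional continuum so that estimation error is $\Omega(\sqrt{1/n})$ per client, whereas a fingerprinting-based bound wants the instance to expose $\Omega(d)$ nearly-independent one-dimensional sub-problems so that privacy forces an extra $1/(M\varepsilon^2)$ factor in each. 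Reconciling these — presumably by stating the lemma for the $d=2$ sphere and then tensorizing over $d/2$ independent copies later — and getting the constants in the geometric anti-concentration step right (so that ``$\epsilon$-close in reward'' really does translate to ``$O(\sqrt{\epsilon})$-close in parameter'') is where the care is needed; once that instance is in hand, the rest is a routine decomposition of the regret and an exchange of infimum and expectation.
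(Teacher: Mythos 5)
Your plan matches the paper's: the paper constructs exactly the hard instance you describe (two arms with the second arm's feature fixed to zero, the parameter a concatenation of $d/2$ independent two-dimensional blocks each uniform on the radius-$r$ sphere, and first-arm features drawn from a truncated Gaussian), and the regret-to-estimation reduction with the $1/r$ normalization and the quadratic angle-versus-reward-gap relationship is imported directly from Proposition 3.5 of \citet{he2022reduction} (extended to $M$ clients and then tensorized over the $d/2$ blocks) rather than re-derived. The one step you correctly flag as delicate --- that an \emph{arbitrary} decision rule, not just a halfplane rule induced by some estimator, must incur per-step regret at least the Bayes estimation error --- is precisely the content of that cited proposition, so your outline is sound and essentially identical in approach.
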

\begin{proof}[Proof sketch of \Cref{thm:main-CDP-lower-margin}]
\Cref{thm: Regret to estimation error} states that the regret is lower bounded by the performance of estimating the ``direction'' of the true parameter $\theta^*$. Note that in general, the error of estimating direction cannot be directly lower bounded  by the standard estimation error, especially when the norm of parameters are $\Omega(1)$. However, under the margin condition in~\Cref{asm:margin}, the norm of $\theta^*$ is in general $O(1)$. Hence, to lower bound the estimation error of the direction, we follow an alternative approach that re-parameterizes $\theta^*_s$ by its angle $\gamma_s^*$ and provide an upper bound of the expected inner product $\Eb_{v}[\theta_{i,t,s}^{\TT}\theta_s^*]$. {By leveraging the fingerprinting lemma,} we show that if $\theta_{i,t,s}$ is a private estimator, then, $\Eb_{v}[\theta_{i,t,s}^{\TT}\theta_s^*]\leq O(\varepsilon M\sqrt{(t-1)\Eb_v[\|\theta_s^* - \theta_{i,t,s}\|^2]})$. Combining with the fact that $\Eb_v[\|\theta_s^* - \theta_{i,t,s}\|^2] = 2 - 2\Eb_v[\theta_{i,t,s}^{\TT}\theta_s^*]$, we conclude that $\Eb_v[\|\theta_s^* - \theta_{i,t,s}\|^2]\geq \Omega(1/(\varepsilon^2 M^2(t-1) + 1))$. Taking the summation over $t$ and $i$, and by \Cref{thm: Regret to estimation error}, we obtain a lower bound $\Omega(\log T/(M\varepsilon^2))$. {The second term $e^{-M\varepsilon}C_0MT$ can be derived by calculating the estimation error under the case when all clients collect dummy information (e.g. $x_{i,t,a_t}=0$).} The final result then follows by taking the non-private regret lower bound $\Omega(\log T)$ into consideration. 
\end{proof}

Our results can be generalized to obtain minimax lower bounds for the with-memory algorithms by optimizing the norm $r$ of the true parameter or separately analyze the information contained in local datasets. The results are summarized as follows.

\begin{theorem}\label{thm:main-CDP-minimax}
    Fix any $\varepsilon\in(0,\log 2)$, $\delta=\tilde{O}\left(\frac{1}{M\sqrt{T}}\right)$ , $T\geq d^2$. Then, there exists a federated linear contextual bandits instance satisfying \Cref{asm:diversity,asm:margin} such that any {\bf with-memory} federated algorithm satisfying user-level $(\varepsilon,\delta)$-CDP must incur a regret lower bounded by
    \begin{align*}\textstyle
    &\Omega \left(\min\left\{M, \max \left\{1, \frac{1}{M\varepsilon^2 } \right\} \right\}C_0d\log T  \right).
\end{align*}
If \Cref{asm:margin} is not satisfied, then the minimax regret lower bound becomes 
\begin{align*}\textstyle
    \Omega\left( \min\left\{M, \max \left\{\sqrt{M}, \frac{1}{\varepsilon} \right\}\right\}\sqrt{dT}\right).
\end{align*}
\end{theorem}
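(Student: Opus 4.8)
The plan is to combine the regret-to-estimation reduction of \Cref{thm: Regret to estimation error} with a two-part ``information budget'' analysis that separately tracks what a client can learn about $\theta^*$ from its own local history and from the privatized broadcast. I would take the hard instance to be a $d$-dimensional direction-estimation construction in the style of \citet{he2022reduction} (for the margin claim, a product of $\Theta(d)$ two-dimensional sub-instances; for the general claim, a single $d$-dimensional one), with context distributions chosen so that \Cref{asm:diversity} holds (e.g.\ near-isotropic contexts) and, for the first claim, so that \Cref{asm:margin} holds with the prescribed constant $C_0$; the norm $r=\|\theta^*\|$ is kept as a free parameter to be optimized at the end. \Cref{thm: Regret to estimation error} then reduces the problem to lower bounding $\inf_{\theta_{i,t}}\Eb_v\|\theta^*-\theta_{i,t}\|^2$, where $\theta_{i,t}$ ranges over all estimators measurable with respect to the information $(H_{i,t},q_{\leq t})$ that is actually available to client $i$ at time $t$ --- this is exactly where the with-memory case differs from the almost-memoryless case of \Cref{thm:main-CDP-lower-margin}, which only had to handle $q_{\leq t}$.

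For the estimation lower bound I would bound the information contributed by the two sources and add them. The local history $H_{i,t}$ contains only $t-1$ samples, each carrying signal of strength $\Theta(r)$, so a standard argument (van Trees / Assouad-type) shows it is worth at most $O(r^2 t)$ units of Fisher information about $\theta^*$, independently of $q_{\leq t}$. For the broadcast there are two regimes. When $\varepsilon=\Omega(1/M)$, I would re-run the fingerprinting-lemma argument from the proof of \Cref{thm:main-CDP-lower-margin}, but now \emph{conditionally on the fixed local history} $H_{i,t}$: because $q_{\leq t}=\Rt_0(\{H_\tau\}_{\tau\leq t})$ is $(\varepsilon,\delta)$-CDP in each of the $M$ clients and each client contributes at most $t-1$ effective samples, the lemma forces $\Eb_v[\langle\theta_{i,t},\theta^*\rangle\mid H_{i,t}]\leq O\big(r\varepsilon M\sqrt{(t-1)\,\Eb_v[\|\theta^*-\theta_{i,t}\|^2\mid H_{i,t}]}\big)$ up to lower-order $\delta$-terms, i.e.\ the broadcast is worth at most $O(r^2 M^2\varepsilon^2 t)$ units, and never more than the non-private value $O(r^2 M t)$. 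When $\varepsilon=O(1/M)$, I would instead use group privacy: $\Rt_0$ is $\big(M\varepsilon,\tfrac{e^{M\varepsilon}-1}{e^\varepsilon-1}\delta\big)$-DP with respect to replacing \emph{all} $M$ clients' data at once, so with $M\varepsilon=O(1)$ and $\delta=\tilde O(1/(M\sqrt T))$ the broadcast transcript is $O(1)$-close (in a TV-type sense) between any two instances in the family, hence carries only $O(1)$ worth of information and cannot lower the estimation error below a constant fraction of what $H_{i,t}$ alone allows. Adding the budgets, the effective information available to client $i$ at time $t$ is $O(r^2\kappa t)$ with $\kappa:=\max\{1,\min\{M,M^2\varepsilon^2\}\}$ (so $\kappa=1$ for $\varepsilon\lesssim 1/M$, $\kappa=M^2\varepsilon^2$ for $1/M\lesssim\varepsilon\lesssim 1/\sqrt M$, and $\kappa=M$ for $\varepsilon\gtrsim 1/\sqrt M$), which gives $\inf_{\theta_{i,t}}\Eb_v\|\theta^*-\theta_{i,t}\|^2\geq\Omega(\min\{r^2,\,d/(\kappa t)\})$.

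Plugging this into \Cref{thm: Regret to estimation error} yields $\text{Regret}(M,T)\geq\Omega\big(\tfrac{1}{r}\sum_{i\in[M]}\sum_{t\leq T}\min\{r^2,\,d/(\kappa t)\}\big)$. For the first claim, the requirement that the instance satisfy \Cref{asm:margin} with parameter $C_0$ pins the norm to $r=\Theta(1/C_0)$ (subject to $\|\theta^*\|\leq 1$, with the hypothesis $T\geq d^2$ leaving enough room); the resulting ``flat-then-harmonic'' sum over $t$ evaluates to $\Theta(d\log T/\kappa)$ after the $1/r$ normalization, so $\text{Regret}(M,T)=\Omega(M C_0 d\log T/\kappa)$, and a short case check over the three ranges of $\varepsilon$ shows $M/\kappa=\min\{M,\max\{1,1/(M\varepsilon^2)\}\}$. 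For the second claim, \Cref{asm:margin} is dropped, so $r$ is free apart from $\|\theta^*\|\leq 1$; optimizing $r$ in $\tfrac{M}{r}\sum_{t\leq T}\min\{r^2,d/(\kappa t)\}$ puts the optimum at $r\asymp\sqrt{d/(\kappa T)}$ (feasible since $T\geq d^2$), where every term equals $r^2$ and the bound collapses to $\Theta(M\sqrt{dT/\kappa})=\Omega(\min\{M,\max\{\sqrt M,1/\varepsilon\}\}\sqrt{dT})$ because $M/\sqrt\kappa=\min\{M,\max\{\sqrt M,1/\varepsilon\}\}$.

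I expect the main obstacle to be the conditional fingerprinting step: the bandit feedback couples the clients' histories $\{H_{j,t}\}_j$ through the shared broadcast, so the fingerprinting lemma does not apply off the shelf. I would handle this by designing the hard instance so that the feature vectors along the hard directions are effectively fixed in advance (non-adaptive), which restores conditional independence of the clients' contributions given $\theta^*$ and lets the fingerprinting bound hold uniformly in $t$; a secondary difficulty is the careful bookkeeping of the $\delta$-terms, in particular their amplification by the factor $(e^{M\varepsilon}-1)/(e^\varepsilon-1)$ under group privacy, which is exactly what forces the hypothesis $\delta=\tilde O(1/(M\sqrt T))$.
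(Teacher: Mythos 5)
Your proposal is correct in outline and lands on the right bounds, and its core ingredients coincide with the paper's: the same product-of-two-dimensional-spheres hard instance, the reduction of regret to direction-estimation error via \Cref{thm: Regret to estimation error} (in the form of \Cref{coro: generic lower bound}), a fingerprinting argument for the private part, and the same final optimization of $r$ ($r=\Theta(1/C_0)$ under the margin condition, $r\asymp\sqrt{d/(\kappa T)}$ otherwise). Where you diverge is in the accounting of the two information sources. You split the budget into a local part and a broadcast part, propose running fingerprinting \emph{conditionally on} $H_{i,t}$, and add a separate group-privacy-plus-coupling argument for the $\varepsilon=O(1/M)$ regime. The paper instead uses a single unconditional correlation decomposition $\Eb[\hat\theta^{\TT}\theta_1^*]=r^2\sum_i\Eb[Z_i]$ (\Cref{lemma: Fingerprinting}), in which the own-client term $Z_{i_0}$ is bounded by plain Cauchy--Schwarz with no privacy at all (\Cref{eqn: upper bound Zi non-private}) and only the $M-1$ cross-client terms invoke the CDP property (\Cref{eqn: upper bound Zi privately}); this yields $\sum_i\Eb[Z_i]\le\bigl((e^\varepsilon-1)(M-1)+1\bigr)\sqrt{2(t-1)\Eb[\|\hat\theta-\theta^*\|^2]/d}$ in one inequality, so all regimes of $\varepsilon$ fall out simultaneously and the $\min\{M,\cdot\}$ caps emerge from $M/\max\{\varepsilon^2M^2,1\}$ combined with the non-private lower bounds --- no group-privacy step and no conditioning are needed. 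The conditional-independence obstacle you flag is handled in the paper not by de-adaptivizing the features but by working with the extended histories $\bar H_{i,t}$ (full-information rewards for both arms), which are independent across clients given $\theta^*$ and make the decomposition exact. Your route should go through, but the two steps you correctly identify as delicate --- conditional fingerprinting, and converting group privacy on fixed neighboring datasets into a TV bound between transcript distributions under different $\theta$ (which requires a coupling over the randomness of all clients' data) --- are both avoidable with the paper's decomposition, so you would be doing extra work for the same constant-factor conclusion.
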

\begin{remark}
{Since with-memory algorithms encompass almost-memoryless algorithms as special cases,} the first part of \Cref{thm:main-CDP-minimax} verifies that \robin is nearly optimal compared to {\it any} algorithms when $\varepsilon = \Omega\left(\frac{\sqrt{d}\log^{1.5} T}{M}\right)$.  In fact, \robin can be easily adapted to a with-memory algorithm to remove the constraint on the parameters. Specifically, if at the beginning of the entire algorithm, \robin is allowed to adaptively decides that clients follow \Cref{alg:greedy} when $\varepsilon = \Omega(\frac{\sqrt{d}\log^{1.5} T}{M})$, or independently adopt LinUCB without sharing any information when $\varepsilon = O( \frac{\sqrt{d}\log^{1.5} T}{M})$, then, the algorithm achieves an upper bound that nearly matches with the lower bound  
in terms of $M$ and $\varepsilon$ {for any $\varepsilon\in (0,\log 2)$}. In addition, this adaptation preserves the user-level CDP guarantee, since performing LinUCB locally without information exchange guarantees zero information leakage.

\Cref{thm:main-CDP-minimax} also indicates that, when $\varepsilon=\Omega(\frac{1}{\sqrt{M}})$, the lower bound reduces to $\Omega(C_0d\log T)$ with \Cref{asm:margin} and $\Omega(\sqrt{dMT})$ without \Cref{asm:margin}. This suggests that imposing user-level $(\varepsilon,\delta)$-CDP constraint does not increase the hardness of learning compared with its non-private counterpart for the general with-memory algorithms in this regime. However, when $\varepsilon = O(\frac{1}{\sqrt{M}})$, imposing the CDP constraint incurs a blow-up factor at least $\min\{M, \frac{1}{\varepsilon^2M}\}$  with \Cref{asm:diversity} or $\min\{\sqrt{M}, \frac{1}{\varepsilon\sqrt{M}} \} $  without \Cref{asm:diversity}, indicating the hardness of learning strictly increases.

\end{remark}


\section{Lower Bounds under LDP Constraint}\label{sec:ldp}
In this section, we present regret lower bounds under the non-interactive local differential privacy constraint, which provides an initial view of this more challenging problem. The full proofs in this section can be found in \Cref{sec:proof of LDP lower bound}.

\begin{definition}[Non-interactive upload channel] 
The upload channels $\Rt_i$ of a federated algorithm $\mathtt{Alg} = (\mathtt{R}_0, \mathtt{Alg}_1,\mathtt{R}_1,\ldots,\mathtt{Alg}_M,\mathtt{R}_M)$ is non-interactive if
 $\Rt_i$ does not depend on global information $q_{\leq t-1}$ conditioned on the local history $H_{i,t}$, i.e. $\Rt_i(H_{i,t},q_{\leq t-1}) = \Rt_i(H_{i,t})$.
\end{definition}
We note that existing phased elimination-based federated bandits algorithms are non-interactive~\cite{shi2021federated,shi2021aistats,huang2021federated}. In contrary, UCB-type algorithms are interactive and with-memory~\cite{dubey2020differentially,li2022communication,li2020federated}.

\begin{definition}[Non-interactive user-level local DP]\label{def:ldp}
Consider a time horizon $T$.
A federated algorithm $\mathtt{Alg} = (\mathtt{R}_0, \mathtt{Alg}_1,\mathtt{R}_1,\ldots,\mathtt{Alg}_M,\mathtt{R}_M)$ is user-level $(\varepsilon,\delta)$-locally differentially private if for any {$i$-neighboring streaming datasets $\{H_t\}_{t\leq T}$ and $\{H_t'\}_{t\leq T}$} (see \Cref{def:neighbor dataset}), 
    and any subset $Q_{i,\leq T}=(Q_{i,1},\ldots,Q_{i,T}) \subset\mathcal{Q}^{T}$, we have
\begin{align*}\textstyle
    &\Pb[ \mathtt{R}_i(\{H_{i,t}\}_{t\leq T})\in Q_{i,\leq T}] \\
    &\quad\leq e^{\varepsilon}\Pb[\mathtt{R}_i(\{H'_{i,t}\}_{t\leq T})\in Q_{i,\leq T}] + \delta.
\end{align*}
\end{definition}

We focus on the with-memory setting, since any lower bound of with-memory algorithms must be a lower bound of memoryless algorithms. 

\begin{theorem}\label{thm:main-LDP}
     If $\varepsilon\in(0,\log 2)$, $\delta=\tilde{O}(1/M\sqrt{T})$, there exists a federated linear contextual bandits instance satisfying \Cref{asm:diversity,asm:margin} such that any {\bf with-memory} federated algorithm satisfying user-level $(\varepsilon,\delta)$-LDP must incur a regret lower bounded by
    \begin{equation*}\textstyle
        \Omega\left( \min\left\{1/\varepsilon, M \right\}C_0d\log T\right).
    \end{equation*}
    If \Cref{asm:margin} is not satisfied, then the minimax regret lower bound becomes 
    \begin{equation*}\textstyle
        \Omega\left(\min \left\{\sqrt{M/\varepsilon},M \right\}\sqrt{dT}\right).
    \end{equation*}
    
    \end{theorem}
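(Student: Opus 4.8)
The plan is to transfer the central-DP lower-bound argument (the proof of \Cref{thm:main-CDP-minimax}) to the local model, exploiting the extra structure that the $M$ upload channels $\Rt_1,\dots,\Rt_M$ must be \emph{separately} $(\varepsilon,\delta)$-LDP and non-interactive. I would begin from \Cref{thm: Regret to estimation error}, instantiating the hard model as $d/2$ independent two-dimensional copies, each with a two-arm decision set, a near-uniform context distribution (so that \Cref{asm:diversity} holds with $\lambda_0=\Theta(1)$), and $\theta^*$ drawn uniformly from the radius-$r$ circle in that block. The radius is set to $r=\Theta(1/C_0)$ in the first part so that \Cref{asm:margin} holds with the prescribed constant $C_0$, and is left as a free parameter (to be optimized later) when \Cref{asm:margin} is dropped. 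By \Cref{thm: Regret to estimation error} it then suffices to lower bound $\sum_{i\in[M],\,t\in[T]}\frac1r\,\Eb_v[\|\theta^*-\theta_{i,t}\|^2]$, where $\theta_{i,t}=(\theta_{i,t,1},\dots,\theta_{i,t,d/2})$ is an \emph{arbitrary} estimator built from the information available to client $i$ at time $t$, namely its raw local history $H_{i,t}$ together with the global transcript $q_{\le t}$. The crucial observation is that $q_{\le t}$ is a post-processing of the $M$ non-interactive local releases $\{q_{j,\le t}\}_{j\in[M]}$, each of which depends only on client $j$'s first $t-1$ samples and passes through an $(\varepsilon,\delta)$-LDP channel whose neighboring relation (\Cref{def:neighbor dataset}) permits \emph{arbitrary} changes of that client's entire stream.

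The heart of the proof is a per-$(i,t)$, per-block estimation bound: for every block $s$,
\[
 \Eb_v\!\big[\|\theta^*_s-\theta_{i,t,s}\|^2\big]\ \ge\ \Omega\!\left(\min\Big\{r^2,\ \tfrac{1}{\lambda_0(t-1)(1+\varepsilon M)}\Big\}\right)
\]
in the $(\varepsilon,\delta)$-LDP case, with $\varepsilon^2 M$ in place of $\varepsilon M$ for pure LDP. The ``$1$'' in $1+\varepsilon M$ comes from client $i$'s own un-privatized $t-1$ samples, and already accounts for the regime in which privacy is strong enough that no collaboration helps (recovering the single-client bound and the $\min\{M,\cdot\}$ part of the statement). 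For the privacy-dependent term I would re-parameterize $\theta^*_s$ by its angle $\gamma^*_s$ and bound $\Eb_v[\theta_{i,t,s}^{\TT}\theta^*_s]$ along the same lines as the CDP proof — via the fingerprinting lemma — but now applied \emph{channel by channel}: each $\Rt_j$ is $(\varepsilon,\delta)$-DP with respect to arbitrary changes of client $j$'s stream, and because the $M$ local randomizations are mutually independent given $\theta^*$, their contributions to $\Eb_v[\theta_{i,t,s}^{\TT}\theta^*_s]$ aggregate as $\sqrt M$ rather than as the $M$ produced by a single central mechanism. Concretely one obtains $\Eb_v[\theta_{i,t,s}^{\TT}\theta^*_s]\lesssim (\text{raw-data part}) + \varepsilon\sqrt{M(t-1)\,\Eb_v[\|\theta^*_s-\theta_{i,t,s}\|^2]}$ for pure LDP (resp.\ with $\sqrt{\varepsilon M}$ in place of $\varepsilon\sqrt M$ for $(\varepsilon,\delta)$-LDP, the weaker form reflecting that the quadratic-in-$\varepsilon$ local contraction is unavailable once $\delta>0$); equivalently, the entire CDP computation goes through after substituting $\varepsilon\mapsto\varepsilon/\sqrt M$ (pure LDP, matching \Cref{coro:pure LDP lower bound}) or $\varepsilon\mapsto\sqrt{\varepsilon/M}$ (approximate LDP). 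Since $\Eb_v[\theta_{i,t,s}^{\TT}\theta^*_s]=\Theta(r^2)$ whenever $\Eb_v[\|\theta^*_s-\theta_{i,t,s}\|^2]$ is a small fraction of $r^2$, solving this inequality yields the displayed bound.

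It then remains to sum up. For the first part, with $r=\Theta(1/C_0)$ one has $\frac1r\cdot r^2=\Theta(1/C_0)$ and $\frac1r\cdot\frac{1}{\lambda_0(t-1)(1+\varepsilon M)}=\Theta\!\big(\frac{C_0}{(t-1)(1+\varepsilon M)}\big)$, so the sum over $t\in[T]$ is a harmonic sum of order $\frac{C_0\log T}{1+\varepsilon M}$; summing over the $M$ clients and the $d/2$ blocks gives $\Omega\!\big(\frac{M}{1+\varepsilon M}\,C_0 d\log T\big)=\Omega\big(\min\{M,1/\varepsilon\}\,C_0 d\log T\big)$. When \Cref{asm:margin} is dropped, I would instead use the standard ``undetectable-gap'' version of the construction with a common sub-optimality gap $\Delta$; the same effective-sample count shows that the instance is indistinguishable to client $i$ by time $T$ unless $\Delta\gtrsim\sqrt{d/\big(T(1+\varepsilon M)\big)}$, so each client incurs $\Omega(T\Delta)=\Omega\big(\sqrt{dT/(1+\varepsilon M)}\big)$ and, summing over the $M$ clients, $\Omega\big(\frac{M}{\sqrt{1+\varepsilon M}}\sqrt{dT}\big)=\Omega\big(\min\{M,\sqrt{M/\varepsilon}\}\sqrt{dT}\big)$. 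Throughout, the hypothesis $\delta=\tilde O(1/(M\sqrt T))$ is exactly what keeps the additive $\delta$-slack of the $T$-fold $(\varepsilon,\delta)$-LDP releases, accumulated over the $M$ channels and over the $T$ estimation instances, strictly below the signal level.

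The main obstacle is the channel-by-channel fingerprinting step with the correct $\sqrt M$ (rather than $M$) aggregation: one must show that $M$ independently privatized local releases convey strictly less information about $\theta^*$ than a single central mechanism of the same nominal privacy level — the $\varepsilon M\to\varepsilon\sqrt M$ (pure) and $\varepsilon M\to\sqrt{\varepsilon M}$ (approximate) improvements — and must do so for \emph{user-level} channels whose neighboring relation allows arbitrary changes of the whole stream, which is also what forces the two qualitatively different contraction rates for $(\varepsilon,0)$- and $(\varepsilon,\delta)$-LDP and hence the $1/\varepsilon^2$ versus $1/\varepsilon$ blow-up. A secondary but necessary technical point is engineering a single instance that satisfies \Cref{asm:diversity} and \Cref{asm:margin} simultaneously while leaving $C_0$ an arbitrary (large) free parameter, and carrying the $\delta$-bookkeeping described above.
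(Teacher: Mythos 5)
Your setup is right — the same hard instance, the same reduction via the regret-to-estimation-error theorem, and the per-step targets $\Eb[\|\theta^*_s-\theta_{i,t,s}\|^2]\gtrsim \min\{r^2,\,d/((1+\varepsilon M)(t-1))\}$ (approximate LDP) and with $\varepsilon^2M$ for pure LDP are exactly the bounds the paper sums up, so your final arithmetic and choices of $r$ reproduce the stated rates. But the step you yourself flag as the main obstacle is a genuine gap, not a technicality: a ``channel-by-channel fingerprinting argument'' does not deliver the $\varepsilon M\mapsto \sqrt{\varepsilon M}$ (resp.\ $\varepsilon M\mapsto\varepsilon\sqrt M$) improvement. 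The fingerprinting lemma, as used in the CDP proof, bounds each $\Eb[Z_j]$ \emph{separately} by $(e^{\varepsilon}-1)\sqrt{(t-1)\Eb[\|\hat\theta-\theta^*\|^2]/d}$ using only the DP guarantee of the (joint) output with respect to client $j$'s stream; summing over $j$ then unavoidably produces the linear factor $M$, i.e.\ the central-DP rate. To beat that you must exploit the \emph{independence} of the $M$ local randomizers, and the $Z_j$'s are not independent (each involves the common estimator $\hat\theta$, which depends on all releases), so there is no obvious concentration or orthogonality argument that turns the sum of $M$ individually bounded terms into a $\sqrt M$-type bound. Your proposal asserts the aggregation but supplies no mechanism for it, and for $(\varepsilon,\delta)$-LDP it is not even clear what the analogue of the per-channel fingerprinting inequality with the claimed $\sqrt{\varepsilon M}$ scaling would be.

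The paper proves the theorem by an entirely different mechanism that makes the independence explicit: a strong data-processing (contraction) inequality for LDP channels — $\KL$ shrinks by the factor $1-e^{-\varepsilon}(1-\delta)=O(\varepsilon+\delta)$ for approximate LDP (and by $O(\varepsilon^2)$ for pure LDP via Duchi et al., which is where \Cref{coro:pure LDP lower bound} gets its extra $1/\varepsilon$) — tensorized over the $M$ non-interactive channels by the chain rule, so the total divergence of $\bar q_{\le t}$ is at most $(1-e^{-\varepsilon}(1-\delta))\,M\|\theta-\theta'\|^2(t-1)/d$. The client's own raw history is then folded in through a coupling bound on the TV distance of the joint law $(\bar H_{i,t},\bar q_{\le t})$, and the per-step estimation lower bound follows from Le Cam's two-point method integrated over $\|\theta-\theta'\|^2\in[0,r^2]$; no fingerprinting is used at all on the LDP side. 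If you want to salvage your route, you would need to either prove a genuinely new ``local fingerprinting'' lemma with the correct aggregation, or replace that step with the contraction-plus-tensorization argument — at which point you have the paper's proof. The rest of your write-up (the role of the un-privatized local samples, the $r=\Theta(1/C_0)$ choice under \Cref{asm:margin}, the gap-$\Delta$ optimization without it, and the $\delta$ bookkeeping) is consistent with the paper.
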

\begin{remark}
We note that when $\varepsilon = O(1/M)$, the regret lower bound is either $\Omega(M\log T)$ under \Cref{asm:margin} or $\Omega(M\sqrt{T})$ without \Cref{asm:margin}, suggesting that the best policy is to have clients independently make arm-pulling decisions without information exchange. When $\varepsilon = \Omega(1/M)$, the lower bound becomes $\Omega(C_0d\log T/\varepsilon)$ under \Cref{asm:margin}, or $\Omega(\sqrt{dMT/\varepsilon})$ without \Cref{asm:margin}. This indicates that under the $(\varepsilon,\delta)$-LDP constraint, as long as $\varepsilon\in(0,\log 2)$, the regret of any federated algorithm must suffer a blow-up factor at least $\min\{1/\sqrt{\varepsilon}, \sqrt{M}\}$ without \Cref{asm:margin}, or $\min\{1/\varepsilon, M\}$ with \Cref{asm:margin}, compared with the optimal regrets in the non-private setting. 
\end{remark}
For completeness, we also investigate the regret lower bound under user-level pure LDP constraint, i.e. $\delta=0$. 

\begin{corollary}\label{coro:pure LDP lower bound}
    For any $\varepsilon\in(0,\log 2)$, there exists a federated linear contextual bandits instance satisfying \Cref{asm:diversity,asm:margin} such that any {\bf with-memory} federated algorithm satisfying $\varepsilon $-LDP must incur a regret lower bounded by
    \begin{equation*}\textstyle
        \Omega\left( \min\left\{M, 1/\varepsilon^2  \right\}C_0d\log T\right).
    \end{equation*}
    If \Cref{asm:margin} is not satisfied, then the minimax regret lower bound becomes 
    \begin{equation*}\textstyle
        \Omega\left(\min \left\{M, \sqrt{M}/\varepsilon \right\} \sqrt{dT} \right).
    \end{equation*}

\end{corollary}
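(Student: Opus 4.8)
The plan is to obtain \Cref{coro:pure LDP lower bound} by re-running the CDP lower bound argument behind \Cref{thm:main-CDP-minimax} with the privacy parameter $\varepsilon$ replaced by $\varepsilon/\sqrt M$, where the substitution is justified by a fingerprinting computation that exploits the product structure of pure LDP. I would reuse, verbatim, the hard instance of \Cref{thm:main-CDP-minimax}: $\theta^*$ is (after the same re-parameterization by angles) drawn uniformly from a radius-$r$ sphere in each of $\Theta(d)$ coordinate blocks, and by \Cref{thm: Regret to estimation error} the regret is bounded below by $\Omega\big(\tfrac1r\sum_{i\in[M],t\in[T]}\inf_{\theta_{i,t}\in\mathcal F(\mathcal I_i,\Theta)}\Eb_v\|\theta_1^*-\theta_{i,t}\|^2\big)$. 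In the CDP proof the privacy level enters only through the fingerprinting inequality $\Eb_v[\theta_{i,t,s}^{\TT}\theta_s^*]\le O\big(\varepsilon M\sqrt{(t-1)\,\Eb_v[\|\theta_s^*-\theta_{i,t,s}\|^2]}\big)$, so it suffices to derive its LDP counterpart and then copy the rest.

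For the LDP counterpart I would run the same score-attack statistic $Z_s=\sum_{j\in[M]}Z_{j,s}$, where $Z_{j,s}$ couples the estimator with the private summary $q_{j,\le T}=\Rt_j(\{H_{j,\tau}\}_{\tau\le T})$ of client $j$ through client $j$'s own latent data in block $s$. Pure $\varepsilon$-DP of each $\Rt_j$ still yields the per-client bound $|\Eb_v[Z_{j,s}]|=O(\varepsilon)$ after normalization, exactly as in the CDP case; the gain of LDP is at the level of fluctuations. Because the $M$ clients feed disjoint data through $M$ separate randomizers, conditionally on the realized directions the variables $Z_{j,s}$ are independent across $j$, so $\mathrm{Var}(Z_s)=\sum_j\mathrm{Var}(Z_{j,s})=O(M)$ rather than the worst-case $O(M^2)$ available for a single centralized channel. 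Feeding this $\sqrt M$-fluctuation bound through the same Cauchy--Schwarz step of the fingerprinting lemma gives $\Eb_v[\theta_{i,t,s}^{\TT}\theta_s^*]\le O\big(\varepsilon\sqrt M\,\sqrt{(t-1)\,\Eb_v[\|\theta_s^*-\theta_{i,t,s}\|^2]}\big)$, i.e.\ the CDP inequality with $\varepsilon$ replaced by $\varepsilon/\sqrt M$. Combined with $\Eb_v[\|\theta_s^*-\theta_{i,t,s}\|^2]=2r^2-2\Eb_v[\theta_{i,t,s}^{\TT}\theta_s^*]$, this produces the per-block, per-$(i,t)$ error bound $\Eb_v[\|\theta_s^*-\theta_{i,t,s}\|^2]\ge\Omega\big(r^2/(\varepsilon^2 M(t-1)+1)\big)$.

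With this replacement in hand the rest is bookkeeping identical to \Cref{thm:main-CDP-minimax}. Summing $\Omega\big(r^2/(\varepsilon^2 M(t-1)+1)\big)$ over the $\Theta(d)$ blocks, over $t\le T$ (which contributes $\Theta(\log T/(\varepsilon^2 M))$ per client once $\varepsilon^2 M T\gtrsim1$, and the trivial $\Omega(1)$ per step otherwise), and over $i\le M$, then dividing by $r$ and optimizing $r$ subject to \Cref{asm:margin,asm:diversity} exactly as in the CDP proof, gives $\Omega\big(\max\{1,1/\varepsilon^2\}C_0 d\log T\big)=\Omega\big(C_0 d\log T/\varepsilon^2\big)$ (using $\varepsilon<\log 2<1$) under \Cref{asm:margin}, and $\Omega\big(\max\{\sqrt M,\sqrt M/\varepsilon\}\sqrt{dT}\big)=\Omega\big(\sqrt M\sqrt{dT}/\varepsilon\big)$ when \Cref{asm:margin} is dropped. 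Finally, capping these at the rate already achievable with perfect privacy by letting every client run LinUCB in isolation --- $\Omega(M C_0 d\log T)$ under \Cref{asm:margin}, $\Omega(M\sqrt{dT})$ without --- yields $\Omega(\min\{M,1/\varepsilon^2\}C_0 d\log T)$ and $\Omega(\min\{M,\sqrt M/\varepsilon\}\sqrt{dT})$ respectively; the $\min\{M,\cdot\}$ truncation, and the way the construction parameters realize it, is inherited from \Cref{thm:main-CDP-lower-margin,thm:main-CDP-minimax}.

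The step I expect to be the main obstacle is the conditional-independence claim $\mathrm{Var}(Z_s)=\sum_j\mathrm{Var}(Z_{j,s})$ in the \emph{with-memory} setting. Even though the upload channels $\Rt_j$ are non-interactive, the local histories $\{H_{j,\tau}\}_\tau$ are themselves correlated across clients through the broadcast $q_{\le t}$, which enters every client's arm selection, so one cannot simply declare the $Z_{j,s}$ independent. Making this rigorous requires conditioning on the broadcast transcript $q_{\le T}$ (equivalently, on $q_{-j}$) and arguing that, given it, client $j$'s fingerprinting contribution is a functional of $\Rt_j$ alone and is therefore governed solely by the $\varepsilon$-LDP constraint on $\Rt_j$, with the cross terms $\Eb_v[Z_{j,s}Z_{k,s}]$ vanishing --- essentially a martingale-style re-derivation of the fingerprinting lemma compatible with the bandit feedback loop. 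Everything else (the sphere construction, the block count, the per-$t$ summation, and the no-collaboration truncation) is identical to, or strictly simpler than, the corresponding steps in the CDP proofs.
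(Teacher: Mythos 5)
Your proposal takes a genuinely different route from the paper, and the route has a gap at its central step. The paper does \emph{not} prove \Cref{coro:pure LDP lower bound} by re-running the CDP fingerprinting argument with $\varepsilon\mapsto\varepsilon/\sqrt M$; it reuses the Le Cam/coupling machinery of \Cref{thm:main-LDP} and obtains the effective $\varepsilon/\sqrt M$ from an information-contraction inequality specific to pure LDP channels: by Theorem 1 of \citet{duchi2013local}, each non-interactive $\varepsilon$-LDP channel satisfies $\KL[\Pb(\bar q_{i,\le t}|\theta)\|\Pb(\bar q_{i,\le t}|\theta')]\le 4\varepsilon^2 d_{TV}^2[\Pb(\bar H_{i,\le t}|\theta)\|\Pb(\bar H_{i,\le t}|\theta')]\le 2\varepsilon^2\KL[\cdots]$ (Pinsker), and summing the $M$ per-client KLs gives a total information of order $\varepsilon^2 M\|\theta-\theta'\|^2(t-1)/d$ instead of the $\varepsilon M$-type bound available from \Cref{lemma: From q to H}. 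That quadratic contraction is exactly where the $\sqrt M$ saving lives; the remainder of the paper's proof is identical to \Cref{sec:proof of main-LDP}.

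The gap in your version is the claim that the fingerprinting inequality improves from $\Eb_v[\theta_{i,t,s}^{\TT}\theta_s^*]\le O(\varepsilon M\sqrt{(t-1)\Eb_v[\|\theta_s^*-\theta_{i,t,s}\|^2]})$ to the same bound with $\varepsilon M$ replaced by $\varepsilon\sqrt M$ because $\mathrm{Var}(Z_s)=\sum_j\mathrm{Var}(Z_{j,s})=O(M)$. The quantity that enters the identity of \Cref{lemma: Fingerprinting} is the \emph{expectation} $\sum_j\Eb[Z_{j,s}]$, not a high-probability bound on $Z_s$; the Cauchy--Schwarz step in \Cref{lemma: upper bound Zi} is applied per client, to each $\Eb[Z_{j,s}]$ separately, via the swap-one-client argument and the variance of the surrogate $\tilde Z_{j,s}$ (which is $O((t-1)/d)$ per client, independent of $M$). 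Summing $M$ such per-client bounds unavoidably yields $\varepsilon M$, and conditional independence across clients reduces only the variance of $Z_s$, which never appears in $\Eb[\|\hat\theta-\theta_1^*\|^2]=2r^2-2r^2\sum_j\Eb[Z_{j,s}]$. Even granting the conditional-independence claim you flag as the main obstacle, the argument as written still produces the CDP-strength bound and hence only the $(\varepsilon,\delta)$-LDP rates of \Cref{thm:main-LDP}, not the stronger pure-LDP rates. Your final bookkeeping (the substitution $\varepsilon\mapsto\varepsilon/\sqrt M$ into \Cref{thm:main-CDP-minimax} and the $\min\{M,\cdot\}$ truncation) is consistent with the target statement, but the substitution itself needs the Duchi--Jordan--Wainwright contraction (or an equivalent $\chi^2$-type argument), not the variance computation you sketch.
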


Compared with the results in \Cref{thm:main-CDP-minimax}, we note that the regret lower bound under pure LDP constraint 
is generally higher than that under CDP constraint. {Specifically, the results in \Cref{coro:pure LDP lower bound} can be obtained by replacing $\varepsilon$ in \Cref{thm:main-CDP-minimax} by $\varepsilon/\sqrt{M}$\footnote{
 Another lens to see this phenomenon is the privacy amplification by shuffling~\cite{feldman2022hiding}. Loosely speaking, an $(\varepsilon/\sqrt{M},\delta)$-CDP lower bound leads to an $(\varepsilon,\delta)$-LDP lower bound, when $\varepsilon$ is sufficiently small~\cite{acharya2022discrete}.}. }

\section{Related Work}

\textbf{Differential Privacy in Bandits.} There is a line of research focusing on differentially private multi-armed bandits (DP-MAB). \citet{mishra2015nearly} first introduce the problem of DP-MAB with algorithms that achieve sublinear regret. Later, \citet{tossou2016algorithms,sajed2019optimal,azize2022privacy} improve the analysis and propose several different algorithms that enjoy the optimal regret. In addition, \citet{hu2022near} achieve similar near-optimal regret based on a Thompson-sampling based approach, \citet{tao2022optimal} consider heavy-tailed rewards case, and \citet{chowdhury2022distributed} provide an optimal regret in distributed DP-MAB. Local DP constraint is also studied by \citet{ren2020multi} in MAB and by \citet{zheng2020locally} in both MAB and linear contextual bandits. \citet{shariff2018differentially} propose LinUCB with changing perturbation to satisfy jointly differential privacy. Later, \citet{wang2020global} consider pure DP in both global and local setting. \citet{wang2022dynamic} propose an algorithm with dynamic global sensitivity. Other models including linear and generalized linear bandits under DP constraints are studied by \citet{hanna2022differentially} and \citet{han2021generalized}. 
The shuffle model has also been addressed by \citet{chen2020locally,chowdhury2022shuffle,garcelon2022privacy,tenenbaum2021differentially}.

\textbf{Federated Bandits.}
There is a growing body of research studying item-level DP based local data privacy protection in federated bandits. \citet{li2020federated,Zhu_2021} study federated bandits with DP guarantee. \citet{dubey2022private} consider private and byzantine-proof cooperative decision making in multi-armed bandits. \citet{dubey2020differentially,zhou2023differentially} consider the linear contextual bandit model with joint DP guarantee.
\citet{li2022differentially} study private distributed bandits with partial feedback. 

Federated bandits without explicit DP constraints have also been studied by~\citet{wang2019distributed,li2022asynchronous,shi2021federated,shi2021aistats,huang2021federated,wang2022federated,he2022simple,li2022federated}.

{\bf User-level DP.} First introduced by \citet{dwork2010pan}, user-level DP has attracted increased attention recently~\citep{mcmahan2017learning,wang2019beyond}. 
\citet{liu2020learning,acharya2022discrete} study discrete distribution estimation under user-level DP. \citet{ghazi2021user} investigate the number of users required for learning under user-level DP constraint. \citet{amin2019bounding,epasto2020smoothly} study approaches to bound individual users' contributions in order to achieve good trade-off between utility and user-level privacy guarantee. \citet{levy2021learning} consider various learning tasks, such as mean estimation, under user-level DP constraint. To the best of our knowledge, user-level DP has not been studied in the online learning setting before.




\vspace{-0.1in}
\section{Conclusions}
\vspace{-0.05in}
In this paper, we investigated federated linear contextual bandits under user-level differential privacy constraints. We first introduced a general federated sequential decision-making framework that can accommodate various notations of DP in the bandits setting. We then proposed an algorithm termed as \robin and showed that it is near-optimal under the user-level CDP constraint. We further provided various lower bounds for federated algorithms under user-level LDP constraint, which imply that learning under LDP constraint is strictly harder than the non-private case. Designing federated algorithms to approach such lower bounds under LDP constraint would be an interesting direction to pursue in the future.

\section*{Acknowledgments}
The authors would like to thank Ilya Mironov and Abhimanyu Dubey for the helpful and inspiring discussions. 

The work of R. Huang and J. Yang was supported in part by the U.S. National Science Foundation under grants 2030026, 2114542, 2133170 and a gift from Meta. 

The work of Meisam Hejazinia was performed when he was at Meta.


\bibliographystyle{icml2023}
\bibliography{arxiv_version}

\newpage
\appendix
\onecolumn


\allowdisplaybreaks

\section{Relationship with other DP Notions in Bandits}\label{appx:framework}

\if{0}
\begin{table*}[t]
\caption{Regret bounds under user-level $(\varepsilon,\delta)$-CDP and $(\varepsilon,\delta)$-LDP.}
	\vspace{-0.2in}
\vskip 0.15in
\begin{center}
\begin{small}
\begin{sc}
\begin{tabular}{lcccr}
\toprule
    Algorithm & Asm.\ref{asm:margin} &Model & Constraint & Regret\\
    \midrule
    \citet{shariff2018differentially}& \checkmark & Sing & Item-Level JDP &$\tilde{O}\left(d\sqrt{d}\max\{\sqrt{d}, 1/\varepsilon\}\log^{1.5} T + d^2\log^2 T\right)$ \\
    \midrule
    \citet{shariff2018differentially}& \ding{55} & Sing & Item-Level JDP &$\tilde{O}\left(d^{3/4}\sqrt{T}(d^{1/4} + 1/\sqrt{\varepsilon})\right)$ \\
    \midrule
    lower bound & \ding{55} & Sing & Item-Level JDP &$\Omega\left(\sqrt{dT} + d/\varepsilon\right)$ \\
    \midrule 
    \cite{zheng2020locally}& \ding{55}& Sing &Item-level LDP & $\tilde{O}\left(T^{3/4}/\varepsilon\right)$\\
    \midrule
    \citet{dubey2020differentially} & \ding{55} &Fed & Item-level JDP & $\tilde{O}\left(d\sqrt{MT/\varepsilon}\right)$ \\
    \midrule
    \robin  & \checkmark & Fed & User-Level CDP &$\tilde{O}\left(\min\left\{M, \max\left\{ 1, \frac{d\log T}{M\varepsilon^2}\right\} \right\} d\log T\right)$ \\
    \midrule
     Lower Bound & \checkmark & Fed & User-level CDP & $\Omega(\min \left\{M, \max\left\{1, \frac{1}{M\varepsilon^2} \right\}\right\}\log T)$  \\
     \midrule
     Lower Bound &\ding{55} & Fed & User-level CDP & $\Omega\left(\min\left\{M, \max\left\{\sqrt{M}, \frac{1}{\varepsilon} \right\} \right\} \sqrt{T}\right)$ \\
     \midrule
     Lower Bound &\checkmark&Fed & User-level LDP & $\Omega\left(\min\left\{M, \frac{1}{\varepsilon}\right\}\log T  \right)$ \\
     \midrule
     Lower Bound &\ding{55} &Fed & User-level LDP & $\Omega\left(\min\left\{M, \sqrt{M/\varepsilon}\right\}\sqrt{T}  \right) $ \\
\bottomrule
\end{tabular}
\end{sc}
\end{small}
\end{center}
    \begin{center}
        \scriptsize
	note: dimension $d$ is omitted.
	\end{center}
	\vspace{-0.1in}
\vskip -0.1in
\end{table*}
\fi

We use \Cref{fig:fed alg} to demonstrate all existing DP channels in the bandits literature.

\usetikzlibrary {positioning}

\begin{figure}[h]
   \centering
\begin{tikzpicture}[node distance = 15mm, main/.style = {draw, blue,  rectangle, inner sep = 2pt}, env/.style = {draw, rectangle, node font=\tiny, inner sep = 2pt, orange}, server/.style = {draw,cyan, rectangle, inner sep = 2pt}]


\draw[->,thick,cyan] (-1,0.4) --node[pos = 1, right, black] {$t$} (10,0.4);
\draw[->,thick,cyan] (-1,-0.4) -- (10,-0.4);

\draw[thick,blue] (-1,0.45) -- (10,0.45);
\draw[thick,blue] (-1,-0.45) -- (10,-0.45);

\draw[thick,blue] (-1,-1.7) -- (10,-1.7);
\draw[thick,blue] (-1,1.7) -- (10,1.7);
\draw[thick,orange] (-1,-1.75) -- (10,-1.75);
\draw[thick,orange] (-1,1.75) -- (10,1.75);

\node[main] (H1) at (0,1) {$H_{1,t}$};
\node[main] (a1) at (3,1.5) {$a_{1,t}$};
\node[env] (c1) at (1,1.9) {$c_{1,t}$};
\node[env] (r1) at (4,2.3) {$r_{1,t}$};
\node[main] (H1n) at (6,1) {$H_{1,t+1}$};
\node[main] (a1n) at (9,1.5) {$a_{1,t+1}$};
\node[env] (c1n) at (7,1.9) {$c_{1,t+1}$};
\node[env] (r1n) at (10,2.3) {$r_{1,t+1}$};

\node[server] (qi) at (0,0) {$\{q_{i, \leq t}\}_{i}$};
\node[server] (q) at (2,0) {$q_{\leq t}$};
\node[server] (qin) at (6,0) {$\{q_{i,\leq t+1}\}_{i}$};
\node[server] (qn) at (8,0) {$q_{\leq t+1}$};

\node[main] (H2) at (0,-1) {$H_{2,t}$};
\node[main] (a2) at (3,-1.5) {$a_{2,t}$};
\node[env] (c2) at (1,-1.9) {$c_{2,t}$};
\node[env] (r2) at (4,-2.3) {$r_{2,t}$};
\node[main] (H2n)at (6,-1) {$H_{2,t+1}$};
\node[main] (a2n) at (9,-1.5) {$a_{2,t+1}$};
\node[env] (c2n) at (7,-1.9) {$c_{2,t+1}$};
\node[env] (r2n) at (10,-2.3) {$r_{2,t+1}$};

\node[{draw, blue, circle, node font = \tiny, inner sep = 1pt}] (alg1) at (2,0.7) {$\mathtt{A}_1$};
\node[{draw,  blue, circle, node font = \tiny, inner sep = 1pt}] (alg1n) at (8,0.7) {$\mathtt{A}_1$};
\node[{draw, blue, circle, node font = \tiny, inner sep = 1pt}] (alg2) at (2,-0.7) {$\mathtt{A}_2$};
\node[{draw, blue, circle, node font = \tiny, inner sep = 1pt}] (alg2n) at (8,-0.7) {$\mathtt{A}_2$};

\draw[->,red] (H1) -- node[pos=0.3,left] {$\Rt_1$} (qi);
\draw[->,red] (qi) -- node[pos=0.5,above] {$\Rt_0$} (q);
\draw[->, dashed] (H1) -- (H1n);
\draw[->,dashed] (a1) -- (3,1);
\draw[->,dashed] (c1) -- (1,1);
\draw[->, red] (r1) -- (H1n);
\draw[->, red] (H1) -- (alg1);
\draw[->] (q) -- (alg1);
\draw[->] (c1) -- (alg1);
\draw[->] (alg1) -- (a1);

\draw[->,red] (H2) -- node[pos=0.3,left] {$\Rt_2$} (qi);
\draw[->, dashed] (H2) -- (H2n);
\draw[->,dashed] (a2) -- (3,-1);
\draw[->,dashed] (c2) -- (1,-1);
\draw[->] (r2) -- (H2n);
\draw[->] (H2) -- (alg2);
\draw[->] (q) -- (alg2);
\draw[->] (c2) -- (alg2);
\draw[->] (alg2) -- (a2);

\draw[->,red] (H1n) -- node[pos=0.3,left] {$\Rt_1$} (qin);
\draw[->,red] (qin) -- node[pos=0.5,above] {$\Rt_0$} (qn);
\draw[->,red] (H1n) -- (alg1n);
\draw[->] (qn) -- (alg1n);
\draw[->] (c1n) -- (alg1n);
\draw[->] (alg1n) -- (a1n);

\draw[->,red] (H2n) -- node[pos=0.3,left] {$\Rt_2$} (qin);
\draw[->] (H2n) -- (alg2n);
\draw[->] (qn) -- (alg2n);
\draw[->] (c2n) -- (alg2n);
\draw[->] (alg2n) -- (a2n);

\draw[->] (a1) -- (r1);
\draw[->] (a1n) -- (r1n);
\draw[->] (a2) -- (r2);
\draw[->] (a2n) -- (r2n);
\end{tikzpicture}
   \caption{Graphical structure of all possible private channels in a bandits model. We assume that $\mathtt{Alg}_i$ produces some intermediate random variable $\mathtt{A}_i$ which determines the action. }\label{fig:fed alg}
   
\end{figure}

Recall that $H_{i,t} = \{c_{i,\tau},a_{i,\tau},r_{i,\tau}\}_{\tau=1}^{t-1}.$ We separately discuss different DP mechanisms in single-client and multi-client settings.

{\bf Single-client.} In this case, the federated bandits framework defined in \Cref{sec: DP framework} reduces to the upper half of \Cref{fig:fed alg}. 
\begin{itemize}
    \item A single-client bandits algorithm is called {\bf locally differentially private} \citep{ren2020multi,zheng2020locally},  if the red arrow from $r_{1,t}$ to the dataset $H_{1,t+1}$ is a DP channel. Mathematically, for any two rewards $r_{1,t}$ and $r_{1,t}'$, and any measurable set $S$ containing $H_{1,t+1}$, we have
    \[\Pb(H_{1,t+1}\in S | r_{1,t}) \leq e^{\varepsilon} \Pb(H_{1,t+1}\in S| r_{1,t}')  + \delta.\]
    \item A single-client stochastic multi-armed bandits algorithm is called {\bf globally differentially private} \citep{tossou2016algorithms,azize2022privacy}, if the red arrow from $H_{1,t}$ to $\mathtt{A}_1$ is a DP channel with respect to rewards. More precisely, for any $t'$-neighboring dataset $H_{1,t}$ and $H_{1,t}'$ satisfying $r_{1,t_0} = r_{1,t_0}'$ for all $t_0\leq t$ except at $t_0 = t'$, and any measurable set $S$ containing $\mathtt{A}_1$, we have
    \begin{align*}
        \Pb(\mathtt{A}_1\in S | H_{1,t})\leq e^{\varepsilon}\Pb(\mathtt{A}_1\in S | H_{1,t}') + \delta.
    \end{align*}
    \item A single-client linear contextual bandits algorithm is called {\bf jointly differentially private} \citep{shariff2018differentially}, if the red arrow from $H_{1,t}$ to $\mathtt{A}_1$ is a DP channel with respect to one datum. More precisely, for any $t'$-neighboring dataset $H_{1,t}$ and $H_{1,t}'$ satisfying $(c_{i,t_0},a_{i,t_0},r_{1,t_0}) = (c_{i,t_0}',a_{i,t_0}',r_{1,t_0}')$ for all $t_0\leq t$ except at $t_0 = t'$, and any measurable set $S$ containing $\mathtt{A}_1$, we have
    \begin{align*}
        \Pb(\mathtt{A}_1\in S | H_{1,t},c_{i,t})\leq e^{\varepsilon}\Pb(\mathtt{A}_1\in S | H_{1,t}',c_{i,t}) + \delta.
    \end{align*}
\end{itemize}

{\bf Multi-clients.} In this case, we focus on the red arrows between clients and the server, i.e. $\Rt_i$ and $\Rt_0$.

\begin{itemize}
    \item A multi-client bandits algorithm is called {\bf item-level locally (jointly) differentially private} \citep{dubey2020differentially}, if the red arrow $\Rt_i$ from $H_{i,t}$ to $q_{i,\leq t}$ is a DP channel, and each client can fully rely on its own local dataset (with-memory setting). More precisely, if for any client $i$ and $t'$, $H_{i,t}$ and $H_{i,t}'$ only differ in one datum at time $t'$, i.e. $(c_{i,t_0},a_{i,t_0},r_{i,t_0}) = (c_{i,t_0}',a_{i,t_0}',r_{i,t_0}')$ for all $t_0\leq t$ except at $t_0 = t'$, and for any measurable set containing $q_{i,t}$, we have
    \[\Pb(q_{i,t}\in S | H_{i,t}) \le e^{\varepsilon} \Pb(q_{i,t}\in S|H_{i,t}') + \delta.\]
\end{itemize}

We close this section by noting that there are many other types of DP that can be deduced from \Cref{fig:fed alg}. We list several key ingredients that form a DP type, such as whether it is item-level, whether it is locally differentially private, whether it is jointly differentially private, and whether it is memoryless.

\section{DP Algorithms}\label{appx:alg}

In this section, we list all the differentially private mechanisms that are used in \Cref{alg:greedy} and provide their guarantees. First, we introduce the formal definition of $(r,\beta)$-concentration.

\begin{definition}[$(r,\beta)$-concentration]
$\{x_i\}_i\subset\Rb^d$ is $(r,\beta)$ concentrated if there exists $\mu\in\Rb^d$ such that $\Pb(\forall i,~~ \|x_i-\mu\|\geq r)\leq \beta$.
\end{definition}

The PrivateRange algorithm outputs a private interval with length $4r$ if the input dataset is $(r,\beta)$-concentration such that the dataset falls into this interval with high probability.

\begin{algorithm}[H]
\small
\caption{PrivateRange($\{x_i\}_{i\in[M]}, \varepsilon, r, B$) \citep{feldman2017generalization}}
\begin{algorithmic}[1]

\STATE \textbf{Input:}  $\{x_i\}_{i\in[M]}\in[-B,B]^M$, $r$: concentration radius, $\varepsilon$: privacy parameter.

\STATE Divide interval $[-B,B]$ into $\ell = B/r$ disjoint bins, each with length $2r$. Let $S$ be the set of middle points of those bins.

\STATE For $i\in[M]$, let $x_i' = \arg\min_{x\in S}|x- x_i|$ be the point in $S$ closest to $x_i$.

\STATE For any $x\in S$, define cost function
\[c(x) = \max\left\{\big|i\in[M]: x_i' < x\big|, \big|i\in[M]: x_i' > x\big|\right\}.\]

\STATE Sample $\bar{x}\in S$ from the distribution:
\[\Pb(\bar{x} = x) = \frac{\exp(-\varepsilon c(x)/2)}{\sum_{x'\in S}\exp(-\varepsilon c(x')/2)}.\]

\STATE Return $[\bar{x} - 2r, \bar{x} + 2r].$

\end{algorithmic}
\end{algorithm}
\begin{lemma}
    PrivateRange($\{x_i\}_{i\in[M]}, \varepsilon, r, B$) is $(\varepsilon,0)$ differentially private.
\end{lemma}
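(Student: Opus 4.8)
The plan is to recognize the sampling step of \texttt{PrivateRange} as an instance of the exponential mechanism and to reduce the whole statement to a one-line sensitivity bound on the cost function $c$.

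First I would observe that the returned interval $[\bar x - 2r, \bar x + 2r]$ is a deterministic function of $\bar x$, so by the post-processing invariance of differential privacy it suffices to show that the law of $\bar x$ is $(\varepsilon,0)$-DP as a function of the input $\{x_i\}_{i\in[M]}$. I would then note that, for a fixed dataset $X = \{x_i\}_{i\in[M]}$ with rounded points $x_i'$ and cost function $c_X$, the sampling rule $\Pb(\bar x = x) \propto \exp(-\varepsilon c_X(x)/2)$ is exactly the exponential mechanism over the finite range $S$ with utility score $u_X(x) := -c_X(x)$ and privacy parameter $\varepsilon$, with the usual factor $\tfrac12$ in the exponent.

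The main step — and the only place that needs real care — is bounding the sensitivity of $c$. I would take two datasets $X, X'$ that differ in a single coordinate $j$ (one client's datum is replaced); then $x_i'$ is unchanged for all $i \ne j$, so passing from $X$ to $X'$ only moves one rounded point, say from $a$ to $b$. Fixing any threshold $x \in S$ and writing $n_<^X(x) = |\{i : x_i'(X) < x\}|$ and $n_>^X(x) = |\{i : x_i'(X) > x\}|$, so that $c_X(x) = \max\{n_<^X(x), n_>^X(x)\}$, I would argue that relocating one point across (or not across) $x$ changes each of $n_<^X(x)$ and $n_>^X(x)$ by at most one, and then invoke the elementary inequality $|\max\{p,q\} - \max\{p',q'\}| \le \max\{|p-p'|, |q-q'|\}$ to conclude $|c_X(x) - c_{X'}(x)| \le 1$ for every $x \in S$; that is, $u_X$ has sensitivity at most $1$. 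The one subtlety is to confirm that the rounding-to-bins operation does not inflate this sensitivity — it does not, precisely because a single data swap relocates exactly one rounded point.

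Finally I would conclude by applying the standard privacy guarantee for the exponential mechanism: with utility sensitivity at most $1$ and privacy parameter $\varepsilon$, the distribution of $\bar x$ is $(\varepsilon,0)$-DP, and post-processing then transfers this to the returned interval. I do not expect any serious obstacle beyond the sensitivity computation; everything else is routine bookkeeping.
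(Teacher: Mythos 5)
Your proof is correct and is essentially the canonical argument: the paper itself states this lemma without proof, deferring to the cited source \citep{feldman2017generalization}, and that source's argument is exactly the reduction you give --- the selection of $\bar{x}$ is the exponential mechanism over the public, data-independent grid $S$ with utility $-c(x)$, the rounding step relocates exactly one point $x_j'$ under a single-user replacement so each of the two counts (and hence their maximum) changes by at most $1$, and the $\exp(-\varepsilon c(x)/2)$ weighting with sensitivity $1$ yields $(\varepsilon,0)$-DP, which post-processing extends to the returned interval. No gaps.
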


\begin{algorithm}[H]
\small
\caption{WinsorizedMean1D($\{x_i\}_{i\in[M]}, r,\beta,\varepsilon, B$) \citep{levy2021learning}}
\begin{algorithmic}[1]

\STATE \textbf{Input:}  $\{x_i\}_{i\in[M]}\in[-B,B]^M$, $r$: concentration radius, $\gamma$: concentration error probability, $\varepsilon,\delta$: privacy parameter.

\STATE $[a,b]=\text{PrivateRange}(\{x_i\}_i, \varepsilon/2,r, B)$, where $b-a = 4r$.

\STATE Sample $\xi\sim  Lap(0,8r/(M\varepsilon))$ and return

\[\bar{x} = \xi + \frac{1}{M}\sum_{i=1}^M\max\left\{a, \min\{b,x_i\}\right\}.\]

\end{algorithmic}
\end{algorithm}

\begin{lemma}[Theorem 1 in \citet{levy2021learning}]
    WinsorizedMean1D is $(\varepsilon, 0)$-differentially private.
\end{lemma}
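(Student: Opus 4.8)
The plan is to view \texttt{WinsorizedMean1D} as the adaptive composition of two pure-DP sub-mechanisms, each consuming half of the privacy budget $\varepsilon$, and then invoke basic adaptive composition followed by post-processing. I would write the algorithm as two stages: (i) $M_1$ releases the interval $[a,b]=\texttt{PrivateRange}(\{x_i\}_{i\in[M]},\varepsilon/2,r,B)$; and (ii) conditioned on the realized $[a,b]$, $M_2$ releases $\bar x=\xi+\frac1M\sum_{i=1}^M\operatorname{clip}_{[a,b]}(x_i)$, where $\operatorname{clip}_{[a,b]}(x):=\max\{a,\min\{b,x\}\}$ and $\xi\sim\mathrm{Lap}(0,8r/(M\varepsilon))$. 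Since the algorithm returns only $\bar x$, which is a trivial post-processing of the pair $([a,b],\bar x)$, it suffices to show the joint release is $(\varepsilon,0)$-DP.

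For the first stage, I would simply cite the preceding lemma: $\texttt{PrivateRange}(\cdot,\varepsilon/2,r,B)$ is $(\varepsilon/2,0)$-DP. For the second stage, the key observation is that for any \emph{fixed} interval $[a,b]$ with $b-a=4r$ — the only form \texttt{PrivateRange}'s output can take — the map $\{x_i\}_{i\in[M]}\mapsto\frac1M\sum_{i=1}^M\operatorname{clip}_{[a,b]}(x_i)$ has $\ell_1$-sensitivity at most $4r/M$ with respect to replacing a single coordinate: both $\operatorname{clip}_{[a,b]}(x)$ and $\operatorname{clip}_{[a,b]}(x')$ lie in $[a,b]$ and thus differ by at most $b-a=4r$, and only one summand (scaled by $1/M$) changes. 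Because the Laplace scale $8r/(M\varepsilon)=(4r/M)/(\varepsilon/2)$ does not depend on $[a,b]$, adding $\xi\sim\mathrm{Lap}(0,8r/(M\varepsilon))$ is exactly the Laplace mechanism at level $\varepsilon/2$, so $M_2(\cdot,[a,b])$ is $(\varepsilon/2,0)$-DP for every admissible $[a,b]$.

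I would then combine the two stages using the adaptive basic composition theorem for pure DP: if $M_1$ is $(\varepsilon_1,0)$-DP and $M_2(\cdot,y)$ is $(\varepsilon_2,0)$-DP for every $y$ in the range of $M_1$, then $D\mapsto(M_1(D),M_2(D,M_1(D)))$ is $(\varepsilon_1+\varepsilon_2,0)$-DP. Taking $\varepsilon_1=\varepsilon_2=\varepsilon/2$ gives $(\varepsilon,0)$-DP for the joint release, and post-processing down to $\bar x$ preserves it, which is the claim.

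The main obstacle — and the reason the proof must be phrased as a two-stage composition rather than a single sensitivity computation — is that the \emph{same} dataset feeds both \texttt{PrivateRange} and the truncated sum, so replacing a coordinate $x_i$ perturbs $[a,b]$ and the clipped summands simultaneously; the end-to-end map $\{x_i\}_i\mapsto\bar x$ has no clean global sensitivity because its effective clipping window is data-dependent. Isolating stage one's leakage and computing stage two's sensitivity conditionally on a fixed window is precisely what resolves this. I would also remark that the $(r,\beta)$-concentration hypothesis plays no role in the privacy argument — it is needed only for the accuracy/utility guarantee — so the conclusion holds for arbitrary inputs in $[-B,B]^M$.
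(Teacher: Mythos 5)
Your proof is correct. Note that the paper itself offers no proof of this lemma --- it is imported verbatim as Theorem~1 of \citet{levy2021learning} --- so there is no in-paper argument to compare against; your decomposition (PrivateRange at budget $\varepsilon/2$, then the clipped mean with Laplace scale $8r/(M\varepsilon) = (4r/M)/(\varepsilon/2)$ matching the conditional sensitivity $4r/M$, combined via adaptive basic composition and post-processing) is exactly the standard argument underlying that cited result, and your closing observations --- that the data-dependent clipping window is why a single global-sensitivity computation fails, and that the $(r,\beta)$-concentration hypothesis is irrelevant to privacy --- are both accurate.
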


We slightly modify a constant in the following WinsorizedMeanHighD (\texttt{WMHD}) algorithm. Compared to the original version, this modification is due to a tighter parameter choice in the advanced composition rule~\ref{coro:advanced composition}.

\begin{algorithm}[H]
\small
\caption{WinsorizedMeanHighD($\{x_i\}_{i\in[M]}, r,\beta,\varepsilon,\delta$)~\citep{levy2021learning}}\label{alg:winsorizedHigh}
\begin{algorithmic}[1]

\STATE \textbf{Input:}  $\{x_i\}_{i\in[M]}\subset\mathbb{S}_1\subset\Rb^d$, $r$: concentration radius, $\gamma$: concentration error probability, $\varepsilon,\delta$: privacy parameter.

\STATE Let $D = \mbox{diag}(w_1,\ldots,w_d)$, where $w_s$ are sampled independently and uniformly from $\{1,-1\}$ for all $s\in[d]$.

\STATE Set $\mathbf{U} = \frac{1}{\sqrt{d}}\mathbf{H}D$, where $\mathbf{H}$ is a $d$-dimensional Hadamard matrix.

\STATE For all $i\in[M],s\in[d]$, compute
\[y_{i,s} = e_s^{\top}\mathbf{U}X_i.\]

\STATE Let $\varepsilon' = \frac{\varepsilon}{ \sqrt{6d\log(1/\delta)}},  r' = 10r\sqrt{\frac{\log (dM/\beta)}{d}}$.

\STATE For all $s\in[d]$, compute
\[Y_s = \text{WinsorizedMean1D}(\{y_{i,s}\}_{i\in[M]},r',\beta,\varepsilon',  B\sqrt{d}).\]

\STATE Let $\bar{Y} = (Y_1,\ldots, Y_d)^{\TT}$ and return
\[\bar{X} = \mathbf{U}^{-1}\bar{Y}.\]

\end{algorithmic}
\end{algorithm}

\begin{theorem}[Theorem 2 in \citet{levy2021learning}]\label{lemma:levy winsorized guarantee}
    If $\hat{\theta}\in\Rb^d$ is the output of WinsorizedMeanHighD$(\{\theta_i\}_i, r, \beta, \varepsilon, \delta , B)$, then $\hat{\theta}$ is $(\varepsilon,\delta)$-differentially private. Moreover, let each coordinate of $\boldsymbol{\xi}\in\Rb^d $ be independently sampled from $\text{Lap}(8r'/(M\varepsilon'))$ and $\bar{\theta} = \sum_i\theta_i/M$ be the sample mean, where $\varepsilon' = \frac{\varepsilon}{\sqrt{6d\log(1/\delta)}}$ and $r' = 10r\sqrt{(\log(dM/\beta))/d}$. Then, we have
    \begin{align*}
        d_{TV}\left(\Pb\left(\|\hat{\theta} - \bar{\theta}\|\big|\{\theta_i\}_i \right),\Pb\left(\| \boldsymbol\xi\|\big|\{\theta_i\}_i \right) \right) \leq \beta + \frac{d^2B}{10r\sqrt{(\log(dM/\beta))}}\exp\left(-\frac{M\varepsilon}{8\sqrt{6d\log(1/\delta)}}\right).
    \end{align*}
\end{theorem}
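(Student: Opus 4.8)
The plan is to establish the privacy assertion and the total‑variation assertion separately, in both cases stripping off the data‑oblivious rotation $\mathbf{U}=\tfrac1{\sqrt d}\mathbf{H}D$ and reducing everything to the one‑dimensional primitives \texttt{PrivateRange} and \texttt{WinsorizedMean1D}, whose guarantees are already in hand. For privacy, the first thing I would note is that $\mathbf{U}$ is orthogonal ($DD^{\TT}=I_d$ and $\mathbf{H}\mathbf{H}^{\TT}=dI_d$ give $\mathbf{U}\mathbf{U}^{\TT}=I_d$) and is drawn independently of $\{\theta_i\}_i$, so it can be fixed or even released without affecting the privacy analysis, and the final step $\hat\theta=\mathbf{U}^{-1}\bar Y$ is then pure post‑processing. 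Changing a single $\theta_{i_0}$ alters exactly one sample in each of the $d$ one‑dimensional datasets $\{y_{i,s}\}_{i}$, $s\in[d]$, so each of the $d$ calls to \texttt{WinsorizedMean1D} is run on an $i_0$‑neighboring input and is $(\varepsilon',0)$‑DP by Theorem~1 of \citet{levy2021learning}. I would then apply the advanced composition theorem (\Cref{lemma:advanced composition}) to this $d$‑fold composition: with $\varepsilon'=\varepsilon/\sqrt{6d\log(1/\delta)}$ the leading term $\varepsilon'\sqrt{2d\log(1/\delta)}$ equals $\varepsilon/\sqrt3$ and the second‑order term is $O\!\big(\varepsilon^{2}/\log(1/\delta)\big)$, so the composition is $(\varepsilon,\delta)$‑DP in the regime of interest; the slightly modified constant in \Cref{alg:winsorizedHigh} is exactly what makes this choice of $\varepsilon'$ drop out of the composition.

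For the total‑variation bound, the key point is that there is a ``good event'' $G$ on which the winsorization is vacuous in every coordinate: on $G$ the interval returned by \texttt{PrivateRange} contains all of $\{y_{i,s}\}_i$, so $Y_s=(\mathbf{U}\bar\theta)_s+\xi'_s$ with $\xi'_s\sim\mathrm{Lap}(8r'/(M\varepsilon'))$ independent across $s$, hence $\bar Y=\mathbf{U}\bar\theta+\boldsymbol\xi'$ and $\hat\theta-\bar\theta=\mathbf{U}^{-1}\boldsymbol\xi'$, which by orthogonality of $\mathbf{U}$ satisfies $\|\hat\theta-\bar\theta\|=\|\boldsymbol\xi'\|$ — exactly the law of $\|\boldsymbol\xi\|$. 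Since conditioning on a probability‑$(1-p)$ event perturbs any distribution by at most $p$ in total variation, it remains to bound $\Pb(G^{c})$, which I would split into two failure modes. First, the rotation may fail to disperse the cluster: working under the hypothesis (implicit in the way the statement is used — in \Cref{alg:greedy} it is supplied by \Cref{asm:diversity}, which forces $\|\tilde\theta_{i,p}-\theta^*\|=\tilde{O}(1/\sqrt{|\Tc_p|})$) that $\{\theta_i\}_i$ is $(r,\beta)$‑concentrated around some $\mu$, each $y_{i,s}-(\mathbf{U}\mu)_s=e_s^{\TT}\mathbf{U}(\theta_i-\mu)$ is, over the random signs in $D$, sub‑Gaussian with variance proxy $\|\theta_i-\mu\|^2/d\le r^2/d$, so the choice $r'=10r\sqrt{\log(dM/\beta)/d}$ together with a union bound over the $dM$ pairs $(i,s)$ keeps this contribution at most $\beta$. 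Second, conditioned on all $\{y_{i,s}\}_i$ lying in a window of length $2r'$, \texttt{PrivateRange} returns a length‑$4r'$ interval missing some point only with probability at most $\ell\,e^{-\varepsilon' M/8}$, where $\ell=B\sqrt d/r'$ is the number of bins; a union bound over the $d$ coordinates yields $d\ell\,e^{-\varepsilon' M/8}=\tfrac{d^{2}B}{10r\sqrt{\log(dM/\beta)}}\exp\!\big(-\tfrac{M\varepsilon}{8\sqrt{6d\log(1/\delta)}}\big)$, the second term of the claimed bound.

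The step I expect to be the main obstacle is this second failure mode, i.e.\ the utility analysis of the exponential‑mechanism step inside \texttt{PrivateRange}: one has to show that every bin midpoint whose length‑$4r'$ output interval fails to capture the (unknown) cluster has \texttt{PrivateRange}‑cost exceeding that of a best midpoint by a constant fraction of $M$, so that the selection weights $\propto\exp(-\varepsilon' c(x)/4)$ (the exponential mechanism inside \texttt{PrivateRange} receiving budget $\varepsilon'/2$) summed over the at most $\ell$ such midpoints decay like $\ell\,e^{-\varepsilon' M/8}$. Pinning down how far such a midpoint must sit from the cluster center and how many snapped points it must therefore separate is exactly the \texttt{PrivateRange}/\texttt{WinsorizedMean1D} utility lemma of \citet{levy2021learning}, and obtaining the constant $1/8$ in the exponent — matched to the $\varepsilon'$ coming out of advanced composition — is the reason for the slight modification of the constant in \Cref{alg:winsorizedHigh}. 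Everything else (the orthogonality bookkeeping, the sub‑Gaussian tail, and assembling the two failure probabilities into $\Pb(G^{c})$) is routine.
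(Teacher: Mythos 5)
The paper does not prove this statement at all: it is imported verbatim (up to the re-tuned constant $\varepsilon'=\varepsilon/\sqrt{6d\log(1/\delta)}$) as Theorem~2 of \citet{levy2021learning}, with only the remark that the constant was adjusted to match \Cref{coro:advanced composition}. So there is no in-paper argument to compare against; what you have written is a reconstruction of the external proof, and it is a faithful and essentially correct one. The privacy half (orthogonality and data-independence of $\mathbf{U}$, one changed point per coordinate dataset, $d$-fold advanced composition of $(\varepsilon',0)$-DP calls, post-processing through $\mathbf{U}^{-1}$) is the standard argument and your arithmetic for why $\varepsilon'=\varepsilon/\sqrt{6d\log(1/\delta)}$ composes to $(\varepsilon,\delta)$ is right. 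The utility half also assembles correctly: on the good event the winsorization is the identity, so $\hat\theta-\bar\theta=\mathbf{U}^{-1}\boldsymbol{\xi}'$ has exactly the law of $\|\boldsymbol{\xi}\|$ and the coupling gives $d_{TV}\le\Pb(G^c)$; the Rademacher--Hoeffding bound $e_s^{\TT}\mathbf{U}(\theta_i-\mu)$ sub-Gaussian with proxy $\|\theta_i-\mu\|^2/d$ justifies $r'$ and the first $\beta$; and the count $d\cdot(B\sqrt d/r')\cdot e^{-\varepsilon'M/8}$ reproduces the second term exactly, including the $d^2$ and the $1/8$ in the exponent (which indeed traces to the $\varepsilon'/2$ budget of \texttt{PrivateRange}, the $\exp(-\varepsilon c(x)/2)$ weights, and a cost gap of at least $M/2$ when the cluster straddles two bins). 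The one step you defer --- the exponential-mechanism utility bound inside \texttt{PrivateRange} --- is exactly the piece one would cite from \citet{levy2021learning,feldman2017generalization} rather than reprove, and you have correctly identified it as the crux. One minor caveat worth making explicit if you were to write this out: the stated TV bound is conditional on $\{\theta_i\}_i$, so the $(r,0)$-concentration of the data must be taken as a hypothesis (it is supplied externally, as in the proof of \Cref{coro: winsorized guarantee}); the $\beta$ in the bound accounts only for the randomness of $D$, not for the event that the $\theta_i$ concentrate.
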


\begin{corollary}\label{coro: winsorized guarantee}
Under the same setting as in \Cref{lemma:levy winsorized guarantee}, we have
    \begin{align*}
        \Pb\left(\|\hat{\theta} - \bar{\theta}\|\geq \frac{80r\log(d/\beta)\sqrt{6d\log(dM/\beta)\log(1/\delta)}}{M\varepsilon} \right)\leq 3\beta + \frac{d^2B}{10r\sqrt{(\log(dM/\beta))}}\exp\left(-\frac{M\varepsilon}{8\sqrt{6d\log(1/\delta)}}\right).
    \end{align*}
\end{corollary}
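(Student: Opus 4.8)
The plan is to derive Corollary~\ref{coro: winsorized guarantee} directly from Theorem~\ref{lemma:levy winsorized guarantee} by converting the total variation bound on the law of $\|\hat\theta-\bar\theta\|$ into a high-probability tail bound. The key observation is that $d_{TV}$ closeness of two distributions means that for \emph{any} measurable event, the probabilities differ by at most the $d_{TV}$ value. Applied to the event $\{\|\hat\theta-\bar\theta\|\ge \tau\}$ for a threshold $\tau$ to be chosen, Theorem~\ref{lemma:levy winsorized guarantee} gives
\[
\Pb\left(\|\hat\theta-\bar\theta\|\ge \tau \mid \{\theta_i\}_i\right) \le \Pb\left(\|\boldsymbol\xi\|\ge \tau \mid \{\theta_i\}_i\right) + \beta + \frac{d^2 B}{10 r\sqrt{\log(dM/\beta)}}\exp\left(-\frac{M\varepsilon}{8\sqrt{6d\log(1/\delta)}}\right),
\]
where $\boldsymbol\xi\in\Rb^d$ has i.i.d.\ $\mathrm{Lap}(8r'/(M\varepsilon'))$ coordinates with $\varepsilon' = \varepsilon/\sqrt{6d\log(1/\delta)}$ and $r' = 10 r\sqrt{\log(dM/\beta)/d}$. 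So the remaining work is purely to control $\Pb(\|\boldsymbol\xi\|\ge \tau)$ and to pick $\tau$ matching the stated bound $\tau = \dfrac{80 r\log(d/\beta)\sqrt{6d\log(dM/\beta)\log(1/\delta)}}{M\varepsilon}$.

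Next I would bound the norm of the Laplace vector. Write $b := 8r'/(M\varepsilon')$ for the scale parameter; plugging in $r'$ and $\varepsilon'$ gives $b = \dfrac{8\cdot 10 r\sqrt{\log(dM/\beta)/d}\cdot\sqrt{6d\log(1/\delta)}}{M\varepsilon} = \dfrac{80 r\sqrt{6\log(dM/\beta)\log(1/\delta)}}{M\varepsilon}$, so the target threshold is exactly $\tau = b\log(d/\beta)$. The simplest route is a union bound over coordinates together with the standard Laplace tail $\Pb(|\xi_s|\ge t) = e^{-t/b}$: choosing $t = b\log(d/\beta)$ yields $\Pb(|\xi_s|\ge b\log(d/\beta)) = \beta/d$, hence $\Pb(\exists s:\,|\xi_s|\ge b\log(d/\beta))\le \beta$. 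On the complement, $\|\boldsymbol\xi\| \le \sqrt{d}\cdot b\log(d/\beta)$; this gives a threshold larger than $\tau$ by a $\sqrt d$ factor, which is too weak. The cleaner option that matches the claimed bound is to note that $\|\boldsymbol\xi\|\le \sum_{s=1}^d |\xi_s|$ and bound the sum: but $\mathbb{E}[\sum_s|\xi_s|] = db$, so to get a bound of order $b\log(d/\beta)$ one needs $d = O(\log(d/\beta))$ — that cannot hold for large $d$. Re-examining, the most likely intended argument is a direct tail bound on $\|\boldsymbol\xi\|$ using concentration of the $\ell_2$ norm of a Laplace vector: $\|\boldsymbol\xi\|^2 = \sum_s \xi_s^2$ is a sum of i.i.d.\ sub-exponential variables with $\mathbb{E}[\xi_s^2] = 2b^2$, so by a Bernstein-type bound $\Pb(\|\boldsymbol\xi\|^2 \ge C b^2(d + \log(1/\beta)))\le \beta$, i.e.\ $\|\boldsymbol\xi\| = O(b\sqrt{d + \log(1/\beta)})$ with probability $1-\beta$. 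Since $\log(d/\beta)\ge \max\{\log d, \log(1/\beta)\}$ and one can absorb the $\sqrt d$ into the $\log$ factor only when $d$ is at most polylogarithmic — I would instead follow whatever polylog regime the paper implicitly works in, or simply present the bound $\|\boldsymbol\xi\|\le \sqrt{d}\,b\log(d/\beta)$ and observe the stated $80 r\log(d/\beta)\sqrt{6d\log(dM/\beta)\log(1/\delta)}/(M\varepsilon) = \sqrt{d}\,b\log(d/\beta)$, which is \emph{exactly} the claimed threshold. So in fact the union-bound-plus-$\sqrt d$ argument matches verbatim.

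So the proof assembles as follows. First, set $b = 8r'/(M\varepsilon')$ and verify algebraically that $\sqrt{d}\,b\log(d/\beta)$ equals the threshold $\dfrac{80 r\log(d/\beta)\sqrt{6d\log(dM/\beta)\log(1/\delta)}}{M\varepsilon}$ appearing in the statement. Second, by the Laplace tail bound and a union bound over $s\in[d]$, with probability at least $1-\beta$ every coordinate satisfies $|\xi_s|< b\log(d/\beta)$, hence $\|\boldsymbol\xi\| < \sqrt{d}\,b\log(d/\beta)$; equivalently $\Pb(\|\boldsymbol\xi\|\ge \sqrt d\, b\log(d/\beta))\le \beta$. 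Third, apply the $d_{TV}$ bound from Theorem~\ref{lemma:levy winsorized guarantee} to the event $\{\,\cdot\ge \sqrt d\,b\log(d/\beta)\,\}$ to transfer this tail bound from $\|\boldsymbol\xi\|$ to $\|\hat\theta-\bar\theta\|$, picking up an additive $\beta + \frac{d^2B}{10r\sqrt{\log(dM/\beta)}}\exp(-M\varepsilon/(8\sqrt{6d\log(1/\delta)}))$. Adding the $\beta$ from step two and the $\beta$ inside the $d_{TV}$ bound to the $\beta$ from the transfer gives the $3\beta$ in the statement. The main (minor) obstacle is just bookkeeping the constants and the $\varepsilon'$, $r'$ substitutions carefully so that the Laplace scale collapses to the exact expression in the corollary; there is no conceptual difficulty, since everything rides on Theorem~\ref{lemma:levy winsorized guarantee} and an elementary Laplace tail bound.
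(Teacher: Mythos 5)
Your final assembly follows the paper's route exactly: identify the Laplace scale $b=8r'/(M\varepsilon')$, check that the stated threshold equals $\sqrt{d}\,b\log(d/\beta)$, bound $\Pb(\|\boldsymbol\xi\|\geq \sqrt{d}\,b\log(d/\beta))\leq\beta$ by a coordinate-wise union bound (this is precisely the paper's \Cref{lemma:Laplace tail bound}), and transfer via the total variation guarantee of \Cref{lemma:levy winsorized guarantee}. The detour through Bernstein-type concentration was unnecessary, as you eventually noticed, since the target threshold already carries the $\sqrt{d}$ factor.

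There is one genuine bookkeeping gap: your accounting of the $3\beta$ does not add up. You claim to collect ``the $\beta$ from step two,'' ``the $\beta$ inside the $d_{TV}$ bound,'' and ``the $\beta$ from the transfer,'' but the transfer \emph{is} the application of the $d_{TV}$ bound, so those last two are the same term; as written your argument yields only $2\beta$ plus the exponential term. The third $\beta$ in the paper comes from a step you omit: the inputs $\{\theta_i\}_i$ are only $(r,\beta)$-concentrated as random variables, so one must first condition on the event that the realized $\{\theta_i\}_i$ are $(r,0)$-concentrated (failure probability at most $\beta$), apply the TV guarantee of \Cref{lemma:levy winsorized guarantee} on that event, and only then invoke the Laplace tail bound. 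Without this conditioning the TV guarantee need not apply to the realized inputs, since the private-range step can fail when the points are not actually contained in a ball of radius $r$. Adding this conditioning step makes your proof match the paper's and produces the stated $3\beta$.
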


\begin{proof}
By the definition of $(r,\beta)$-concentration, we have
\begin{align*}
     \Pb\big(\{\theta_i\}_i \text{ is not } (r,0)\text{-concentrated}\big) \leq \beta.
\end{align*}
Therefore, 
    \begin{align*}
        \Pb&\left(\|\hat{\theta} - \bar{\theta}\|\geq \frac{80r\log(d/\beta)\sqrt{6d\log(dM/\beta)\log(1/\delta)}}{M\varepsilon} \right)\\
        &\leq \Pb\left(\|\hat{\theta} - \bar{\theta}\|\geq \frac{80r\log(d/\beta)\sqrt{6d\log(dM/\beta)\log(1/\delta)}}{M\varepsilon} \bigg|\{\theta_i\}_i \text{ is } (r,0)\text{-concentrated}\right) \Pb\left(\{\theta_i\}_i \text{ is } (r,0)\text{-concentrated}\right)\\
        &\quad +  \Pb\left(\{\theta_i\}_i \text{ is not } (r,0)\text{-concentrated}\right)\\
        &\overset{(a)}\leq 2\beta + \frac{d^2B}{10r\sqrt{(\log(dM/\beta))}}\exp\left(-\frac{M\varepsilon}{8\sqrt{6d\log(1/\delta)}}\right) + \Pb\left(\|\boldsymbol{\xi}\|\geq \frac{80r\log(d/\beta)\sqrt{6d\log(dM/\beta)\log(1/\delta)}}{M\varepsilon}\right)\\
        & = 2\beta + \frac{d^2B}{10r\sqrt{(\log(dM/\beta))}}\exp\left(-\frac{M\varepsilon}{8\sqrt{6d\log(1/\delta)}}\right) + \Pb\left(\|\boldsymbol{\xi}\|\geq \frac{8r'\sqrt{d}}{M\varepsilon'}\log(d/\beta)\right)\\
        &\overset{(b)}\leq 3\beta + \frac{d^2B}{10r\sqrt{(\log(dM/\beta))}}\exp\left(-\frac{M\varepsilon}{8\sqrt{6d\log(1/\delta)}}\right),
    \end{align*}
where $(a)$ is due to \Cref{lemma:levy winsorized guarantee}, and $(b)$ follows from the tail bound for Laplace random vectors developed in \Cref{lemma:Laplace tail bound}.

\end{proof}

\section{Proof of the Regret Upper Bound of \robin}\label{sec:greedy}

In this section, we provide the analysis for the upper bound of the regret of \Cref{alg:greedy}. The key idea is to show that the global estimator is sufficiently accurate. 

{\it Outline of this section:} 
In {\bf Step 1}, we first develop \Cref{lemma:first U phase full rank} aided by \Cref{lemma:bound number of  non optimal arm with large margin}, and show the minimum eigenvalue of the Gram matrix $V_{i,U}$ at the end of the initialization phase $U$ scales linearly in $|\Tc_U|$. Then, in {\bf Step 2}, \Cref{prop:good event} shows that linearly growing minimum eigenvalues implies accurate global estimators and vice versa, which inductively verifies that the estimation error of $\hat{\theta}^{p+1}$ decays in the order of $O(1/\sqrt{M|\Tc_p|})$. Finally, with the accurate global estimator, \Cref{thm:greedy upper bound} in  {\bf Step 3} presents the regret upper bound.

Recall that the Gram matrix of client $i$ at the end of phase $p$ is 
\begin{align*}
    \tilde{V}_{i,p} = \sum_{\tau\in\Tc_p}x_{i,\tau,a_{i,\tau}}x_{i,\tau,a_{i,\tau}}^{\TT}.
\end{align*}

{\bf Step 1: Bound the Minimum Eigenvalue in the Initialization Phase.}

This step is akin to Lemma 12 and Lemma 13 in \citet{papini2021leveraging} and consists of two parts. The first part states that when the sub-optimal gap is larger that $\Delta$, the number of times that LinUCB does not choose the optimal arm is upper bounded by $\tilde{O}(1/\Delta)$.

\begin{lemma}\label{lemma:bound number of  non optimal arm with large margin}

Given any $\beta\in(0,1)$, with probability at least $1-\beta$, and for any $\Delta>0$, the following inequality holds for any client $i$.
    \begin{align}
        \Eb\left[\sum_{\tau\in\Tc_U}\mathbbm{1}\left\{a_{i,\tau}\neq a_{c_{i,\tau}^*}, \forall b\neq a_{c_{i,\tau}}^*, (\phi(c_{i,\tau},a_{c_{i,\tau}}^*) - \phi(c_{i,\tau},b))^{\TT}\theta^*\geq \Delta \right\}\right] \leq \frac{2\alpha\sqrt{d|\Tc_U|\log|\Tc_U|}}{\Delta},\label{eqn:bound number of  non optimal arm with large margin}
    \end{align}
    where $\alpha = 1 + \sqrt{2\log(M/\beta) + d\log|\Tc_U|}$.
\end{lemma}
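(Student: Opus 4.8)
\textbf{Proof proposal for Lemma~\ref{lemma:bound number of non optimal arm with large margin}.}

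The plan is to combine the standard LinUCB confidence-ellipsoid analysis with a per-round regret lower bound that holds precisely on rounds where a suboptimal arm with gap at least $\Delta$ is pulled. First I would invoke the self-normalized concentration inequality of \citet{Abbasi:2011:IAL}: with the regularization $I_d$ and noise $\eta_{i,\tau}\sim N(0,1)$, the event
\[
\mathcal{E}_i := \left\{ \forall \tau\in\Tc_U:\ \|\hat\theta_{i,\tau} - \theta^*\|_{I_d+V_{i,\tau}} \le \alpha \right\},\qquad \alpha = 1 + \sqrt{2\log(M/\beta) + d\log|\Tc_U|},
\]
holds with probability at least $1-\beta/M$ for each client $i$, and hence simultaneously over all $i\in[M]$ with probability at least $1-\beta$ by a union bound. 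On $\mathcal{E}_i$, the usual UCB argument (Cauchy--Schwarz plus optimism, exactly the chain of inequalities sketched in the \texttt{\textbackslash if\{0\}} block of the excerpt) gives that whenever client $i$ pulls $a_{i,\tau}\neq a_{c_{i,\tau}^*}$, the instantaneous regret satisfies
\[
\bigl(\phi(c_{i,\tau},a_{c_{i,\tau}}^*) - \phi(c_{i,\tau},a_{i,\tau})\bigr)^{\TT}\theta^* \le 2\alpha \|x_{i,\tau,a_{i,\tau}}\|_{(I_d+V_{i,\tau})^{-1}}.
\]

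Second, I would restrict attention to the rounds counted by the indicator in \eqref{eqn:bound number of non optimal arm with large margin}: on such a round the left-hand side above is at least $\Delta$, so $\|x_{i,\tau,a_{i,\tau}}\|_{(I_d+V_{i,\tau})^{-1}} \ge \Delta/(2\alpha)$. Let $N_i(\Delta)$ be the (random) number of such rounds in $\Tc_U$. Summing the squared bonus over exactly these rounds and applying the elliptical potential lemma (Lemma~\ref{lemma:elliptical} in the appendix) — which bounds $\sum_{\tau\in\Tc_U}\min\{1,\|x_{i,\tau,a_{i,\tau}}\|^2_{(I_d+V_{i,\tau})^{-1}}\}$ by $O(d\log|\Tc_U|)$ since $\|x\|\le 1$ — yields
\[
N_i(\Delta)\cdot \frac{\Delta^2}{4\alpha^2} \le \sum_{\tau\in\Tc_U}\|x_{i,\tau,a_{i,\tau}}\|^2_{(I_d+V_{i,\tau})^{-1}} \le 2 d\log|\Tc_U|.
\]
Rearranging gives $N_i(\Delta) \le \frac{8\alpha^2 d\log|\Tc_U|}{\Delta^2}$ deterministically on $\mathcal{E}_i$. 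To land on the stated bound $2\alpha\sqrt{d|\Tc_U|\log|\Tc_U|}/\Delta$, I would instead keep the count in the linear (non-squared) form: on $\mathcal{E}_i$, $N_i(\Delta)\Delta \le 2\alpha\sum_{\tau}\|x_{i,\tau,a_{i,\tau}}\|_{(I_d+V_{i,\tau})^{-1}}$, and then Cauchy--Schwarz over the $|\Tc_U|$ rounds together with the elliptical potential lemma gives $\sum_\tau \|x_{i,\tau,a_{i,\tau}}\|_{(I_d+V_{i,\tau})^{-1}} \le \sqrt{|\Tc_U|}\cdot\sqrt{\sum_\tau \|x_{i,\tau,a_{i,\tau}}\|^2_{(I_d+V_{i,\tau})^{-1}}} = O(\sqrt{d|\Tc_U|\log|\Tc_U|})$, whence $N_i(\Delta)\le 2\alpha\sqrt{d|\Tc_U|\log|\Tc_U|}/\Delta$ up to the absolute constant hidden in the elliptical potential lemma (which I would track carefully so the constant matches). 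Finally, since the indicator is deterministically bounded by $N_i(\Delta)$ and the bound on $\mathcal{E}_i$ is deterministic, I can take expectations: on the high-probability event $\bigcap_i \mathcal{E}_i$ (probability $\ge 1-\beta$) the inequality \eqref{eqn:bound number of non optimal arm with large margin} holds for every client $i$ and every $\Delta>0$ simultaneously.

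\textbf{Main obstacle.} The routine part is the concentration and the UCB regret decomposition; the delicate point is producing the \emph{per-$\Delta$, per-client} bound uniformly over all $\Delta>0$ — this requires that the $N_i(\Delta)$ bound be derived deterministically from the single event $\mathcal{E}_i$ (so that no further union bound over $\Delta$ is needed), and that the expectation in \eqref{eqn:bound number of non optimal arm with large margin} be interpreted on that event. A secondary subtlety is matching the exact constant and the $\sqrt{d|\Tc_U|\log|\Tc_U|}$ (rather than $d\log|\Tc_U|/\Delta$) form claimed in the statement: this forces the Cauchy--Schwarz-before-elliptical-potential route above rather than the cleaner squared-bonus route, and one must check that the elliptical potential lemma as stated in the appendix indeed gives $\sum_\tau\|x_{i,\tau,a_{i,\tau}}\|^2_{(I_d+V_{i,\tau})^{-1}}\le 2d\log|\Tc_U|$ with the constant that makes the final constant exactly $2$. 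I would also double-check the parenthetical in the statement that the expectation is taken while the event is conditioned on — i.e. that the lemma is really asserting a bound that holds pointwise on a probability-$(1-\beta)$ event and then is trivially an expectation bound there.
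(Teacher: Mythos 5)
Your proposal follows essentially the same route as the paper: the self-normalized confidence ellipsoid of \citet{Abbasi:2011:IAL} with a union bound over clients, the standard optimism/Cauchy--Schwarz decomposition giving instantaneous regret at most $2\alpha\|x_{i,\tau,a_{i,\tau}}\|_{(I_d+V_{i,\tau})^{-1}}$, bounding the indicator by (regret)$/\Delta$ on the counted rounds, and then the elliptical potential lemma. The only cosmetic difference is that you apply Cauchy--Schwarz explicitly before the squared-norm potential bound, whereas the paper's cited version of the elliptical potential lemma (\Cref{lemma:elliptical}) already states the $\sqrt{dT\log T}$ bound on the sum of un-squared norms; your closing caveats about the constant and about the expectation-versus-high-probability-event phrasing are apt, as the paper glosses over the same points.
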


\begin{proof}

By Theorem 20.5 in \citet{LS19bandit-book}, if $\hat{\theta}_{i,t}$ is the local estimator in the initialization phase ($t\in\Tc_U$), we have for each client $i$, and any $\beta>0$,
\begin{align*}
    \Pb\left(\forall t\in \Tc_U, \|\hat{\theta}_{i,t} - \theta^*\|_{I+V_{i,t}} \leq 1 + \sqrt{2\log(1/\beta) + d\log|\Tc_U|}\right) \geq 1-\beta.
\end{align*}

By rescaling $\beta$ to $\beta/M$ and taking the union bound, we have
\begin{align*}
    \Pb\left(\forall i\in[M],t\in \Tc_U, \|\hat{\theta}_{i,t} - \theta^*\|_{I+V_{i,t}} \leq 1 + \sqrt{2\log(M/\beta) + d\log|\Tc_U|}\right) \geq 1-\beta,
\end{align*}
where $\beta$ is any positive number.

Let $\alpha = 1 + \sqrt{2\log(M/\beta) + d\log|\Tc_U|}$. Hence, the instantaneous regret can be upper bounded by
\begin{align*}
    (\phi&(c_{i,\tau},a_{c_{i,\tau}}^*) - \phi(c_{i,\tau},a_{i,\tau}))^{\TT}\theta^*\\
    & = \phi(c_{i,\tau},a_{c_{i,\tau}}^*)^{\TT}(\theta^*-\hat{\theta}_{i,\tau}) - \phi(c_{i,\tau},a_{i,\tau})^{\TT}(\theta^* - \hat{\theta}_{i,\tau}) + \phi(c_{i,\tau},a_{c_{i,\tau}}^*)^{\TT}\hat{\theta}_{i,\tau} - \phi(c_{i,\tau},a_{i,\tau})^{\TT} \hat{\theta}_{i,\tau}\\
    &\leq \|\phi(c_{i,\tau},a_{c_{i,\tau}}^*)\|_{(I+V_{i,\tau})^{-1}}\|\theta^* - \hat{\theta}_{i,\tau}\|_{I+V_{i,\tau}} + \|\phi(c_{i,\tau},a_{i,\tau})\|_{(I+V_{i,\tau})^{-1}}\|\theta^* - \hat{\theta}_{i,\tau}\|_{I+V_{i,\tau}} \\
    &\quad + \phi(c_{i,\tau},a_{c_{i,\tau}}^*)^{\TT}\hat{\theta}_{i,\tau} - \phi(c_{i,\tau},a_{i,\tau})^{\TT} \hat{\theta}_{i,\tau}\\
    &\leq \alpha\|\phi(c_{i,\tau},a_{c_{i,\tau}}^*)\|_{(I+V_{i,\tau})^{-1}} + \alpha\|\phi(c_{i,\tau},a_{i,\tau})\|_{(I+V_{i,\tau})^{-1}} + \phi(c_{i,\tau},a_{c_{i,\tau}}^*)^{\TT}\hat{\theta}_{i,\tau} - \phi(c_{i,\tau},a_{i,\tau})^{\TT} \hat{\theta}_{i,\tau}\\
    &\leq 2\alpha\|\phi(c_{i,\tau},a_{i,\tau})\|_{(I+V_{i,\tau})^{-1}}.
\end{align*}

Due the boundness assumption, we can conclude that
\begin{align*}
     (\phi&(c_{i,\tau},a_{c_{i,\tau}}^*) - \phi(c_{i,\tau},a_{i,\tau}))^{\TT}\theta^*\leq 2\min\left\{\alpha\|\phi(c_{i,\tau},a_{i,\tau})\|_{(I+V_{i,\tau})^{-1}} , 1\right\}.
\end{align*}

Therefore, we can bound \Cref{eqn:bound number of  non optimal arm with large margin} as follows:
    \begin{align*}
        \Eb &\left[\sum_{\tau\in\Tc_U}\mathbbm{1}\left\{a_{i,\tau}\neq a_{c_{i,\tau}^*}, \forall b\neq a_{c_{i,\tau}}^*, (\phi(c_{i,\tau},a_{c_{i,\tau}}^*) - \phi(c_{i,\tau},b))^{\TT}\theta^*\geq \Delta \right\}\right] \\
        &\leq  \Eb \left[\sum_{\tau\in\Tc_U}\mathbbm{1}\left\{a_{i,\tau}\neq a_{c_{i,\tau}^*}, \forall b\neq a_{c_{i,\tau}}^*, (\phi(c_{i,\tau},a_{c_{i,\tau}}^*) - \phi(c_{i,\tau},b))^{\TT}\theta^*\geq \Delta \right\} \frac{(\phi(c_{i,\tau},a_{c_{i,\tau}}^*) - \phi(c_{i,\tau},a_{i,\tau}))^{\TT}\theta^*}{\Delta}\right]\\
        &\leq \Eb \left[\sum_{\tau\in\Tc_U}\mathbbm{1} \left\{a_{i,\tau}\neq a_{c_{i,\tau}^*}, \forall b\neq a_{c_{i,\tau}}^*, (\phi(c_{i,\tau},a_{c_{i,\tau}}^*) - \phi(c_{i,\tau},b))^{\TT}\theta^*\geq \Delta \right\} \frac{2\min\{\alpha\|x_{i,\tau,a_{i,\tau}}\|_{(I + V_{i,\tau})^{-1}},1\}}{\Delta}\right]\\
        &\leq \Eb \left[\sum_{\tau\in\Tc_U}\frac{2\min\{\alpha\|x_{i,\tau,a_{i,\tau}}\|_{(I + V_{i,\tau})^{-1}},1\}}{\Delta}\right]\\
        &\overset{(a)}\leq \frac{2\alpha\sqrt{d|\Tc_U|\log|\Tc_U|}}{\Delta},
    \end{align*}
where $(a)$ follows from the elliptical potential lemma in~\Cref{lemma:elliptical}.
\end{proof}

Based on \Cref{lemma:bound number of  non optimal arm with large margin}, we prove the second part that the minimum eigenvalue of the Gram matrix $\tilde{V}_{i,U}$ scales linearly in $|\Tc_U|$ with high probability.

\begin{lemma}[Guarantee in the first $U$ phases]\label{lemma:first U phase full rank}
If $U$ satisfies
\begin{align*}
    \frac{12 (dC_0+\lambda_0)\sqrt{|\Tc_U|\log(2dM/\beta)}\log|\Tc_U|}{\lambda_0}\leq \frac{\lambda_0}{4}|\Tc_U|,
\end{align*}
    then, for any $\beta\in(0,1)$, with probability at least $1- \beta$, the Gram matrix of any client $i$ at the end of phase $U$ has full rank, and 
    \[\forall i\in[M], ~~\lambda_{\min}\left(\tilde{V}_{i,U}\right) \geq \frac{\lambda_0}{4}|\Tc_U|.\]
\end{lemma}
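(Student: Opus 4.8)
The plan is to show that the empirical Gram matrix $\tilde V_{i,U}$ is close to the ``ideal'' Gram matrix one would obtain if the client always pulled the optimal arm, and then invoke \Cref{asm:diversity} to lower bound the minimum eigenvalue of that ideal matrix. First I would decompose, for each client $i$,
\[
\tilde V_{i,U} = \sum_{\tau\in\Tc_U} x_{i,\tau,a_{i,\tau}} x_{i,\tau,a_{i,\tau}}^{\TT}
= \sum_{\tau\in\Tc_U} x_{i,\tau,a^*_{c_{i,\tau}}} x_{i,\tau,a^*_{c_{i,\tau}}}^{\TT}
- \sum_{\tau\in\Tc_U} \mathbbm{1}\{a_{i,\tau}\ne a^*_{c_{i,\tau}}\}\Bigl(x_{i,\tau,a^*_{c_{i,\tau}}} x_{i,\tau,a^*_{c_{i,\tau}}}^{\TT} - x_{i,\tau,a_{i,\tau}} x_{i,\tau,a_{i,\tau}}^{\TT}\Bigr).
\]
For the first term, since $x_{i,\tau,a^*_{c_{i,\tau}}} x_{i,\tau,a^*_{c_{i,\tau}}}^{\TT}$ is an i.i.d.\ (across $\tau$) sequence of PSD matrices with spectral norm at most $1$ and mean with minimum eigenvalue $\ge\lambda_0$ by \Cref{asm:diversity}, a matrix Chernoff / Bernstein bound gives that with probability at least $1-\beta/(2M)$ its minimum eigenvalue is at least $\lambda_0|\Tc_U| - O(\sqrt{|\Tc_U|\log(dM/\beta)})$. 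Taking a union bound over the $M$ clients absorbs the factor $M$ into the logarithm.

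The second term must be controlled in operator norm by $O(N_i)$, where $N_i$ is the number of time steps in phase $U$ at which client $i$ does not pull the optimal arm (each summand has spectral norm at most $2$). To bound $N_i$ I would split the sub-optimal steps by the size of the sub-optimal gap $\Delta$: use \Cref{asm:margin} to bound the expected number of sub-optimal pulls with gap below a threshold $\Delta$ by $C_0\Delta|\Tc_U|$, and use \Cref{lemma:bound number of  non optimal arm with large margin} to bound the expected number of sub-optimal pulls with gap at least $\Delta$ by $2\alpha\sqrt{d|\Tc_U|\log|\Tc_U|}/\Delta$ (with $\alpha = 1+\sqrt{2\log(M/\beta)+d\log|\Tc_U|} = O(\sqrt{d\log(M/\beta)\log|\Tc_U|})$ absorbing the relevant logs). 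Optimizing over $\Delta$ — i.e.\ choosing $\Delta \asymp \sqrt{\alpha\sqrt{d}/(C_0\sqrt{|\Tc_U|}\,\sqrt{\log|\Tc_U|})}$, or more crudely just balancing the two bounds — yields $\Eb[N_i] = \tilde O\bigl((dC_0+\lambda_0)\sqrt{|\Tc_U|\log(dM/\beta)}\log|\Tc_U|/\lambda_0\bigr)$ up to the constant $12$ stated in the hypothesis; a Markov / high-probability conversion (again union-bounded over clients) then controls $N_i$ itself with probability $1-\beta/2$.

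Combining the two parts: on the intersection of the good events (probability $\ge 1-\beta$), for every client $i$,
\[
\lambda_{\min}(\tilde V_{i,U}) \ge \lambda_0|\Tc_U| - O\!\bigl(\sqrt{|\Tc_U|\log(dM/\beta)}\bigr) - 2N_i \ge \lambda_0|\Tc_U| - \tfrac{\lambda_0}{4}|\Tc_U| - \tfrac{\lambda_0}{2}|\Tc_U| \ge \tfrac{\lambda_0}{4}|\Tc_U|,
\]
where the hypothesis on $U$ is exactly what guarantees the error terms are each at most $\tfrac{\lambda_0}{4}|\Tc_U|$ (and in particular the matrix has full rank). The main obstacle I anticipate is the bookkeeping in the sub-optimal-pull bound: getting the $\Delta$-threshold argument to interface cleanly with both \Cref{asm:margin} (a statement about the context distribution) and \Cref{lemma:bound number of  non optimal arm with large margin} (a statement in expectation over the LinUCB trajectory), and then converting the resulting expectation bound on $N_i$ into a uniform-over-clients high-probability statement without losing the delicate dependence on $\lambda_0$ and the logarithmic factors that the stated condition on $U$ is tuned to. The matrix concentration for the optimal-arm term, by contrast, is standard.
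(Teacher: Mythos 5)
Your overall strategy—optimal-arm Gram matrix controlled via \Cref{asm:diversity}, plus a correction for sub-optimal pulls split by a gap threshold $\Delta$—is close in spirit to the paper's, but two steps as written do not deliver the bound the lemma's hypothesis is tuned to. The main one is the balancing of $\Delta$: setting $C_0\Delta|\Tc_U| \asymp \alpha\sqrt{d|\Tc_U|\log|\Tc_U|}/\Delta$ yields $N_i = \tilde O\bigl(\sqrt{C_0\alpha}\,d^{1/4}|\Tc_U|^{3/4}\bigr)$, i.e.\ $|\Tc_U|^{3/4}$, not the $\sqrt{|\Tc_U|}$ you assert ``up to the constant 12''. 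Once you commit to bounding the \emph{count} of sub-optimal pulls, the small-gap rounds are unavoidable: the agent may pull a sub-optimal arm on essentially every round whose gap is below $\Delta$, and there are $\Theta(C_0\Delta|\Tc_U|)$ of those, so no choice of $\Delta$ makes the count $O(\sqrt{|\Tc_U|})$. The paper sidesteps this by never counting small-gap rounds: it fixes $\Delta = \lambda_0/(2C_0)$ and lower-bounds $\Eb[\tilde{V}_{i,U}]$ by keeping only the PSD contributions from rounds where the optimal arm is pulled \emph{and} the gap exceeds $\Delta$. Discarding the small-gap rounds costs at most $C_0\Delta|\Tc_U| = \tfrac{\lambda_0}{2}|\Tc_U|$ of the expected signal---half of it, which is affordable---while the large-gap sub-optimal pulls number only $2\alpha\sqrt{d|\Tc_U|\log|\Tc_U|}/\Delta = 4\alpha C_0\sqrt{d|\Tc_U|\log|\Tc_U|}/\lambda_0$, which is the $\sqrt{|\Tc_U|}$-order error term appearing in the hypothesis. (Your $|\Tc_U|^{3/4}$ bound would still give the conclusion under a condition on $U$ of comparable order, but not under the condition as stated.)

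The second issue is the ``Markov / high-probability conversion'': turning $\Eb[N_i]$ into a bound on $N_i$ holding with probability $1-\beta/(2M)$ per client via Markov multiplies the bound by $2M/\beta$, a polynomial rather than logarithmic dependence on $M$ and $1/\beta$, again incompatible with the $\log(2dM/\beta)$ in the hypothesis. This is repairable (on the LinUCB confidence event the large-gap count is in fact bounded pathwise by the elliptical potential lemma, and the small-gap count is a binomial amenable to Chernoff), but the paper avoids the issue entirely: the bound from \Cref{lemma:bound number of  non optimal arm with large margin} is used only to control $\Eb[\tilde{V}_{i,U}]$, and the sole high-probability deviation term comes from one application of matrix Freedman (\Cref{lemma:matrix concentration}) to $\tilde{V}_{i,U}-\Eb[\tilde{V}_{i,U}]$, which carries the correct logarithmic dependence on $dM/\beta$.
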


\begin{proof}
    We analyze the expectation of the Gram matrix $\tilde{V}_{i,U}$ as follows. 
    \begin{align*}
        \Eb[\tilde{V}_{i,U}]  & = \Eb\left[\sum_{\tau\in\Tc_U}x_{i,\tau,a_{i,\tau}}x_{i,\tau,a_{i,\tau}}^{\TT}\right]\\
        & \geq \Eb\left[\sum_{\tau\in\Tc_U}\phi(c_{i,\tau},a_{i,\tau})\phi(c_{i,\tau},a_{i,\tau})^{\TT}\mathbbm{1}\{a_{i,\tau}=a_{c_{i,\tau}^*}, \forall b\neq a_{c_{i,\tau}}^*, (\phi(c_{i,\tau},a_{c_{i,\tau}}^*) - \phi(c_{i,\tau},b))^{\TT}\theta^*\geq \Delta\}\right]\\
        &=\Eb\left[\sum_{\tau\in\Tc_U}\phi(c_{i,\tau},a_{c_{i,\tau}}^*)\phi(c_{i,\tau},a_{c_{i,\tau}}^*)^{\TT}\mathbbm{1}\{a_{i,\tau}=a_{c_{i,\tau}^*}, \forall b\neq a_{c_{i,\tau}}^*, (\phi(c_{i,\tau},a_{c_{i,\tau}}^*) - \phi(c_{i,\tau},b))^{\TT}\theta^*\geq \Delta\}\right] \\
        & = \Eb\left[\sum_{\tau\in\Tc_U}\phi(c_{i,\tau},a_{c_{i,\tau}}^*)\phi(c_{i,\tau},a_{c_{i,\tau}}^*)^{\TT}\right] \\
        &\quad - \Eb\left[\sum_{\tau\in\Tc_U}\phi(c_{i,\tau},a_{c_{i,\tau}}^*)\phi(c_{i,\tau},a_{c_{i,\tau}}^*)^{\TT} \mathbbm{1}\{\exists b\neq a_{c_{i,\tau}}^*, (\phi(c_{i,\tau},a_{c_{i,\tau}}^*) - \phi(c_{i,\tau},b))^{\TT}\theta^* < \Delta\}\right]  \\
        &\quad -\Eb\left[\sum_{\tau\in\Tc_U}\phi(c_{i,\tau},a_{c_{i,\tau}}^*)\phi(c_{i,\tau},a_{c_{i,\tau}}^*)^{\TT}\mathbbm{1}\{a_{i,\tau}\neq a_{c_{i,\tau}^*}, \forall b\neq a_{c_{i,\tau}}^*, (\phi(c_{i,\tau},a_{c_{i,\tau}}^*) - \phi(c_{i,\tau},b))^{\TT}\theta^*\geq \Delta\}\right]\\
        &\overset{(a)}\geq (\lambda_0- C_0\Delta)|\Tc_U|I_d - I_d\Eb\left[\sum_{\tau\in\Tc_U}\mathbbm{1}\{a_{i,\tau}\neq a_{c_{i,\tau}^*}, \forall b\neq a_{c_{i,\tau}}^*, (\phi(c_{i,\tau},a_{c_{i,\tau}}^*) - \phi(c_{i,\tau},b))^{\TT}\theta^*\geq \Delta\}\right]\\
        &\overset{(b)}\geq (\lambda_0- C_0\Delta)|\Tc_U|I_d - \frac{2\alpha\sqrt{d|\Tc_U|\log|\Tc_U|}}{\Delta}I_d, \text{ with probability at least } 1-\beta_0,
    \end{align*}
where $\alpha = 1 + \sqrt{2\log(M/\beta_0) + d\log|\Tc_U|}$, $(a)$ is due to the diversity condition in~\Cref{asm:diversity} and the margin condition in~\Cref{asm:margin}, and $(b)$ follows from \Cref{lemma:bound number of  non optimal arm with large margin}.

Now choose $\Delta = \frac{\lambda_0}{2C_0}$. We have, with probability at least $1-\beta_0$,
\begin{align*}
    \Eb[\tilde{V}_{i,U}]\geq \left(\frac{\lambda_0|\Tc_U|}{2} - \frac{4\alpha C_0\sqrt{d|\Tc_U|\log|\Tc_U|}}{\lambda_0}\right) I_d.
\end{align*}

Consider the difference $\tilde{V}_{i,U} - \Eb[\tilde{V}_{i,U}]$, which is a summation of zero-mean matrices. Thus, we can apply matrix concentration inequality in \Cref{lemma:matrix concentration} to obtain that, for any $\beta_1>0$,
with probability at least $1-\beta_1-\beta_0$, we have
\begin{align*}
    \lambda_{\min}(\tilde{V}_{i,U})& = \lambda_{\min}\left(\tilde{V}_{i,U} - \Eb[\tilde{V}_{i,U}] + \Eb[\tilde{V}_{i,U}] \right) \\
    &\geq \lambda_{\min}\left(\Eb[\tilde{V}_{i,U}])  -  \lambda_{\max}(\tilde{V}_{i,U} - \Eb[\tilde{V}_{i,U}] \right)\\
    &\geq \frac{\lambda_0|\Tc_U|}{2} - \frac{4\alpha C_0\sqrt{d|\Tc_U|\log|\Tc_U|}}{\lambda_0} - \sqrt{2|\Tc_U|\log(d/\beta_1)} - 2/3.
\end{align*}

Choosing $\beta_1 = 1/(M\beta_0)$ and taking the union bound, we have, with probability at least $1-2\beta_0$,
\begin{align*}
    \lambda_{\min}(\tilde{V}_{i,U})
    \geq \frac{\lambda_0|\Tc_U|}{2} - \frac{12 (dC_0+\lambda_0)\sqrt{|\Tc_U|\log(dM/\beta_0)}\log|\Tc_U|}{\lambda_0}.
\end{align*}

We finish the proof by choosing $\beta_0 = \beta/2$ and noting that
\begin{align*}
    \frac{12 (dC_0+\lambda_0)\sqrt{|\Tc_U|\log(2dM/\beta)}\log|\Tc_U|}{\lambda_0}\leq \frac{\lambda_0}{4}|\Tc_U|.
\end{align*}

\end{proof}

{\bf Step 2: Inductively Bound the Minimum Eigenvalue and Global Estimation Error after Initialization.}

\begin{proposition}\label{prop:good event}
    Fix $\beta>0$, $\varepsilon_0$, $\delta_0$, and $c_1 = \frac{4\sqrt{2d\log(16d(M+1)P/\beta)}}{\lambda_0}$ as defined in \Cref{alg:greedy}. If there are total $P$ phases, consider the following events:
\begin{align*}
    &\Ec_{V} = \left\{\forall i\in[M],p\in[U:P],~~ \tilde{V}_{i,p}\geq \frac{\lambda_0|\Tc_p|}{4} I_d\right\},\\
    &\Ec_{\theta} = \left\{\forall p\in[U+1:P],~~ \|\hat{\theta}^{p} -\theta^*\|\leq \frac{c_2}{\sqrt{M|\Tc_p|}}\right\},
\end{align*}
where the parameters are set as
\begin{align*}
    &|\Tc_p| = 2^p, \forall p\in[P],\\
    &U= \left\lceil \max\left\{\log_2\left(\frac{64\log(2dMP/\beta)}{\lambda_0^2}\right), \log_2\left(\frac{144dU^2(C_0+\lambda_0)^2\log(2dMP/\beta) \log 2}{\lambda_0^4} \right)\right\}\right\rceil\\
    &c_2 = \frac{4\sqrt{2d\log(16d(M+1)P/\beta)}}{\lambda_0}\left(\frac{80\log(16dP/\beta)\sqrt{6d\log(16dMP/\beta)\log(1/\delta_0)}}{\sqrt{M}\varepsilon_0}  + 1 \right),\\
    & M\geq \max\left\{\frac{8\sqrt{6d\log(1/\delta_0)}}{\varepsilon_0}\log\left(\frac{4d^2P\sqrt{|\Tc_P|}}{10\beta c_1\sqrt{\log(16dMP/\beta)}}\right),  \frac{32C_0^2c_2^2}{\lambda_0^2|\Tc_{U}|} \right\}.
\end{align*}

Then, \Cref{alg:greedy} guarantees that
\begin{align*}
    \Pb\left(\Ec_V\Ec_{\theta}\right) \geq 1-\beta.
\end{align*}

\end{proposition}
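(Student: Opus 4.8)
The plan is to prove \Cref{prop:good event} by induction on the phase index $p$, alternating between the two events: $\Ec_V$ (the lower bound on the minimum eigenvalue of the local Gram matrices) propagates \emph{forward} into $\Ec_\theta$ (accuracy of the global estimator) via the least-squares error bound and the \WMHD guarantee, and $\Ec_\theta$ propagates \emph{back} into $\Ec_V$ via the margin and diversity assumptions. The base case, $\Ec_V$ at phase $p=U$, is exactly \Cref{lemma:first U phase full rank}; one checks that the stated closed form of $U$ satisfies the hypothesis $\tfrac{12(dC_0+\lambda_0)\sqrt{|\Tc_U|\log(2dM/\beta)}\log|\Tc_U|}{\lambda_0}\le\tfrac{\lambda_0}{4}|\Tc_U|$ of that lemma. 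The structural fact that makes the induction clean is that for $p>U$ the arm $a_{i,t}=\arg\max_a\phi(c_{i,t},a)^\TT\hat\theta^p$ is a deterministic function of $\hat\theta^p$ (produced at the \emph{end} of phase $p-1$) and of the fresh context $c_{i,t}$, hence independent of the within-phase noise $\{\eta_{i,\tau}\}_{\tau\in\Tc_p}$; conditioning on $\hat\theta^p$ and on the phase-$p$ contexts therefore lets me treat the selected features as fixed and apply context-concentration and noise-concentration as two separate steps.

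For the forward implication ($\Ec_V$ at $p$ $\Rightarrow$ $\Ec_\theta$ at $p+1$), assume $\tilde V_{i,p}\succeq\tfrac{\lambda_0|\Tc_p|}{4}I_d$ for all $i$. Writing $\tilde\theta_{i,p}=\theta^*+\tilde V_{i,p}^{\dagger}\sum_{\tau\in\Tc_p}x_{i,\tau,a_{i,\tau}}\eta_{i,\tau}$ and applying a Gaussian quadratic-form tail bound together with the eigenvalue lower bound, I would show that, with probability at least $1-\beta/(16P)$, all the $\tilde\theta_{i,p}$ lie within $c_1/\sqrt{|\Tc_p|}$ of $\theta^*$ — precisely the concentration radius and failure probability fed to \WMHD in \Cref{alg:greedy}. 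Then \Cref{coro: winsorized guarantee}, applied with $r=c_1/\sqrt{|\Tc_p|}$, $B=1$ and $(\varepsilon_0,\delta_0)$, gives $\|\hat\theta^{p+1}-\bar\theta_p\|\le\tilde{O}\bigl(\tfrac{c_1}{\sqrt{|\Tc_p|}}\cdot\tfrac{\sqrt d\,\log(\cdot)}{M\varepsilon_0}\bigr)$ with high probability, the exponentially small term of that corollary being absorbed precisely by the stated lower bound $M\ge\Omega\bigl(\tfrac{\sqrt{d\log(1/\delta_0)}}{\varepsilon_0}\log(\cdot)\bigr)$. Separately, conditioned on $\hat\theta^p$ and the phase-$p$ contexts the vectors $\tilde\theta_{i,p}-\theta^*$ are independent, mean-zero, and (conditionally) Gaussian with $\Eb\|\tilde\theta_{i,p}-\theta^*\|^2=\operatorname{tr}(\tilde V_{i,p}^{\dagger})\le 4d/(\lambda_0|\Tc_p|)$, so a Gaussian tail bound for the average yields $\|\bar\theta_p-\theta^*\|\le\tilde{O}\bigl(\sqrt{d}/\sqrt{M\lambda_0|\Tc_p|}\bigr)$; this is where the $1/\sqrt M$ variance reduction (rather than merely $1/\sqrt{|\Tc_p|}$) comes from. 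Adding the two bounds and using $|\Tc_{p+1}|=2|\Tc_p|$ yields $\|\hat\theta^{p+1}-\theta^*\|\le c_2/\sqrt{M|\Tc_{p+1}|}$ with exactly the $c_2$ of the statement.

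For the backward implication ($\Ec_\theta$ at $p$ $\Rightarrow$ $\Ec_V$ at $p$, for $p>U$), set $\Delta:=2\|\hat\theta^p-\theta^*\|\le 2c_2/\sqrt{M|\Tc_p|}$. Since $\|\hat\theta^p-\theta^*\|=\Delta/2$ and every feature has $\|x\|\le 1$, the inner products $x^\TT\hat\theta^p$ and $x^\TT\theta^*$ differ by at most $\Delta/2$, so the greedy rule plays the optimal arm on every context whose sub-optimality gap exceeds $\Delta$; hence the number $N_{i,p}$ of sub-optimal pulls of client $i$ in phase $p$ is dominated by $\sum_{\tau\in\Tc_p}\mathbbm{1}\{\exists b\ne a^*_{c_{i,\tau}}:[\phi(c_{i,\tau},a^*_{c_{i,\tau}})-\phi(c_{i,\tau},b)]^\TT\theta^*\le\Delta\}$, a sum of (conditionally on $\hat\theta^p$) i.i.d.\ indicators each of mean at most $C_0\Delta$ by \Cref{asm:margin}; a Bernstein bound then gives $N_{i,p}\le\tilde{O}(C_0\Delta|\Tc_p|+\log(\cdot))\le\tfrac{\lambda_0}{8}|\Tc_p|$ with high probability, using $M\ge\Omega(C_0^2c_2^2/(\lambda_0^2|\Tc_U|))$ and that $|\Tc_p|\ge|\Tc_U|$ is large (both ensured by the stated $M$ and $U$). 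Next, decompose $\tilde V_{i,p}=W_{i,p}-R_{i,p}$ with $W_{i,p}=\sum_{\tau\in\Tc_p}\phi(c_{i,\tau},a^*_{c_{i,\tau}})\phi(c_{i,\tau},a^*_{c_{i,\tau}})^\TT$ the ``all-optimal'' Gram matrix and $\|R_{i,p}\|\le 2N_{i,p}$; \Cref{asm:diversity} together with the matrix concentration inequality \Cref{lemma:matrix concentration} applied to the i.i.d.\ rank-one terms of $W_{i,p}$ gives $\lambda_{\min}(W_{i,p})\ge\lambda_0|\Tc_p|-\tilde{O}(\sqrt{|\Tc_p|\log(\cdot)})\ge\tfrac{\lambda_0}{2}|\Tc_p|$ with high probability (again using that $|\Tc_p|$ is large). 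Hence $\lambda_{\min}(\tilde V_{i,p})\ge\tfrac{\lambda_0}{2}|\Tc_p|-2N_{i,p}\ge\tfrac{\lambda_0}{4}|\Tc_p|$, which is $\Ec_V$ at phase $p$.

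Finally, I would assemble the induction with a union bound over $p\in[U:P]$ and the $M$ clients: every Gaussian-tail, Bernstein, matrix-concentration and \WMHD step above is run at confidence level $\beta/(cMP)$ for a suitable absolute constant $c$ — which is the role of the $\log((M+1)P/\beta)$ and $\log(MP/\beta)$ factors in $c_1$, $c_2$ and $U$, and of the $\beta/(16P)$ passed to \WMHD — so chaining through the tower rule gives $\Pb(\Ec_V\cap\Ec_\theta)\ge 1-\sum_{p=U}^{P}O(M)\cdot\beta/(cMP)\ge 1-\beta$. I expect the main obstacle to be the forward implication: separating the two sources of randomness carefully enough that $\bar\theta_p$ enjoys the full $1/\sqrt M$ accuracy, while simultaneously verifying the concentration hypothesis needed to invoke \Cref{coro: winsorized guarantee} and checking that its exponentially small error term is genuinely dominated under the stated lower bound on $M$. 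A secondary difficulty is the bookkeeping that ties the explicit closed forms of $U$, $c_1$, $c_2$ and the lower bound on $M$ to all the concentration thresholds, so that the induction actually closes at every phase.
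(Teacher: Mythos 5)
Your proposal is correct and follows essentially the same route as the paper's proof: the same two-claim induction (eigenvalue bound $\Rightarrow$ estimator accuracy via the conditional-Gaussian concentration of the local least-squares estimators plus \Cref{coro: winsorized guarantee}, and accuracy $\Rightarrow$ eigenvalue bound via \Cref{asm:margin} and \Cref{asm:diversity}), the same base case from \Cref{lemma:first U phase full rank}, and the same union bound over phases and clients. The only difference is cosmetic bookkeeping in the backward step — you count the sub-optimal pulls $N_{i,p}$ with a Bernstein bound and subtract a remainder matrix from the all-optimal Gram matrix, whereas the paper lower-bounds the conditional expectation $\Eb[\phi(c,a_{i,t})\phi(c,a_{i,t})^{\TT}]$ directly using the margin condition and then applies the matrix concentration inequality (\Cref{lemma:matrix concentration}) to the actual martingale sequence; both yield the same $\lambda_0|\Tc_p|/4$ threshold under the stated constraints on $M$ and $U$.
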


\begin{proof}

Let us divide $\Ec_V$ and $\Ec_{\theta}$ into disjoint sub-events:
\begin{align*}
    &\Ec_{V,p} = \left\{\forall i\in[M],~~\tilde{V}_{i,p}\geq \frac{\lambda_0|\Tc_p|}{4} I_d\right\},\\
    &\Ec_{\theta,p} = \left\{\forall i\in[M],\|\hat{\theta}^{p+1} -\theta^*\|\leq \frac{c_2}{\sqrt{M|\Tc_p|}}\right\},
\end{align*}
such that $\Ec_V = \cap_{p\geq U}\Ec_{V,p}$ and $\Ec_{\theta} = \cap_{p\geq U}\Ec_{\theta,p}$. We aim to verify the following two claims.

\begin{claim}\label{claim:min eigenvalue to global error}
   Under event $\Ec_{V,p}$, for any $\beta>0$, we have $\Pb(\Ec_{\theta,p})\geq 1-\beta/(2P)$.
\end{claim}

\begin{claim}\label{claim:global error to min eigenvalue}
    Under event $\Ec_{\theta,p}$, for any $\beta>0$, we have $\Pb(\Ec_{V,p+1})\geq 1-\beta/(2P)$.
\end{claim}

{\it Proof of \Cref{claim:min eigenvalue to global error}:}

We recall the definition of local estimators $\tilde{\theta}_{i,p}$:
    \begin{align*}
        \tilde{\theta}_{i,p} = \tilde{V}_{i,p}^{\dagger}\left(\sum_{\tau\in\Tc_p}x_{i,\tau,a_{i,\tau}}r_{i,\tau}\right) = \tilde{V}_{i,p}^{\dagger}\left(\sum_{\tau\in\Tc_p}x_{i,\tau,a_{i,\tau}}\big(\eta_{i,\tau} + x_{i,\tau,a_{i,\tau}}^{\TT}\theta^*\big)\right),
    \end{align*}
where $\eta_{i,\tau} = r_{i,\tau} - x_{i,\tau,a_{i,\tau}}^{\TT}\theta^*$ is the IID Gaussian noise. Since $\Ec_{V,p}$ asserts that the covariance matrix $\tilde{V}_{i,p}$ is full rank, we have
    \begin{align*}
        \tilde{\theta}_{i,p}& - \theta^* = \tilde{V}_{i,p}^{-1}\left(\sum_{\tau\in\Tc_p}x_{i,\tau,a_{i,\tau}}\eta_{i,\tau}\right),
    \end{align*}
    which is a summation of independent $\sigma_{i,\tau}^2$-sub-Gaussian random variables conditioned on $\tilde{V}_{i,p}$, and
    \begin{align*}
        \sigma_{i,\tau}^2 \leq \|\tilde{V}_{i,p}^{-1}x_{i,\tau,a_{i,\tau}}\|^2\leq \frac{1}{\lambda_{\min}(\tilde{V}_{i,p})^2}\leq \frac{16}{\lambda_0^2|\Tc_p|^2}.
    \end{align*}

    Due to the independence of $\{\eta_{i,\tau}\}_{\tau}$ conditioned on $\{x_{i,\tau}\}$, we have that $\tilde{\theta}_{i,p} - \theta^* $ is a $\sigma^2$-sub-Gaussian random vector with $\sigma^2 = \frac{16}{\lambda_0^2|\Tc_p|}$. Thus, for any $\beta_1>0$, we have with probability at least $1-\beta_1$,
    \begin{align*}
        \|\tilde{\theta}_{i,p} - \theta^*\| \leq \frac{4\sqrt{2d\log(2d/\beta_1)}}{\lambda_0\sqrt{|\Tc_p|}}.
    \end{align*}
   
    In addition, note that 
    \begin{align*}
        \frac{1}{M}\sum_{i}\big(\tilde{\theta}_{i,p} - \theta^*\big)    = \frac{1}{M}\sum_{i,\tau}\tilde{V}_{i,p}^{-1}\left(x_{i,\tau,a_{i,\tau}}\eta_{i,\tau}\right),
    \end{align*}
    which is a sub-Gaussian random vector conditioned on covariance matrices $\{\tilde{V}_{i,p}\}_i$. Following the same argument, we have
    \begin{align*}
        \Pb\left(\left\|\frac{1}{M}\sum_{i}\big(\tilde{\theta}_{i,p}-\theta^*\big)\right\| \leq \frac{4\sqrt{2d\log(2d/\beta_1)}}{\lambda_0\sqrt{M|\Tc_p|}}\right) \geq 1-\beta_1, \forall \beta_1>0.
    \end{align*}

    So far, by rescaling $\beta_1 $ to $\beta_1/(M+1)$, we have verified that for any $\beta_1>0$, under event $\Ec_{V,p} = \{\forall i,~~ \lambda_{\min}(\tilde{V}_{i,p})\geq \lambda_0|\Tc_p|/4\}$,
    \begin{align}
        \Pb\left(\forall i,~~ \|\tilde{\theta}_{i,p} - \theta^*\| \leq \frac{4\sqrt{2d\log(2d(M+1)/\beta_1)}}{\lambda_0\sqrt{|\Tc_p|}}, \text{ and } \left\|\frac{1}{M}\sum_i\big(\tilde{\theta}_{i,p}-\theta^*\big)\right\| \leq \frac{4\sqrt{2d\log(2d(M+1)/\beta_1)}}{\lambda_0\sqrt{M|\Tc_p|}} \right) \geq 1-\beta_1.\label{eqn:non-private estimation guarantee}
    \end{align}

Let $c_1 = 4\sqrt{2d\log(2d(M+1)/\beta_1)}/\lambda_0$. Then, $\{\tilde{\theta}_{i,p}\}_{i\in[M]}$ are $(c_1/\sqrt{|\Tc_p|},\beta_1)$-concentrated. By the design of \Cref{alg:greedy} and \Cref{coro: winsorized guarantee}, we have
\begin{align*}
        \Pb&\left(\|\hat{\theta}^{p+1} - \frac{1}{M}\sum_i\tilde{\theta}_{i,p}\|\geq \frac{80c_1\log(d/\beta_1)\sqrt{6d\log(dM/\beta_1)\log(1/\delta_0)}}{M\varepsilon_0\sqrt{|\Tc_p|}} \right)\\
        &\leq 3\beta_1 + \frac{d^2\sqrt{|\Tc_p|}}{10c_1\sqrt{(\log(dM/\beta_1))}}\exp\left(-\frac{M\varepsilon_0}{8\sqrt{6d\log(1/\delta_0)}}\right).
    \end{align*}

    Combining with \Cref{eqn:non-private estimation guarantee}, we conclude that, under event $\Ec_{V,p}$,
    \begin{align*}
        \Pb&\left(  \|(\hat{\theta}^{p+1} - \theta^*\| \geq \frac{80c_1\log(d/\beta_1)\sqrt{6d\log(dM/(\beta_1))\log(1/\delta_0)}}{\sqrt{|\Tc_p|}M\varepsilon_0}  + \frac{c_1}{\sqrt{M|\Tc_p|}} \right)\\
        &\leq 4\beta_1 +  \frac{d^2\sqrt{|\Tc_p|}}{10c_1\sqrt{(\log(dM/\beta_1))}}\exp\left(-\frac{M\varepsilon_0}{8\sqrt{6d\log(1/\delta_0)}}\right),
    \end{align*}
    where $c_1 = \frac{4\sqrt{2d\log(d(M+1)/\beta_1)}}{\lambda_0}$, and $\beta_1$ is an arbitrary positive number.

    Let 
    \begin{align*}
        &\beta_1 = \beta/(16P),\\
        &c_1 = \frac{4\sqrt{2d\log(16d(M+1)P/\beta)}}{\lambda_0},\\
        &c_2 =  c_1\left(\frac{80\log(d/\beta_1)\sqrt{6d\log(16dMP/\beta)\log(1/\delta_0)}}{\sqrt{M}\varepsilon_0}  + 1 \right).
    \end{align*}

    Note that 
    \begin{align*}
        M\varepsilon_0\geq 8\sqrt{6d\log(1/\delta_0)}\log\left(\frac{4d^2P\sqrt{|\Tc_p|}}{10\beta c_1\sqrt{\log(16dMP/\beta)}}\right).
    \end{align*}
    
    We simplify the result as follows. Under event $\Ec_{V,p}$,
    \begin{align*}
        \Pb\left(  \|(\hat{\theta}^{p+1} - \theta^*\| \geq \frac{c_2}{\sqrt{M|\Tc_p|}}\right) \leq \frac{\beta}{2P},
    \end{align*}
which completes the proof of \Cref{claim:min eigenvalue to global error}.

Next, we prove \Cref{claim:global error to min eigenvalue}.

{\it Proof of \Cref{claim:global error to min eigenvalue}:}

Recall that at each time slot $t\in\Tc_{p+1}$, client $i$ greedily chooses an arm $a_{i,t}$ with respect to the estimator $\hat{\theta}^{p+1}$ under context $c_{i,t}$ drawn from $\rho_i$. We consider the matrix $\mathbb{E}_{c_{i,t}\sim\rho_i}\left[\phi(c_{i,t},a_{i,t})\phi(c_{i,t}, a_{i,t})^{\TT}\right]$ under the assumption that $\Ec_{\theta,p}$ holds. We have
\begin{align*}
    \mathbb{E}_{c_{i,t}\sim\rho} &\left[\phi(c_{i,t},a_{i,t})\phi(c_{i,t}, a_{i,t})^{\TT}\right] \\
    & = \sum_{a\in\mathcal{A}} \mathbb{E}_{c\sim\rho_i}\left[\phi(c,a)\phi(c,a)^{\TT}\mathds{1}\left\{\forall b\neq a,~ (\phi(c,a)-\phi(c,b))^{\TT}\hat{\theta}^{p+1}\geq 0 \right\}\right]\\
    & = \sum_{a\in\mathcal{A}} \mathbb{E}_{c\sim\rho}\left[\phi(c,a)\phi(c,a)^{\TT}\mathds{1}\left\{\forall b\neq a,~ (\phi(c,a)-\phi(c,b))^{\TT}\theta^* \geq (\phi(c,a)-\phi(c,b))^{\TT}(\theta^* - \hat{\theta}^{p+1}) \right\}\right]\\
    &\overset{(a)}\geq \sum_{a\in\mathcal{A}} \mathbb{E}\left[\phi(c,a)\phi(c,a)^{\TT}\mathds{1} \left\{\forall b\neq a,~ (\phi(c,a)-\phi(c,b))^{\TT}\theta^* \geq \frac{2c_2}{\sqrt{M|\Tc_p|}} \right\}\right]\\
    & = \sum_{a\in\mathcal{A}} \mathbb{E}\left[\phi(c,a)\phi(c,a)^{\TT}\mathds{1} \left\{\forall b\neq a,~ (\phi(c,a)-\phi(c,b))^{\TT}\theta^* \geq \frac{2c_2}{\sqrt{M|\Tc_p|}}  \right\}\right]\\
    & = \mathbb{E}\left[\phi(c,a_c^*)\phi(c,a_c^*)^{\TT}\mathds{1} \left\{\forall b\neq a_c^*,~ (\phi(c,a_c^*)-\phi(c,b))^{\TT}\theta^* \geq \frac{2c_2}{\sqrt{M|\Tc_p|}} \right\}\right]\\
    & = \mathbb{E}\left[\phi(c,a_c^*)\phi(c,a_c^*)^{\TT}\right]\\
    & \quad - \mathbb{E}\left[\phi(c,a_c^*)\phi(c,a_c^*)^{\TT}\mathds{1} \left\{\exists b\neq a_c^*,~ (\phi(c,a_c^*)-\phi(c,b))^{\TT}\theta^* < \frac{2c_2}{\sqrt{M|\Tc_p|}}  \right\}\right]\\
    &\geq \lambda_0 I  - I \mathbb{P}\left(\exists b\neq a_c^*,~ (\phi(c,a_c^*)-\phi(c,b))^{\TT}\theta^* < \frac{2c_2}{\sqrt{M|\Tc_p|}} \right)\\
    &\overset{(b)}\geq \left(\lambda_0  - \frac{2C_0c_2}{\sqrt{M|\Tc_p|}} \right) I_d,
\end{align*}
where $(a)$ follows from $\Ec_{\theta,p}$, and $(b)$ is due to the margin condition in \Cref{asm:margin}.

Therefore, for any $p\geq U$, we have
\begin{align*}
\mathbb{E} \left[\tilde{V}_{i,p+1}\right]
&\geq \lambda_0 |\Tc_{p+1}| -  |\Tc_{p+1}| \frac{2C_0c_2}{\sqrt{M|\Tc_p|}}  \overset{(a)}= \lambda_0|\Tc_{p+1}| - 2C_0c_2\sqrt{\frac{2|\Tc_{p+1}|}{M}},
\end{align*}
where $(a)$ is due to $|\Tc_{p+1}| = 2^{p+1} = 2|\Tc_{p}|$.

Thus, we can apply the matrix concentration inequality \Cref{lemma:matrix concentration} on the martingale difference $\phi(c_{i,t},a_{i,t})\phi(c_{i,t},a_{i,t})^{\TT} - \Eb[\phi(c_{i,t},a_{i,t})\phi(c_{i,t},a_{i,t})^{\TT}]$ for $t\in\Tc_p/\Tc_U$ to make the following conclusions: With probability at least $1-\beta_0$, where $\beta_0>0$ is any positive number, we have
\begin{align*}
    \lambda_{\min}(\tilde{V}_{i,p+1})  & = \lambda_{\min}\left(\tilde{V}_{i,p+1} - \Eb[\tilde{V}_{i,p+1}] + \Eb[\tilde{V}_{i,p+1}]\right)\\
    & \geq \lambda_{\min}\left(\Eb[\tilde{V}_{i,p+1}]\right) - \lambda_{\max}(\tilde{V}_{i,p+1} - \Eb[\tilde{V}_{i,p+1}])\\
    & \geq \lambda_0|\Tc_{p+1}| - 2C_0c_2\sqrt{\frac{2|\Tc_{p+1}|}{M}} -\sqrt{2|\Tc_{p+1}|\log(d/\beta_0)} - 2/3.
\end{align*}

Now, set $\beta_0 = \beta/(2MP)$, and take the union bound over all clients. Then, with probability at least $1-\beta/(2P)$, for all $i\in[M]$ the following holds.
\begin{align*}
    \lambda_{\min}(\tilde{V}_{i,p+1})\geq \lambda_0|\Tc_{p+1}| - 2C_0c_2\sqrt{\frac{2|\Tc_{p+1}|}{M}} -\sqrt{2|\Tc_{p+1}|\log(2dMP/\beta)} - 2/3,
\end{align*}
where $c_2 = \frac{8\sqrt{2d\log(16d(M+1)P/\beta)}}{\lambda_0}\left(\frac{80\log(16dP/\beta)\sqrt{6d\log(16dMP/\beta)\log(1/\delta_0)}}{\sqrt{M}\varepsilon_0}  + 1 \right).$

Note that $p\geq U$, and $|\Tc_U|$ satisfies the following inequality:
\begin{align*}
    \sqrt{2|\Tc_{U}|\log(2dMP/\beta)} + 2/3 \leq \frac{\lambda_0|\Tc_{U}|}{4}.
\end{align*}
In addition, due to $p\geq U$, we have that
\begin{align*}
    2\sqrt{2}C_0c_2 \leq \frac{\lambda_0\sqrt{M|\Tc_{p+1}|}}{2}
\end{align*}
holds for any $p$.

Thus, we conclude that, under event $\Ec_{\theta,p}$,
\begin{align*}
    \Pb\left(\forall i\in[M],~~ \lambda_{\min}(\tilde{V}_{i,p+1})\geq \frac{\lambda_0|\Tc_{p+1}|}{4}\right) \geq 1-\frac{\beta}{2P},
\end{align*}
which finishes the proof of \Cref{claim:global error to min eigenvalue}.

Then, based on \Cref{claim:global error to min eigenvalue} and \Cref{claim:min eigenvalue to global error}, combining with \Cref{lemma:first U phase full rank}, we conclude that
\begin{align*}
    \Pb\left(\Ec_V\Ec_{\theta}\right)  = \Pb\left(\big(\cap_{p\geq U}\Ec_{V,p}\big) \cap \big(\cap_{p\geq U}\Ec_{\theta,p} \big) \right)\geq 1-\beta.
\end{align*}
\end{proof}

\if{0}

\begin{lemma}
With probability $1-\delta$, we have 
\[\forall i,\quad \|\hat{\theta}_{i,p} - \theta^*\|_2\leq \sqrt{\frac{\beta}{T_{p}}}\]
\end{lemma}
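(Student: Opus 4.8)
The plan is to establish this bound by induction on the phase index $p$, coupling the accuracy of the local least-squares estimator $\hat\theta_{i,p}$ (the OLS estimator $\tilde\theta_{i,p}=\tilde V_{i,p}^{\dagger}\tilde Y_{i,p}$ of \Cref{alg:greedy}) to the conditioning of the per-client Gram matrix $\tilde V_{i,p}=\sum_{\tau\in\Tc_p}x_{i,\tau,a_{i,\tau}}x_{i,\tau,a_{i,\tau}}^{\TT}$. Concretely, the target reduces to two linked facts: (a) $\lambda_{\min}(\tilde V_{i,p})\ge \tfrac{\lambda_0}{4}|\Tc_p|$ for every client $i$, and (b) conditioned on (a), the noise-driven estimation error has the claimed rate. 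The base of the induction is supplied by the LinUCB initialization phases: by \Cref{lemma:first U phase full rank}, once $|\Tc_U|$ exceeds the stated $\mathrm{polylog}/\lambda_0^2$ threshold, $\lambda_{\min}(\tilde V_{i,U})\ge \tfrac{\lambda_0}{4}|\Tc_U|$ with high probability. For the inductive step I would propagate accuracy forward: if the global estimator $\hat\theta^{p}$ broadcast at the start of phase $p$ satisfies $\|\hat\theta^p-\theta^*\|\le c_2/\sqrt{M|\Tc_{p-1}|}$, then a client playing greedily w.r.t.\ $\hat\theta^p$ picks the optimal arm on every context whose sub-optimality gap exceeds $2\|\hat\theta^p-\theta^*\|$; by the margin condition (\Cref{asm:margin}) the probability of the complement is at most $2C_0\|\hat\theta^p-\theta^*\|=\tilde O(1/\sqrt{M|\Tc_{p-1}|})$, so $\Eb_{c\sim\rho_i}[\phi(c,a_{i,t})\phi(c,a_{i,t})^{\TT}]\succeq(\lambda_0-\tilde O(M^{-1/2}|\Tc_{p-1}|^{-1/2}))I_d\succeq\tfrac{\lambda_0}{2}I_d$ for $p\ge U$, using \Cref{asm:diversity}. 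Summing over $t\in\Tc_p$ and applying a matrix Bernstein/Freedman inequality to the bounded martingale differences $\phi(c_{i,t},a_{i,t})\phi(c_{i,t},a_{i,t})^{\TT}-\Eb[\cdot]$ then gives $\lambda_{\min}(\tilde V_{i,p})\ge \tfrac{\lambda_0}{4}|\Tc_p|$ with probability $1-\delta/(4M)$, after a union bound over clients.

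Given fact (a), for part (b) I would write $\hat\theta_{i,p}-\theta^*=\tilde V_{i,p}^{-1}\sum_{\tau\in\Tc_p}x_{i,\tau,a_{i,\tau}}\eta_{i,\tau}$, which is legitimate since $\tilde V_{i,p}$ is full rank on this event. In phase $p$ the greedy action $a_{i,t}$ depends only on the broadcast $\hat\theta^p$ and the context $c_{i,t}$, not on the phase-$p$ rewards, so conditioning on $\{c_{i,\tau}\}_{\tau\in\Tc_p}$ (equivalently on $\tilde V_{i,p}$) makes $\{\eta_{i,\tau}\}$ i.i.d.\ standard Gaussian and independent of the features; hence, conditionally, $\hat\theta_{i,p}-\theta^*$ is a mean-zero Gaussian vector with covariance $\tilde V_{i,p}^{-1}$, whose operator norm is at most $4/(\lambda_0|\Tc_p|)$. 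A standard tail bound for the norm of a sub-Gaussian vector then yields $\|\hat\theta_{i,p}-\theta^*\|\le \frac{4\sqrt{2d\log(8dM/\delta)}}{\lambda_0\sqrt{|\Tc_p|}}$ with probability $1-\delta/(4M)$; a union bound over $i\in[M]$ and over the two failure events, together with the identification $\beta:=O\!\big(\tfrac{d\log(dM/\delta)}{\lambda_0^2}\big)$ (the polylogarithmic constant absorbed into $\beta$), gives $\|\hat\theta_{i,p}-\theta^*\|\le\sqrt{\beta/T_p}$ for all $i$ simultaneously with probability $1-\delta$. Closing the induction, this local accuracy feeds \Cref{coro: winsorized guarantee} to re-establish $\|\hat\theta^{p+1}-\theta^*\|\le c_2/\sqrt{M|\Tc_p|}$, so (a) and (b) persist; this is precisely the coupled argument formalized in \Cref{prop:good event}, of which the present statement is a by-product.

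The main obstacle is the inductive min-eigenvalue bound of part (a): it is genuinely self-referential — accurate global estimate $\Rightarrow$ near-optimal greedy play $\Rightarrow$ well-conditioned Gram matrix $\Rightarrow$ accurate local estimate $\Rightarrow$ accurate next global estimate — so the delicate points are (i) verifying that the matrix-concentration deviation term $\sqrt{|\Tc_p|\log(dM/\delta)}$ stays well below the $\Theta(\lambda_0|\Tc_p|)$ signal, which forces the phase-length lower bound $|\Tc_p|\gtrsim\log(dM/\delta)/\lambda_0^2$ and hence the particular choice of the number of initialization phases $U$; (ii) correctly converting an $\ell_2$ estimation error into a margin-mass bound via \Cref{asm:margin}, being careful that \Cref{asm:diversity} only controls the covariance along the \emph{optimal} arm; and (iii) the measurability bookkeeping that lets us treat the phase-$p$ noises as independent of the phase-$p$ features. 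The remaining ingredients — the sub-Gaussian norm tail, the union bounds, and absorbing logarithmic factors into $\beta$ — are routine.
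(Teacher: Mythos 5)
Your proposal is correct and follows essentially the same route as the paper: conditioned on the event $\lambda_{\min}(\tilde V_{i,p})=\Omega(\lambda_0|\Tc_p|)$, you write the OLS error as $\tilde V_{i,p}^{-1}\sum_{\tau}x_{i,\tau,a_{i,\tau}}\eta_{i,\tau}$ and apply a sub-Gaussian norm tail plus a union bound over clients, absorbing the $d\log(dM/\delta)/\lambda_0^2$ factor into $\beta$, exactly as the paper does. The only difference is that the paper's proof of this lemma simply invokes the min-eigenvalue bound as already established, whereas you re-derive it via the same LinUCB-initialization base case and the coupled induction that the paper formalizes in \Cref{lemma:first U phase full rank} and \Cref{prop:good event}.
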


\begin{proof}

Let 
\[\lambda_{\min} = \lambda_{\min}\left(\sum_{t\in[T_p]}x_{i,t,a_{i,t}}x_{i,t,a_{i,t}}^{\TT}\right).\]

We already have, with probability at least $1-\delta$,
\[\lambda_{\min}\geq \lambda_0T_p/2\]
Thus, 
\begin{align}
    \hat{\theta}_{i,p} &= \frac{1}{\lambda_{\min}}\sum_{t\in[T_p]}w_{i,t}x_{i,t,a_{i,t}}y_{i,t}\\
    & = \frac{1}{\lambda_{\min}}\sum_{t\in[T_p]}w_tx_{i,t,a_{i,t}}x_{i,t,a_{i,t}}^{\TT}\theta^* + \frac{1}{\lambda_{\min}}\sum_{t\in[T_p]}w_tx_{i,t,a_{i,t}}\eta_{i,t}\\
    & = \theta^* + \frac{1}{\lambda_{\min}}\sum_{t\in[T_p]}w_tx_{i,t,a_{i,t}}\eta_{i,t}
\end{align}
Note that $w_t\in[0,1]$, and $\eta_{i,t}$ are independent 1-sub-Gaussian random variables, we conclude that

with probability at least $1-\delta/M$,
\[\left\|\sum_{t\in[T_p]}w_tx_{i,t,a_{i,t}}\eta_{i,t}\right\|_2 \leq 2\sqrt{T_p}(2\sqrt{d } + \sqrt{2\log(M/\delta)})\]

Therefore, with probability at least $1-\delta$, for any client $i$
\[\|\hat{\theta}_{i,p} - \theta^*\|_2 \leq \frac{4\sqrt{T_p}(2\sqrt{d}+\sqrt{2\log(M/\delta)})}{\lambda_0T_p} = \frac{4(2\sqrt{d}+\sqrt{2\log(M/\delta)})}{\lambda_0\sqrt{T_p}} \leq \sqrt{\frac{\beta}{T_p}}\]
since $\beta \geq \frac{16(2\sqrt{d}+\sqrt{2\log(M/\delta)})^2}{\lambda_0^2}$.

\end{proof}

\begin{lemma}[DP guarantee. Theorem 2  and Corollary 1 in  \citep{levy2021learning}]
 WinsorizedMeanHighD($\hat{\theta}_{i,p}$) satisfies $(\varepsilon,\delta)$-DP and 
\[\mathbb{P}\left(\|\hat{\theta}^{(p+1)} - \theta^*\|_2 \leq \sqrt{\frac{\beta}{MT_p}}\right) \geq 1-\delta \]
\end{lemma}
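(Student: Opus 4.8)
The plan is to prove the two assertions of the lemma separately and then glue them together. For privacy I will show that \texttt{WinsorizedMeanHighD} is $(\varepsilon,\delta)$-DP by reducing it to an adaptive composition of the one-dimensional routine plus post-processing. For accuracy I will bound $\|\hat\theta^{p+1}-\theta^*\|$ by the triangle inequality through the sample mean $\bar\theta=\frac1M\sum_i\hat\theta_{i,p}$: the \emph{privatization error} $\|\hat\theta^{p+1}-\bar\theta\|$ is controlled by the mechanism (giving \Cref{coro: winsorized guarantee}), while the \emph{statistical error} $\|\bar\theta-\theta^*\|$ is controlled by the concentration of the local least-squares estimators. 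Collecting the resulting rates and absorbing the $1/\varepsilon$ and polylogarithmic factors into the slack parameter $\beta$ produces the stated bound $\sqrt{\beta/(MT_p)}$.

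For the privacy claim, the random rotation $\mathbf U=\tfrac1{\sqrt d}\mathbf H D$ is drawn independently of the data, so I would condition on it and treat each coordinate map $\hat\theta_{i,p}\mapsto y_{i,s}=e_s^\top\mathbf U\hat\theta_{i,p}$ as a fixed pre-processing. Each of the $d$ one-dimensional releases $Y_s=\texttt{WinsorizedMean1D}(\{y_{i,s}\}_i,r',\beta,\varepsilon',B\sqrt d)$ is $(\varepsilon',0)$-DP at the user level, which itself follows from spending $\varepsilon'/2$ on \texttt{PrivateRange} (exponential mechanism) and $\varepsilon'/2$ on the clipped mean, whose single-user sensitivity is $(b-a)/M=4r'/M$ matched by Laplace scale $8r'/(M\varepsilon')$. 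Since one user's datum $\hat\theta_{i,p}$ enters exactly one entry of the input to each of the $d$ calls, releasing $(Y_1,\dots,Y_d)$ is a $d$-fold adaptive composition of $(\varepsilon',0)$-DP mechanisms. Invoking advanced composition (\Cref{lemma:advanced composition}) with $\varepsilon'=\varepsilon/\sqrt{6d\log(1/\delta)}$, the leading term $\varepsilon'\sqrt{2d\log(1/\delta)}=\varepsilon/\sqrt3$ plus the second-order term $d\varepsilon'(e^{\varepsilon'}-1)=O(\varepsilon^2/\log(1/\delta))$ sum to at most $\varepsilon$, yielding $(\varepsilon,\delta)$-DP for $\bar Y$. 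Finally $\hat\theta^{p+1}=\mathbf U^{-1}\bar Y$ is a deterministic post-processing, and averaging over the data-independent randomness of $\mathbf U$ preserves the guarantee.

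For the privatization error I would exploit the flattening effect of the Hadamard rotation. The concentration of local estimators (see below) makes $\{\hat\theta_{i,p}\}_i$ be $(r,\beta)$-concentrated about some $\mu\approx\theta^*$ with $r=c_1/\sqrt{T_p}$. For each fixed coordinate $s$ the centered value $y_{i,s}-(\mathbf U\mu)_s=\tfrac1{\sqrt d}\sum_j H_{sj}w_j(\hat\theta_{i,p}-\mu)_j$ is a Rademacher sum in the signs $w_j$ whose coefficient vector has $\ell_2$ norm at most $r/\sqrt d$; Hoeffding plus a union bound over the $dM$ pairs $(i,s)$ gives $\max_{i,s}|y_{i,s}-(\mathbf U\mu)_s|\le r\sqrt{2\log(dM/\beta)/d}\le r'$, so each transformed coordinate is $(r',\beta)$-concentrated. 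Conditioned on this and on \texttt{PrivateRange} returning a width-$4r'$ interval capturing the cluster, the clipping is lossless, so $Y_s-\bar y_s$ is a single $\mathrm{Lap}(8r'/(M\varepsilon'))$ draw, where $\bar y=\mathbf U\bar\theta$. Because $\mathbf U$ is orthonormal, $\|\hat\theta^{p+1}-\bar\theta\|=\|\bar Y-\bar y\|=\|\boldsymbol\xi\|$, and the Laplace-vector tail bound (\Cref{lemma:Laplace tail bound}) yields $\|\boldsymbol\xi\|\le\tilde O(r'\sqrt d/(M\varepsilon'))=\tilde O(\sqrt d/(M\varepsilon\sqrt{T_p}))$ after substituting $r',\varepsilon'$, which is exactly \Cref{coro: winsorized guarantee}.

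It remains to bound $\|\bar\theta-\theta^*\|$ and combine. Under the event $\lambda_{\min}(\tilde V_{i,p})\ge\Omega(T_p)$ for all $i$ (supplied by the diversity condition through \Cref{lemma:first U phase full rank} and the inductive \Cref{prop:good event}), each $\hat\theta_{i,p}-\theta^*=\tilde V_{i,p}^{-1}\sum_\tau x_{i,\tau}\eta_{i,\tau}$ is sub-Gaussian with proxy variance $O(1/(\lambda_0^2T_p))$, and the independence across the $M$ clients gives $\|\bar\theta-\theta^*\|\le\tilde O(\sqrt d/\sqrt{MT_p})$ with high probability; this same bound certifies the $(r,\beta)$-concentration used above. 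The triangle inequality then gives $\|\hat\theta^{p+1}-\theta^*\|\le\tilde O\!\big(\tfrac1{\sqrt{MT_p}}+\tfrac1{M\varepsilon\sqrt{T_p}}\big)$, and folding the $1/\varepsilon$ and polylog factors into $\beta$ yields $\sqrt{\beta/(MT_p)}$, the failure probability following from a union bound over the concentration, range-selection, and Laplace-tail events. The main obstacle I anticipate is the flattening step: one must verify that the single random Hadamard rotation simultaneously shrinks every coordinate's radius to $r'$ for all $M$ users at once, and then thread the three independent failure probabilities and the advanced-composition constants so that the budget closes at exactly $(\varepsilon,\delta)$ while the accuracy bound stays tight.
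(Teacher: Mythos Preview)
Your proposal is correct and follows essentially the same approach as the paper: decompose $\|\hat\theta^{p+1}-\theta^*\|$ through the sample mean $\bar\theta=\frac1M\sum_i\hat\theta_{i,p}$, bound the privatization error $\|\hat\theta^{p+1}-\bar\theta\|$ via the \texttt{WMHD} guarantee (Laplace tail on the noise vector), bound the statistical error $\|\bar\theta-\theta^*\|$ via sub-Gaussian concentration of the local OLS estimators under the minimum-eigenvalue event, and absorb the resulting $\tilde O\big(\max\{1,\tfrac1{\varepsilon\sqrt M}\}\sqrt{d/(MT_p)}\big)$ into $\beta$. The only difference is granularity: the paper invokes Theorem~2 and Corollary~1 of \citet{levy2021learning} as black boxes for both the $(\varepsilon,\delta)$-DP claim and the total-variation coupling $|\Pb(\hat\theta^{p+1}\in A)-\Pb(\bar\theta+\boldsymbol\xi\in A)|\le\delta$, whereas you re-derive those results from scratch (advanced composition over the $d$ rotated coordinates for privacy; Hadamard flattening plus \texttt{PrivateRange} success plus lossless clipping for accuracy). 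Your unpacking is accurate and matches what is inside those cited theorems, so the two proofs coincide once the black boxes are opened.
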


\begin{proof}

Let $\tilde{\theta}^{p+1} = \frac{\sum_{i}\hat{\theta}_{i,p}}{M}$, and $\xi\sim Lap()$. Then,
by Theorem 2  and Corollary 1 in  \citep{levy2021learning}, we have for any subset $A\subset \mathbb{R}^d$,
\begin{align}
   \left| \mathbb{P}(\hat{\theta}^{p+1}\in A) - \mathbb{P}(\tilde{\theta}^{p+1}+\xi\in A) \right|\leq \delta
\end{align}

Since $\xi\sim Lap(\frac{\sqrt{\frac{2}{\lambda_o T_p}}\log d/\delta}{M\varepsilon})$, we have 
\begin{align}
    \mathbb{P}\left( \|\xi\|_2\geq \frac{2\sqrt{d}\log(dM/\delta)}{M\varepsilon\lambda_0\sqrt{T_p}}\right)\leq \delta
\end{align}
In addition, 
\begin{align}
    \mathbb{P}\left( \|\tilde{\theta} - \theta^*\|_2\geq \frac{2\sqrt{d}\log(M/\delta)}{\lambda_0\sqrt{ MT_p}}\right)\leq \delta
\end{align}

Therefore
\begin{align}
    \mathbb{P}\left(\|\tilde{\theta}^{p+1} + \xi - \theta^*\|_2\geq \max(\frac{1}{\varepsilon\sqrt{M}},1)\frac{2\sqrt{d}\log Md/\delta}{\lambda_0\sqrt{ MT_p}} \right)\leq 2\delta
\end{align}

We conclude that 
\begin{align}
    \mathbb{P}\left(\|\hat{\theta}^{p+1} - \theta^*\|_2\geq \max(\frac{1}{\varepsilon\sqrt{M}},1)\frac{2\sqrt{d}\log Md/\delta}{\lambda_0\sqrt{ MT_p}} \right)\leq 3\delta
\end{align}

Finally let $\beta \geq \max(\frac{1}{\varepsilon^2M},1)\frac{d\log^2Md/\delta}{\lambda_0^2}$

\end{proof}
\fi

{\bf Step 3: Upper Bound the Regret.}

Now, we are ready to provide the final result on the upper bound of the regret of \Cref{alg:greedy}.
\begin{theorem}[Regret]\label{thm:greedy upper bound}
Under the parameter setting in \Cref{prop:good event}, with probability at least $1-\beta$, the total regret of \Cref{alg:greedy} is upper bounded by
\[\text{Regret}(M,T)\leq \tilde{O}\left(\max\left(1, \frac{d\log T\log(1/\delta)\log^3(1/\beta)}{\varepsilon^2 M}\right)\frac{C_0d\log(1/\beta)\log T}{\lambda_0^2}\right). \]
\end{theorem}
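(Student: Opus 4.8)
The plan is to condition on the event $\Ec_V\cap\Ec_{\theta}$ of \Cref{prop:good event}, which under the stated parameter choices holds with probability at least $1-\beta$, and to bound the regret on this event, splitting the horizon into the $U$ initialization phases and the exploitation phases $p=U+1,\dots,P$. Since $\|\phi(c,a)\|\le 1$ and $\|\theta^*\|\le 1$, the per-round regret never exceeds $2$; and \Cref{prop:good event} fixes $U$ to be a constant independent of $T$ (it depends only on $d,\lambda_0,C_0$ and logarithms of $d,M,P,1/\beta$), so the initialization phases contribute at most $2\cdot M\sum_{p\le U}2^p=\tilde{O}(M\,2^U)=\tilde{O}\big(Md(C_0+\lambda_0)^2/\lambda_0^4\big)$, a term carrying no $\log T$ or $\varepsilon$ dependence and hence absorbed into the $\tilde O(\cdot)$ of the leading term. (A sharper $\tilde O(M\sqrt{d\,|\Tc_U|})$ bound follows from \Cref{lemma:bound number of non optimal arm with large margin} together with \Cref{asm:margin} and a standard gap-peeling argument, but is not needed.) The substantive work is therefore bounding the exploitation regret.

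In phase $q>U$, client $i$ plays $a_{i,t}=\arg\max_a\phi(c_{i,t},a)^{\TT}\hat\theta^q$. Writing the instantaneous regret as $(\phi(c_{i,t},a^*_{c_{i,t}})-\phi(c_{i,t},a_{i,t}))^{\TT}\hat\theta^q+(\phi(c_{i,t},a^*_{c_{i,t}})-\phi(c_{i,t},a_{i,t}))^{\TT}(\theta^*-\hat\theta^q)$, the first summand is $\le 0$ by greedy optimality of $a_{i,t}$ with respect to $\hat\theta^q$, and the second is $\le 2\|\theta^*-\hat\theta^q\|\le 2c_2/\sqrt{M|\Tc_q|}$ on $\Ec_{\theta}$. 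Moreover a positive instantaneous regret forces $a_{i,t}\ne a^*_{c_{i,t}}$, in which case the played arm $a_{i,t}$ itself has suboptimality gap at most $2\|\theta^*-\hat\theta^q\|$; thus, as in the proof of \Cref{lemma:first U phase full rank}, \Cref{asm:margin} gives $\Pb_{c_{i,t}\sim\rho_i}[\text{regret}_{i,t}>0\mid\hat\theta^q]\le 2C_0\|\theta^*-\hat\theta^q\|$. Combining the two bounds, the conditional expected instantaneous regret is at most $4C_0\|\theta^*-\hat\theta^q\|^2\le 4C_0c_2^2/(M|\Tc_q|)$, so summing over the $M|\Tc_q|$ client-rounds the phase-$q$ expected regret is at most $4C_0c_2^2$, \emph{independent of $q$}. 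Summing over the at most $P=\Theta(\log T)$ phases gives an exploitation regret of $O(C_0c_2^2\log T)$ in expectation; a Bernstein/Freedman bound applied to the martingale of centered instantaneous regrets within each phase (each term bounded by $2c_2/\sqrt{M|\Tc_q|}$, per-phase predictable variance $O(C_0c_2^3/\sqrt{M|\Tc_q|})$) upgrades this to a high-probability bound, the deviation terms forming a geometric series in $q$ and hence contributing only lower-order additive terms, at the cost of a $\log(P/\beta)$ factor from a union bound over phases.

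It then remains to substitute the parameters of \Cref{prop:good event}: $c_2=\frac{4\sqrt{2d\log(16d(M+1)P/\beta)}}{\lambda_0}\big(\frac{80\log(16dP/\beta)\sqrt{6d\log(16dMP/\beta)\log(1/\delta_0)}}{\sqrt M\varepsilon_0}+1\big)$, together with $\varepsilon_0=\varepsilon/\sqrt{6P\log(2/\delta)}$, $\delta_0=\delta/(2P)$ and $P=\Theta(\log T)$. This yields $c_2^2=\tilde O\!\big(\frac{d}{\lambda_0^2}\max\{1,\frac{d\log T\,\log(1/\delta)\,\log^2(1/\beta)}{M\varepsilon^2}\}\big)$, so that $4C_0c_2^2 P=\tilde O\!\big(\max\{1,\frac{d\log T\log(1/\delta)\log^3(1/\beta)}{M\varepsilon^2}\}\,\frac{C_0 d\log(1/\beta)\log T}{\lambda_0^2}\big)$, which is exactly the claimed bound. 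Adding the initialization contribution and rescaling $\beta$ by an absolute constant to absorb the concentration and union-bound failures completes the argument.

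I expect the main obstacle to be bookkeeping rather than conceptual: unwinding the mutually dependent parameter definitions ($U,c_1,c_2,\varepsilon_0,\delta_0$, all functions of $\beta$) while keeping track of the exponents on the logarithmic factors, and verifying carefully that both the expected-to-high-probability concentration step and the initialization term genuinely remain in the lower order. Everything substantive — the linear-in-$|\Tc_p|$ growth of $\lambda_{\min}(\tilde V_{i,p})$ and the $\tilde O(1/\sqrt{M|\Tc_p|})$ accuracy of $\hat\theta^p$ that drive the argument above — is already delivered by \Cref{prop:good event}, which is the technical heart and is established separately.
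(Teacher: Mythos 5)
Your proposal is correct and follows essentially the same route as the paper's proof: condition on $\Ec_V\cap\Ec_\theta$ from \Cref{prop:good event}, use greedy optimality to bound the instantaneous regret by $2\|\theta^*-\hat\theta^{(p)}\|$ on the event of suboptimal play, bound the probability of that event by $2C_0\|\theta^*-\hat\theta^{(p)}\|$ via \Cref{asm:margin}, obtain an $O(C_0c_2^2)$ per-phase contribution, and sum over $P=\Theta(\log T)$ phases plus the $M|\Tc_U|$ initialization term. The paper works directly with conditional expectations rather than invoking a Freedman-type martingale bound, so that extra step in your write-up is harmless but not needed to match the stated result.
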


\begin{proof}
Consider a phase $p>U$. Under the events $\Ec_V$ and $\Ec_{\theta}$ defined in \Cref{prop:good event}, we have
\begin{align*}
    \Eb&[r_{i,t}^* - r_{i,t}] = \Eb\left[\left(\phi(c_{i,t},a_{c_{i,t}}^*) - \phi(c_{i,t},a_{i,t})\right)^{\TT}\theta^*\right]\\
    & = \Eb\left[\mathds{1}\left\{a_{i,t}\neq a_{c_{i,t}}^*\right\}\left(\phi(c_{i,t},a_{c_{i,t}}^*) - \phi(c_{i,t},a_{i,t})\right)^{\TT}\theta^*\right]\\
    &\leq \Eb\left[\mathds{1}\left\{a_{i,t}\neq a_{c_{i,t}}^* \right\}\left(\phi(c_{i,t},a_{c_{i,t}}^*) - \phi(c_{i,t},a_{i,t})\right)^{\TT}(\theta^* - \hat{\theta}^{(p)})\right]\\
    & \overset{(a)}\leq \Eb\left[\mathds{1}\left\{a_{i,t}\neq a_{c_{i,t}}^*\right\}\frac{2c_2}{\sqrt{M|\Tc_{p-1}|}}\right]\\
    & = \frac{2c_2}{\sqrt{M|\Tc_{p-1}|}}\mathbb{P}(a_{i,t} \text{ is not optimal })\\
    & = \frac{2c_2}{\sqrt{M|\Tc_{p-1}|}}\Pb\left(   \big(\phi(c_{i,t},a_{i,t}) - \phi(c_{i,t},a_{c_{i,t}}^*)\big)^{\TT}\hat{\theta}^{(p)}\geq 0 \right)\\
    & = \frac{2c_2}{\sqrt{M|\Tc_{p-1}|}}\Pb\left(0>\big(\phi(c_{i,t},a_{i,t}) - \phi(c_{i,t},a_{c_{i,t}}^*)\big)^{\TT}\theta^*\geq \big(\phi(c_{i,t},a_{i,t}) - \phi(c_{i,t},a_{c_{i,t}}^*)\big)^{\TT}(\theta^*-\hat{\theta}^{(p)}) \right)\\
    &\overset{(b)}\leq \frac{2c_2}{\sqrt{M|\Tc_{p-1}|}}\Pb\left( 0 < \big( \phi(c_{i,t},a_{c_{i,t}}^*) - \phi(c_{i,t},a_{i,t}) \big)^{\TT}\theta^*\leq \frac{2c_2}{\sqrt{M|\Tc_{p-1}|}} \right) \\
    &\overset{(c)}\leq \frac{4C_0c_2^2}{M|\Tc_{p-1}|},
\end{align*}
where $(a)$ and $(b)$ follow from event $\Ec_{\theta}$, and $(c)$ is due to the margin condition in~\Cref{asm:margin}.

Therefore, with probability at least $1-\beta$,
\begin{align*}
    \text{Regret}(M,T)   &= \sum_{p\in[P]}\sum_{i\in[M],t\in\Tc_p}\Eb[r_{i,t}^* - r_{i,t}]\\
    &\leq \sum_{p>U} M|\Tc_p|\cdot \frac{4C_0c_2^2}{M|\Tc_{p-1}|} + M|\Tc_U|\\
    &= \sum_{p>U}8C_0c_2^2 + M|\Tc_U|\\
    & = 8C_0c_2^2P + M|\Tc_U|.
\end{align*}
We  complete the proof by noting that 
\begin{align*}
   & P = O(\log T),\\
&\varepsilon_0 = \varepsilon/\sqrt{P},\\
    & U = O(\log\log\log T),\\
    &c_2 = \frac{4\sqrt{2d\log(16d(M+1)P/\beta)}}{\lambda_0}\left(\frac{80\log(16dP/\beta)\sqrt{6d\log(16dMP/\beta)\log(1/\delta_0)}}{\sqrt{M}\varepsilon_0}  + 1 \right)\\
    &~~~ = \tilde{O}\left(\frac{\sqrt{d \log(1/\beta) } }{\lambda_0}\max\left(1, \frac{\log(1/\beta)\sqrt{d\log T\log(1/\beta)\log(1/\delta)}}{\varepsilon\sqrt{M}}\right)\right).
\end{align*}

\end{proof}

\section{Necessity of Diversity Assumption for Memoryless Algorithms under User-level CDP Constraint}\label{sec:diversity is necessary}

In this section, we prove that the diversity condition in \Cref{asm:diversity} is necessary for achieving sublinear regret when adopting almost-memoryless algorithms under the user-level CDP constraint.

\begin{proposition}
    If $\varepsilon<0.1$ and $\delta<0.001$, then, there exists a federated linear contextual bandits instance not satisfying \Cref{asm:diversity} such that any almost-memoryless algorithm must incur a regret lower bounded by $\Omega(T).$  
\end{proposition}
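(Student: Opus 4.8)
The plan is a two–model indistinguishability argument on a deliberately non–diverse instance, exploiting that under the almost–memoryless restriction every client's play is a post–processing of the central channel $\Rt_0$, which by \Cref{def:cdp} is $(\varepsilon,\delta)$–private with respect to that client's entire contribution. \textbf{The instance.} Fix $d=2$ and let client $1$ face a single deterministic context $c_0$ whose two arms have feature vectors $e_1$ and $e_2$; if $M>1$, let each of the remaining clients face a degenerate one–arm context with feature $e_1$, so they have no sub–optimal arm, incur no regret, and produce a reward stream $N(\tfrac14,1)$ independent of the model chosen below. Consider the parameters $\theta_0=(\tfrac14,0)$ and $\theta_1=(\tfrac14,\tfrac12)$, both of norm $<1$, which agree on the first coordinate. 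Under $\theta_0$ client $1$'s optimal arm is $e_1$ and $e_2$ is sub–optimal by $\tfrac14$; under $\theta_1$ the optimal arm is $e_2$ and $e_1$ is sub–optimal by $\tfrac14$. In both models the expected Gram matrix of every client under its optimal policy is rank one (either $e_1e_1^{\TT}$ or $e_2e_2^{\TT}$), hence $\lambda_{\min}=0$ and \Cref{asm:diversity} is violated. Because $\theta_0$ and $\theta_1$ agree on the first coordinate, the \emph{only} data that can differ between the two models is client $1$'s reward on rounds where it plays $e_2$; coupling the Gaussian noise, the two models are therefore $1$–neighboring streaming datasets (\Cref{def:neighbor dataset}).

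\textbf{Indistinguishability.} Since the algorithm is almost–memoryless, there is $u=o(T)$ such that for all $t\ge u$ the action $a_{1,t}$ is a (possibly randomized) function of $q_{\le t}=\Rt_0(\{H_\tau\}_{\tau\le t})$ only. As $\Rt_0$ is $(\varepsilon,\delta)$–CDP and the two models are $1$–neighboring, the standard DP–to–indistinguishability reduction — in the same spirit as the argument behind \Cref{thm:main-CDP-lower-margin}, here applied with group size $1$ — shows that the law of $q_{\le T}$, and hence by post–processing the law of $a_{1,t}$ for every $t>u$, changes by at most $(\varepsilon,\delta)$ between the two models. Writing $p_t^{(j)}=\Pb_{\theta_j}[a_{1,t}=e_1]$ and applying this to the event $\{a_{1,t}=e_1\}$, we get $p_t^{(0)}-p_t^{(1)}\le e^{\varepsilon}-1+\delta<0.11$ whenever $\varepsilon<0.1$ and $\delta<0.001$ (using $p_t^{(1)}\le 1$).

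\textbf{Conclusion.} For $t>u$ the expected instantaneous regret of client $1$ is $\Eb_{\theta_0}[r_{1,t}^*-r_{1,t}]=\tfrac14(1-p_t^{(0)})$ under $\theta_0$ and $\Eb_{\theta_1}[r_{1,t}^*-r_{1,t}]=\tfrac14 p_t^{(1)}$ under $\theta_1$, and all other clients contribute zero; adding the two gives $\tfrac14\big(1-(p_t^{(0)}-p_t^{(1)})\big)\ge\tfrac14(1-0.11)$, so $\max_{j\in\{0,1\}}\Eb_{\theta_j}[r_{1,t}^*-r_{1,t}]\ge \tfrac{0.89}{8}$. Summing over $t\in(u,T]$ and using $u=o(T)$ yields $\max_{j}\mathrm{Regret}_{\theta_j}(M,T)=\Omega(T)$: on at least one of these two non–diverse instances the almost–memoryless algorithm incurs linear regret, which is the claim (read, as usual for minimax lower bounds, over the two–point parameter set).

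\textbf{Main obstacle.} The one genuinely delicate step is making the ``memoryless $+$ CDP $\Rightarrow$ near–indistinguishability'' claim rigorous: the streaming datasets are generated online, in closed loop with $\Rt_0$, so one cannot feed fixed neighboring datasets into \Cref{def:cdp} directly. The resolution is the one behind \Cref{thm:main-CDP-lower-margin}: treat client $1$'s reward (equivalently, its noise) realization as its private contribution, couple the two worlds through a common noise sequence, and propagate the continual–observation CDP guarantee by adaptive post–processing and induction on the round index. A secondary design point is that collaboration must be neutralized, which is why only one client is made informative — with a single informative record the central estimate is $(\varepsilon,\delta)$–sensitive to that very record and so cannot route client $1$'s own observations back to it — and this is precisely where the absence of \Cref{asm:diversity} is used, since under diversity the active clients could instead hand the server sharply concentrated estimates, as in \robin.
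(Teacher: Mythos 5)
Your proposal is correct and follows essentially the same route as the paper's own proof: a two-point family of parameters that differ only in client $1$'s data on a rank-one (non-diverse) instance, a coupling under which the two induced streaming datasets are $1$-neighboring with probability one, and the $(\varepsilon,\delta)$-CDP guarantee of $\Rt_0$ combined with almost-memorylessness (post-processing of $q_{\leq t}$) to force a constant per-round probability of pulling the suboptimal arm in at least one of the two worlds. The only differences are cosmetic — the paper makes the remaining clients informative about an orthogonal coordinate and invokes the private Le Cam method to get the per-round error bound $0.9e^{-\varepsilon}-10\delta$, whereas you make them fully uninformative and manipulate the DP inequality on the event $\{a_{1,t}=e_1\}$ directly before summing the two worlds' regrets.
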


\begin{proof}
We consider a federated linear contextual bandits with two arms $\{1,2\}$ and $M$ clients, where the context is fixed for each client.
For each client $i\in[M]$, let features $\phi_i(c,1), \phi_i(c,2) \in \mathbb{R}^2$ be defined as follows. For the first client ($i = 1$), $\phi_1(c,1) = (0.5,0)^{\TT}$ and $\phi_1(c,2) = (-0.5,0)^{\TT}$. For any other client ($i> 1$), $\phi_i(c,1) = (0, 0.5)^{\TT}$ and $\phi_i(c,2) = (0, -0.5)^{\TT}$.
Let the model parameter $\theta^* \in \Theta = \{\pm 1\} \times \{\pm 1\}$, and the reward $ r_{i,t} \sim N \left(\phi_i(c,a_{i,t})^T \theta^*, 1\right)$.

We note that the feature distribution does not satisfy the \textbf{diversity assumption} (Assumption~\ref{asm:diversity}), since the smallest eigenvalue is $0$ for each $\phi_i(c,1)$ and $\phi_i(c,2)$, while it satisfies the margin condition (\Cref{asm:margin}) with $C_0 = 1$.

To reduce the dependency between each client, we again consider the extended history similarly in \Cref{sec:hard instance}. Let $H_{i,t} = \{c, a_{i,\tau}, r_{i,\tau}\}_{\tau<t}$ be the history of client $i$, and $\bar{H}_{i,t} = \{c,a=\{1,2\} , r_{i,\tau,a}\}_{\tau<t} \supset H_{i,t}$  be the extended history of client $i$ at time $t$. $\bar{H}_{i,t}$ is independent of $\bar{H}_{j,t}$ conditioned on model parameter $\theta^*$. 

Note that the total regret is lower bounded by the regret of the first user, and the regret of play sub-optimal arm is 1. Then, we have
\begin{align*}
    \text{Regret}(M,T)&\geq \sum_{t\in[T]}\Pb(a_{1,t} \neq e_1^{\TT}\theta^* )\\
    &\geq   \sum_{t\in[T]}\mathop{\inf}_{\hat{\theta}_{1, t}\in\mathcal{F}(\bar{\mathcal{I}},\{1,-1\}) \atop \{\bar{R}_i\}_{i\in[0,M]} }\Pb\left(e_1^{\TT}\theta^* \neq \hat{\theta}_{1,t}(\bar{q}_{\leq t}) \right).
\end{align*}

Now we aim to show that due to the user-level DP constraint, the estimation error cannot be too small for client 1.

Consider $\theta^*$ and $\theta'$, such that $e_1^{\TT}\theta^* = -e_1^{\TT}\theta' =1$, and $e_2^{\TT}\theta^*  = e_2^{\TT}\theta'$. We construct a coupling between $\bar{q}_{\leq t}|\theta^*$ and $\bar{q}_{\leq t}|\theta^{'}$. Specifically, if we flip the distribution of $r_{1,t,1} $ from $N \left(0.5, 1\right)$ to $N \left(-0.5, 1\right)$, and $r_{1,t,2}$'s distribution from $N \left(-0.5, 1\right)$ to $N \left(0.5, 1\right)$, we have changed the distribution from $\bar{H}_{1,t}|\theta^*$ to $\bar{H}_{1,t}|\theta^{'}$ while keeping the other $\bar{H}_{j,t} (j \neq 1)$ unchanged. Furthermore, the expected Hamming distance of this coupling is $1$, since only client 1's data has been changed.

By leveraging the private Le Cam's method (Theorem 1 in~\cite{acharya21}),
\begin{align*}
\mathop{\inf}_{\hat{\theta}_{1, t}\in\mathcal{F}(\bar{\mathcal{I}},\Theta) \atop \{\bar{R}_i\}_{i\in[0,M]} }\Pb\left(e_1^{\TT}\theta^* \neq \hat{\theta}_{1,t}(\bar{q}_{\leq t}) \right) \geq 0.9 e^{-\epsilon} - 10\delta.
\end{align*}
Therefore, we have 
\begin{align*}
\text{Regret}(M,T)& \geq T \left( 0.9 e^{-\epsilon} - 10\delta \right).
\end{align*}

If $\epsilon<1$ and $\delta<0.001$, we conclude that $\text{Regret}(M,T) = \Omega(T)$.

\end{proof}

\section{Proof of the Regret Lower Bounds Under User-level CDP Constraint}\label{sec:proof CDP lower bound}

In this section, we provide the full analysis for CDP lower bounds. The first subsection describes a hard instance of linear contextual bandits model that will be used in the proofs. The remaining subsections provide the proofs of \Cref{thm:main-CDP-lower-margin} and \Cref{thm:main-CDP-minimax}.

\subsection{General Setting of Hard Instance}\label{sec:hard instance}
First, we introduce the notation of truncated Gaussian distributions. If a Gaussian random variable $X\sim N(0,I_d)$ is truncated to $\{x:\|x\|_2\leq r\}$, then we denote the truncated Gaussian distribution of $X$ as $N(0,I_d|r)$. 

In the lower bound analysis, we follow the setting in \citet{he2022reduction}, as specified below. 

{\bf Arms and Dimension:} There are 2 arms: $\{1,2\}$, and the dimension $d$ is an even number. 

{\bf Feature Vectors and the Context Distribution:} The feature vector of the second arm is always $0$. For the feature vector of the first arm, let the distribution of the context $c_i$ for any client $i$ satisfy    $\phi(c_i,1) = (0,\ldots,z_{i,s}^{\TT},\ldots,0)^{\TT}$ with {$s$ uniformly distributed over $[d/2]$}, and $\{z_{i,s}\}_{i,s}\subset\Rb^2$ independently sampled from a truncated normal $N(0,I_2|1)$. 

{\bf Model Parameter and Its Distribution:} The model parameter $\theta^{*} = (\theta_1^{*\TT},\ldots,\theta_{d/2}^{*\TT})^{\TT}\in\Rb^d$ with each $\theta_s^*\in\Rb^2$ sampled independently and uniformly from a sphere $\mathbb{S}_r=\{x\in\Rb^2:\|x\|=r\}$, where $r\in[0,1/\sqrt{d}]$. The constraint on $r$ is due to the boundness assumption that $\|\theta^*\|\leq 1 $. Moreover, the parameter $C_0$ defined in~\Cref{asm:margin} satisfies $C_0=\Omega(1/r)$.

{\bf Notations of Available Information Used to Make Decisions: } Recall that $H_{i,t} = \{c_{i,\tau}, a_{i,\tau}, r_{i,\tau}\}_{\tau<t}$ is the history of client $i$. Note that $r_{i,t}$ is sampled from a Gaussian distribution with mean $\phi(c_{i,t},a_{i,t})^{\TT}\theta^*$ and variance $1$, if the true model is $\theta^*$. We further denote that $\phi(c_{i,t},a) = x_{i,t,a}$.

To reduce the dependency between histories of different clients, we define $\bar{H}_{i,t} = \{c_{i,\tau},a=\{1,2\} , r_{i,\tau,a}\}_{\tau<t} \supset H_{i,t}$ to be the extended history of client $i$ at time $t$, where $r_{i,t,a}$ is the (virtual) reward sampled from (un-played) pulled arm, such that $\bar{H}_{i,t}$ provides full information. It is worth pointing out that $\bar{H}_{i,t}$ is independent with $\bar{H}_{j,t}$ conditioned on model parameter $\theta^*$. With these notations, we introduce $\bar{q}_{i,\leq t}$ and $\bar{q}_{\leq t}$, which are outputs from the ``extended'' DP channels $\bar{R}_i$ and $\bar{R}_0$ with inputs $\bar{H}_{i,t}$ and $\{\bar{q}_{i,t}\}_{i\in[M]}$, respectively. Here, ``extended channels'' implies that $\bar{R}_i(H_{i,t}) = R_i(H_{i,t})$ and $\bar{R}_0(\{q_{i, \leq t}\}_i) = R_0(\{q_{i,\leq t}\}_i)$.

With the general setting described above, we present the generic regret lower bound modified from Proposition 3.5 in \citet{he2022reduction}. Note that while the original result holds for the single-client setting, it is straightforward to extend the result into the federated setting.

\begin{theorem}[ ]\label{thm: Regret to estimation error-appendix}
    For the hard instance described in \Cref{sec:hard instance}, the total regret of $M$ clients can be lower bounded by
    \begin{align*}
    \Omega \left(\sum_{i\in[M],t\in[T],s\in[d/2]}\mathop{\inf}_{\theta_{i,t,s}\in\mathcal{F}(\bar{\mathcal{I}}_i,\mathbb{S}_r) \atop \{\bar{R}_i\}_{i\in[0,M]}} \frac{1}{rd}\Eb_v\left[ \left\|\theta_s^*-  \theta_{i,t,s}\right\|^2 \right] \right),
    \end{align*}
    where $\bar{\mathcal{I}}_i$ is a set of all possible information $(\bar{H}_{i,t},\bar{q}_{\leq t})$ provided to client $i$.
\end{theorem}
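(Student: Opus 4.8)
The final statement (Theorem "Regret to estimation error-appendix") asserts that for the hard instance in Section "General Setting of Hard Instance", the total regret of $M$ clients is lower bounded by a sum, over all clients $i$, time steps $t$, and coordinate-blocks $s\in[d/2]$, of the minimax squared-error of estimating the block $\theta_s^*\in\mathbb{S}_r$ from the information $(\bar H_{i,t},\bar q_{\le t})$, scaled by $1/(rd)$. The claim is explicitly a modification of Proposition 3.5 in \citet{he2022reduction} from the single-client to the federated setting.

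**Proof plan.**

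The plan is to reduce, coordinate-block by coordinate-block and client by client, the instantaneous regret of pulling a sub-optimal arm to a one-dimensional (here two-dimensional, since each block lives in $\mathbb{R}^2$) directional estimation problem, then invoke the single-client bound of \citet{he2022reduction} on each block. First I would observe that in the hard instance the only non-trivial arm is arm $1$, whose feature is $\phi(c_i,1)=(0,\dots,z_{i,s}^\TT,\dots,0)^\TT$, supported on a uniformly random block $s$; arm $2$ has zero feature and zero mean reward. Thus for a given context with active block $s$, the optimal decision is to pull arm $1$ iff $z_{i,s}^\TT\theta_s^*>0$ and arm $2$ otherwise, and the instantaneous regret of a wrong choice equals $|z_{i,s}^\TT\theta_s^*|$. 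Any policy's action at time $t$ for client $i$ is a (possibly randomized) function of $\bar q_{\le t}$ together with the current context; equivalently, the sign decision for block $s$ is determined by a measurable map of $(\bar H_{i,t},\bar q_{\le t})$ and $z_{i,s}$. Averaging $|z_{i,s}^\TT\theta_s^*|$ over the truncated Gaussian $z_{i,s}$ and over the error event, a standard anti-concentration/Cauchy–Schwarz argument (exactly as in \citet{he2022reduction}) lower bounds the expected instantaneous regret by a constant times $\tfrac1r\,\Eb\big[\|\theta_s^*-\theta_{i,t,s}\|^2\big]$, where $\theta_{i,t,s}$ is the induced estimator of the direction $\theta_s^*$, and the additional factor $\tfrac1d$ comes from the probability $2/d$ that block $s$ is the active block at time $t$.

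Next I would make the conditioning precise. The key structural fact, already highlighted in the setup, is that the extended histories $\bar H_{i,t}$ are mutually independent across clients conditioned on $\theta^*$, and moreover the coordinate-blocks of $\theta^*$ are independent. The extended channels $\bar R_i,\bar R_0$ only process these quantities, so the information $\bar{\mathcal I}_i=(\bar H_{i,t},\bar q_{\le t})$ available to client $i$ factors appropriately, and taking the infimum over all choices of estimators $\theta_{i,t,s}\in\mathcal F(\bar{\mathcal I}_i,\mathbb{S}_r)$ and over all extended channels $\{\bar R_i\}_{i\in[0,M]}$ is well-defined and is dominated by the corresponding per-client, per-block minimax quantity in the single-client theorem of \citet{he2022reduction}. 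Summing the per-$(i,t,s)$ lower bounds over $i\in[M]$, $t\in[T]$, $s\in[d/2]$ — legitimate because regret is additive over clients and time and, within a time step, the active block is unique — yields exactly the claimed bound. The only genuine adaptation beyond \citet{he2022reduction} is bookkeeping: verifying that going from $\bar q_{\le t}$ (central) or $\bar q_{i,\le t}$ (local) does not break the per-block reduction, which follows because the reduction only needs that the action is a function of the released information plus the current context.

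**Main obstacle.**

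The routine part is the directional-estimation reduction, which is quoted wholesale from \citet{he2022reduction}. The part that needs care — and where I expect the only real friction — is the measure-theoretic bookkeeping that makes "extend histories, pass them through extended channels, infimize over estimators and channels jointly" rigorous in the multi-client setting: one must check that the extended channels $\bar R_i$ are genuine extensions (they agree with $R_i,R_0$ on the non-extended inputs, as stipulated), that the conditional independence of $\{\bar H_{i,t}\}_i$ given $\theta^*$ is preserved after applying $\bar R_i$ and then $\bar R_0$, and that the infimum over the (infinite-dimensional) class of channels and estimators is attained or at least can be lower-bounded coordinate-block-wise without losing constants. Once this plumbing is in place, the inequality is the sum of $M\cdot T\cdot (d/2)$ copies of the single-client estimate with the extra $1/d$ factor absorbed, and the proof concludes.
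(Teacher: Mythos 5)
Your proposal is correct and takes essentially the same route as the paper: the paper gives no written proof of this theorem beyond asserting that it is Proposition 3.5 of \citet{he2022reduction} extended straightforwardly to the federated setting, and your block-wise reduction from instantaneous regret to directional estimation of $\theta_s^*$ (with the $2/d$ block-selection probability supplying the $1/d$ factor, and conditional independence of the extended histories $\bar{H}_{i,t}$ given $\theta^*$ handling the multi-client bookkeeping) is precisely that extension. If anything, your sketch supplies more of the argument than the paper does.
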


Note that $\theta_s^*$ are independently and identically distributed, and $\{\theta_{i,t,s}\}_s$ are from the same set of measurable functions. Without loss of generality, it suffices to lower bound the estimation error for the first parameter $\theta_1^*$, which leads to the following corollary. 

\begin{corollary}\label{coro: generic lower bound}
    Under the same setting in \Cref{thm: Regret to estimation error-appendix}, the total regret of all clients can be lower bounded by
    \[\Omega \left(\sum_{i\in[M],t\in[T]}\mathop{\inf}_{\theta_{i,t}\in\mathcal{F}(\bar{\mathcal{I}}_i,\mathbb{S}_r) \atop \{\bar{R}_i\}_{i\in[0,M]}}  \frac{1}{2r}\Eb_v\left[ \left\|\theta_1^*-  \theta_{i,t}\right\|^2 \right] \right),
    \]
    where $r\in[0,1/\sqrt{d}].$
\end{corollary}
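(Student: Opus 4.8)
The plan is to obtain \Cref{coro: generic lower bound} from \Cref{thm: Regret to estimation error-appendix} by a symmetry argument that collapses the sum over the $d/2$ coordinate blocks $s$ into a single representative block, and then reading off the constant.

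First I would fix a client $i$ and a time step $t$ and examine the inner quantity $\inf_{\theta_{i,t,s},\{\bar R_j\}_j}\Eb_v[\|\theta_s^*-\theta_{i,t,s}\|^2]$ appearing in \Cref{thm: Regret to estimation error-appendix}. The key observation is that the hard instance of \Cref{sec:hard instance} is \emph{exchangeable across the $d/2$ blocks}: the context index $s$ is uniform on $[d/2]$, the blocks $\{z_{i,s}\}_{i,s}$ are i.i.d.\ truncated Gaussians, the prior blocks $\{\theta_s^*\}_s$ are i.i.d.\ uniform on $\mathbb S_r$, and the reward of arm $1$ only couples the drawn context block with the corresponding parameter block. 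Hence the joint law of $(\theta^*,\{\bar H_{j,t}\}_j)$, and therefore the joint law of $(\theta^*,\bar q_{\le t})$ obtained after passing the histories through any tuple of channels $\{\bar R_j\}_j$, is invariant under applying an arbitrary permutation $\pi$ of $[d/2]$ simultaneously to the parameter blocks and to the feature maps. Since the estimator class $\mathcal F(\bar{\mathcal I}_i,\mathbb S_r)$ and the admissible set of channel tuples are themselves permutation-invariant, composing any candidate pair $(\theta_{i,t,s},\{\bar R_j\}_j)$ with the transposition $(1\ s)$ shows that the per-block infimum does not depend on $s$; call its common value $\mathcal R_{i,t}$, which equals the $s=1$ infimum $\inf_{\theta_{i,t},\{\bar R_j\}_j}\Eb_v[\|\theta_1^*-\theta_{i,t}\|^2]$.

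Then the sum over $s$ in \Cref{thm: Regret to estimation error-appendix} is simply $\sum_{s\in[d/2]}\frac{1}{rd}\mathcal R_{i,t}=\frac{d/2}{rd}\mathcal R_{i,t}=\frac{1}{2r}\mathcal R_{i,t}$, and summing over $i\in[M]$ and $t\in[T]$ yields the claimed bound $\Omega\big(\sum_{i,t}\frac{1}{2r}\mathcal R_{i,t}\big)$; the range $r\in[0,1/\sqrt d]$ carries over unchanged from \Cref{sec:hard instance}. I expect the only delicate point to be making the exchangeability argument watertight at the level of the \emph{extended} channels $\bar R_j$ — i.e., verifying that relabeling blocks induces a measurable bijection on the information sets $\bar{\mathcal I}_i=\{(\bar H_{i,t},\bar q_{\le t})\}$ that maps admissible (and, in later applications, DP-constrained) channel tuples to admissible ones — so that the two infima are genuinely equal rather than merely comparable. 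Because any DP constraint imposed on $\{\bar R_j\}_j$ acts on the map from a client's whole history and is itself symmetric under block permutations, this check goes through uniformly, which is what lets the corollary be invoked downstream with or without the privacy restriction in force.
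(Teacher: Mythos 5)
Your proposal is correct and matches the paper's own (very brief) justification: the paper simply notes that the $\theta_s^*$ are i.i.d.\ and the $\{\theta_{i,t,s}\}_s$ range over the same class of measurable functions, so the per-block infimum is independent of $s$ and the sum over $s\in[d/2]$ contributes a factor $d/2$, turning $\frac{1}{rd}$ into $\frac{1}{2r}$. Your additional care about the permutation acting consistently on the extended channels and information sets is a reasonable tightening of the same symmetry argument, not a different route.
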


\Cref{coro: generic lower bound} suggests that it suffices to lower bound the estimation error for each single time step $t$ and client $i$, where the estimator $\theta_{i,t}$ is constructed from both the local data $\bar{H}_{i,t}$ and global information $\bar{q}_{\leq t}$. Since $q_{\leq t}$ is a private output from an $(\varepsilon,\delta)$-CDP mechanism, by the post-processing property, $\theta_{i,t}$ is also an $(\varepsilon,\delta)$-CDP estimator. However, we have to be cautious that it is differentilly private only with respect to client $j$'s local data, where $j\neq i$. Mathematically, for any subset $S\subset\mathbb{S}_r$ and $j$-neighboring dataset $H_t$, $H_t'$, where $j\neq i$, we have
\begin{align*}
    \Pb\left(\theta_{i,t}\in S|H_t\right)\leq e^{\varepsilon}\Pb\left(\theta_{i,t}\in S|H_t'\right) + \delta.
\end{align*}

\if{0}
By \Cref{lemma: Regret to estimation error}, for any parameter space $\Theta$ containing $\theta^*$, if $I_{i,t}\in\{q_{\leq t}, (H_{i,t},q_{\leq t})\}$ is the available information for client $i$'s decision at time $t$, then the total regret is lower bounded by
\begin{align}
    Regret(M,T)&\geq \frac{1}{r}\sum_{i\in[M], t\in[T]}\mathop{\inf}_{\hat{\theta}_{i,t}\in\mathcal{F}(\mathcal{I},\Theta) \atop \{R_i\}_{i\in[0,M]}}\Eb\left[ \left\| \theta^* - \frac{r \hat{\theta}_{i,t}(I_{i,t})}{\|\hat{\theta}_{i,t}(I_{i,t})\|} \right\|^2\right]\\
    & \geq \frac{1}{r}\sum_{i,\in[M],t\in[T]}\mathop{\inf}_{\hat{\theta}_{i, t}\in\mathcal{F}(\bar{\mathcal{I}},\Theta) \atop \{\bar{R}_i\}_{i\in[0,M]} }\Eb\left[ \left\|\theta^* - \frac{r \hat{\theta}_{i,t}(\bar{I}_{i,t})}{\|\hat{\theta}_{i,t}(\bar{I}_{i,t})\|} \right\|^2\right],\label{eqn: generic lower bound}
\end{align}
where $\bar{I}\in\{\bar{q}_{\leq t}, (\bar{H}_{i,t},\bar{q}_{\leq t})\}$ is the extended information, $\mathcal{I}, \bar{\mathcal{I}}$ are sets of all possible information $I$ and $\bar{I}$, respectively, and $\mathcal{F}(S_1,S_2)$ is a set of all measurable functions that map some set $S_1$to $S_2$.

Hence, it suffices to bound a single term, denoted by {\it Error}, in \Cref{eqn: generic lower bound}:
\begin{align}
    Error = \mathop{\inf}_{\hat{\theta}_{i, t}\in\mathcal{F}(\bar{\mathcal{I}},\Theta) \atop \{\bar{R}_i\}_{i\in[0,M]} }\Eb\left[ \left\|\theta^* - \frac{r \hat{\theta}_{i,t}(\bar{I}_{i,t})}{\|\hat{\theta}_{i,t}(\bar{I}_{i,t})\|} \right\|^2\right]
\end{align}

We point out that {\it Error} is not exactly a standard estimation error expressed as 
\begin{align}
    Error_0 = \mathop{\inf}_{\hat{\theta}_{i, t}\in\mathcal{F}(\bar{\mathcal{I}},\Theta) \atop \{\bar{R}_i\}_{i\in[0,M]} }\Eb\left[ \left\|\theta^* - \hat{\theta}_{i,t}(\bar{I}_{i,t}) \right\|^2\right],
\end{align}
and in general, these two types of errors are not comparable.

However, there are certain cases when $Error$ can be reduced to $Error_0$.
\begin{itemize}
    \item If $\Theta = \{x:\|x\|= r\}$, then $Error=Error_0$.
    \item If $r$ is sufficiently small, then $Error\geq \Omega(Error_0)$
\end{itemize}

We will leverage the above two facts in the following theorems.

$\mathcal{H}\times \mathcal{Q}$ is a set of all possible values of $(H_{i,t},q_{\leq t})$, and $\bar{\mathcal{H}}\times\bar{\mathcal{Q}}$ is a set of all possible values of $(\bar{H}_{i,t}, \bar{q}_{\leq t})$. 
\fi

\subsection{Proof of \Cref{thm:main-CDP-minimax}}\label{sec:CDP w. memory}
Equipped with \Cref{coro: generic lower bound}, we are able to lower bound the regret by the estimation error. 

Note that both true parameter and the estimator are two dimensional vectors with constant norm. We point out that our setting is different from other works on lower bound of estimation error where each coordinate of the true parameter is sampled from an interval independently \citep{levy2021learning,kamath2019privately}. Hence, we re-parameterize $\theta^*_1$ by its angle, i.e. $\theta_1^* = r(\cos\gamma^*,\sin\gamma^*)^{\TT}$, and $\gamma^*$ is sampled uniformly from the interval $[0,2\pi)$. We further denote $e_1 = (1,0)$ and $e_2 = (0,1)$ that form the canonical basis of $\Rb^2$.

{\it Proof Outline:} {\bf Step 1} is to decompose the estimation error to the expectations of $M$ random variables $\{Z_i\}_i$ which capture the covariance of the global estimator and local data of each client $i$. {\bf Step 2} upper bounds $\Eb[Z_i]$ for all $i\in[M]$ under the CDP constraint, indicating that the estimation error is bounded from below. {\bf Step 3} combines the previous steps to prove the final regret lower bound.

To simplify notations, in Step 1 and Step 2, we fix a time step $t$ and a client $i_0$, and aim to bound the estimation error 
$\Eb\left[ \left\|\theta_1^*-  \hat{\theta}\right\|^2 \right]$,
where $\hat{\theta}$ is the optimal solution of $\mathop{\inf}_{\hat{\theta}\in\mathcal{F}(\bar{\mathcal{I}}_{i_0},\mathbb{S}_r) \atop \{\bar{R}_i\}_{i\in[0,M]}} \Eb\left[ \left\|\theta_1^*-  \hat{\theta}\right\|^2 \right]$.

{\bf Step 1: Decompose the Estimation Error.}

We note that similar result is obtained when each coordinate of $\theta^*$ is independently sampled. (See Lemma 6.8 in \cite{kamath2019privately} and Lemma 3.6 in \cite{bun2017make}.)

\begin{lemma}[Fingerprinting Lemma\footnote{We adopt the fingerprinting lemma rather than DP Assouad's method \citep{acharya21}, because in general, the lower bound obtained by DP Assouad's method has an additional blow-up factor $\sqrt{d}$ compared to that obtained from the fingerprinting lemma.}
]\label{lemma: Fingerprinting}
    Define random variables $Z_i$ for each $i$ as follows.
    \begin{align*}
    &Z_{i} = (\hat{\theta} - \theta_1^*)^{\TT}(-e_1\sin\gamma^* + e_2\cos\gamma^*)(-e_1\sin\gamma^* + e_2\cos\gamma^*)^{\TT}\bar{V}_i(\bar{\theta}_i - \theta^*) ,
\end{align*}
where $\bar{V}_i = \sum_{\tau<t}x_{i,\tau,1}x_{i,\tau,1}^{\TT}$, and $\bar{\theta}_i = \bar{V}_i^{\dagger}\left(\sum_{\tau<t}x_{i,\tau,1}r_{i,\tau,1}\right)$, and recall that $r_{i,\tau,1}$ is sampled from $N(x_{i,\tau,1}^{\TT}\theta_1^*,1)$.

Then, we have
\begin{align*}
    \Eb\left[ \left\|\theta_1^*-  \hat{\theta}\right\|^2 \right] = 2r^2 - 2r^2\sum_{i}\Eb[Z_i] .
\end{align*}
\end{lemma}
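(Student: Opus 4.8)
The plan is to prove a Stein/fingerprinting-type identity adapted to the circular prior $\theta_1^* = r\,e(\gamma^*)$, where I write $e(\gamma) := (\cos\gamma,\sin\gamma)^\TT$, $u(\gamma) := e'(\gamma) = (-\sin\gamma,\cos\gamma)^\TT$ (so that $-e_1\sin\gamma^*+e_2\cos\gamma^* = u(\gamma^*)$), and $\gamma^*\sim\mathrm{Unif}[0,2\pi)$. Since $\hat\theta$ takes values in $\mathbb{S}_r$ we have $\|\hat\theta\| = \|\theta_1^*\| = r$ almost surely, so pointwise $\|\theta_1^*-\hat\theta\|^2 = 2r^2 - 2\,\hat\theta^\TT\theta_1^* = 2r^2 - 2r\,\hat\theta^\TT e(\gamma^*)$; taking expectations, the claimed identity is equivalent to $\Eb[\hat\theta^\TT e(\gamma^*)] = r\sum_i\Eb[Z_i]$. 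This is the two-dimensional directional analogue of the classical fingerprinting lemma (cf.\ \citet{bun2017make,kamath2019privately}), with the product prior on the coordinates replaced by a uniform prior on an angle.

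First I would strip the estimator of its internal randomness: let $D$ collect all (virtual) rewards $\{r_{i,\tau,a}\}_{i,\tau,a}$; the contexts $\{z_{i,\tau},s_{i,\tau}\}$ and all channel randomness are independent of $\gamma^*$, and $\hat\theta$ (a post-processing of $\bar q_{\le t}$) is conditionally independent of $\gamma^*$ given $D$, so $g(D) := \Eb[\hat\theta\mid D]$ is a $\gamma^*$-free map bounded by $r$ and $\Eb[\hat\theta^\TT v\cdot X] = \Eb[g(D)^\TT v\cdot X]$ for every fixed $v$ and every $X$ measurable w.r.t.\ $(D,\gamma^*)$. Next I would compute the score of $D$ with respect to the scalar $\gamma$. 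Under $\theta_1^* = r\,e(\gamma)$, only the arm-$1$ rewards from block-$1$ contexts have a $\gamma$-dependent law, namely $r_{i,\tau,1}\sim N(r\,z_{i,\tau}^\TT e(\gamma),1)$ when $s_{i,\tau}=1$; writing $\eta_{i,\tau,1}:=r_{i,\tau,1}-x_{i,\tau,1}^\TT\theta_1^*$ and embedding the $2$-dimensional vectors into the first coordinate block, a direct differentiation gives
\[
\partial_\gamma\log p_\gamma(D) \;=\; r\sum_i\sum_{\tau:\,s_{i,\tau}=1}\big(u(\gamma)^\TT z_{i,\tau}\big)\,\eta_{i,\tau,1} \;=\; r\sum_i u(\gamma)^\TT\,\bar V_i(\bar\theta_i-\theta^*),
\]
where the last equality uses the block-diagonal structure of $\bar V_i$ (cross-block terms vanish against $u(\gamma)$) and the fact that $\sum_\tau x_{i,\tau,1}r_{i,\tau,1}\in\mathrm{range}(\bar V_i)$, so $\bar V_i\bar\theta_i = \sum_\tau x_{i,\tau,1}r_{i,\tau,1}$ regardless of the pseudo-inverse convention.

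The core step is the periodicity argument. Set $k(\gamma) := \Eb[g(D)^\TT u(\gamma)\mid\gamma^*=\gamma] = \int g(D)^\TT u(\gamma)\,p_\gamma(D)\,dD$. Differentiating under the integral sign — justified by dominated convergence, since $\|g\|\le r$, $\|z_{i,\tau}\|\le 1$, and the Gaussian likelihood and its $\gamma$-derivative are dominated uniformly for $\gamma$ in a neighborhood — and using $u'(\gamma) = -e(\gamma)$ together with $\partial_\gamma p_\gamma = p_\gamma\,\partial_\gamma\log p_\gamma$, I obtain
\[
k'(\gamma) \;=\; -\,\Eb\big[g(D)^\TT e(\gamma)\,\big|\,\gamma\big] \;+\; \Eb\big[\big(g(D)^\TT u(\gamma)\big)\,\partial_\gamma\log p_\gamma(D)\,\big|\,\gamma\big].
\]
Because $k$ is $2\pi$-periodic and $C^1$ in $\gamma$, $\int_0^{2\pi}k'(\gamma)\,d\gamma = 0$, and dividing by $2\pi$ yields $\Eb_{\gamma^*}[g(D)^\TT e(\gamma^*)] = \Eb_{\gamma^*}\big[(g(D)^\TT u(\gamma^*))\,\partial_\gamma\log p_\gamma(D)\big|_{\gamma=\gamma^*}\big]$. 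Substituting the score from the previous step, and noting $\theta_1^{*\TT}u(\gamma^*)=0$ so that $(\hat\theta-\theta_1^*)^\TT u(\gamma^*) = \hat\theta^\TT u(\gamma^*)$ and hence $Z_i = \big(\hat\theta^\TT u(\gamma^*)\big)\big(u(\gamma^*)^\TT\bar V_i(\bar\theta_i-\theta^*)\big)$, the right-hand side equals $r\sum_i\Eb[Z_i]$ after swapping $\hat\theta$ for $g(D)$ under the expectation. This gives $\Eb[\hat\theta^\TT e(\gamma^*)] = r\sum_i\Eb[Z_i]$, i.e.\ $\Eb[\|\theta_1^*-\hat\theta\|^2] = 2r^2 - 2r^2\sum_i\Eb[Z_i]$.

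The main obstacle is the score computation: one must differentiate $\log p_\gamma$ for the \emph{joint} $M$-client data set with respect to the single scalar $\gamma$ and recognize that, because only block-$1$ contexts and the $uu^\TT$ projection matter, it telescopes exactly into $r\sum_i u(\gamma)^\TT\bar V_i(\bar\theta_i-\theta^*)$; the second idea is to differentiate $\Eb[g^\TT u(\gamma)\mid\gamma]$ (rather than $\Eb[g^\TT e(\gamma)\mid\gamma]$) so that periodicity kills the boundary terms — this is precisely the fingerprinting trick in polar form. The remaining work (interchange of differentiation and integration, the pseudo-inverse bookkeeping, and replacing $\hat\theta$ by its conditional mean) is routine.
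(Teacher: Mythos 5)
Your proof is correct and takes essentially the same route as the paper's: reduce to $\Eb[\hat{\theta}^{\TT}\theta_1^*]$ via the equal-norm identity, integrate by parts in the angle $\gamma^*$ using $2\pi$-periodicity to kill the boundary terms, and identify the Gaussian score $\partial_\gamma \log p_\gamma(D)$ with $r\sum_i u(\gamma)^{\TT}\bar{V}_i(\bar{\theta}_i-\theta^*)$. Your step $\int_0^{2\pi}k'(\gamma)\,d\gamma=0$ is just the product-rule form of the paper's integration by parts, and your orthogonality observation $\theta_1^{*\TT}u(\gamma^*)=0$ substitutes for the paper's appeal to $\Eb[\bar{V}_i(\bar{\theta}_i-\theta^*)\mid\theta^*,\bar{V}_i]=0$; both are valid one-line justifications of the same cancellation.
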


\begin{proof}
Due to $\|\theta_1^*\| = \|\hat{\theta}\| = r$, it suffices to analyze the term $\Eb\left[\hat{\theta}^{\TT}\theta_1^*\right].$ Note that $\theta_1^* = r(\cos\gamma^*,\sin\gamma^*)^{\TT}$. Then, we have
\begin{align*}
\Eb\left[\hat{\theta}^{\TT}\theta_1^*\right]& = \frac{r}{2\pi}\int_0^{2\pi} e_1^{\TT}\Eb[\hat{\theta}|\gamma^*]\cos\gamma^* + e_2^{\TT}\Eb[\hat{\theta}|\gamma^*]\sin\gamma^* d\gamma^*\\
    & = \frac{r}{2\pi} \left(e_1^{\TT}\Eb[\hat{\theta}|\gamma^*]\sin\gamma^* - e_2^{\TT}\Eb[\hat{\theta}|\gamma^*]\cos\gamma^* \right)\bigg|_{\gamma^*=0}^{\gamma^*=2\pi}\\
    &\quad - \frac{r}{2\pi}\int_0^{2\pi} e_1^{\TT}\frac{\partial}{\partial\gamma^*}\Eb[\hat{\theta}|\gamma^*]\sin\gamma^* + e_2^{\TT}\frac{\partial}{\partial\gamma^*}\Eb[\hat{\theta}|\gamma^*]\cos\gamma^* d\gamma^*\\
    & = r\Eb_{\gamma^*}\left[ \left(-e_1\sin\gamma^* + e_2\cos\gamma^*\right) ^{\TT}\frac{\partial}{\partial\gamma^*}\Eb[\hat{\theta}|\gamma^*] \right].
\end{align*}

For the derivative, it is worth noting that $\Eb[\hat{\theta}|\gamma^*] = \Eb \left[\Eb\left[\hat{\theta}\big|\{\bar{H}_{i,t}\}_{i\in[M]}\right] \big|\gamma^*\right]$. We have
\begin{align*}
    \frac{\partial}{\partial\gamma^*}\Eb[\hat{\theta}|\gamma^*]   & = \int_{\{\bar{H}_{i,t}\}_i}\Eb\left[\hat{\theta}\big|\{\bar{H}_{i,t}\}_{i}\right]\frac{1}{(2\pi)^{M(t-1)/2}}\frac{\partial}{\partial\gamma^*}\exp\left(-\frac{1}{2}\sum_{i\in[M],\tau<t} \left(r_{i,\tau,1} - x_{i,\tau,1}^{\TT}\theta^* \right)^2\right)\\
    & = r\Eb \left[ \Eb\left[\hat{\theta}|\{\bar{H}_{i,t}\}_i\right] (-e_1\sin\gamma^* + e_2\cos\gamma^*)^{\TT}\sum_{i,\tau<t} x_{i,\tau,1}(r_{i,\tau,1}- x_{i,\tau,1}^{\TT}\theta^*) \bigg| \theta^*\right]\\
    & = r\Eb \left[ \hat{\theta} (-e_1\sin\gamma^* + e_2\cos\gamma^*)^{\TT}\sum_i\bar{V}_i(\bar{\theta}_i-\theta^*) \big| \theta^*\right].
\end{align*}

Combining with the fact that $\Eb[\bar{V}_i(\bar{\theta}_i - \theta^*)|\theta^*, \bar{V}_i] = 0$, we have
\begin{align*}
    \Eb\left[\hat{\theta}^{\TT}\theta_1^*\right] &= r^2\Eb\left[(-e_1\sin\gamma^* + e_2\cos\gamma^*)^{\TT}\left(\hat{\theta} - \theta^*\right)(-e_1\sin\gamma^* + e_2\cos\gamma^*)^{\TT}\sum_i\bar{V}_i(\bar{\theta}_i-\theta^*)\right]\\
    & = r^2\sum_i\Eb[Z_i].
\end{align*}

Therefore,
\begin{align*}
    \Eb\left[\left\|\theta_1^* - \hat{\theta}\right\|^2\right] & = 2r^2 - 2\Eb\left[\hat{\theta}^{\TT}\theta_1^*\right] = 2r^2 - 2r^2\sum_{i}\Eb[Z_i],
\end{align*}
which completes the proof.

\end{proof}

{\bf Step 2: Upper Bound Each $\Eb[Z_i]$ under the CDP Constraint.}

\begin{lemma}\label{lemma: upper bound Zi}
    Under the same setting as in \Cref{lemma: Fingerprinting}, if the federated algorithm satisfies user-level $(\varepsilon,\delta)$-CDP, we have
    \begin{align}
        &\Eb[Z_j] \leq (e^{\varepsilon}-1)\sqrt{\frac{2(t-1)}{d}\Eb\left[\|\hat{\theta} - \theta^*\|^2\right]} + 6r\delta\sqrt{2(t-1)}\log(1/\delta),~~\forall j\neq i_0 , \label{eqn: upper bound Zi privately} \\
        &\Eb[Z_{i_0}] \leq \sqrt{\frac{2(t-1)}{d}\Eb\left[\|\hat{\theta} - \theta^*\|^2\right]}. \label{eqn: upper bound Zi non-private}
    \end{align}
\end{lemma}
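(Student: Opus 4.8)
The plan is to prove both inequalities by a fingerprinting argument, mirroring the single‑coordinate proofs of Lemma~6.8 in \citet{kamath2019privately} and Lemma~3.6 in \citet{bun2017make}, but adapted to the two‑dimensional ``direction'' parametrization $\theta_1^*=r(\cos\gamma^*,\sin\gamma^*)^\TT$. First I would simplify the score term: writing $\eta_{i,\tau}:=r_{i,\tau,1}-x_{i,\tau,1}^\TT\theta^*$, one has $\bar V_i(\bar\theta_i-\theta^*)=\sum_{\tau<t}x_{i,\tau,1}\eta_{i,\tau}$ even when $\bar V_i$ is rank‑deficient, since $\bar V_i\bar V_i^\dagger$ is the orthogonal projection onto $\mathrm{span}\{x_{i,\tau,1}\}_{\tau<t}$ and $\bar V_i$ annihilates its kernel. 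Hence, with the unit vector $u:=-e_1\sin\gamma^*+e_2\cos\gamma^*$, we may write $Z_i=A\cdot B_i$ where $A:=(\hat\theta-\theta_1^*)^\TT u$ and $B_i:=u^\TT\sum_{\tau<t}x_{i,\tau,1}\eta_{i,\tau}=\sum_{\tau<t}(u^\TT x_{i,\tau,1})\eta_{i,\tau}$. Two elementary bounds are used repeatedly: (a) $A^2\le\|\hat\theta-\theta_1^*\|^2$ because $\|u\|=1$ (this is the quantity written $\Eb[\|\hat\theta-\theta^*\|^2]$ in the statement); and (b) conditioning on $\theta^*$ and the contexts, $B_i$ is centered Gaussian with variance $\sum_{\tau<t}(u^\TT x_{i,\tau,1})^2$, and since arm~$1$'s feature occupies the first $2$‑block only when $s_{i,\tau}=1$ (probability $2/d$) and the truncated normal $N(0,I_2|1)$ has covariance $\preceq I_2$, we get $\Eb[B_i^2]\le 2(t-1)/d$.

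For $Z_{i_0}$ there is no privacy to invoke, because the estimator is allowed to use client $i_0$'s own data $\bar H_{i_0,t}$. I would simply apply Cauchy--Schwarz: $\Eb[Z_{i_0}]\le\sqrt{\Eb[A^2]}\,\sqrt{\Eb[B_{i_0}^2]}\le\sqrt{(2(t-1)/d)\,\Eb[\|\hat\theta-\theta_1^*\|^2]}$, which is \eqref{eqn: upper bound Zi non-private}.

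For $j\ne i_0$, the estimator $\hat\theta$ is a measurable function of the central channel output $\bar q_{\le t}$, hence by post‑processing and the (extended) user‑level $(\varepsilon,\delta)$‑CDP guarantee it is $(\varepsilon,\delta)$‑DP with respect to client $j$'s data. I would introduce a fresh independent copy $D_j'$ of client $j$'s (extended) history, drawn from the same conditional law given $\theta^*$, and let $\hat\theta'$ be the estimator obtained when $D_j$ is replaced by $D_j'$. Since $\hat\theta'$ is conditionally independent of $B_j$ given $\theta^*$ and $\Eb[B_j\mid\theta^*,\text{contexts}]=0$, both $\Eb[(\hat\theta'-\theta_1^*)^\TT u\cdot B_j]$ and $\Eb[(\theta_1^*)^\TT u\cdot B_j]$ vanish, so $\Eb[Z_j]=\Eb[(\hat\theta-\hat\theta')^\TT u\cdot B_j]$. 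Conditioning on $(D_j,D_j',D_{-j},\theta^*)$ and applying $(\varepsilon,\delta)$‑DP to the bounded statistic $A=(\hat\theta-\theta_1^*)^\TT u\in[-2r,2r]$ (splitting $A$ into positive and negative parts so the DP inequality is applied to nonnegative, $2r$‑bounded functions) bounds the conditional‑mean gap $\Eb_{\mathrm{mech}}[(\hat\theta-\hat\theta')^\TT u\mid\cdot]$ by $(e^\varepsilon-1)$ times a conditional first moment of $|A|$ plus an additive $O(r\delta)$ term. Taking the outer expectation against $B_j$, Cauchy--Schwarz on the $(e^\varepsilon-1)$ part, together with Jensen to pass from the first to the second moment of $A$, produces $(e^\varepsilon-1)\sqrt{\Eb[B_j^2]}\,\sqrt{\Eb[A^2]}\le(e^\varepsilon-1)\sqrt{(2(t-1)/d)\,\Eb[\|\hat\theta-\theta_1^*\|^2]}$, which is the leading term of \eqref{eqn: upper bound Zi privately}.

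The main obstacle is the additive‑$\delta$ contribution $\Eb[B_j\cdot O(r\delta)]$: because $B_j$ changes sign, it must be handled through $\Eb[|B_j|]$, and to obtain the stated $6r\delta\sqrt{2(t-1)}\log(1/\delta)$ I would truncate $B_j$ at level $\Theta(\sqrt{t-1}\,\log(1/\delta))$ --- a Gaussian tail bound using only the crude per‑term estimate $|u^\TT x_{j,\tau,1}|\le\|x_{j,\tau,1}\|\le1$, so the conditional variance is at most $t-1$ --- charge the truncated event's probability mass to the $\delta$‑budget, and bound the $Z_j$ contribution on the complement trivially by $2r\,\Eb\big[|B_j|\,\mathbbm{1}\{|B_j|>\Theta(\sqrt{t-1}\log(1/\delta))\}\big]$, which is negligible. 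This truncation step and the sign bookkeeping needed to turn the inherently one‑sided DP inequality into a two‑sided Cauchy--Schwarz estimate (while keeping $\Eb[A^2]$ rather than the trivial $O(r^2)$ bound) are the only genuinely delicate parts; the rest is the standard zero‑mean‑score / bounded‑statistic template of the fingerprinting lemma.
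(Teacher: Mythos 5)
Your proposal is correct and follows essentially the same fingerprinting template as the paper's proof: Cauchy--Schwarz for $Z_{i_0}$, and for $j\neq i_0$ a resampled ghost dataset $D_j'$ with estimator $\hat\theta'$, the observation that the ghost term has zero mean, the $(\varepsilon,\delta)$-DP inequality applied to a bounded statistic split into positive and negative parts, and Cauchy--Schwarz plus a Gaussian variance bound $\Eb[B_j^2]\le 2(t-1)/d$ to recover the leading term. The only (harmless) deviation is where the DP inequality is invoked---you apply it conditionally to the factor $A\in[-2r,2r]$, whereas the paper applies it to the tail probabilities $\Pb(Z_j>z)$ over a truncated range $[-Z,Z]$ with $Z=2\sqrt{2(t-1)}\,r\log(1/\delta)$---and in your formulation the truncation of $B_j$ is actually unnecessary, since $\Eb[|B_j|]\le\sqrt{\Eb[B_j^2]}$ already controls the additive $\delta$ term within the stated bound.
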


\begin{proof}

Recall that 
\begin{align*}
Z_i =  (\hat{\theta} - \theta_1^*)^{\TT}(-e_1\sin\gamma^* + e_2\cos\gamma^*)(-e_1\sin\gamma^* + e_2\cos\gamma^*)^{\TT}\sum_{\tau<t}x_{i,\tau,1}(r_{i,\tau,1} - x_{i,\tau,1}^{\TT}\theta^*),
\end{align*}
 where $r_{i,\tau,1}$ is sampled from $N(x_{i,\tau,1}^{\TT}\theta_1^*,1)$, and $x_{i,\tau,1}=(0,\ldots,z_{i,\tau,s},\ldots,0)$ with probability $2/d$, and $z_{i,\tau,s}$ is sampled independently from a truncated normal $N(0,I_2|1)$.

We have
\begin{align*}
    \Eb[Z_{i_0}]^2&\leq \Eb\left[\|\hat{\theta} - \theta_1^*\|^2\right]\Eb\left[\left(\big(-e_1\sin\gamma^* + e_2\cos\gamma^*\big)^{\TT}\sum_{\tau<t}x_{i,\tau,1}(r_{i,\tau,1} - x_{i,\tau,1}^{\TT}\theta^*)\right)^2\right]\\
    & = \Eb\left[\|\hat{\theta} - \theta_1^*\|^2\right]\Eb\left[\sum_{\tau<t}\left(\big(-e_1\sin\gamma^* + e_2\cos\gamma^*\big)^{\TT}x_{i,\tau,1}\right)^2\right]\\
    & =  \frac{2(t-1)}{d}\Eb\left[\|\hat{\theta} - \theta_1^*\|^2\right],
\end{align*}
which verifies the second \Cref{eqn: upper bound Zi non-private} of \Cref{lemma: upper bound Zi}.

To prove the first part of \Cref{lemma: upper bound Zi}, the approach is akin to \citet{kamath2019privately}. We introduce a statistically indistinguishable random variable $\tilde{Z}_j$ for each $Z_j$, $j\neq i$. Let $\bar{H}_{j,t}'$ be sampled independently and identically with $\bar{H}_{j,t}$, and $\hat{\theta}^{-j}$ be the estimator constructed from $\{\bar{H}_{i,t}\}_{i\neq j}\cup\bar{H}_{j,t}'$. By the definition of CDP, $\hat{\theta}^{-j}$ is statistically indistinguishable compared with $\hat{\theta}$. Then, define 
\begin{align}
    \tilde{Z}_j = (\hat{\theta}^{-j} - \theta_1^*)^{\TT}(-e_1\sin\gamma^* + e_2\cos\gamma^*)(-e_1\sin\gamma^* + e_2\cos\gamma^*)^{\TT}\bar{V}_j(\bar{\theta}_j - \theta^*).
\end{align}

We have several properties about $\tilde{Z}_j$. First, due to the independence between $H_{j,t}$ and $H_{j,t}'$, the expectation of $\tilde{Z}_j$ is 0, i.e. $\Eb[\tilde{Z}_{j}] = 0$. 

Second, the variance of $\tilde{Z}_j$ is upper bounded by the estimation error, because
\begin{align*}
    \Eb[\tilde{Z}_j^2]&\overset{(a)}\leq \Eb\left[\|\hat{\theta}^{-j} - \theta_1^*\|^2\left((-e_1\sin\gamma^* + e_2\cos\gamma^*)^{\TT}\bar{V}_j(\bar{\theta}_j - \theta^*)\right)^2\right]\\
    & \overset{(b)}= \Eb\left[\Eb\left[\|\hat{\theta}^{-j} - \theta_1^*\|^2\right]\Eb\left[\left((-e_1\sin\gamma^* + e_2\cos\gamma^*)^{\TT}\bar{V}_j(\bar{\theta}_j - \theta^*)\right)^2\right] \bigg| \theta_1^*\right]\\
    &\overset{(c)}\leq \frac{2(t-1)}{d}\Eb\left[\|\hat{\theta} - \theta_1^*\|^2\right],
\end{align*}
where $(a)$ follows from the Cauchy's inequality, and $(b),(c)$ are due to the fact that $H_{j,t}'$ and $H_{j,t}$ are IID sampled.

Then, by choosing a threshold $Z>0$ which will be specified later, we can bound $\Eb[Z_j]$ as follows.
\begin{align*}
    \Eb\left[Z_{j}\right] & = \Eb\left[Z_{j}\right]  - \Eb[\tilde{Z}_{j}]\\
    &\overset{(a)}=\Eb\left[\Eb \left[\int_0^{+\infty}\left[\Pb\left(Z_{j}>z|\{\bar{H}_{i,t}\}_i\right) - \Pb\left(\tilde{Z}_{j}>z|\{\bar{H}_{i,t}\}_i, \bar{H}_{j,t}'\right)\right]dz\bigg| \theta_1^*, \{\bar{H}_{i,t}\}_i, \bar{H}_{j,t}'\right] \right]\\
        &\quad - \Eb\left[\Eb\left[\int_{-\infty}^0\left[\Pb\left(Z_{j}<z|\{\bar{H}_{i,t}\}_i\right) - \Pb\left(\tilde{Z}_{j}<z|\{\bar{H}_{i,t}\}_i, \bar{H}_{j,t}'\right)\right]dz\bigg| \theta_1^*, \{\bar{H}_{i,t}\}_i,\bar{H}_{j,t}'\right] \right]\\
    & \leq  \Eb\left[\Eb\left[\int_0^{Z}\left[\Pb\left(Z_{j}>z|\{\bar{H}_{i,t}\}_i\right) - \Pb\left(\tilde{Z}_{j}>z|\{\bar{H}_{i,t}\}_i,\bar{H}_{j,t}'\right)\right]dz\bigg| \theta_1^*, \{\bar{H}_{i,t}\}_i, H_{j,t}'\right] \right]\\
        & \quad  + \Eb\left[\Eb\left[\int_Z^{+\infty}\Pb\left(Z_{j}>z|\{\bar{H}_{i,t}\}_i\right)dz\bigg| \theta_1^*, \{\bar{H}_{i,t}\}_i\right] \right]\\
        & \quad + \Eb\left[\Eb\left[\int_{-Z}^0\left[\Pb\left(\tilde{Z}_{j}<z|\{\bar{H}_{i,t}\}_i, \bar{H}_{j,t}'\right) - \Pb\left(Z_{j}<z|\{\bar{H}_{i,t}\}_i \right)\right]dz\bigg| \theta_1^*, \{\bar{H}_{i,t}\}_i, \bar{H}_{j,t}'\right] \right]\\
        & \quad +  \Eb\left[\Eb\left[\int_{-\infty}^{-Z} \Pb\left(\tilde{Z}_{j}<z|\{\bar{H}_{i,t}\}_i, \bar{H}_{j,t}' \right)dz\bigg| \theta_1^*, \{\bar{H}_{i,t}\}_i,\bar{H}_{j,t}'\right] \right]\\
    &\overset{(b)}\leq Z\delta + (e^{\varepsilon}-1)\Eb\left[\Eb\left[\int_0^{Z}\Pb\left(\tilde{Z}_{j}>z|\{\bar{H}_{i,t}\}_i, \bar{H}_{j,t}'\right)dz\bigg| \theta_1^*, \{\bar{H}_{i,t}\}_i,\bar{H}_{j,t}'\right]\right]\\
        &\quad + \Eb\left[\Eb\left[\int_Z^{+\infty}\Pb\left(Z_{j}>z|\{\bar{H}_{i,t}\}_i\right)dz\bigg| \theta_1^*, \{\bar{H}_{i,t}\}_i\right] \right]\\
        & \quad + Z\delta + (1-e^{-\varepsilon})\Eb\left[\Eb\left[\int_{-Z}^0\Pb\left(\tilde{Z}_{j}<z|\{\bar{H}_{i,t}\}_i, \bar{H}_{j,t}'\right)dz\bigg| \theta_1^*, \{\bar{H}_{i,t}\}_i,\bar{H}_{j,t}'\right] \right]\\
        & \quad + \Eb\left[\Eb\left[\int_{-\infty}^{-Z} \Pb\left(\tilde{Z}_{j}<z|\{\bar{H}_{i,t}\}_i, \bar{H}_{j,t}'\right)dz\bigg| \theta_1^*, \{\bar{H}_{i,t}\}_i,\bar{H}_{j,t}'\right] \right]\\
    &\overset{(c)}\leq 2Z\delta + (e^{\varepsilon}-1)\Eb\left[|\tilde{Z}_{j}|\right]  \\
    &\quad + \Eb\left[\Eb\left[\int_Z^{+\infty}\Pb\left(Z_{j}>z\right)dz\bigg| \theta_1^*\right] \right]  +  \Eb\left[\Eb\left[\int_{-\infty}^{-Z} \Pb\left(\tilde{Z}_{i}<z \right)dz\bigg|\theta_1^*\right] \right]\\
    &\overset{(d)}\leq (e^{\varepsilon}-1)\sqrt{\frac{2(t-1)}{d}\Eb\left[\|\hat{\theta} - \theta_1^*\|^2\right]} + 2Z\delta\\
    &\quad \Eb\left[\Eb\left[\int_Z^{+\infty}\Pb\left(Z_{j}>z\right)dz\bigg| \theta_1^*\right] \right]  +  \Eb\left[\Eb\left[\int_{-\infty}^{-Z} \Pb\left(\tilde{Z}_{i}<z \right)dz\bigg|\theta_1^*\right] \right],
\end{align*}
where $(a)$ follows from $\Eb[X] = \int_{0}^{+\infty}\Pb(X>x)dx - \int_{-\infty}^0\Pb(X<x)dx$, $(b)$ follows from $(\varepsilon,\delta)$-CDP, $(c)$ is due to $1-e^{-\varepsilon}\leq e^\varepsilon-1$, and $(d)$ is due to the Cauchy's inequality.

For the term $\int_Z^{+\infty}\Pb\left(Z_{j}>z | \theta_1^*\right)dz$, we can bound it as
\begin{align*}
    \int_Z^{+\infty}&\Pb\left(Z_{j}>z|\theta_1^*\right)dz \leq \int_Z^{+\infty}\Pb\left(2r\big|(-e_1\sin\gamma^* + e_2\cos\gamma^*)^{\TT}\bar{V}_j(\bar{\theta}_j-\theta^*)\big|>z \big|\theta^*\right)dz.
\end{align*}

Note that $\bar{V}_j\bar{\theta}_j\sim N(\bar{V}_i\theta_1^*, \bar{V}_i)$ conditioned on $(\theta_1^*,\bar{V}_j)$. If we denote $W_{j} = (-e_1\sin\gamma^* + e_2\cos\gamma^*)^{\TT}\bar{V}_j(\bar{\theta}_j - \theta^*)$, then $W_{j}\sim N(0,\|(-e_1\sin\gamma^* + e_2\cos\gamma^*)\|_{\bar{V}_j}^2)$ and is conditionally independent with other $\bar{\theta}_i$, where $i\neq j$. Hence,
\begin{align*}
    \Eb&\left[\Eb\left[\int_Z^{+\infty}\Pb\left(Z_{j}>z|\{\bar{V}_j\}\right)dz \big| \theta^*\right] \right]\\
    &\leq \Eb\left[\int_Z^{+\infty}\Pb\left(2rW_{j}>z|\theta^*, \bar{V}_i\right)dz \right]\\
    & = \int_Z^{+\infty}\int_{\frac{z}{2r\|(-e_1\sin\gamma^* + e_2\cos\gamma^*)\|_{\bar{V}_j}}}^{+\infty}\frac{1}{\sqrt{2\pi}}\exp\left(-x^2/2\right)dxdz\\
    &\leq \Eb\left[\int_{\frac{Z}{2r\|(-e_1\sin\gamma^* + e_2\cos\gamma^*)\|_{\bar{V}_j}}}\frac{2r\|(-e_1\sin\gamma^* + e_2\cos\gamma^*)\|_{\bar{V}_j} \cdot x}{\sqrt{2\pi}}\exp(-x^2/2)dx \right]\\
    & = \Eb\left[\frac{2r\|(-e_1\sin\gamma^* + e_2\cos\gamma^*)\|_{\bar{V}_j}\exp(-\frac{Z^2}{8r^2\|(-e_1\sin\gamma^* + e_2\cos\gamma^*)\|_{\bar{V}_j}^2})}{\sqrt{2\pi}} \right]\\
    & \leq 2r\sqrt{(t-1)/\pi}\exp\left( - \frac{Z^2}{8r^2(t-1)}\right),\label{eqn: bound E[Z]}
\end{align*}
where the last inequality again follows from $\|-e_1\sin\gamma^* + e_2\cos\gamma^*\|_{\bar{V}_j}\leq \sqrt{t-1}$.

Following the same reason, we also have
\begin{align*}
   \int_{-\infty}^{-Z}\Pb\left(\tilde{Z}_{i}<z\right)dz 
    \leq 2r\sqrt{(t-1)/\pi}\exp\left( - \frac{Z^2}{8r^2(t-1)}\right).
\end{align*}

Thus, we conclude that
\begin{align*}
    \Eb[Z_j]\leq (e^{\varepsilon}-1)\sqrt{\frac{2(t-1)}{d}\Eb\left[\|\hat{\theta} - \theta_1^*\|^2\right]} + 2Z\delta + 4r\sqrt{(t-1)/\pi}\exp\left( - \frac{Z^2}{8r^2(t-1)}\right).
\end{align*}

We finish the proof by choosing $Z = 2\sqrt{2(t-1)}r\log(1/\delta)$.

\end{proof}

{\bf Step 3: Lower Bound the Total Regret.}


\begin{theorem}[Restatement of \Cref{thm:main-CDP-minimax}]\label{thm:Lowerbound-w/Memory-CDP}
    Fix any $\varepsilon\in(0,\log 2)$, $\delta=\tilde{O}\left(\frac{1}{M\sqrt{T}}\right)$ , $T\geq d^2$. Then, there exists a federated linear contextual bandits instance satisfying \Cref{asm:diversity,asm:margin} such that any {\bf with-memory} federated algorithm satisfying user-level $(\varepsilon,\delta)$-CDP must incur a regret lower bounded by
    \begin{align*}\textstyle
    &\Omega \left(\min\left\{M, \max \left\{1, \frac{1}{M\varepsilon^2 } \right\} \right\}C_0d\log T  \right).
\end{align*}
If \Cref{asm:margin} is not satisfied, then the minimax regret lower bound becomes 
\begin{align*}\textstyle
    \Omega\left( \min\left\{M, \max \left\{\sqrt{M}, \frac{1}{\varepsilon} \right\}\right\}\sqrt{dT}\right).
\end{align*}
\end{theorem}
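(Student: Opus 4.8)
The plan is to chain together three ingredients that are already in place: the regret-to-estimation reduction of \Cref{coro: generic lower bound}, the fingerprinting identity of \Cref{lemma: Fingerprinting}, and the privatized bound on the correlation terms $Z_i$ from \Cref{lemma: upper bound Zi}. Fix a client $i_0$ and a time $t$, and write $E := \inf \Eb_v[\|\theta_1^* - \hat\theta\|^2]$ for the per-$(i_0,t)$ estimation error appearing in \Cref{coro: generic lower bound}. Combining \Cref{lemma: Fingerprinting} with \Cref{lemma: upper bound Zi} --- one ``free'' term $\Eb[Z_{i_0}]\le \sqrt{2(t-1)E/d}$ together with $M-1$ ``private'' terms, each at most $(e^{\varepsilon}-1)\sqrt{2(t-1)E/d}$ plus a $\delta$-dependent remainder --- and using $e^{\varepsilon}-1\le 2\varepsilon$ for $\varepsilon<\log 2$, yields a self-referential inequality of the shape $E \ge 2r^2 - 2r^2\big(a\sqrt{E}+b\big)$, where $a=\Theta\!\big(\max\{1,M\varepsilon\}\sqrt{(t-1)/d}\big)$ and $b=O\big(Mr\delta\sqrt{T}\log(1/\delta)\big)$. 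The hypothesis $\delta=\tilde O(1/(M\sqrt T))$ makes $b$ lower-order (say $b\le 1/2$), so $E+2r^2a\sqrt{E}\ge r^2$, and a two-case split according to whether $\sqrt{E}$ is large or small relative to $r^2a$ gives
\begin{align*}
E \;\gtrsim\; \min\!\left\{ r^2,\ \frac{d}{(1+M^2\varepsilon^2)(t-1)} \right\}.
\end{align*}

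Next I would sum this over $i\in[M]$ and $t\in[T]$ as dictated by \Cref{coro: generic lower bound} (with $E\asymp r^2$ for $t=1$), obtaining
\begin{align*}
\text{Regret}(M,T) \;\gtrsim\; \frac{1}{r}\sum_{i\in[M]}\sum_{t\in[T]} E_{i,t} \;\gtrsim\; M\sum_{t=1}^{T}\min\!\left\{ r,\ \frac{d}{r(1+M^2\varepsilon^2)t} \right\}.
\end{align*}
Splitting the sum at the transition point $K = d/(r^2(1+M^2\varepsilon^2))$ gives, once $K\le T$, a bound of order $\tfrac{Md}{r(1+M^2\varepsilon^2)}\log(eT/K)$. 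For the margin case I would invoke the instance property $C_0=\Omega(1/r)$ and pick $r=\Theta(1/\sqrt d)$ (the largest admissible value; then $K=\Theta(d^2/(1+M^2\varepsilon^2))\le T$ precisely because $T\ge d^2$), which yields $\text{Regret}\gtrsim \tfrac{M}{1+M^2\varepsilon^2}\,C_0 d\log T \asymp \min\{M,\tfrac{1}{M\varepsilon^2}\}\,C_0 d\log T$. Combining with the standard non-private lower bound $\Omega(C_0 d\log T)$ (which holds a fortiori) and the elementary identity $\max\{1,\min\{M,x\}\}=\min\{M,\max\{1,x\}\}$ with $x=1/(M\varepsilon^2)$ produces exactly $\Omega\!\big(\min\{M,\max\{1,\tfrac{1}{M\varepsilon^2}\}\}\,C_0 d\log T\big)$.

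For the no-margin case, the hard instance no longer ties $C_0$ to $1/r$, so I am free to choose $r$ to eliminate the logarithmic window entirely: taking $r=\Theta\!\big(\sqrt{d/((1+M^2\varepsilon^2)T)}\big)$ (admissible since $T\ge d^2$ forces $r\le 1/\sqrt d$) makes $K=T$, so $E_{i,t}\gtrsim r^2$ for every $t\le T$ and the per-step regret never decays; then $\text{Regret}\gtrsim MTr = M\sqrt{dT/(1+M^2\varepsilon^2)} \asymp \min\{M,1/\varepsilon\}\sqrt{dT}$. Taking the maximum with the non-private $\Omega(\sqrt{dMT})$ bound and again simplifying with $\max\{\sqrt M,\min\{M,1/\varepsilon\}\}=\min\{M,\max\{\sqrt M,1/\varepsilon\}\}$ gives $\Omega\!\big(\min\{M,\max\{\sqrt M,1/\varepsilon\}\}\sqrt{dT}\big)$, which is the claim.

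I expect the genuinely substantive work to be essentially discharged already by \Cref{lemma: upper bound Zi} (the privatized fingerprinting estimate for a single coordinate), so no single step above is a deep obstacle; the care points are (i) confirming that the $\delta$-remainder $b$ is truly negligible under $\delta=\tilde O(1/(M\sqrt T))$ and $\varepsilon<\log 2$, (ii) choosing $r$ inside $(0,1/\sqrt d]$ so the factor $\log(eT/K)$ is $\Theta(\log T)$ in the margin case while being completely suppressible in the no-margin case --- this is precisely where $T\ge d^2$ enters --- and (iii) the bookkeeping that folds each ``maximum of two lower bounds'' into the stated nested $\min/\max$ expression.
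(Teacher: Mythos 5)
Your proposal is correct and follows essentially the same route as the paper's own proof: the same chain of \Cref{coro: generic lower bound}, \Cref{lemma: Fingerprinting}, and \Cref{lemma: upper bound Zi}, the same self-referential inequality in $E$ with the $\delta$-remainder absorbed under $\delta=\tilde O(1/(M\sqrt T))$, the same per-step bound $E\gtrsim r^2/(1+r^2\max\{1,M^2\varepsilon^2\}(t-1)/d)$, and the same choices of $r$ (with $C_0=\Theta(1/r)$ in the margin case and $r=\Theta(\sqrt{d}/(\max\{\varepsilon M,1\}\sqrt T))$ in the no-margin case) combined with the non-private baselines. The only differences are presentational: you split the sum over $t$ at an explicit transition point $K$ instead of summing the harmonic-type bound directly, which yields the identical result.
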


\begin{proof}

Combine Step 1 (\Cref{lemma: Fingerprinting}) and Step 2 (\Cref{lemma: upper bound Zi}), we have,
\begin{align*}
    2r^2 &= \Eb\left[\|\hat{\theta} -\theta^*\|^2\right] + 2r^2\sum_{i}\Eb\left[Z_{i}\right]\\
    &\leq  \Eb\left[\|\hat{\theta} -\theta^*\|^2\right] + 2\left((e^{\varepsilon}-1)(M-1) + 1\right)r^2\sqrt{\frac{2(t-1)}{d}\Eb\left[\|\hat{\theta}-\theta^*\|^2\right]}  + 12r^3\delta  M\sqrt{2(t-1)}\log(1/\delta).
\end{align*}

Consider the case when $\varepsilon\leq \log 2$, $\delta \leq \frac{\sqrt{\pi}}{12rM\sqrt{2T}\log(1/\delta)}$, we further have
\begin{align*}
    r^2
    \leq  \Eb\left[\|\hat{\theta} -\theta^*\|^2\right] + 2r^2 (\varepsilon (M-1) + 1)\sqrt{\frac{2(t-1)}{d}\Eb\left[\|\hat{\theta}-\theta^*\|^2\right]}.
\end{align*}

Therefore,
\begin{align*}
    \Eb\left[\|\hat{\theta} -\theta^*\|^2\right]\geq \frac{r^2}{8r^2(\varepsilon(M-1)+1)^2 (t-1)/d + 4}.
\end{align*}

Substituting the above result into the generic lower bound \Cref{coro: generic lower bound}, we conclude that
\begin{align*}
    \text{Regret}(M,T) &\geq \Omega\left(\sum_{i\in[M],t\in[T]} \frac{1}{r} \frac{r^2}{32r^2 \max\{\varepsilon^2 M^2 ,1\} (t-1)/d + 4} \right)\\
    & \geq \Omega\left(\frac{dM}{r\max\{\varepsilon^2M^2,1\}}\log \left(1 + r^2\max\{\varepsilon^2M^2,1\}T/d\right) \right).
\end{align*}

The rest of the proof consists of two parts, in which two suitable $r$'s are chosen such that we can obtain the desired regret bounds.

First, under the margin condition, note that our setting indicates $C_0 = \Omega(1/r).$ Thus, under the margin condition in~\Cref{asm:margin}, we have
\begin{align*}
    \text{Regret}(M,T)\geq \Omega \left(\min\left\{M, \frac{1}{\varepsilon^2 M} \right\}C_0d\log T  \right).
\end{align*}
Thus, the first lower bound is obtained by combining the regret lower bound $\Omega(C_0d\log T)$ of non-private case (\Cref{prop:non private lower bound}).

Then, we select $r = \frac{\sqrt{d}}{\max\{\varepsilon M, 1\}\sqrt{T}}$, where we require that $T>d^2$. We obtain a worst-case lower bound as follows
\begin{align}
    \text{Regret}(M,T)\geq \Omega\left( \min\left\{M, \frac{1}{\varepsilon}\right\}\sqrt{dT}\right).
\end{align}
We finally finish the proof by combining with the non-private regret lower bound $\Omega(\sqrt{dMT}).$

\end{proof}

\subsection{Proof of \Cref{thm:main-CDP-lower-margin}}

In this section, we leverage the previous analysis to prove the result for regret lower bound under the CDP constraint, almost-memoryless setting, and the margin condition in~\Cref{asm:margin}.

The hard instance is defined the same as that in \Cref{sec:hard instance}. We point out that the only difference of memoryless case is that the estimator $\theta_{i,t}$ is a private estimator with respect to all clients, including itself.

\begin{theorem}[Restatement of \Cref{thm:main-CDP-lower-margin}]\label{thm:Lowerbound-Memoryless-CDP}
    If $\varepsilon<\log 2$, $\delta=\tilde{O}(\frac{1}{M\sqrt{T}})$, then, {there exists a federated linear contextual bandits instance satisfying  \Cref{asm:diversity,asm:margin}, such that} any {\bf almost-memoryless} federated algorithm satisfying user-level $(\varepsilon,\delta)$-CDP must incur a regret lower bounded by
    \begin{equation*}\textstyle
        \Omega\left(\max\left\{1, \frac{1}{M\varepsilon^2}\right\}C_0d\log T  + e^{-M\varepsilon}C_0MT\right).
    \end{equation*}
\end{theorem}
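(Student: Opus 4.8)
The plan is to reuse the full machinery built for the with-memory lower bound (\Cref{thm:main-CDP-minimax}) and add one extra ingredient that exploits the defining feature of almost-memoryless algorithms (\Cref{def:memoryless}): for $t\ge u$ every arm pull of client $i$, and hence any estimator $\theta_{i,t}$ that the regret-reduction of \Cref{coro: generic lower bound} lets us charge the regret to, is a function of the broadcast information $\bar q_{\le t}$ \emph{alone}; and $\bar q_{\le t}$ is produced by an $(\varepsilon,\delta)$-CDP channel with respect to the data of \emph{every} client, client $i$ included. I would work with exactly the hard instance of \Cref{sec:hard instance} (parameterized by the norm $r\in[0,1/\sqrt d]$, with $C_0=\Theta(1/r)$), keep using the extended histories $\bar H_{i,t}$ (conditionally independent across clients given $\theta^*$, and full-information) so that the couplings below are legitimate, and produce the two summands separately, adding them at the end via $\max\{A,B\}=\Omega(A+B)$.

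\emph{First summand.} I would follow Steps~1--3 of the proof of \Cref{thm:main-CDP-minimax} essentially verbatim, the one change being that in \Cref{lemma: upper bound Zi} the ``private'' estimate \eqref{eqn: upper bound Zi privately} now applies to \emph{all} $i\in[M]$ and not only to $i\neq i_0$, because $\theta_{i,t}$ is CDP with respect to client $i_0$'s data as well. Plugging into the Fingerprinting identity (\Cref{lemma: Fingerprinting}), using $\varepsilon<\log 2$ (so $e^{\varepsilon}-1\le 2\varepsilon$) and $\delta=\tilde O(1/(M\sqrt T))$ to absorb the additive terms $\sum_i 6r\delta\sqrt{2(t-1)}\log(1/\delta)$ into a constant fraction of $2r^2$, I obtain $\mathbb E_v[\|\theta_{i,t}-\theta_1^*\|^2]\gtrsim r^2/\bigl(r^2\max\{\varepsilon^2M^2,1\}(t-1)/d+1\bigr)$. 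Summing $\tfrac1{2r}$ times this over $i\in[M],t\in[T]$ and choosing $r$ so that $C_0=\Theta(1/r)$ gives $\Omega\bigl(\tfrac{1}{M\varepsilon^2}C_0 d\log T\bigr)$, and combining with the non-private instance-dependent bound $\Omega(C_0 d\log T)$ of \Cref{prop:non private lower bound} yields $\Omega\bigl(\max\{1,\tfrac1{M\varepsilon^2}\}C_0 d\log T\bigr)$.

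\emph{Second summand.} For the $e^{-M\varepsilon}C_0MT$ term I would compare the true instance against a ``blind'' reference instance in which every client's extended data stream is generated independently of $\theta^*$ (e.g.\ all rewards $\sim N(0,1)$, the law one gets along the coupling when $x_{i,t,a_{i,t}}=0$). Since $\bar H_{i,t}$ is independent across clients given $\theta^*$, passing from the true to the reference instance changes all $M$ users' data, so $M$-fold group privacy of the $(\varepsilon,\delta)$-CDP channel $\bar R_0$ (the usual group-privacy/advanced-composition bookkeeping with $k=M$) makes the law of $\bar q_{\le t}$, hence of any post-processed $\theta_{i,t}$, $(M\varepsilon,\tfrac{e^{M\varepsilon}-1}{e^{\varepsilon}-1}\delta)$-close to its reference law. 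Under the reference law $\theta_{i,t}\perp\theta_1^*$, so (using $\mathbb E[\theta_1^*]=0$ and $\theta_{i,t}\in\mathbb S_r$) $\mathbb E_v[\|\theta_{i,t}-\theta_1^*\|^2]=2r^2$; transporting this bound back through the closeness, and using that $\|\theta_{i,t}-\theta_1^*\|^2\le 4r^2$ is bounded, gives $\mathbb E_v[\|\theta_{i,t}-\theta_1^*\|^2]\ge 2r^2 e^{-M\varepsilon}-O\!\bigl(r^2\delta/(e^{\varepsilon}-1)\bigr)$, where $\delta=\tilde O(1/(M\sqrt T))$ and $\varepsilon<\log2$ make the correction lower order. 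Summing $\tfrac1{2r}$ times this over $i,t$ and expressing the answer through $C_0=\Theta(1/r)$ gives the $\Omega(e^{-M\varepsilon}C_0MT)$ contribution.

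\emph{Main obstacle.} The delicate point throughout is the \emph{uniform} control of the additive $\delta$-error terms over the whole horizon: in the Fingerprinting bound the per-client term $6r\delta\sqrt{2(t-1)}\log(1/\delta)$ must stay below $\Theta(r^2/M)$ for all $t\le T$ at once, and in the group-privacy step the slack $\tfrac{e^{M\varepsilon}-1}{e^{\varepsilon}-1}\delta$ must stay below $\Theta(e^{-M\varepsilon})$; it is precisely the hypothesis $\delta=\tilde O(1/(M\sqrt T))$ (with the hidden logarithms chosen to match the $\log(1/\delta)$ factors) that makes both requirements hold. The secondary care, already dealt with in \Cref{sec:hard instance}, is that the $M$ clients' data are not a priori independent; keeping the arguments in terms of the extended channels $\bar R_0,\bar R_i$ and extended histories $\bar H_{i,t}$ is what makes both the Fingerprinting coupling and the $M$-fold group-privacy coupling valid and keeps ``$\theta_{i,t}$ is a function of $\bar q_{\le t}$ only'' true for almost-memoryless algorithms.
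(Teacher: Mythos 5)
Your proposal is correct and follows essentially the same route as the paper: the first summand is obtained by rerunning the fingerprinting argument of \Cref{thm:main-CDP-minimax} with the key observation that, for an almost-memoryless algorithm, $\theta_{i_0,t}$ is a post-processing of $\bar q_{\le t}$ alone and hence the private bound \eqref{eqn: upper bound Zi privately} applies to all $M$ clients including $i_0$; the second summand is obtained by $M$-fold group privacy against a $\theta^*$-independent reference dataset (the paper uses the all-zero dataset where you use i.i.d.\ $N(0,1)$ rewards, an immaterial difference). The only cosmetic deviation is your $\max\{\varepsilon^2M^2,1\}$ in the denominator where the paper has $\varepsilon^2M^2$ outright, which does not affect the conclusion.
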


\begin{proof}

According to \Cref{coro: generic lower bound}, and \Cref{def:memoryless} of almost-memoryless algorithms, it suffices to analyze the estimation error 
\begin{align*}
\mathop{\inf}_{\theta_{i_0,t}\in\mathcal{F}(\bar{\mathcal{I}},\mathbb{S}_r) \atop \{\bar{R}_i\}_{i\in[0,M]}}  \Eb_v\left[ \left\|\theta_1^*-  \theta_{i_0,t}\right\|^2 \right]
\end{align*}
for a ``memoryless'' time $t$ and client $i_0$, where $\bar{\mathcal{I}}$ is the set of all possible $q_{\leq t}$.  By the post-processing property, $\theta_{i,t}$ is a $(\varepsilon,\delta)$-CDP estimator. Mathematically, for any subset $S\subset\mathbb{S}_r$ and $j$-neighboring dataset $H_t$, $H_t'$, where $j\in[M]$, we have
\begin{align*}
    \Pb\left(\theta_{i_0,t}\in S|H_t\right)\leq e^{\varepsilon}\Pb\left(\theta_{i_0,t}\in S|H_t'\right) + \delta.
\end{align*}

The proof follows the same argument as in \Cref{sec:CDP w. memory}, where we have three steps. The first step remains the same, where we construct $M$ random variables $\{Z_i\}$ (defined in \Cref{lemma: Fingerprinting}), and show that
\begin{align*}
    \Eb[\|\hat{\theta}-\theta_1^*\|^2] = 2r^2 - 2r^2\sum_i\Eb[Z_i],
\end{align*}
where $\hat{\theta}=\arg\mathop{\inf}_{\theta_{i_0,t}\in\mathcal{F}(\bar{\mathcal{I}},\mathbb{S}_r) \atop \{\bar{R}_i\}_{i\in[0,M]}}  \Eb_v\left[ \left\|\theta_1^*-  \theta_{i_0,t}\right\|^2 \right]$.

The second step is almost the same as in \Cref{lemma: upper bound Zi}, except that we can upper bound $\Eb[Z_{i,0}]$ using the same inequality in \Cref{eqn: upper bound Zi privately}.

Therefore, we obtain, 
\begin{align*}
    2r^2 &= \Eb\left[\|\hat{\theta} -\theta^*\|^2\right] + 2r^2\sum_{i}\Eb\left[Z_{i}\right]\\
    &\leq  \Eb\left[\|\hat{\theta} -\theta^*\|^2\right] + 2(e^{\varepsilon}-1)M r^2\sqrt{\frac{2(t-1)}{d}\Eb\left[\|\hat{\theta}-\theta^*\|^2\right]}  + 12r^3\delta  M\sqrt{2(t-1)}\log(1/\delta).
\end{align*}

By choosing $\varepsilon<\log 2$, and $\delta< \frac{1}{12rM\sqrt{2T}\log(1/\delta)}$, we have
\begin{align*}
    \Eb\left[\|\hat{\theta} -\theta^*\|^2\right] \geq \frac{r^2}{8r^2\varepsilon^2M^2(t-1)/d + 4}.
\end{align*}

Substituting the above inequality into the generic lower bound in \Cref{coro: generic lower bound}, and noting that $r= O(1/C_0)$, we conclude that
\begin{align*}
    \text{Regret}(M,T)\geq \Omega\left( \frac{C_0d\log T}{\varepsilon^2 M}\right).
\end{align*}

To derive the second term $e^{-\varepsilon M}MT$ in the regret lower bound, we directly analyze the estimation error as follows
\begin{align*}
    \Eb[\|\hat{\theta} - \theta_1^*\|^2]& = \Eb\left[\int_{0}^{4r^2}\Pb\left(\|\hat{\theta} - \theta^*\|^2 > z | \{\bar{H}_{i,t}\}_i\right) \right]\\
    & \overset{(a)}\geq e^{-M\varepsilon}\Eb\left[\int_{0}^{4r^2}\Pb\left(\|\hat{\theta} - \theta^*\|^2 > z \big| \{\bar{H}_{i,t}\}_i = \{x_{i,\tau,a}=0,r_{i,\tau,a}=0\}_{i,\tau}\right) \right] - 4r^2M\delta\\
    & =  e^{-M\varepsilon}\Eb\left[\|\hat{\theta}^0 - \theta^*\|^2 \right] - 4r^2M\delta,\\
    & \overset{(b)}\geq e^{-M\varepsilon}\left(r^2 - 2\Eb[\hat{\theta}^{0\TT}\theta_1^*]\right) - 4r^2\delta\\
    & = e^{-M\varepsilon}r^2 - 4r^2M\delta,
\end{align*}
where $(a)$ is due to the CDP constraint, $\hat{\theta}^0$ is the output from a fixed ``zero'' dataset, which is independent with $\theta^*$, and $(b)$ follows from the independency, and $\Eb[\theta^*] = 0$.

Note that $r=O(1/C_0)$. By choosing $\delta<O(\frac{1}{dM^2 T})$, we have
\begin{align*}
    \text{Regret}(M,T) \geq \Omega\left(e^{-M\varepsilon}C_0MT - 1\right).
\end{align*}
We finish the proof by noting that the non-private regret lower bound is $\Omega(C_0d\log T)$ under the margin condition, according to \Cref{prop:non private lower bound}.

\end{proof}

\section{Proof of the Regret Lower Bounds Under User-level LDP Constraint}\label{sec:proof of LDP lower bound}

In this section, we provide the proof for regret lower bounds under the user-level LDP constraint. It consists of two subsections. The first subsection lists several general lemmas, which are used to bound the total variation distance between multivariate distributions. The second subsection provides the full proof of the lower bounds.

\subsection{Useful Lemmas for the Proof of \Cref{thm:main-LDP}}
We first introduce a lemma that bounds the divergence of the output distributions from a DP channel with different input distributions.
\begin{lemma}[Adapted from Lemma 2 in \citet{asoodeh2021local}]\label{lemma: From q to H}
If $q_{i,\leq t}$ is the output of an $(\varepsilon,\delta)$-LDP channel $\Rt_i$ with input $H_{i,t}$, and $H_{i,t}$ follows a prior distribution parameterized by $\theta$, then, for any two different $\theta,\theta'$, let $\Pb(q_{i,\leq t}|\theta)$ be the marginal distribution of $q_{i,t}$, and we have
    \[\begin{aligned}
    &\KL(\Pb(q_{i,\leq t}|\theta),\Pb(q_{i, \leq t}|\theta')) \leq \left(1-e^{-\varepsilon}(1-\delta)\right)\KL\left(\Pb( H_{i,t} |\theta), \Pb( H_{i,t}|\theta')\right),\\
    &d_{TV}(\Pb(q_{i,\leq t}|\theta),\Pb(q_{i, \leq t}|\theta')) \leq \left(1-e^{-\varepsilon}(1-\delta)\right)d_{TV}\left(\Pb( H_{i,t} |\theta), \Pb( H_{i,t}|\theta')\right).
    \end{aligned}\]
\end{lemma}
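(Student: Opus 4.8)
## Proof Proposal

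The plan is to prove Lemma~\ref{lemma: From q to H} by reducing it to the known data-processing-type inequality for local DP channels stated as Lemma 2 in \citet{asoodeh2021local}. The essential observation is that the map $H_{i,t} \mapsto q_{i,\leq t}$ is, by hypothesis, an $(\varepsilon,\delta)$-LDP channel $\Rt_i$, so it acts as a ``contractive'' stochastic kernel on both the KL divergence and the total variation distance when the inputs are drawn from two different prior distributions indexed by $\theta$ and $\theta'$. First I would make the setup precise: let $P_\theta := \Pb(H_{i,t}\mid\theta)$ and $P_{\theta'} := \Pb(H_{i,t}\mid\theta')$ be the two prior-induced distributions on the history space, and let $K = \Rt_i$ denote the channel, so that $\Pb(q_{i,\leq t}\mid\theta) = K P_\theta$ and $\Pb(q_{i,\leq t}\mid\theta') = K P_{\theta'}$ are the pushforward (marginal) output distributions. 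The claim is then exactly
\[
\KL(KP_\theta, KP_{\theta'}) \le \bigl(1-e^{-\varepsilon}(1-\delta)\bigr)\,\KL(P_\theta,P_{\theta'}),
\qquad
d_{TV}(KP_\theta,KP_{\theta'}) \le \bigl(1-e^{-\varepsilon}(1-\delta)\bigr)\,d_{TV}(P_\theta,P_{\theta'}).
\]

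The main step is to verify that the quantity $\Rt_i$ being an $(\varepsilon,\delta)$-LDP channel in the sense of Definition~\ref{def:ldp} (i.e., $i$-neighboring histories produce $e^\varepsilon$-close outputs up to $\delta$) is precisely the hypothesis needed to invoke the cited contraction bound, possibly after translating between ``$i$-neighboring streaming datasets'' and the single-argument ``change one user's data'' formulation. Here I would note that in Definition~\ref{def:ldp} the relevant input to $\Rt_i$ is the entire local history $\{H_{i,t}\}_{t\le T}$ of client $i$, and two priors parameterized by $\theta\neq\theta'$ induce distributions over this history; the DP guarantee bounds the output likelihood ratio pointwise over input histories, which is exactly the ``$\varepsilon$-LDP kernel'' hypothesis of \citet{asoodeh2021local}'s Lemma 2 (their contraction coefficient for an $(\varepsilon,\delta)$-channel being $1-e^{-\varepsilon}(1-\delta)$). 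I would then simply cite that lemma for both the KL and the TV statements — in fact for TV this is the statement that the Dobrushin/Do\-eblin contraction coefficient of an $(\varepsilon,\delta)$-DP channel is at most $1-e^{-\varepsilon}(1-\delta)$, and for KL it is the corresponding strong-data-processing-inequality bound.

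I expect the only real obstacle to be a bookkeeping issue rather than a mathematical one: making sure the direction of conditioning and the role of the prior match the hypotheses of the external lemma, and confirming that the coefficient in \citet{asoodeh2021local} is stated for $(\varepsilon,\delta)$-LDP (not just pure $\varepsilon$-LDP) with exactly the constant $1-e^{-\varepsilon}(1-\delta)$; if their statement is for pure LDP one recovers the $(\varepsilon,\delta)$ version by the standard decomposition of an approximate-DP channel as a mixture of a pure-DP channel and an arbitrary channel with weight $\delta$, which inflates the contraction coefficient from $1-e^{-\varepsilon}$ to $1-e^{-\varepsilon}(1-\delta)$. Once that correspondence is nailed down, the proof is a one-line application, so the write-up will mostly consist of setting notation and pointing to the reference.
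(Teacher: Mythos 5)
Your proposal matches the paper exactly: the paper gives no independent proof of this lemma and simply invokes Lemma 2 of \citet{asoodeh2021local}, which is precisely the contraction/strong-data-processing bound with Dobrushin coefficient $1-e^{-\varepsilon}(1-\delta)$ for an $(\varepsilon,\delta)$-LDP channel that you identify. Your additional remarks on matching the hypotheses and on recovering the approximate-DP coefficient from the pure-DP one via the standard mixture decomposition are correct and only make the citation more self-contained.
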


In the following, we give a tighter bound on the total variation distance of two multivariate distributions. It is crucial since the policy depends on both local data $H_{i,t}$ and global information $q_{\leq t}$. While $q_{\leq t}$ is from a DP channel, $H_{i,t}$ is a non-private information and should be analyzed separately.

First, we introduce the notion of coupling and relate the total variation distance with an error probability.

\begin{definition}[Coupling \citep{den2012probability}]
    A coupling of two random variables $X, X'$ is any pair of random variables $(\hat{X},\hat{X}')$ such that their marginals have the same distribution as $X$ and $X'$, i.e. $\hat{X}\overset{D}=X$, and $\hat{X}'\overset{D}=X'$. The law $\hat{\Pb}$ of $(\hat{X},\hat{X}')$ is a coupling of the laws $\Pb$ and $\Pb'$ of $X$ and $X'$.
\end{definition}

\begin{lemma}[Theorem 2.4 \& Theorem 2.12 in \citet{den2012probability}]\label{lemma: coupling}
    For any two probability measures $\Pb$ and $\Pb'$ on the same measurable space, any coupling $\hat{\Pb}$ satisfies \[d_{TV}\left(\Pb, \Pb'\right) \leq \hat{\Pb}(\hat{X}\neq \hat{X}').\]
    Moreover, there exists a coupling $\hat{\Pb}_0$ such that
    \[d_{TV}\left(\Pb, \Pb'\right) = \hat{\Pb}_0(\hat{X}\neq \hat{X}').\]
    
\end{lemma}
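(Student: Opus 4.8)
The plan is to establish the two assertions separately: (i) the \emph{coupling inequality}, $d_{TV}(\Pb,\Pb')\le \hat\Pb(\hat X\ne\hat X')$ for every coupling $\hat\Pb$, which is a one-line computation from the variational characterization $d_{TV}(\Pb,\Pb')=\sup_{A}\bigl(\Pb(A)-\Pb'(A)\bigr)$ over measurable sets $A$; and (ii) the existence of an optimal coupling $\hat\Pb_0$ attaining equality, which I would exhibit via the explicit \emph{maximal coupling} construction. For (i): fix a measurable $A$ and any coupling $\hat\Pb$ with $\hat X\sim\Pb$, $\hat X'\sim\Pb'$. Decomposing each marginal according to whether the other coordinate lies in $A$, and cancelling the common $\{\hat X\in A,\hat X'\in A\}$ term,
\[
\Pb(A)-\Pb'(A)=\hat\Pb(\hat X\in A,\hat X'\notin A)-\hat\Pb(\hat X'\in A,\hat X\notin A)\le\hat\Pb(\hat X\in A,\hat X'\notin A)\le \hat\Pb(\hat X\ne\hat X'),
\]
since $\{\hat X\in A,\hat X'\notin A\}\subseteq\{\hat X\ne\hat X'\}$; taking the supremum over $A$ yields (i).

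For (ii): I would pick a common dominating measure $\mu$ (for instance $\mu=\tfrac12(\Pb+\Pb')$), write $p=d\Pb/d\mu$, $p'=d\Pb'/d\mu$, and set $\beta:=\int (p\wedge p')\,d\mu$, using the standard identity $\beta=1-d_{TV}(\Pb,\Pb')$. If $\beta=1$ then $\Pb=\Pb'$ and the diagonal coupling is trivially optimal; otherwise define $\hat\Pb_0$ by first flipping a coin with success probability $\beta$: on success draw $\hat X=\hat X'$ from the density $(p\wedge p')/\beta$; on failure draw $\hat X$ from $(p-p\wedge p')/(1-\beta)$ and, independently, $\hat X'$ from $(p'-p\wedge p')/(1-\beta)$. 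The marginal density of $\hat X$ is then $\beta\cdot\tfrac{p\wedge p'}{\beta}+(1-\beta)\cdot\tfrac{p-p\wedge p'}{1-\beta}=p$, and symmetrically for $\hat X'$, so $\hat\Pb_0$ is a valid coupling. Since the failure-branch densities are supported on the disjoint sets $\{p>p'\}$ and $\{p'>p\}$, on that branch $\hat X\ne\hat X'$ holds $\mu$-a.s., whence $\hat\Pb_0(\hat X\ne\hat X')\le 1-\beta=d_{TV}(\Pb,\Pb')$; combined with (i) this forces the claimed equality.

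The hard part here is not conceptual but purely measure-theoretic bookkeeping: checking that a common dominating measure may be chosen, that $\int(p\wedge p')\,d\mu=1-d_{TV}(\Pb,\Pb')$ (which follows by optimizing $\Pb(A)-\Pb'(A)=\int_A(p-p')\,d\mu$ over $A=\{p>p'\}$), and that the two failure-branch densities have $\mu$-essentially disjoint supports so that the coupled pair genuinely differs there. All of this is classical — the statement is literally Theorems 2.4 and 2.12 of \citet{den2012probability} — so I would present the argument compactly for self-containedness and defer the routine verifications to that reference.
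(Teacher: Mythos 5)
Your proposal is correct. The paper does not prove this lemma at all---it is imported verbatim as Theorems 2.4 and 2.12 of \citet{den2012probability}---and your argument is precisely the standard one behind those theorems: the coupling inequality via the variational form $d_{TV}(\Pb,\Pb')=\sup_A(\Pb(A)-\Pb'(A))$, and the maximal coupling built from the overlap $\beta=\int(p\wedge p')\,d\mu$ with the residual branches supported on the disjoint sets $\{p>p'\}$ and $\{p'>p\}$. All the steps you flag as routine (choice of dominating measure, the identity $\beta=1-d_{TV}$, disjointness of the residual supports) check out, so there is nothing to add.
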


Equipped with the coupling method, we are able to upper bound the total variation distance of two multivariate distributions, as shown in the following lemma.
\begin{lemma}[Total variation distance of two multivariate distributions]\label{lemma: 2RV-TV-bound}

Let $\Pb$ and $\Qb$ be two multivariate distributions defined on $\mathcal{X}\times\mathcal{Y}$, and suppose $\Pb(X,Y) =\Pb_1(X|Y)\Pb_2(Y) $, $\Qb(X,Y) = \Qb_1(X|Y)\Qb_2(Y)$. Then, we have
    \begin{align*}
        d_{TV}\left(\Pb -\Qb\right)\leq 1- \left(1-\max_yd_{TV}\left(\Pb_1(\cdot|y),\Qb_1(\cdot|y)\right)\right)\left(1-d_{TV}\left(\Pb_2(Y),\Qb_2(Y)\right)\right).
    \end{align*}
\end{lemma}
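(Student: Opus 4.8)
The plan is to prove the bound
\[
d_{TV}(\Pb,\Qb) \le 1 - \bigl(1 - \max_y d_{TV}(\Pb_1(\cdot\mid y),\Qb_1(\cdot\mid y))\bigr)\bigl(1 - d_{TV}(\Pb_2,\Qb_2)\bigr)
\]
by an explicit coupling argument, invoking \Cref{lemma: coupling}. Abbreviate $\alpha := \max_y d_{TV}(\Pb_1(\cdot\mid y),\Qb_1(\cdot\mid y))$ and $\beta := d_{TV}(\Pb_2,\Qb_2)$. By the second part of \Cref{lemma: coupling}, the claim will follow once I construct \emph{any} coupling $\hat{\Pb}$ of $\Pb$ and $\Qb$ on $(\mathcal{X}\times\mathcal{Y})^2$, with coordinates $(\hat X,\hat Y)\sim\Pb$ and $(\hat X',\hat Y')\sim\Qb$, such that $\hat{\Pb}(\hat X \ne \hat X' \text{ or } \hat Y \ne \hat Y') \le 1 - (1-\alpha)(1-\beta)$.

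The coupling is built in two stages, mirroring the factorizations $\Pb = \Pb_1\Pb_2$ and $\Qb = \Qb_1\Qb_2$. First, couple the $Y$-marginals: by the optimal-coupling half of \Cref{lemma: coupling}, there is a coupling $(\hat Y,\hat Y')$ of $\Pb_2$ and $\Qb_2$ with $\hat{\Pb}(\hat Y\ne\hat Y') = \beta$. Second, conditionally on the realized pair $(\hat Y,\hat Y')$, couple the $X$-coordinates: on the event $\{\hat Y = \hat Y' = y\}$ use an optimal coupling of $\Pb_1(\cdot\mid y)$ and $\Qb_1(\cdot\mid y)$, which makes $\hat X = \hat X'$ with probability $1 - d_{TV}(\Pb_1(\cdot\mid y),\Qb_1(\cdot\mid y)) \ge 1-\alpha$; on the event $\{\hat Y \ne \hat Y'\}$ draw $\hat X \sim \Pb_1(\cdot\mid \hat Y)$ and $\hat X' \sim \Qb_1(\cdot\mid \hat Y')$ independently (here the value is irrelevant since the pair already differs in the $Y$-coordinate). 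One checks that the marginal of $(\hat X,\hat Y)$ is $\Pb$ and that of $(\hat X',\hat Y')$ is $\Qb$, so this is a valid coupling. Then
\[
\hat{\Pb}\bigl((\hat X,\hat Y) = (\hat X',\hat Y')\bigr) \ge \hat{\Pb}(\hat Y = \hat Y')\cdot \min_y\bigl(1 - d_{TV}(\Pb_1(\cdot\mid y),\Qb_1(\cdot\mid y))\bigr) \ge (1-\beta)(1-\alpha),
\]
and taking complements gives $\hat{\Pb}\bigl((\hat X,\hat Y)\ne(\hat X',\hat Y')\bigr) \le 1 - (1-\alpha)(1-\beta)$. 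Applying \Cref{lemma: coupling} to this coupling yields the stated bound.

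The only mild technical point — and the closest thing to an obstacle — is the measurability of the construction: one must argue that the conditional couplings of $\Pb_1(\cdot\mid y)$ and $\Qb_1(\cdot\mid y)$ can be chosen to depend measurably on $y$ (a standard selection-of-couplings fact, e.g. via the explicit maximal-coupling kernel $y\mapsto \bigl(\Pb_1(\cdot\mid y)\wedge\Qb_1(\cdot\mid y)\bigr)$ together with the normalized residuals), so that the overall law $\hat{\Pb}$ is a genuine probability measure. Granting this routine measurable-selection step, the proof is just the two-stage coupling and the product lower bound above.
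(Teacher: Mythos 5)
Your proposal is correct and follows essentially the same route as the paper: a two-stage coupling (optimal coupling of the $Y$-marginals, then conditionally optimal couplings of $\Pb_1(\cdot\mid y)$ and $\Qb_1(\cdot\mid y)$ on the diagonal event), followed by the product lower bound on the probability of agreement and an application of \Cref{lemma: coupling}. The only cosmetic difference is that the paper specifies an optimal conditional coupling for all pairs $(\hat Y,\hat Y')$ while you use an independent coupling off the diagonal, which is immaterial to the bound.
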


\begin{proof}

By \Cref{lemma: coupling}, we can find couplings $\hat{\Pb}_1(\hat{X},\hat{X}'|\hat{Y},\hat{Y}')$ and $\hat{\Pb}_2(\hat{Y},\hat{Y}')$ such that

\begin{itemize}
    \item $\hat{\Pb}_1(\hat{X},\hat{X}'|\hat{Y},\hat{Y}')$ is a coupling of $\Pb_1(X|Y)$ and $\Qb_1(X'|Y')$. Moreover, 
    \[\hat{\Pb}_1(\hat{X}\neq \hat{X}'|\hat{Y},\hat{Y}')=d_{TV}\left(\Pb_1(\cdot|\hat{Y}), \Qb(\cdot|\hat{Y}')\right).\]
    \item $\hat{\Pb}_2(\hat{Y},\hat{Y}')$ is a coupling of $\Pb_2(Y)$ and $\Qb_2(Y)$. Moreover,
    \[\hat{\Pb}_2(\hat{Y}\neq \hat{Y}') = d_{TV}\left(\Pb_2,\Qb_2\right).\]
\end{itemize}

Then, if we define $\hat{\Pb}(\hat{X},\hat{X}',\hat{Y},\hat{Y}') = \hat{\Pb}_1(\hat{X},\hat{X}'|\hat{Y},\hat{Y}')\hat{\Pb}_2(\hat{Y},\hat{Y}') $, it can be verified that $\hat{\Pb}(\hat{X},\hat{X}',\hat{Y},\hat{Y}')$ is a coupling of $\Pb(X,Y)$ and $\Qb(X,Y)$, since
    \begin{align*}
        \int_{\hat{X}',\hat{Y}'} \hat{\Pb}(\hat{X},\hat{X}',\hat{Y},\hat{Y}') &= \int_{\hat{Y}'}\int_{\hat{X}'} \hat{\Pb}_1(\hat{X},\hat{X}'|\hat{Y},\hat{Y}')\hat{\Pb}_2(Y,\hat{Y}') \\
        & = \int_{\hat{Y}'} \Pb_1(\hat{X}|\hat{Y})\hat{\Pb}_2(\hat{Y},\hat{Y}')\\
        & = \Pb_1(\hat{X}|\hat{Y})\Pb_2(\hat{Y})\\
        & = \Pb(\hat{X},\hat{Y}).
    \end{align*}

Then, by \Cref{lemma: coupling}, we have
    \begin{align*}
        d_{TV}\left(\Pb,\Qb\right) &\leq \hat{\Pb}((\hat{X},\hat{Y})\neq (\hat{X}',\hat{Y}')) \\
        & = 1 - \hat{\Pb}((\hat{X},\hat{Y}) =  (\hat{X}',\hat{Y}'))\\
        & = 1 - \hat{\Pb}(\hat{X} = \hat{X}', \hat{Y}=\hat{Y}')\\
        & = 1 - \hat{\Pb}_1(\hat{X}=\hat{X}'|\hat{Y}=\hat{Y}') \hat{\Pb}_2(\hat{Y}=\hat{Y}')\\
        & \leq 1 - \left( 1 - \max_{y} d_{TV}\big(\Pb_1(\cdot|y),\Qb_1(\cdot|y)\big)\right)\left(1-d_{TV}\left(\Pb_2,\Qb_2\right)\right),
    \end{align*}
which completes the proof.

\end{proof}

\if{0}

\begin{proposition}[LDP, Memoryless, Margin condition]\label{prop:LDP memoryless margin}
If $\delta<1/M$, then the lower bound is $\Omega(e^{-M\varepsilon}MT)$ given the margin condition.
\end{proposition}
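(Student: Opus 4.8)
The plan is to reuse, in the LDP/memoryless setting, the ``zero-data'' argument that produces the $e^{-M\varepsilon}C_0MT$ term in \Cref{thm:Lowerbound-Memoryless-CDP}. I would work with the single-block version of the hard instance of \Cref{sec:hard instance}: $d=2$, two arms with $\phi(c,2)\equiv 0$ and $\phi(c,1)=z$ for $z\sim N(0,I_2\,|\,1)$, reward noise $N(0,1)$, and $\theta^*$ uniform on $\mathbb{S}_r$ with $r$ a fixed constant (e.g.\ $r=1/2$). One checks that $\Eb_z[zz^\TT\mathbbm{1}\{z^\TT\theta^*>0\}]$ has least eigenvalue bounded below by an absolute constant $\lambda_0>0$, so \Cref{asm:diversity} holds, and that the optimality gap is $|z^\TT\theta^*|$, whose density near $0$ is $O(1/r)=O(1)$, so \Cref{asm:margin} holds with $C_0=\Theta(1)$; thus the claimed bound is simply $\Omega(e^{-M\varepsilon}MT)$. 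By \Cref{coro: generic lower bound} it suffices to show that for every client $i_0$ and every memoryless time step $t\ge u$, every $\mathbb{S}_r$-valued estimator $\theta_{i_0,t}$ built from $\bar q_{\le t}$ (and any choice of channels) satisfies $\Eb_v[\|\theta_1^*-\theta_{i_0,t}\|^2]\ge \Omega(r^2e^{-M\varepsilon})$.

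The crux is a group-privacy argument over the $M$ local channels. Because the algorithm is memoryless, for $t\ge u$ the estimator $\theta_{i_0,t}$ is a (post-processing) function of $\bar q_{\le t}$, which in turn is a function of $(\bar q_{1,\le t},\dots,\bar q_{M,\le t})$ with $\bar q_{j,\le t}=\bar R_j(\bar H_{j,t})$ produced by an $(\varepsilon,\delta)$-LDP channel, and the extended histories $\{\bar H_{j,t}\}_j$ are conditionally independent given $\theta^*$. Hence, comparing the realized dataset $\{\bar H_{j,t}\}_j$ with the all-zero dataset $\{\bar H^0_{j,t}\}_j$ (all features and virtual rewards set to $0$) — which differ in at most all $M$ coordinates — $M$-fold composition of the independent channels followed by post-processing gives, for every realization of $\{\bar H_{j,t}\}_j$ and every event $E$,
\[
\Pb\!\left(\theta_{i_0,t}\in E\mid\{\bar H_{j,t}\}_j\right)\ \ge\ e^{-M\varepsilon}\,\Pb\!\left(\theta_{i_0,t}\in E\mid\{\bar H^0_{j,t}\}_j\right)\ -\ M\delta .
\]
Taking $E=\{\|\theta_{i_0,t}-\theta_1^*\|^2>z\}$, integrating over $z\in[0,4r^2]$ (valid since $\theta_{i_0,t}\in\mathbb{S}_r$), and taking expectations over $v$ yields
\[
\Eb_v\!\left[\|\theta_{i_0,t}-\theta_1^*\|^2\right]\ \ge\ e^{-M\varepsilon}\,\Eb_v\!\left[\|\theta^0_{i_0,t}-\theta_1^*\|^2\right]-4r^2M\delta ,
\]
where $\theta^0_{i_0,t}$ is the estimator fed the all-zero dataset.

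Finally I would note that $\theta^0_{i_0,t}$ depends only on the fixed zero dataset and the channels' internal randomness, hence is independent of $\theta_1^*$; since $\theta_1^*$ is uniform on $\mathbb{S}_r$, $\Eb[\theta_1^*]=0$ and $\|\theta_1^*\|=r$, so $\Eb_v[\|\theta^0_{i_0,t}-\theta_1^*\|^2]=\Eb[\|\theta^0_{i_0,t}\|^2]+r^2\ge r^2$. Therefore $\Eb_v[\|\theta_{i_0,t}-\theta_1^*\|^2]\ge r^2(e^{-M\varepsilon}-4M\delta)\ge\tfrac12 r^2e^{-M\varepsilon}$ provided $4M\delta\le\tfrac12 e^{-M\varepsilon}$; in the regime where the bound is informative this is implied by $\delta<1/M$ (more carefully, one reads the hypothesis as $\delta=O(e^{-M\varepsilon}/M)$). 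Substituting into \Cref{coro: generic lower bound} and summing over $i\in[M]$ and the $T-u$ memoryless steps (with $u=o(T)$) gives
\[
\text{Regret}(M,T)\ \ge\ \Omega\!\left(\sum_{i\in[M],\,t\ge u}\frac{1}{2r}\,r^2e^{-M\varepsilon}\right)=\Omega\!\left(r\,e^{-M\varepsilon}MT\right)=\Omega\!\left(e^{-M\varepsilon}MT\right).
\]

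The main obstacle is making the group-privacy step precise: one has to verify that composing the $M$ independent $(\varepsilon,\delta)$-LDP channels and post-processing through $\bar R_0$ yields an $(M\varepsilon,O(M\delta))$-indistinguishability between the true and zero datasets, which is exactly where the conditional independence of $\{\bar H_{j,t}\}_j$ (and hence the need to pass from $H_{j,t}$ to the extended $\bar H_{j,t}$) is used. A secondary subtlety is that memorylessness is indispensable — a with-memory client would retain non-private access to its own $t-1$ raw samples, driving its per-step estimation error to $O(1/t)$ and yielding only an $\Omega(M\log T)$-type bound — and that the additive $\delta$ slack must be controlled against $e^{-M\varepsilon}$, not merely an absolute constant.
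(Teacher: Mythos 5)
Your proof is essentially sound but takes a genuinely different route from the paper's. The paper proves this proposition by a two-point Le Cam argument: it lower-bounds the per-step estimation error by $\tfrac12\bigl(1-d_{TV}(\Pb(\bar q_{\le t}\mid\theta),\Pb(\bar q_{\le t}\mid\theta'))\bigr)$ and then invokes the contraction property of LDP channels (\Cref{lemma: From q to H}), tensorized over the $M$ conditionally independent upload channels, to get $d_{TV}\le 1-e^{-M\varepsilon}(1-\delta)^M$, hence a per-step error of order $e^{-M\varepsilon}(1-\delta)^M r^2$. You instead transplant the ``all-zero dataset'' group-privacy argument that the paper uses for the $e^{-M\varepsilon}C_0MT$ term of \Cref{thm:Lowerbound-Memoryless-CDP}: an $M$-step hybrid over the independent $(\varepsilon,\delta)$-LDP channels followed by post-processing, comparing against an estimator that sees no data and is therefore independent of $\theta_1^*$. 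Both arguments rest on the same fact that $M$ local channels cannot jointly distinguish the realized data from a fixed reference; your hybrid step is correct (the additive slack telescopes to at most $M\delta$ in the lower-bound direction), and your instance construction, the reduction through \Cref{coro: generic lower bound}, and the restriction to memoryless steps $t\ge u$ all match the paper's setup.

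The one substantive difference is how $\delta$ enters, and here your version is strictly weaker than the statement. The paper's contraction bound carries $\delta$ multiplicatively as $(1-\delta)^M\ge 1/4$ under the stated hypothesis $\delta<1/M$, uniformly in $\varepsilon$. Your bound is $e^{-M\varepsilon}r^2-4r^2M\delta$, so you need $M\delta\lesssim e^{-M\varepsilon}$ for the conclusion to survive --- a condition much stronger than $\delta<1/M$ precisely in the regime $M\varepsilon\gg 1$, where the proposition still asserts a (small but nonzero) $\Omega(e^{-M\varepsilon}MT)$ bound. You flag this yourself by re-reading the hypothesis as $\delta=O(e^{-M\varepsilon}/M)$, but as written your argument does not establish the proposition under the stated condition $\delta<1/M$; closing that gap requires the multiplicative (total-variation contraction) treatment of $\delta$ rather than the additive group-privacy one.
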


\begin{proof}

We follow almost the same setting and notations in \Cref{prop:CDP memoryless e<1/M margin}.

Similar to \Cref{thm:Lowerbound-w/Memory-CDP}, we aim to bound 
\begin{equation}\label{eqn:instance-err-LDP-memoryless }
\mathop{\inf}_{\hat{\theta}_{i, t}\in\mathcal{F}(\bar{\mathcal{Q}},\Rb^2) \atop \{\bar{R}_i\}_{i\in[0,M]} }\Eb\left[\|\theta^* - \hat{\theta}_{i, t}(\bar{q}_{\leq t})\|^2\right] = \mathop{\inf}_{\hat{\theta}_{t}\in\mathcal{F}(\bar{\mathcal{Q}},\Rb^2) \atop \{\bar{R}_i\}_{i\in[0,M]} }\Eb\left[\|\theta^* - \hat{\theta}_{t}(\bar{q}_{\leq t})\|^2\right]
\end{equation}
for a single client $i=i_0\in[M]$ and a fixed time $t$.

Following techniques in proposition 4.1 in \cite{he2022reduction}, we have for any $\theta,\theta'\in\Theta$,
\begin{align}
    \Pb\left(\|\theta^* - \hat{\theta}_{t}\|_2^2\geq \frac{1}{4}\|\theta-\theta'\|_2^2 \bigg|\theta^* = \theta \right) \geq \frac{1}{2}\left(1 - d_{TV}\left(\Pb(\bar{q}_{\leq t}\big| \theta) , \Pb(\bar{q}_{\leq t}\big|\theta')\right) \right).
\end{align}

To upper bound the total variation $d_{TV}\left(\Pb(q_{\leq t}\big| \theta) , \Pb(q_{\leq t}\big|\theta')\right)$, different from \Cref{prop:CDP memoryless e<1/M margin}, due to LDP, $\bar{q}_{i,\leq t}$ are sampled independently conditioned on the extended history $\bar{H}_{i,t}$ which are also iid samples.

Thus, by \Cref{lemma: From q to H}
\begin{align}
    d_{TV}&\left(\Pb(\bar{q}_{\leq t}\big|\theta),\Pb(\bar{q}_{\leq t}\big|\theta')\right)\\
    &\leq (1-e^{-M\varepsilon}(1-\delta)^M)) d_{TV}\left(\Pb(\{\bar{H}_{i,t}\}_i|\theta),\Pb(\{\bar{H}_{i,t}\}_i|\theta')\right) \\
    & \overset{(a)}\leq (1-e^{-M\varepsilon}(1-\delta)^M))\sqrt{1-\exp\left(-2\|\theta-\theta'\|_2^2M(t-1)\right)}
\end{align}
where $(a)$ follows from $d_{TV}(P,Q)\leq \sqrt{1-\exp(-\KL(P\|Q))}$.

Let $c = (1-e^{-M\varepsilon}(1-\delta)^M))$, Then, we have
\begin{align}
    \Eb[&\|\theta^* - \hat{\theta}_{t}\|^2]\\
    &\geq \int_0^{r^2}\left(1- c\sqrt{1-e^{-2qM(t-1)}} \right)dq\\
    &=\left\{
        \begin{aligned}
        &(1-c)r^2 + c\frac{\sqrt{1-e^{-2r^2M(t-1)}} - \log(1+\sqrt{1-e^{-2r^2M(t-1)}})}{M(t-1)},~~ t\geq 2\\
        & \frac{r^2}{2},~~ t=1
        \end{aligned}
    \right.\\
    &\geq \left\{
        \begin{aligned}
        &(1-c)r^2 ~~ t\geq 2\\
        & \frac{r^2}{2},~~ t=1
        \end{aligned}
    \right.
\end{align}

Substituting the above inequality into the general lower bound \Cref{eqn: generic lower bound}, and let $r=\Omega(1)$, we conclude that
\begin{align}
    Regret(M,T)\geq \Omega(e^{-\varepsilon M}MT).
\end{align}

\end{proof}
\fi

\subsection{Proof of \Cref{thm:main-LDP}}\label{sec:proof of main-LDP}

We follow the same setting defined in \Cref{sec:hard instance}. Hence, it suffices to lower bound the estimation error 
\[\mathop{\inf}_{\theta_{i,t}\in\mathcal{F}(\bar{\mathcal{I}}_i,\mathbb{S}_r) \atop \{\bar{R}_i\}_{i\in[0,M]}}  \Eb_v\left[ \left\|\theta_1^*-  \theta_{i,t}\right\|^2 \right],
    \]
for each single time step $t$ and each client $i$. We emphasize that $\theta_{i,t}$ is an LDP estimator with respect to all clients $j\neq i$. Therefore, without loss of generality, we assume $\bar{q}_{\leq t} = \{\bar{q}_{i,\leq t}\}$, i.e. $\Rt_0$ is an identical map that does not perform any operation on the aggregated information.

\if{0}
\begin{theorem}[LDP, With-memory]\label{thm:lowerbound LDP w/memory}
\begin{align*}
     \text{Regret}&(M,T) \\
     &\geq \left\{
        \begin{aligned}
            &\min\left\{ 1, \frac{1}{\sqrt{M\varepsilon_0}}\right\}M\sqrt{T}, ~~ \text{ Worst case }\\
            &\min\left\{1, \frac{1}{M\varepsilon_0}\right\} M\log T,~~\text{ if Margin condition is satisfied }
        \end{aligned}
    \right.
\end{align*}
\end{theorem}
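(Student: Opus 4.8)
The plan is to prove this as a two-point (Le Cam) estimation lower bound on the same hard instance used for the CDP bounds (\Cref{sec:hard instance}), then convert it to regret via the generic reduction \Cref{coro: generic lower bound}. By that corollary it suffices, for each client $i$ and time $t$, to lower bound the Bayes estimation error $\inf_{\theta_{i,t}\in\mathcal F(\bar{\mathcal I}_i,\mathbb S_r)}\Eb_v[\|\theta_1^*-\theta_{i,t}\|^2]$, where $\theta_1^*$ is uniform on the circle $\mathbb S_r\subset\Rb^2$, the estimator may use the extended local history $\bar H_{i,t}$ together with the global message $\bar q_{\le t}$, and the infimum is also over the channels. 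Under the non-interactive LDP of \Cref{def:ldp} I would assume w.l.o.g.\ that the server map is the identity, i.e.\ the estimator sees $\bar q_{\le t}=\{\bar q_{j,\le t}\}_j$ (handing it all raw uploads only helps), and that each $\bar q_{j,\le t}$ is produced by client $j$'s upload channel from $\bar H_{j,t}$ alone. Since the extended histories $\{\bar H_{j,t}\}_j$ are mutually independent given $\theta_1^*$ and $\bar q_{i,\le t}$ is a post-processing of $\bar H_{i,t}$ (so it may be dropped), the information available to client $i$ has law $P_\theta(\bar H_{i,t})\cdot\prod_{j\ne i}P_\theta(\bar q_{j,\le t})$ under $\theta_1^*=\theta$.

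The crux is to bound the divergence between these product measures for two parameters $\theta,\theta'$ differing only in the block-$1$ coordinates by $\Delta:=\theta_1-\theta_1'$. For one client a direct Gaussian computation gives $\KL(P_\theta(\bar H_{j,t})\|P_{\theta'}(\bar H_{j,t}))\le\frac{t-1}{d}\|\Delta\|^2$ (only the arm-$1$ reward depends on the parameter, it is informative in block $1$ with probability $2/d$, and has standard-Gaussian noise; the truncated-Gaussian context has covariance $\preceq I_2$). Pushing through the LDP channel contracts this: by \Cref{lemma: From q to H}, $\KL(P_\theta(\bar q_{j,\le t})\|P_{\theta'}(\bar q_{j,\le t}))\le(1-e^{-\varepsilon}(1-\delta))\frac{t-1}{d}\|\Delta\|^2$. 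Adding the one unprivatized own-history term and summing the $M-1$ privatized terms (KL is additive over product measures; cf.\ \Cref{lemma: 2RV-TV-bound} for the TV analogue) yields
\[
  \KL\!\big(P_\theta\,\|\,P_{\theta'}\big)\ \le\ \kappa\,\frac{(t-1)\|\Delta\|^2}{d},\qquad \kappa:=1+(M-1)\big(1-e^{-\varepsilon}(1-\delta)\big),
\]
and since $\varepsilon<\log 2$ and $\delta=\tilde O(1/(M\sqrt T))$ we get $1-e^{-\varepsilon}(1-\delta)=\Theta(\varepsilon)+O(\delta)$, hence $\kappa=\Theta(1+M\varepsilon)$ (the additive $O(M\delta)=\tilde O(1/\sqrt T)$ term is absorbed). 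Then I would run Le Cam with hypotheses $\theta$ and $\theta'=R_\phi\theta$ related by a rotation $R_\phi$ of the block-$1$ circle: $R_\phi$ preserves the uniform prior $v$ and moves every parameter by the fixed chord distance $\Delta=2r\sin(\phi/2)$, so $\Eb_v[d_{TV}(P_\theta,P_{R_\phi\theta})]\le\sqrt{\kappa(t-1)\Delta^2/(2d)}$ by Pinsker, and therefore
\[
  \inf_{\theta_{i,t}}\Eb_v\!\big[\|\theta_1^*-\theta_{i,t}\|^2\big]\ \gtrsim\ \frac{\Delta^2}{4}\Big(1-\sqrt{\kappa(t-1)\Delta^2/(2d)}\Big)\ \gtrsim\ \min\Big\{r^2,\ \frac{d}{\kappa(t-1)}\Big\},
\]
by choosing $\Delta^2\asymp d/(\kappa(t-1))$ clipped at the diameter $(2r)^2$. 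The decisive point is that it is the Kullback--Leibler divergence that tensorizes additively, so the $M-1$ private clients together cost only a factor $1+M\varepsilon$; a fingerprinting-style argument (as in the CDP proof) would instead produce a factor $(1+M\varepsilon)^2$ in the denominator and hence only the weaker CDP-level bound, which is why this instance yields the blow-up $M/\kappa\asymp\min\{1/\varepsilon,M\}$ rather than $\min\{1/(M\varepsilon^2),M\}$. Using a prior-preserving rotation (rather than arcs) is also what keeps the estimation rate at $t^{-1}$, which is what produces the $\log T$ later.

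Finally I would integrate this into the regret and check the instance's assumptions. The instance of \Cref{sec:hard instance} satisfies \Cref{asm:diversity}: arm $2$ has zero feature and conditioning on arm $1$ being optimal restricts the truncated-Gaussian context to a half-space, so by the $z\mapsto-z$ symmetry the expected Gram matrix under the optimal policy is half the unconditional one, a positive multiple of $I_d$; it satisfies \Cref{asm:margin} with $C_0=\Theta(1/r)$ since the gap $|z^\top\theta_1^*|$, after rescaling by $\|\theta_1^*\|=r$, has bounded density. For the margin bound, take $r\asymp1/\sqrt d$ (the largest value allowed by $\|\theta^*\|\le1$) and plug into \Cref{coro: generic lower bound} (weight $\frac1{2r}$): the first $t_0\asymp d/(\kappa r^2)$ steps each contribute $\gtrsim r$ and the remaining steps contribute $\gtrsim d/(\kappa r t)$, summing (for $T$ polynomially large in $d$, so $\log(T/t_0)\asymp\log T$) to $\Omega(d\log T/(\kappa r))$ per client; multiplying by $M$ and using $C_0\asymp1/r$ gives $\Omega(\tfrac{M}{\kappa}C_0 d\log T)=\Omega(\min\{1/\varepsilon,M\}\,C_0 d\log T)$, which dominates the non-private $\Omega(C_0 d\log T)$. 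For the no-margin bound, fix a single $r$ with $r^2\asymp d/(\kappa T)$ (needing roughly $T\ge d^2$ so that $\|\theta^*\|\le1$), so $\min\{r^2,d/(\kappa t)\}=r^2$ for all $t\le T$ and the reduction gives $\Omega(MTr)=\Omega(M\sqrt{dT/\kappa})=\Omega(\min\{\sqrt{M/\varepsilon},M\}\sqrt{dT})$, dominating $\Omega(\sqrt{dMT})$. The main obstacle throughout is the divergence bookkeeping in the middle step — contracting correctly through the channel, tensorizing in KL (not TV) so the private clients contribute only $O(M\varepsilon)$, and keeping the unprivatized own-history term separate so the blow-up saturates at $M$ for tiny $\varepsilon$; getting the constants right in the two regimes (whether $\delta$ exceeds $\varepsilon$ or not) needs a little extra care. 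The pure-LDP refinement (\Cref{coro:pure LDP lower bound}) would follow by the same scheme, replacing the KL-contraction coefficient $O(\varepsilon)$ by the $\chi^2$-contraction coefficient $O(\varepsilon^2)$ available for pure $\varepsilon$-LDP channels, i.e.\ $\kappa$ by $1+\Theta(M\varepsilon^2)$.
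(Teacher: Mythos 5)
Your proposal is correct and follows essentially the same route as the paper's proof of \Cref{thm:main-LDP}: the same hard instance from \Cref{sec:hard instance}, the same reduction to per-step estimation error via \Cref{coro: generic lower bound}, the same per-client KL bound $\|\Delta\|^2(t-1)/d$ contracted through the non-interactive LDP channel by \Cref{lemma: From q to H}, the same saturation of the blow-up at $M$ coming from the unprivatized own history, and the same final choices of $r$ (namely $r=\Theta(1/C_0)$ under the margin condition and $r^2\asymp d/(\kappa T)$ in the worst case). The only substantive difference is in how the non-private local history is combined with the private global messages: you drop $\bar q_{i,\le t}$ as post-processing, tensorize everything in KL over the product measure, and apply Pinsker once with a single clipped two-point separation, whereas the paper keeps the two pieces as separate total-variation factors recombined via the coupling bound of \Cref{lemma: 2RV-TV-bound} and integrates over the separation $\|\theta-\theta'\|^2$ — both yield $\kappa\asymp 1+M\varepsilon$ and identical rates, so the distinction is purely one of bookkeeping.
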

\fi

\begin{proof}[Proof of \Cref{thm:main-LDP}]

Recall that the DP mechanism $R_i$ is non-interactive, i.e. $q_{i,\leq t}$ is independent with other client's data conditioned on the client $i$'s own data. 

Moreover, the full information data set $\bar{H}_{i,t}$ ($\bar{H}_{i,t}$ contains rewards sampled from all un-played arms) is independent with the global information $q_{\leq t}$ conditioned on $\theta_1^*$. Thus, for any $\theta$ and $\theta'$,
\begin{align*}
\KL &\left[\Pb(\bar{H}_{i,t}|\theta,q_{\leq t})\|\Pb(\bar{H}_{i,t}|\theta',q_{\leq t})\right]\\
& = \Eb\left[\log\frac{\Pb\left(\bar{H}_{i,t}|\theta\right)}{\Pb\left(\bar{H}_{i,t}|\theta'\right)}\right]\\
& = \sum_{\tau\in[t-1],a\in\{1,2\}}^{t-1}\Eb\left[\log\frac{\Pb\left(c_{i,\tau},a,r_{i,\tau,a}|\theta\right)}{\Pb\left(c_{i,\tau},a,r_{i,\tau,a}|\theta'\right)}\right]\\
& \overset{(a)}= \sum_{\tau\in[t-1]}\Eb\left[\log\frac{\Pb\left(r_{i,\tau,1}|\theta,x_{i,\tau,1}\right)}{\Pb\left(r_{i,\tau,a}|\theta',x_{i,\tau,1}\right)}\right]\\
&\overset{(b)}\leq \|\theta-\theta'\|^2(t-1)/d,
\end{align*}
where $(b)$ follows from that only the reward of the first arm depends on the model parameter, and $(a)$ is due to the fact that the KL-divergence of two Gaussian random variables is upper bounded by the squared difference of their expectations.

Similarly, due to $\theta^* - \{H_{i,t}\}_{i\in[M]}-\{q_{\leq t}\}$, we have
\[ \KL\left[\Pb(H_{1,t},...,H_{M,t}|\theta)\|\Pb(H_{1,t},...,H_{M,t}|\theta')\right]\leq \|\theta -\theta'\|^2M(t-1)/d.
\]

Then, we apply \Cref{lemma: From q to H} and the chain rule of KL-divergence on the total variation distance between $\Pb(\bar{q}_{\leq t}|\theta) $ and $\Pb(\bar{q}_{\leq t}|\theta')$.

\begin{align*}
    d_{TV}&\left(\Pb(\bar{q}_{\leq t}|\theta) ,\Pb(\bar{q}_{\leq t}|\theta')\right)\\
    &\overset{(a)}\leq \sqrt{ 1- \exp\left( - \KL\left[\Pb(\bar{q}_{\leq t}|\theta)\|\Pb(\bar{q}_{\leq t}|\theta')\right]\right)}\\
    & =  \sqrt{ 1- \exp\left( - \sum_{i\in[M]}\KL\left[\Pb(\bar{q}_{i, \leq t}|\theta)\|\Pb(\bar{q}_{i, \leq t}|\theta')\right]\right)}\\
    &\leq \sqrt{ 1- \exp\left( - (1-e^{-\varepsilon}(1-\delta))\sum_{i\in[M]}\KL\left[\Pb(\bar{H}_{i, \leq t}|\theta)\|\Pb(\bar{H}_{i, \leq t}|\theta')\right]\right)}\\
    &\leq \sqrt{ 1- \exp\bigg( - (1-e^{-\varepsilon}(1-\delta))M\|\theta-\theta'\|^2(t-1)/d\bigg)},
\end{align*}
where $(a)$ is due to the Bretagnolle–Huber  inequality.

Let $\varepsilon' = (1-e^{-\varepsilon}(1-\delta))$. Based on \Cref{lemma: 2RV-TV-bound}, we characterize the total variation distance of the joint distribution of local data $H_{i,t}$ and global information $\bar{q}_{\leq t}$ as follows.
\begin{align*}
d_{TV}&\left(\Pb(\bar{H}_{i,t},\bar{q}_{\leq t}|\theta) ,\Pb(\bar{H}_{i,t},\bar{q}_{\leq t}|\theta')\right) \\
&\leq 1-\left(1-d_{TV}\left(\Pb(\bar{H}_{i,t}|\theta),Pr(\bar{H}_{i,t}|\theta')\right)\right)\left( 1-  d_{TV}\left(\Pb(\bar{q}_{\leq t}\big|\theta),\Pb(\bar{q}_{\leq t}\big|\theta')\right)\right)\\
&\leq 1-\left(1-\sqrt{1-e^{-2\|\theta-\theta'\|^2(t-1)/d}}\right)\left(1- \sqrt{1-e^{-2\varepsilon_0\|\theta-\theta'\|^2M(t-1)/d}}\right).
\end{align*}

Now, applying the technique in Proposition 4.1 in \citet{he2022reduction}, we have for any $\theta,\theta'\in\Theta$,
\begin{align*}
    \Pb&\left(\|\theta^* - \theta_{i,t}\|_2^2\geq \frac{1}{4}\|\theta-\theta^{'}\|_2^2|\theta^* = \theta\right)\\
    &\geq \frac{1}{2}\left((1 - d_{TV}\left( \Pb(H_{i,t},q_t \big| \theta) , \Pb(H_{i,t},q_t  \big|\theta')\right)\right)\\
    &\geq \frac{1}{2}\left(1 -  \sqrt{1-e^{-\varepsilon'\|\theta-\theta'\|^2M(t-1)/d}}\right)\left(1- \sqrt{1-e^{-\|\theta-\theta'\|^2(t-1)/d}}\right).
\end{align*}

Therefore, if  $t>1$, we have
\begin{align*}
    \Eb[\|\theta - \theta_{i,t}\|^2]&\geq \frac{1}{2}\int_0^{r^2}\left(1- \sqrt{1-e^{-q\varepsilon'M(t-1)/d}} \right)\left(1-\sqrt{1-e^{-q(t-1)/d}}\right)dq\\
    &\geq \frac{1}{2}\int_0^{\min\{r^2, \frac{d}{\varepsilon' M(t-1)}, \frac{d}{t-1}\}} \left(1-\sqrt{q}\sqrt{\varepsilon' M(t-1)/d}\right)\left(1 - \sqrt{q}\sqrt{(t-1)/d}\right)\\
    & =\left\{
        \begin{array}{ll}
            r^2\left(1 + r^2/2 - \frac{2}{3\sqrt{d}}r(1+M\varepsilon')\sqrt{t-1}\right), & \text{ if } r^2 < \min\left\{1,\frac{1}{M\varepsilon'}\right\}\frac{d}{t-1},\\
            (2-\sqrt{M\varepsilon'})\frac{d}{t-1},&\text{ if } \varepsilon' M<1, r^2 > \frac{d}{t-1},\\
            (2 - 1/\sqrt{M\varepsilon'})\frac{d}{M\varepsilon'(t-1)}, & \text{ if } \varepsilon' M > 1, r^2>\frac{d}{M\varepsilon' (t-1)}.
        \end{array}
    \right.
\end{align*}
Thus, the regret is bounded below by
\begin{align*}
     \text{Regret}(M,T) &\geq \Omega\left(\frac{1}{r}\sum_{i,t}\Eb[\|\theta - \theta_{i,t}\|^2] \right)\\
     &\geq \left\{
        \begin{array}{ll}
            \Omega\left(rM\left(T + Tr^2/2 - \frac{4}{9\sqrt{d}}r(1+M\varepsilon')T^{3/2}\right) \right), ~~ &\text{ if } r^2 < \min\left\{1,\frac{1}{M\varepsilon'}\right\}\frac{d}{t-1}, \\
            \Omega\left(C_0dM \log T\right),~~ &\text{ if } \varepsilon' M<1, r=O(1/C_0),\\
            \Omega\left( \frac{C_0d\log T}{\varepsilon'} \right), ~~ &\text{ if } \varepsilon' M > 1, r=O(1/C_0).
        \end{array}
    \right.
\end{align*}

By selecting $r = O(\min\{1,1/\sqrt{M\varepsilon'}\})\sqrt{d/T}$, and noting that $\varepsilon' = O(\varepsilon)$ when $\varepsilon<\log 2, \delta<0.1$ we obtain two lower bounds:
\begin{align*}
     \text{Regret}(M,T)   &\geq \left\{
        \begin{array}{ll}
             \Omega\left(\min\left\{ M, \frac{\sqrt{M}}{\sqrt{\varepsilon}}\right\}\sqrt{dT} \right), &\text{without \Cref{asm:margin}},\\
             \Omega\left( \min\left\{M, \frac{1}{\varepsilon}\right\} C_0d\log T \right), &\text{with \Cref{asm:margin}.}
        \end{array}
    \right.
\end{align*}
\end{proof}

\subsection{Proof of the Regret Lower Bounds Under User-level Pure-LDP Constraint}\label{sec: proof of pure LDP lower bound}

\begin{corollary}[Restatement of \Cref{coro:pure LDP lower bound}]
    For any $\varepsilon\in(0,\log 2)$, there exists a federated linear contextual bandits instance satisfying \Cref{asm:diversity,asm:margin} such that any {\bf with-memory} federated algorithm satisfying $\varepsilon $-LDP must incur a regret lower bounded by
    \begin{equation*}\textstyle
        \Omega\left( \min\left\{M, 1/\varepsilon^2  \right\}C_0d\log T\right).
    \end{equation*}
    If \Cref{asm:margin} is not satisfied, then the minimax regret lower bound becomes 
    \begin{equation*}\textstyle
        \Omega\left(\min \left\{M, \sqrt{M}/\varepsilon \right\} \sqrt{dT} \right).
    \end{equation*}
    
\end{corollary}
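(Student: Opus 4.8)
\textbf{Proof proposal for Corollary~\ref{coro:pure LDP lower bound}.}

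The plan is to follow exactly the structure of the proof of Theorem~\ref{thm:main-LDP}, but to exploit the fact that under \emph{pure} $\varepsilon$-LDP (i.e.\ $\delta=0$) the data-processing contraction factor $\varepsilon' = 1-e^{-\varepsilon}(1-\delta)$ collapses to a much smaller quantity. With $\delta=0$ we have $\varepsilon' = 1-e^{-\varepsilon}$, and for $\varepsilon\in(0,\log 2)$ this satisfies $\varepsilon' \le \varepsilon$, so at first glance nothing improves. The key additional observation is that for pure LDP channels one can obtain a \emph{quadratically} stronger contraction on KL-type divergences: composing the standard pure-LDP contraction with the Bretagnoll\'e--Huber step, the effective divergence between $\Pb(\bar q_{\le t}\mid\theta)$ and $\Pb(\bar q_{\le t}\mid\theta')$ scales like $\varepsilon^2 M(t-1)\|\theta-\theta'\|^2/d$ rather than $\varepsilon M(t-1)\|\theta-\theta'\|^2/d$. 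Concretely, I would invoke the pure-LDP strong data-processing inequality (e.g.\ the $(e^\varepsilon-1)^2$-type bound, which in the small-$\varepsilon$ regime is $O(\varepsilon^2)$) in place of Lemma~\ref{lemma: From q to H}, giving
\[
\KL\big(\Pb(\bar q_{i,\le t}\mid\theta)\,\|\,\Pb(\bar q_{i,\le t}\mid\theta')\big) \le O(\varepsilon^2)\,\KL\big(\Pb(\bar H_{i,t}\mid\theta)\,\|\,\Pb(\bar H_{i,t}\mid\theta')\big) \le O(\varepsilon^2)\,\tfrac{(t-1)}{d}\|\theta-\theta'\|^2 .
\]
This is precisely the ``replace $\varepsilon$ by $\varepsilon/\sqrt{M}$'' phenomenon referenced after the corollary statement: the $(\varepsilon,\delta)$-CDP lower bound of Theorem~\ref{thm:main-CDP-minimax} used an aggregate contraction of order $\varepsilon M$ (advanced composition), whereas pure LDP gives per-client contraction $\varepsilon^2$ which sums to $\varepsilon^2 M = (\varepsilon\sqrt M)^2$, i.e.\ matches the CDP bound with privacy budget $\varepsilon\sqrt M$ rescaled appropriately.

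The concrete steps are as follows. First, set up the same hard instance from Section~\ref{sec:hard instance} and reduce, via Corollary~\ref{coro: generic lower bound}, to lower bounding $\inf_{\theta_{i,t}}\Eb_v[\|\theta_1^*-\theta_{i,t}\|^2]$ for a fixed client $i$ and time $t$, where $\theta_{i,t}$ is a function of $(\bar H_{i,t},\bar q_{\le t})$ and $\bar q_{\le t}=\{\bar q_{j,\le t}\}_j$ is pure-$\varepsilon$-LDP with respect to every client $j\ne i$. Second, bound $d_{TV}(\Pb(\bar q_{\le t}\mid\theta),\Pb(\bar q_{\le t}\mid\theta'))$: using the chain rule for KL, the pure-LDP contraction of order $O(\varepsilon^2)$ per coordinate, the fact that only the first-arm reward depends on $\theta$, and the Bretagnoll\'e--Huber inequality, obtain
\[
d_{TV}\big(\Pb(\bar q_{\le t}\mid\theta),\Pb(\bar q_{\le t}\mid\theta')\big) \le \sqrt{1-\exp\big(-c\,\varepsilon^2 M(t-1)\|\theta-\theta'\|^2/d\big)}
\]
for an absolute constant $c$. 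Third, combine with the (non-private) bound $d_{TV}(\Pb(\bar H_{i,t}\mid\theta),\Pb(\bar H_{i,t}\mid\theta'))\le\sqrt{1-\exp(-(t-1)\|\theta-\theta'\|^2/d)}$ via Lemma~\ref{lemma: 2RV-TV-bound} to bound the total variation between the joint distributions $\Pb(\bar H_{i,t},\bar q_{\le t}\mid\theta)$ and $\Pb(\bar H_{i,t},\bar q_{\le t}\mid\theta')$. Fourth, apply the two-point (Le Cam / Proposition~4.1 of \citet{he2022reduction}) argument and integrate over $\|\theta-\theta'\|^2 \in [0,r^2]$ exactly as in the proof of Theorem~\ref{thm:main-LDP}, but now with $\varepsilon' M$ replaced by $\varepsilon^2 M$ throughout. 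This yields
\[
\Eb_v[\|\theta_1^*-\theta_{i,t}\|^2] \ge \Omega\!\left(\min\Big\{r^2,\ \tfrac{d}{t-1},\ \tfrac{d}{\varepsilon^2 M(t-1)}\Big\}\right),
\]
and summing over $i\in[M]$, $t\in[T]$ and dividing by $r$ gives, after choosing $r=\Theta(1/C_0)$ (margin case) or $r=\Theta(\min\{1,1/(\varepsilon\sqrt M)\}\sqrt{d/T})$ (no-margin case),
\[
\mathrm{Regret}(M,T) \ge \Omega\!\big(\min\{M,\ 1/\varepsilon^2\}\,C_0 d\log T\big)
\quad\text{and}\quad
\mathrm{Regret}(M,T) \ge \Omega\!\big(\min\{M,\ \sqrt M/\varepsilon\}\sqrt{dT}\big),
\]
respectively, after also folding in the non-private lower bounds $\Omega(C_0 d\log T)$ and $\Omega(\sqrt{dMT})$ (Proposition~\ref{prop:non private lower bound}).

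The main obstacle, and the only place where genuinely new work beyond Theorem~\ref{thm:main-LDP} is needed, is establishing the quadratic contraction: showing that for a pure-$\varepsilon$-LDP channel the output KL divergence is bounded by $O(\varepsilon^2)$ (not $O(\varepsilon)$) times the input KL divergence when $\varepsilon<\log 2$. This is a known strong data-processing inequality for pure LDP (it underlies the classical $T^{2}$-vs-$T$ gap between LDP and CDP in distribution estimation, and is what \citet{acharya2022discrete} invoke for the shuffling/amplification remark cited in the paper), but one must be careful that it applies to the \emph{sequential, non-interactive} channel $\bar\Rt_i$ acting on the streaming extended history $\bar H_{i,t}$, and that the per-step divergences add up correctly under the chain rule. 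Once that contraction is in hand, every remaining step is a mechanical re-run of the Theorem~\ref{thm:main-LDP} computation with the substitution $\varepsilon' M \mapsto \varepsilon^2 M$, which is exactly the ``$\varepsilon \to \varepsilon/\sqrt M$'' reparametrization promised in the text.
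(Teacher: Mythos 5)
Your proposal is correct and follows essentially the same route as the paper: the paper's proof of Corollary~5.3 is exactly a re-run of the Theorem~5.2 argument in which the only change is replacing the $(1-e^{-\varepsilon}(1-\delta))$ contraction with the quadratic pure-LDP strong data-processing inequality $\KL(\Pb(\bar q_{i,\le t}\mid\theta)\,\|\,\Pb(\bar q_{i,\le t}\mid\theta'))\le 4\varepsilon^2\, d_{TV}^2(\Pb(\bar H_{i,\le t}\mid\theta),\Pb(\bar H_{i,\le t}\mid\theta'))$ (Theorem~1 of \citet{duchi2013local}), followed by Pinsker's inequality to convert back to KL, which yields precisely the $\varepsilon' M\mapsto \varepsilon^2 M$ substitution you describe. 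The ``known SDPI'' you flag as the one genuinely new ingredient is exactly the lemma the paper invokes, so there is no gap.
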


\begin{proof}
    We follow nearly the same argument in \Cref{sec:proof of main-LDP}, except the upper bound of $d_{TV}\left( \Pb(\bar{q}_{\leq t} | \theta), \Pb(\bar{q}_{\leq} |\theta') \right)$. Since we are under the $\varepsilon$-LDP constraint, we have
    \begin{align}
         d_{TV}&\left(\Pb(\bar{q}_{\leq t}|\theta) ,\Pb(\bar{q}_{\leq t}|\theta')\right)\\
        &\leq \sqrt{ 1- \exp\left( - \KL\left[\Pb(\bar{q}_{\leq t}|\theta)\|\Pb(\bar{q}_{\leq t}|\theta')\right]\right)}\\
        & =  \sqrt{ 1- \exp\left( - \sum_{i\in[M]}\KL\left[\Pb(\bar{q}_{i, \leq t}|\theta)\|\Pb(\bar{q}_{i, \leq t}|\theta')\right]\right)}\\
        &\overset{(a)}\leq \sqrt{ 1- \exp\left( - 4\varepsilon^2\sum_{i\in[M]}d_{TV}^2\left[\Pb(\bar{H}_{i, \leq t}|\theta)\|\Pb(\bar{H}_{i, \leq t}|\theta')\right]\right)}\\
        &\overset{(b)}\leq \sqrt{ 1- \exp\left( - 2\varepsilon^2\sum_{i\in[M]}\KL\left[\Pb(\bar{H}_{i, \leq t}|\theta)\|\Pb(\bar{H}_{i, \leq t}|\theta')\right]\right)}\\
        &\leq \sqrt{ 1- \exp\bigg( - 2\varepsilon^2M\|\theta-\theta'\|^2(t-1)/d\bigg)},
    \end{align}
where $(a)$ is due to Theorem 1 in \citet{duchi2013local}, and $(b)$ is due to the Pinsker's inequality.

Following the same argument, we can conlude that under the user-level $\varepsilon$-LDP, we have
\begin{align*}
     \text{Regret}(M,T) &\geq \left\{
        \begin{array}{ll}
             \Omega\left(\min\left\{ M, \frac{\sqrt{M}}{ \varepsilon}\right\}\sqrt{dT} \right), &\text{without \Cref{asm:margin}, }\\
             \Omega\left( \min\left\{M, \frac{1}{\varepsilon^2}\right\} C_0d\log T \right), &\text{with \Cref{asm:margin}. }
        \end{array}
    \right.
\end{align*}

\end{proof}

\section{Auxiliary Lemmas}
This section present lemmas that are commonly used in both bandits literature and differential privacy works, including concentration inequality, composition rule and elliptical potential lemma.

The first is the advanced composition rule, which allows us to reduce the dependency on $k$ for a $k$-fold composition mechanism.
\begin{lemma}[Advanced composition rule, Theorem 3.20 in \cite{dwork2014algorithmic}]\label{lemma:advanced composition}
    For all $\varepsilon,\delta,\delta'>0$, the class of $(\varepsilon,\delta)$-differentially private mechanisms satisfies $(\varepsilon',k\delta + \delta')$-differential privacy under $k$-fold composition for 
    \[\varepsilon' = \varepsilon\sqrt{2k\log(1/\delta')} + k\varepsilon(e^{\varepsilon} - 1).\]
\end{lemma}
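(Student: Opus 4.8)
The plan is the classical privacy-loss-random-variable argument combined with martingale concentration. Fix a pair of neighboring datasets $x\sim x'$ and an adaptive analyst who, at round $j$, selects an $(\varepsilon,\delta)$-DP mechanism $\mathcal{M}_j$ as a measurable function of the outputs $y_1,\dots,y_{j-1}$ seen so far; the goal is to show that the transcript $(y_1,\dots,y_k)$ produced under $x$ is $(\varepsilon',k\delta+\delta')$-indistinguishable from the one produced under $x'$. The first step is a structural reduction that peels off the $\delta$'s: for each $(\varepsilon,\delta)$-DP mechanism and each neighboring pair there is a standard ``deletion'' description under which, outside an event $E_j$ of probability at most $\delta$, the privacy-loss variable $\mathcal{L}_j(y_j):=\ln\frac{\Pb[\mathcal{M}_j(x)=y_j]}{\Pb[\mathcal{M}_j(x')=y_j]}$ is bounded by $\varepsilon$ in absolute value. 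A union bound over $j\in[k]$ collects the $E_j$ into one event of probability at most $k\delta$, which is exactly the additive slack $k\delta$ in the statement; from here I would argue conditionally on $\bigcap_j E_j^{c}$, i.e. as though each $\mathcal{M}_j$ were pure $\varepsilon$-DP.

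Next I would track the cumulative privacy loss $\mathcal{L}_{\mathrm{tot}}=\sum_{j=1}^{k}\mathcal{L}_j$ along the filtration $\mathcal{F}_j=\sigma(y_1,\dots,y_j)$. Conditioned on $\mathcal{F}_{j-1}$ two facts hold, uniformly over the history and over the adaptive choice of $\mathcal{M}_j$: (i) $|\mathcal{L}_j|\le\varepsilon$ almost surely, so $\mathcal{L}_j$ lies in an interval of width $2\varepsilon$; and (ii) $\Eb[\mathcal{L}_j\mid\mathcal{F}_{j-1}]=\KL\!\big(\mathcal{M}_j(x)\,\|\,\mathcal{M}_j(x')\big)\le\varepsilon(e^{\varepsilon}-1)$, which is a one-line estimate valid whenever the log-likelihood ratio is bounded by $\varepsilon$ (bound $\KL(P\|Q)\le\KL(P\|Q)+\KL(Q\|P)=\sum_y(P(y)-Q(y))\ln\frac{P(y)}{Q(y)}$ term by term). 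Writing $\mathcal{L}_{\mathrm{tot}}\le k\varepsilon(e^{\varepsilon}-1)+\sum_j\big(\mathcal{L}_j-\Eb[\mathcal{L}_j\mid\mathcal{F}_{j-1}]\big)$ exhibits the fluctuation term as a martingale with increments of range at most $2\varepsilon$, so the range form of the Azuma--Hoeffding inequality gives $\Pb\big[\mathcal{L}_{\mathrm{tot}}>k\varepsilon(e^{\varepsilon}-1)+z\big]\le\exp\!\big(-z^{2}/(2k\varepsilon^{2})\big)$; taking $z=\varepsilon\sqrt{2k\log(1/\delta')}$ makes the right-hand side $\delta'$, and $k\varepsilon(e^{\varepsilon}-1)+z$ is precisely $\varepsilon'$.

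To finish I would reassemble the pieces: under $x$, and conditionally on $\bigcap_j E_j^{c}$, the composed transcript places mass at most $\delta'$ on $\{\mathcal{L}_{\mathrm{tot}}>\varepsilon'\}$, while on the complement of that set the pointwise likelihood ratio of the $x$-transcript to the $x'$-transcript is at most $e^{\varepsilon'}$; combining with the $k\delta$ discarded in the structural reduction yields $(\varepsilon',k\delta+\delta')$-DP for the arbitrary neighboring pair $x\sim x'$, hence for the composed mechanism. The routine parts are the $\KL$ bound and the Azuma step; the steps that need genuine care are (a) the structural lemma that strips the $\delta$ from each round so that $|\mathcal{L}_j|\le\varepsilon$ really holds and the deleted mass is only $k\delta$, and (b) checking that (i)--(ii) survive adaptivity — they must be valid for every realization of $y_{1:j-1}$ and every $(\varepsilon,\delta)$-DP $\mathcal{M}_j$ the analyst might choose — since that is exactly what lets $\mathcal{L}_{\mathrm{tot}}$ be treated as a bounded martingale rather than a sum of independent variables. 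I expect (a)--(b) to be the main obstacle, i.e. getting the adaptivity and the $\delta$-accounting right without degrading the constants in the claimed $\varepsilon'$.
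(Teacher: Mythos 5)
The paper does not prove this lemma at all: it is imported verbatim as Theorem 3.20 of \cite{dwork2014algorithmic}, and your sketch is precisely the standard Dwork--Rothblum--Vadhan proof of that theorem (privacy-loss martingale, the bound $\KL(P\|Q)\le\KL(P\|Q)+\KL(Q\|P)\le\varepsilon(e^{\varepsilon}-1)$, Azuma with increment range $2\varepsilon$), so your argument is correct in outline and consistent with the cited source. The one place where your wording is looser than the rigorous version is the $\delta$-peeling step: for a general $(\varepsilon,\delta)$-DP mechanism one cannot always exhibit a single event $E_j$ with $\Pb[E_j]\le\delta$ outside of which $|\mathcal{L}_j|\le\varepsilon$ pointwise; the correct statement (Lemma 3.17 in the cited reference) replaces \emph{both} output distributions by nearby ones that are pure-$\varepsilon$-indistinguishable at total-variation cost at most $\delta$ per round, which still yields the additive $k\delta$ --- you correctly flag this as the delicate step, and with that substitution the proof goes through.
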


By noting that $e^{\varepsilon} - 1 <\varepsilon$ when $\varepsilon<\log 2$, we have the following corollary.
\begin{corollary}\label{coro:advanced composition}
    Under the same setting in the advanced composition rule, when  $\varepsilon<1/\sqrt{k}<\log 2$, {we must have } $\varepsilon'\leq \sqrt{6k\log(1/\delta')}$.
\end{corollary}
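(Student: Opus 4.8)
The plan is to substitute the two hypotheses directly into the advanced composition formula of Lemma~\ref{lemma:advanced composition} and estimate its two summands separately: the constraint $\varepsilon<1/\sqrt k$ tames the linear term and, via $k\varepsilon^2<1$, the error‑amplification term, while $\varepsilon<\log 2$ lets us linearize $e^{\varepsilon}-1$.

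Concretely, Lemma~\ref{lemma:advanced composition} gives $\varepsilon'=\varepsilon\sqrt{2k\log(1/\delta')}+k\varepsilon(e^{\varepsilon}-1)$, so it suffices to bound each piece. For the first term, $\varepsilon<1/\sqrt k$ yields $\varepsilon\sqrt{2k\log(1/\delta')}<\sqrt{2\log(1/\delta')}$. For the second term, I would use the elementary bound $e^{\varepsilon}-1\le 2\varepsilon$ valid on $(0,\log 2]$ (the map $\varepsilon\mapsto 2\varepsilon-e^{\varepsilon}+1$ vanishes at $0$ and is increasing on $[0,\log 2]$, since its derivative $2-e^{\varepsilon}$ is positive for $\varepsilon<\log 2$), together with $\varepsilon<1/\sqrt k\Rightarrow k\varepsilon^2<1$, so that $k\varepsilon(e^{\varepsilon}-1)\le 2k\varepsilon^2<2$. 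Adding the two estimates gives $\varepsilon'<\sqrt{2\log(1/\delta')}+2$.

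It then remains to absorb the additive constant into $\sqrt{6k\log(1/\delta')}$. Here I would use that $1/\sqrt k<\log 2$ forces the integer $k$ to satisfy $k>(\log 2)^{-2}>2$, hence $k\ge 3$, and that $\delta'$ — a privacy slack, chosen in the application to be $\delta/2$ with $\delta$ small — may be taken to satisfy $\log(1/\delta')\ge 1/2$. Writing $u=\sqrt{\log(1/\delta')}$, these give $\sqrt{2\log(1/\delta')}+2=\sqrt 2\,u+2\le 3\sqrt 2\,u=\sqrt{18\log(1/\delta')}\le\sqrt{6k\log(1/\delta')}$, which is exactly the claimed bound. There is no deep obstacle here beyond careful bookkeeping of constants so that the crude estimates still fit under $\sqrt 6$; the one subtlety worth flagging is the implicit mild assumption that $\delta'$ lies below an absolute constant strictly less than $1$ (needed for the additive $2$ to be swallowed, and tight when $k=3$), which holds in every invocation of the corollary where $\delta'$ is polynomially or exponentially small. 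It is precisely this clean, invertible form that permits the choice $\varepsilon_0=\varepsilon/\sqrt{6P\log(2/\delta)}$ in Algorithm~\ref{alg:greedy} and hence the user-level $(\varepsilon,\delta)$-CDP guarantee in Theorem~\ref{thm:upper bound}.
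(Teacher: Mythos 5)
Your proposal is correct and follows the same route the paper intends: substitute into Lemma~\ref{lemma:advanced composition}, linearize $e^{\varepsilon}-1$, and use $k\varepsilon^2<1$ to control the second summand. In fact your write-up is more careful than the paper's one-line justification, which invokes ``$e^{\varepsilon}-1<\varepsilon$ when $\varepsilon<\log 2$'' --- an inequality that is false for every $\varepsilon>0$; your replacement $e^{\varepsilon}-1\le 2\varepsilon$ on $(0,\log 2]$ is the correct fix and costs nothing, since $2k\varepsilon^2<2$ still suffices. You are also right to flag the hidden requirement $\log(1/\delta')\ge 1/2$: without some upper bound on $\delta'$ the stated inequality $\varepsilon'\le\sqrt{6k\log(1/\delta')}$ can fail (the additive constant cannot be absorbed as $\delta'\to 1$), so the corollary is only true under the mild smallness assumption you identify, which indeed holds in every invocation in the paper (there $\delta'=\delta/2$ with $\delta$ small). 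The deduction $k\ge 3$ from $1/\sqrt{k}<\log 2$ and the final comparison $\sqrt{2L}+2\le\sqrt{18L}\le\sqrt{6kL}$ are both correct.
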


Then, we provide several probability bound random vector and random matrices.
\begin{lemma}\label{lemma:Laplace tail bound}
    Let $X_1,\ldots,X_d$ be $d$ IID random variables following distribution $\text{Laplace}(0,b)$. Then, for any $\beta>0$, we have
    \begin{align*}
        \Pb\left(\sqrt{\sum_{s=1}^dX_s^2} \geq b\sqrt{d}\log(d/\beta)\right) \leq \beta.
    \end{align*}
\end{lemma}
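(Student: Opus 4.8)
The plan is to bound the Euclidean norm by $\sqrt{d}$ times the largest coordinate in absolute value, and then control $\max_s|X_s|$ by a union bound over the one-dimensional Laplace tail.

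First I would observe the elementary deterministic inequality $\sqrt{\sum_{s=1}^d X_s^2}\le \sqrt{d}\,\max_{s\in[d]}|X_s|$, valid for any realization of $(X_1,\dots,X_d)$. Consequently the event $\bigl\{\sqrt{\sum_s X_s^2}\ge b\sqrt d\log(d/\beta)\bigr\}$ is contained in $\bigl\{\max_s|X_s|\ge b\log(d/\beta)\bigr\}$, so it suffices to upper bound the probability of the latter.

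Second, I would apply the union bound, $\Pb\bigl(\max_s|X_s|\ge b\log(d/\beta)\bigr)\le \sum_{s=1}^d\Pb\bigl(|X_s|\ge b\log(d/\beta)\bigr)$, together with the exact one-dimensional tail of the Laplace law: if $X_s\sim\mathrm{Laplace}(0,b)$ then integrating the density $\tfrac{1}{2b}e^{-|x|/b}$ gives $\Pb(|X_s|\ge bt)=e^{-t}$ for every $t\ge 0$. Plugging $t=\log(d/\beta)$ yields $\Pb(|X_s|\ge b\log(d/\beta))=\beta/d$, and summing over the $d$ coordinates gives exactly $\beta$, which is the claim.

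There is no genuine obstacle here; the argument is essentially a two-line union bound. The only remark worth making is that it is intentionally crude—replacing $\|X\|_2$ by $\sqrt d\,\|X\|_\infty$ costs roughly a $\log(d/\beta)$ factor relative to the truth, since $\sum_s X_s^2$ in fact concentrates around $2db^2$—but this looseness is immaterial for how the lemma is invoked (e.g.\ in \Cref{coro: winsorized guarantee}), so the simple version is what I would record rather than a Bernstein-type bound on $\sum_s X_s^2$.
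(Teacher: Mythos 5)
Your argument is correct and is essentially identical to the paper's own proof: both contain the event $\{\sqrt{\sum_s X_s^2}\ge b\sqrt{d}\log(d/\beta)\}$ in $\{\max_s|X_s|\ge b\log(d/\beta)\}$, then apply a union bound with the exact Laplace tail $\Pb(|X_s|\ge bt)=e^{-t}$. No differences worth noting.
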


\begin{proof}
We note that
    \begin{align*}
        \Pb\left(\sqrt{\sum_{s=1}^dX_s^2} \geq t\right)& = \Pb\left(\sum_{s=1}^dX_s^2 \geq t^2\right)\\
        &\leq \Pb(\max_sX_s^2 \geq t^2/d)\\
        &\overset{(a)}\leq \sum_{s=1}^d\Pb(|X_s|\geq t/\sqrt{d})\\
        &\overset{(b)} \leq de^{-\frac{t}{b\sqrt{d}}},
    \end{align*}
    where $(a)$ is due to union bound, and $(b)$ follows from the fact that the density of Lap($0,b$) is $e^{-|x|/b}/(2b)$.
    Setting $t=b\sqrt{d}\log(d/\beta)$, we complete the proof.
\end{proof}

\begin{lemma}[Matrix concentration lemma, Theorem 1.2 in  \cite{tropp2011freedman}]\label{lemma:matrix concentration}
Consider a martingale difference sequence of symmetric random matrices $\{X_t\}_{t}$ with filtration $\{\Hc_t\}_t$. Suppose $\Eb[X_t|\mathcal{H}_t]= 0$ and $\lambda_{\max}\left(X_t\right)\leq R $ almost surely. Then, 

\[\mathbb{P}\left(\lambda_{\max}\left(\sum_{t=1}^TX_t \right) \geq n 
 , \text{ and }  \left\|\sum_{t\in[T]}\mathbb{E}\left[X_t^2\right]\right\| \leq \sigma^2\right) \leq d\exp\left(\frac{-n^2/2}{\sigma^2+Rn/3}\right),\]

where $\|X\|$ is the spectral norm of a matrix $X$.

The following lemma is widely used in the linear bandits literature.
\begin{lemma}[Elliptical potential lemma, Proposition 1 in \citet{carpentier2020elliptical}]\label{lemma:elliptical}
    Let $\{x_t\}_{t\geq 1}\subset\Rb^d$ be an arbitrary sequence of $d$-dimensional vectors such that $\|x_t\|\leq 1$ for all $t\geq 1$. If $V_t = \lambda I_d + \sum_{\tau=1}^{t-1}x_{\tau}x_{\tau}^{\TT}$, then
    \begin{align}
        \sum_{t=1}^T\|x_t\|_{V_t^{-1}}\leq \sqrt{dT\log\frac{T+d\lambda}{d\lambda}}.
    \end{align}
\end{lemma}

\end{lemma}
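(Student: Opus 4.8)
The plan is to follow the classical determinant-telescoping argument. First I would reduce the sum of norms to a sum of squared norms by Cauchy--Schwarz: since each $\|x_t\|_{V_t^{-1}}$ is a scalar,
\[
\sum_{t=1}^T \|x_t\|_{V_t^{-1}} \le \sqrt{T \sum_{t=1}^T \|x_t\|_{V_t^{-1}}^2},
\]
so it suffices to control $\sum_{t=1}^T \|x_t\|_{V_t^{-1}}^2 = \sum_{t=1}^T x_t^{\TT} V_t^{-1} x_t$. Because $V_t \succeq \lambda I_d$ with $\lambda \ge 1$ (as used throughout the paper, where $\lambda=1$), each term satisfies $\|x_t\|_{V_t^{-1}}^2 \le \|x_t\|^2/\lambda \le 1$, which is what makes the elementary inequality below applicable.

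The key step is to connect each squared norm to the growth of the log-determinant of $V_t$. Using the rank-one update $V_{t+1} = V_t + x_t x_t^{\TT}$ together with the matrix determinant lemma, I would write
\[
\det(V_{t+1}) = \det(V_t)\bigl(1 + x_t^{\TT} V_t^{-1} x_t\bigr) = \det(V_t)\bigl(1 + \|x_t\|_{V_t^{-1}}^2\bigr).
\]
Since $u \le 2\log(1+u)$ for $u \in [0,1]$ (the difference vanishes at $u=0$ and has nonnegative derivative $\tfrac{1-u}{1+u}$ there), and since each $\|x_t\|_{V_t^{-1}}^2 \le 1$, summing and telescoping gives
\[
\sum_{t=1}^T \|x_t\|_{V_t^{-1}}^2 \le 2\sum_{t=1}^T \log\frac{\det(V_{t+1})}{\det(V_t)} = 2\log\frac{\det(V_{T+1})}{\det(V_1)}.
\]
With $V_1 = \lambda I_d$ we have $\det(V_1) = \lambda^d$, and by AM--GM on the eigenvalues together with $\mathrm{tr}(V_{T+1}) = d\lambda + \sum_{t=1}^T \|x_t\|^2 \le d\lambda + T$, I obtain $\det(V_{T+1}) \le \bigl((d\lambda + T)/d\bigr)^d$. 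Hence $\sum_{t=1}^T \|x_t\|_{V_t^{-1}}^2 \le 2d\log\frac{T + d\lambda}{d\lambda}$, and substituting into the Cauchy--Schwarz bound yields a quantity of order $\sqrt{dT\log\frac{T+d\lambda}{d\lambda}}$.

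The only real obstacle is pinning down the exact constant matching the stated bound: the crude inequality $u \le 2\log(1+u)$ introduces a spurious factor of $2$ inside the square root, so to recover precisely $\sqrt{dT\log\frac{T+d\lambda}{d\lambda}}$ one must either invoke a sharper constant or absorb the factor into the logarithm. Since the lemma is quoted verbatim from \citet{carpentier2020elliptical}, I would cite their tighter analysis for the constant; the structural steps (Cauchy--Schwarz, the determinant identity, log-determinant telescoping, and the trace/AM--GM bound) are exactly as above and require only $\lambda \ge 1$ to guarantee $\|x_t\|_{V_t^{-1}} \le 1$.
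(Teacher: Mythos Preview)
The paper does not prove either lemma: both are stated in the Auxiliary Lemmas section as direct citations (the matrix concentration result from \citet{tropp2011freedman} and the elliptical potential lemma from \citet{carpentier2020elliptical}), with no accompanying proof. So there is no in-paper argument to compare against.

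Your proposal addresses only the elliptical potential lemma and does so via the standard determinant-telescoping route (Cauchy--Schwarz, the rank-one determinant identity, the inequality $u\le 2\log(1+u)$ on $[0,1]$, and the trace/AM--GM bound on $\det(V_{T+1})$). That argument is correct, and you rightly flag the extraneous factor $\sqrt{2}$ relative to the stated constant; deferring the sharp constant to the cited source is exactly what the paper does as well. Note that your reliance on $\lambda\ge 1$ (to ensure $\|x_t\|_{V_t^{-1}}^2\le 1$) is an additional hypothesis not present in the lemma as stated, though it matches the paper's actual usage with $\lambda=1$.

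You do not touch the matrix Freedman inequality (the outer lemma), but since the paper likewise only cites it, there is nothing missing relative to the paper's treatment.
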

Finally, we provide the regret lower bound under the non-private setting and the margin condition for completeness.

Due the margin condition in \Cref{asm:margin}, we choose $r=O(1/C_0)$ in Proposition 4.1 in \citet{he2022reduction}. Then, we have the following non-private regret lower bound under the margin condition.
\begin{proposition}\label{prop:non private lower bound}
There exists a federated linear contextual bandits instance satisfying the diversity (\Cref{asm:diversity}) and the margin (\Cref{asm:margin}) conditions such that any non-private federated learning algorithm must incur a regret lower bounded by
    $\Omega(C_0d\log T).$ 
\end{proposition}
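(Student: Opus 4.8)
The plan is to instantiate the hard construction of \Cref{sec:hard instance} with the sphere radius set to $r=\Theta(\min\{1/C_0,1/\sqrt d\})$ and then invoke the regret-to-estimation reduction together with a standard two-point argument. First I would check that this instance meets both structural assumptions. For \Cref{asm:diversity}: in coordinate block $s$ the optimal arm's feature is $z_{i,s}$ when $z_{i,s}^\TT\theta_s^*>0$ and the zero vector otherwise, so the expected Gram matrix under the optimal policy is block-diagonal with block $s$ equal to $\tfrac1d\,\Eb[zz^\TT]$ by the rotational symmetry of the truncated Gaussian $N(0,I_2|1)$; since $\lambda_{\min}(\Eb[zz^\TT])$ is a positive absolute constant, $\lambda_0>0$. \Cref{asm:margin} holds with $C_0=\Theta(1/r)$, exactly as recorded in \Cref{sec:hard instance}, and $\|\theta^*\|\le1$ because $r\le1/\sqrt d$. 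Hence both assumptions are in force for the claimed value of $C_0$.

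Next I would apply \Cref{coro: generic lower bound}, which reduces the task to lower bounding, for each client $i$ and time $t$, the quantity $\inf_{\theta_{i,t}}\Eb_v[\|\theta_1^*-\theta_{i,t}\|^2]$ over all measurable estimators based on $(\bar H_{i,t},\bar q_{\le t})$ and over all choices of channels $\{\bar R_i\}_{i\in\{0,\dots,M\}}$. In the non-private regime the channels are unconstrained, so without loss of generality $\bar q_{\le t}$ may reveal the entire collection $\{\bar H_{j,t}\}_{j\in[M]}$; thus $\theta_{i,t}$ is an arbitrary estimator built from the full extended data of all $M$ clients over the first $t-1$ rounds, an expected fraction $2/d$ of which is informative about $\theta_1^*$. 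I would then run the two-point (Le Cam) argument of Proposition~4.1 of \citet{he2022reduction}, which is the same computation appearing in the proof of \Cref{thm:main-LDP} with the privacy factor set to one: choosing $\theta,\theta'\in\mathbb{S}_r$ with $\|\theta-\theta'\|^2=\Theta(\min\{r^2,\,d/(M(t-1))\})$ keeps $\KL(\Pb(\bar H_{\cdot,t},\bar q_{\le t}\mid\theta)\,\|\,\Pb(\bar H_{\cdot,t},\bar q_{\le t}\mid\theta'))\le \Theta(\|\theta-\theta'\|^2 M(t-1)/d)$, hence the total-variation distance, bounded away from $1$, which yields
\[
\inf_{\theta_{i,t}}\Eb_v\!\left[\|\theta_1^*-\theta_{i,t}\|^2\right]\ \ge\ \Omega\!\left(\min\!\left\{r^2,\ \tfrac{d}{M(t-1)}\right\}\right).
\]

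Finally I would sum this over $i\in[M]$ and $t\in[T]$ and multiply by $1/(2r)$ as in \Cref{coro: generic lower bound}, splitting the $t$-sum at the crossover $t^\star\asymp 1+d/(Mr^2)$: the part $t\le t^\star$ contributes $\Omega(Mr^2 t^\star/r)=\Omega(d/r)$, and the part $t>t^\star$ contributes $\Omega\!\big(\tfrac{d}{r}\sum_{t>t^\star}\tfrac1{t-1}\big)=\Omega\!\big(\tfrac{d}{r}\log(T/t^\star)\big)=\Omega\!\big(\tfrac{d}{r}\log(TMr^2/d)\big)$. Substituting $r=\Theta(1/C_0)$ gives $\text{Regret}(M,T)=\Omega\!\big(C_0 d\log(TM/(C_0^2 d))\big)$, which is $\Omega(C_0 d\log T)$ in the instance-dependent regime (i.e.\ for $T$ large relative to $M,C_0,d$, equivalently $t^\star<T$). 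I expect the only mildly delicate steps to be the bookkeeping of the "coordinate-is-active" factor $2/d$ inside the KL chain rule and confirming that the crossover time $t^\star$ sits below $T$ so that the surviving term is genuinely of order $\log T$; the diversity check and the two-point step are routine given the machinery already assembled in \Cref{sec:hard instance} and \Cref{sec:proof of main-LDP}.
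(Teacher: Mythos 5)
Your proposal is correct and takes essentially the same route as the paper, whose entire proof is the one-line instruction to set $r=O(1/C_0)$ in the hard instance of \Cref{sec:hard instance} and invoke Proposition~4.1 of \citet{he2022reduction}; you have simply filled in the details that the paper leaves implicit (verifying \Cref{asm:diversity,asm:margin} for the instance, the per-round KL bound $\|\theta-\theta'\|^2/d$ with the $2/d$ activity factor, the two-point/TV step, and the crossover summation), and these all check out. The one caveat you flag — that $\log(T/t^\star)=\Omega(\log T)$ only once $T$ is polynomially large in $d$, $C_0$, $M$ — is the standard asymptotic qualification for instance-dependent lower bounds and is likewise implicit in the paper.
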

\end{document}